\documentclass[aos]{imsart}

\RequirePackage{amsthm,amsmath,amsfonts,amssymb}
\RequirePackage[numbers]{natbib}
\RequirePackage[colorlinks,citecolor=blue,urlcolor=blue]{hyperref}
\RequirePackage{graphicx}

\startlocaldefs


\newcommand{\ignore}[1]{}
\usepackage{graphicx,epsfig,rotating,algorithm,algpseudocode}
\usepackage{amssymb}
\usepackage{threeparttable}
\usepackage{hyperref}
\usepackage[dvipsnames,table,dvipsnames*, svgnames*, hyperref]{xcolor}
\usepackage{multirow}
\usepackage{bm}

\newtheorem{thm}{Theorem}
\newtheorem{assu}{Assumption}
\newtheorem{defn}{Definition}
\newtheorem{rem}{Remark}
\newtheorem{prop}{Proposition}
\newtheorem{lem}[thm]{Lemma}

\DeclareMathOperator*{\argmax}{argmax}
\newcommand{\R}{\mathbb{R}}
\newcommand{\xb}{\mathbf{x}}
\newcommand{\yb}{\mathbf{y}}
\newcommand{\vb}{\mathbf{v}}
\newcommand{\ub}{\mathbf{u}}
\newcommand{\zb}{\mathbf{z}}
\newcommand{\bK}{\mathbf{K}}
\newcommand{\bA}{\mathbf{A}}
\newcommand{\bB}{\mathbf{B}}
\newcommand{\bE}{\mathbf{E}}
\newcommand{\bL}{\mathbf{L}}
\newcommand{\bD}{\mathbf{D}}
\newcommand{\bO}{\mathbf{O}}
\newcommand{\bR}{\mathbf{R}}
\newcommand{\bV}{\mathbf{V}}
\newcommand{\bU}{\bold{U}}
\newcommand{\bM}{\bold{M}}
\newcommand{\bLam}{\boldsymbol{\Lambda}}

\newcommand{\bSig}{\boldsymbol{\Sigma}}

\newcommand{\bGam}{\boldsymbol{\Gamma}}
\newcommand{\E}{\mathbb{E}}
\newcommand{\OO}{\mathrm{O}}
\newcommand{\oo}{\mathrm{o}}
\newcommand{\ri}{\mathrm{i}}
\newcommand{\dd}{\mathrm{d}}
\newcommand{\sfp}{\mathsf{p}}
\newcommand{\sfP}{\mathsf{P}}
\newcommand{\sfL}{\mathsf{L}}
\newcommand{\sfr}{\mathsf{r}}
\newcommand{\sfh}{\mathsf{h}}
\newcommand{\HS}{\mathsf{HS}}

\newcommand{\beq}{\begin{equation}}
\newcommand{\eeq}{\end{equation}}
\newcommand{\beas}{\begin{eqnarray*}}
	\newcommand{\eeas}{\end{eqnarray*}}
\newcommand{\bea}{\begin{eqnarray}}
\newcommand{\eea}{\end{eqnarray}}
\newcommand{\bei}{\begin{itemize}}
	\newcommand{\eei}{\end{itemize}}
\newcommand{\ben}{\begin{enumerate}}
	\newcommand{\een}{\end{enumerate}}

\newcommand{\red}{\color{red}}



\endlocaldefs

\begin{document}

\begin{frontmatter}
\title{Learning Low-Dimensional Nonlinear Structures from High-Dimensional Noisy Data: An Integral Operator Approach}
\runtitle{Kernel Spectral Embedding of High-Dimensional Data}

\begin{aug}
\author[A]{\fnms{Xiucai} \snm{Ding}\ead[label=e1]{xcading@ucdavis.edu}}
\and
\author[B]{\fnms{Rong} \snm{Ma}\ead[label=e2]{rongm@stanford.edu}}
\address[A]{Department of Statistics,
University of California, Davis
\printead{e1}}

\address[B]{Department of Statistics,
Stanford University
\printead{e2}}
\end{aug}

\begin{abstract}
We propose a kernel-spectral embedding algorithm for learning low-dimensional nonlinear structures from noisy and high-dimensional observations, where the data sets are assumed to be sampled from {a nonlinear manifold model} and corrupted by high-dimensional noise. The algorithm employs an adaptive bandwidth selection procedure which does not rely on prior knowledge of the underlying manifold. The obtained low-dimensional embeddings can be further utilized for downstream purposes such as data visualization, clustering and prediction. Our method is theoretically justified and practically interpretable. Specifically, {for a general class of kernel functions}, we establish the convergence of the final embeddings to their noiseless counterparts when {the dimension grows polynomially with the size}, and characterize the effect of the signal-to-noise ratio on the rate of convergence and phase transition. We also prove the convergence of the embeddings to the eigenfunctions of an integral operator defined by the kernel map of some reproducing kernel Hilbert space capturing  the underlying nonlinear structures. {Our results hold even when the dimension of the manifold grows with the sample size.} Numerical simulations and analysis of real data sets show the superior empirical performance of the proposed method, compared to many existing methods, on learning various nonlinear manifolds in diverse applications.     	
\end{abstract}

\begin{keyword}[class=MSC2020]
\kwd[Primary ]{62R07}
\kwd{62R30}
\kwd[; secondary ]{47G10}
\end{keyword}

\begin{keyword}
	\kwd{high-dimensional data}
\kwd{kernel method}
\kwd{manifold learning}
\kwd{nonlinear dimension reduction}
\kwd{spectral method}
\end{keyword}

\end{frontmatter}

\section{Introduction}

With rapid technological advancements in data collection and processing,  massive large-scale and high-dimensional data sets are widely available nowadays in diverse research fields such as astronomy, business analytics, human genetics and microbiology. A common feature of these data sets is that their statistical and geometric properties can be well understood via a meaningful low-rank representation of reduced dimensionality. Learning low-dimensional structures from these high-dimensional noisy data is one of the central topics in statistics and data science. 
Moreover, nonlinear structures have been found predominant and intrinsic in many real-world data sets, which may not be easily captured or preserved in commonly used linear or quasi-linear methods such as principal component analysis (PCA), singular value decomposition (SVD) \citep{jolliffe2002principal} and multidimensional scaling (MDS) \citep{borg2005modern}. As a longstanding and well-recognized technique for analyzing data sets with possibly nonlinear structures, kernel methods have been shown effective in various applications ranging from clustering, data visualization to classification and prediction \citep{shawe2004kernel,hofmann2008kernel,kung2014kernel}. On the other hand, spectral methods \citep{chen2021spectral}, as a fundamental tool for dimension reduction, are oftentimes applied in combination with kernel methods to better capture the underlying low-dimensional nonlinear structure in the data. These approaches are commonly referred to as nonlinear dimension reduction techniques; see Section \ref{related.works} below for a brief overview.    

{Despite} the effectiveness and success of the kernel-spectral methods in many applications, these methods are usually applied {heuristically}, especially in terms of tuning parameter selection and interpretation of the results. For example, as a key step in these methods,
the construction of the kernel (or affinity) matrices requires specifying a proper bandwidth parameter, which is usually determined by {certain} heuristics or conventional empiricism. This is mainly due to a lack of theoretical understanding of the methods, including their intrinsic objective, their range of applicability, and their susceptibility to noise or dimensionality of the data. Likewise, the absence of a theoretical foundation may significantly constrain the users' recognition or exploitation of the full potential of the method. Given the indispensable status and the practical power of these methods in many fields of applications, such as single-cell transcriptomics \citep{wang2017visualization,moon2019visualizing,kobak2019art} and medical informatics \citep{adeli2017kernel,shiokawa2018application}, there is a pressing need of theoretically justified kernel-spectral method that accounts for both the high-dimensionality and the noisiness of the data.

In this study, we focus on an integral operator approach to learning low-dimensional nonlinear structures from high-dimensional noisy data. We propose a kernel-spectral embedding algorithm with a data-adaptive bandwidth selection procedure ({cf.} Algorithm \ref{al0}), justified by rigorous theoretical analysis, and provide solid interpretations of the low-dimensional embeddings by establishing their {asymptotic behavior}.

Specifically, we consider a manifold "signal-plus-noise" data-generative model
\begin{equation}\label{eq_basicmodel}
\yb_i=\xb_i+\zb_i \in \mathbb{R}^p,\quad 1 \leq i \leq n, 
\end{equation}
where $\{\yb_i\}_{1\le i\le n}$ are observed samples, $\{\xb_i\}_{1\le i\le n}$ are the underlying noiseless samples drawn independently {from a nonlinear manifold model}, embedded into $\mathbb{R}^p$ {(see Assumption \ref{assum_generalmodelassumption} below for more detail)}, and $\{\zb_i\}_{1\le i\le n}$ are independent sub-Gaussian noise (see Section \ref{notation.sec} for the definition) with covariance $\sigma^2{\bf I}$. As will be seen in Section \ref{man.sec}, (\ref{eq_basicmodel}) is closely related to the spiked covariance model \citep{johnstone2001distribution} extensively studied in the past two decades. Throughout, we focus on the high-dimensional setting where the number of variables $p$ { grows polynomially with} the sample size $n$, that is, { for some constant $0<\eta <\infty$, we have
\begin{equation}\label{eq_aspect}
p\asymp n^\eta.
\end{equation}}
As a general and important problem in nonlinear dimension reduction and manifold learning, our  goal  is to find a well-justified low-dimensional embedding of $\{\yb_i\}_{1\le i\le n}$ that captures the nonlinear structure of the underlying manifold, as encapsulated in the noiseless random samples $\{\xb_i\}_{1\le i\le n}$. To facilitate our discussion, we construct the kernel matrix  $\bK_n \in \mathbb{R}^{n \times n}$ using the observations $\{\yb_i\}$ as follows {
\beq \label{Kmat}
\bK_n=\big( K(i,j) \big)_{1\le i,j\le n},\qquad K(i,j)=f\left(\frac{\| \yb_i-\yb_j \|_2}{h^{1/2}_n} \right),
\eeq
for some function $f: \R_{\geq 0}\to\R_{\geq 0}$,
and $h_n>0$ is some bandwidth. }

\subsection{Some Related Works} \label{related.works}

Unlike our general setup (\ref{eq_basicmodel}) under high-dimensionality (\ref{eq_aspect}), most of the existing works concern computational algorithms for learning nonlinear manifolds,  developed under either the noiseless (i.e., $\yb_i=\xb_i$) or low-dimensional settings  where $p$ is fixed. These methods can be roughly separated into two categories, depending on how the data is conceptually represented.  On the one hand, there are spectral graph based methods, including Laplacian eigenmap \citep{Belkin_Niyogi:2003}, diffusion map (DM) \citep{Coifman_Lafon:2006}, vector diffusion map (VDM) \citep{singer2012vector},  ISOMAP \citep{tenenbaum2000global}, maximal variance unfolding (MVU) \citep{weinberger2006introduction}, locally linear embedding (LLE) \citep{Roweis_Saul:2000}, Hessian LLE \citep{donoho2003hessian}, t-SNE \citep{van2008visualizing},  UMAP \citep{mcinnes2018umap} and local tangent space alignment (LTSA) \citep{zhang2004principal}, among many others. For more details of these methods, see \citep{lee2007nonlinear, van2009dimensionality}. Generally speaking, these methods start by constructing a possibly sparse graph using kernels in which the nodes represent the input objects and the edges represent the neighborhood relations. Consequently, the resulting graph can be viewed as a discretized approximation of the manifold sampled by the inputs. From these graphs, one may apply results from spectral graph theory and construct matrices whose spectral decomposition reveals the low-dimensional structure of the manifold. Incidentally, these methods have been used to facilitate various downstream statistical applications, such as spectral clustering  \citep{von2007tutorial}, regression analysis \citep{cheng2013local}, among others.
The main differences between these methods are the kernels used to construct the spectral graphs and the associated nonlinear eigenfunctions (or feature maps \citep{hofmann2008kernel}) employed for embedding (or data representation).  On the other hand, there are reproducing kernel Hilbert space (RKHS) based methods, that use
eigenfunctions of some kernel map of the RKHS to infer the nonlinear structures of the manifold based on noiseless observations \citep{shawe2004kernel,hofmann2008kernel,kung2014kernel}. Specifically, these methods start from a certain similarity matrix based on some positive kernel function, and infer the geometric properties from the integral operator associated with the kernel. Similarly, various downstream tasks such as clustering, regression and classification \citep{hofmann2008kernel,scholkopf2002learning,AOSko,10.2307/25464638}  have been treated in combination with the RKHS-based methods. In addition to these, kernel principal component analysis (kPCA) \citep{scholkopf1998nonlinear} has also been widely used.

From a mathematical viewpoint, in the low-dimensional and noiseless setting, theoretical properties of some of these methods have been explored. For the graph-based methods, existing theoretical results reveal that, under some regularity conditions,  the discretized approximation obtained from various kernel-spectral methods will converge to a continuum quantity, which is usually represented as a certain operator capturing the underlying structures of the manifold. For example, it has been shown that the discrete Graph Laplacian (GL) matrices ({cf.} (\ref{eq_GL}) and (\ref{eq_tran})) used by Laplacian eigenmap and DM would converge to the  Laplace-Beltrami operator of the manifold under various settings and assumptions \citep{belkin2007convergence,calder2020lipschitz,DUNSON2021282,MR4130541,gine2006empirical,SINGER2006128,singer2017spectral,wormell2021spectral}. For the LLE, as shown in \cite{wu2018think}, after being properly normalized, the similarity matrix would converge to the LLE kernel which is closely related to the Laplace-Beltrami operator. Convergence of t-SNE, MVU, VDM and LTSA have been studied in \cite{arora2018analysis,linderman2019clustering,cai2021theoretical,arias2013convergence,singer2012vector,NIPS2008_735143e9}. For the RKHS-based methods, it has been shown that the empirical eigenvalues and eigenfunctions would converge to those of an integral operator associated with some reproducing kernel \citep{JMLR:v7:braun06a, BJKO,JMLR:v11:rosasco10a,AOSko, Smale2007LearningTE,MR2558684, 10.2307/25464638}. Finally, kPCA has been studied under the noiseless setting  \citep{blanchard2007statistical}.       

Another important difference between the graph-based methods and the RKHS-based methods lies in the treatment of the kernel matrices ({cf.} (\ref{Kmat})). The graph-based methods usually involve some GL-type operations, where the kernel affinity matrix is normalized by the graph degrees \citep{DW2,van2009dimensionality}, whereas the RKHS-based methods commonly employ the kernel matrix directly without further normalization. Consequently, in terms of interpretations, existing theory under the low-dimensional noiseless setting indicates that the embeddings from the graph-based methods are associated with the eigenfunctions of Laplace-Beltrami operator of $\mathcal{M}$, whereas those from the RKHS-based methods are related to the eigenfunctions of some reproducing kernels of the manifold.

{Despite} these endeavors, much less is known when the data is high-dimensional and noisy as modeled by (\ref{eq_basicmodel}) and (\ref{eq_aspect}), nor does the behavior of the associated kernel random matrix $\bK_n$. In the null case (i.e., $\yb_i=\zb_i$), it has been shown in \cite{bordenave2013euclidean,cheng2013spectrum,9205615,do2013spectrum, el2010spectrum,Fan2018} that when $h_n=p$ and {\red $\eta=1$ in (\ref{eq_aspect})}, the random kernel matrix $\bK_n$  can be well approximated by a low-rank perturbed random Gram matrix. Consequently, studying $\bK_n$ under pure noise is closely related to PCA with some low-rank perturbations. Moreover, since in this case the degree matrix is close to a scalar matrix \citep{9205615}, the graph-based methods and the RKHS-based methods are asymptotically equivalent. As for the non-null cases, spectral convergence of $\bK_n$ has been studied in some special cases in \cite{el2010information} when $h_n=p$, and more recently under a general setting in \cite{DW2}. These results show how the eigenvalues of the kernel matrices of $\{\yb_i\}_{1\le i\le n}$ relate to those of $\{\xb_i\}_{1\le i\le n}.$ Moreover,  in the non-null cases, the degree matrix is usually non-trivial \cite{DW2}, so that the graph-based methods can be very different from the RKHS-based methods.  

In what follows, we focus on an RKHS-based integral operator approach to nonlinear dimension reduction and manifold learning  under the general setup (\ref{eq_basicmodel}) and (\ref{eq_aspect}) {which includes the settings $n \asymp p$, $n \gg p$ and $n \ll p$}. Even though in this setting $\bK_n$ has been studied to some extent in  \cite{el2010information} and \cite{DW2}, several pieces are still missing for rigorous and interpretable statistical applications. Firstly, it is unclear how to select the bandwidth adaptively for embedding, that is, free from prior knowledge of the manifold. Secondly, the limiting behavior  of the eigenvectors of $\bK_n$ have not been analyzed. Such results are particularly relevant for embedding purposes. Finally, even though the convergence of kernel matrices associated to $\{\yb_i\}_{1\le i\le n}$ ({cf.} (\ref{Kmat})) and $\{\xb_i\}_{1\le i\le n}$ ({cf.} (\ref{Kn*})) can be established, it is  unclear how they are related to the underlying manifold. Such results are important for appropriate interpretations of the embeddings. We will accomplish these goals in this paper.

\subsection{Overview of Main Results and Contributions}
Briefly speaking, our proposed kernel-spectral embedding algorithm starts by constructing  some kernel matrix $\bK_n \in \mathbb{R}^{n \times n}$ as in (\ref{Kmat}),
where $h_n$ is  carefully selected by a theory-guided data-adaptive procedure proposed in Section \ref{method.sec}. Then we obtain the kernel-spectral embedding of the original data $\{\yb_i\}_{1\le i\le n}$ by conducting an eigen-decomposition for the scaled kernel matrix $\frac{1}{n}\bK_n$, and define the embeddings as the leading eigenvectors of $\frac{1}{n}\bK_n$, weighted by their associated eigenvalues (Algorithm \ref{al0}). As we will show later, as {$n,p \to\infty$} with respect to (\ref{eq_aspect}), the thus constructed final embeddings  are essentially related to and determined by a population integral operator defined by the reproducing kernel of an RKHS associated to the underlying manifold $\mathcal{M}$. It is in this sense our proposed method is an integral operator approach.

Rigorous theoretical understanding is obtained for the proposed algorithm, including the theoretical justifications for the  bandwidth selection procedure, the limiting behavior of the low-dimensional embeddings, and their interpretations in relation to the underlying manifolds. Compared to the existing works on kernel-spectral embeddings, the current study has the following methodological advantages and theoretical significance:
\begin{itemize}
	\item We propose a kernel-spectral embedding algorithm for learning low-dimensional nonlinear structure from high-dimensional noisy data. Unlike most  existing methods, our proposal takes into account both the noisiness of the data, characterized by our  model (\ref{eq_basicmodel}), and its high-dimensionality as in (\ref{eq_aspect}). {Moreover, we do not impose assumptions on the dimension of the underlying manifold and allow it to diverge with the sample size $n.$} To the best of our knowledge, this is the first kernel-spectral embedding method with theoretically guaranteed performance for high-dimensional and noisy data under non-null settings.
	\item A key component in any kernel-based method is to select a proper bandwidth parameter. One major innovation of our proposed method lies in a careful analysis of the asymptotic behavior of  kernel random matrices constructed from high-dimensional noisy data, which in turn leads to a theory-informed bandwidth selection procedure guaranteeing the strong performance of the method in a data-adaptive manner. In particular, our bandwidth selection procedure does not rely on prior knowledge about the underlying manifold, and {can be efficiently applied to a large family of kernel functions. }
	\item We also provide an in-depth theoretical understanding of the proposed method. On the one hand, we study the convergence of the final low-dimensional embeddings to their oracle counterparts based on the noiseless samples $\{\xb_i\}_{1\le i\le n}$ {as $n,p\to\infty$}, and characterize explicitly the effect of the overall signal-to-noise ratio on the respective rates of convergence and phase transition. On the other hand, we establish the convergence of the kernel matrices to the population integral operator that captures the nonlinear structures. The second result is essential for our understanding of the final low-dimensional embeddings, interpreted as a finite approximation of the samples projected onto the leading eigenfunctions of the population integral operator. 
\end{itemize}
Our theoretical results explain the empirically observed data-adaptive feature of the proposed method (Section \ref{data.sec}), that captures the nonlinear structures regardless of the manifold {structures}. Moreover, our recognition of the limiting integral operator also suggests potential advantages of our method over alternative approaches targeting distinct geometric features of the manifold. For example, we may conclude from the theory that our method differs significantly from the graph-based methods, such as Laplacian eigenmap, DM, and LLE, as they all essentially aim at the Laplace-Beltrami operator rather than an integral operator (see Section \ref{comp.sec} and Section \ref{dis.sec} of our supplement \cite{suppl} for more discussions).

{
Our theoretical analysis, on the one hand, relies on a general model reduction scheme, detailed in Section \ref{man.sec}, that simplifies the analysis by connecting the noisy nonlinear manifold model (\ref{eq_basicmodel}) with a potentially divergent spiked covariance matrix model, in which we allow the dimension of the manifold to diverge with the sample size $n.$ } On the other hand,  we also leverage operator theory and random matrix theory, to prove our main results.

The proposed method admits strong empirical performance. In Section \ref{simu.sec}, we present simulation studies that show the superiority and flexibility of the proposed bandwidth selection method over some existing alternatives. 
In Section \ref{cell.order.sec} and Section \ref{suppl_additionalrealexamples} of \cite{suppl}, we analyze three real-world high-dimensional datasets with distinct  nonlinear structures, including a path manifold, a circle manifold, and a multiclass mixture manifold, to demonstrate the usefulness of the method. In particular, for each of the examples, our proposed method shows significant improvements over the existing state-of-the-art methods  in inferring the respective underlying nonlinear structures.

\subsection{Organization and Notation} \label{notation.sec}

The rest of the paper is organized as follows. In Section \ref{method.sec}, we introduce our proposed kernel-spectral embedding algorithm in detail, and we point out important differences between our proposal and some existing methods. In Section \ref{theory.sec}, we study the theoretical properties of the proposed method, including the convergence of the low-dimensional embeddings to their noiseless counterpart, and the spectral convergence of the kernel matrix to an integral operator. 
In Section \ref{data.sec}, we include our simulation studies on bandwidth selection and analyze a real-world dataset to show the numerical performance of the proposed method in various applications.
 Technical proofs, additional discussions and further numerical and real data results are provided in our supplementary file \citep{suppl}.  

We will finish this section by introducing some notations used in the paper. To streamline our statements, we use the notion of stochastic domination, which  is commonly adopted in random matrix theory to syntactically simplify precise statements of the form ``$\mathsf{X}^{(n)}$ is bounded with high probability by $\mathsf{Y}^{(n)}$ up to small powers of $n$."

\begin{defn} [Stochastic domination]\label{defn_stochasdomi} Let $	\mathsf{X}=\big\{\mathsf{X}^{(n)}(u):  n \in \mathbb{N}, \ u \in \mathsf{U}^{(n)}\big\}$ and $\mathsf{Y}=\big\{\mathsf{Y}^{(n)}(u):  n \in \mathbb{N}, \ u \in \mathsf{U}^{(n)}\big\}$
	be two families of nonnegative random variables, where $\mathsf{U}^{(n)}$ is a possibly $n$-dependent parameter set. We say that $\mathsf{X}$ is {\em stochastically dominated} by $\mathsf{Y}$, uniformly in the parameter $u$, if for all small $\upsilon>0$ and large $ D>0$, there exists $n_0(\upsilon, D)\in \mathbb{N}$ so that 
	\begin{equation*} \label{sd}
	\sup_{u \in \mathsf{U}^{(n)}} \mathbb{P} \Big( \mathsf{X}^{(n)}(u)>n^{\upsilon}\mathsf{Y}^{(n)}(u) \Big) \leq n^{- D},
	\end{equation*}   
	for all $n \geq  n_0(\upsilon, D)$. In addition, we say that an $n$-dependent event $\Omega \equiv \Omega(n)$ holds {\em with high probability} if for any large $D>1$, there exists $n_0=n_0(D)>0$ so that $\mathbb{P}(\Omega) \geq 1-n^{-D},$
	for all $n \geq n_0.$ 
\end{defn}

We interchangeably use the notation $\mathsf{X}=\OO_{\prec}(\mathsf{Y})$, $\mathsf{X} \prec \mathsf{Y}$ or $\mathsf{Y}\succ \mathsf{X}$  if $\mathsf{X}$ is stochastically dominated by $\mathsf{Y}$, uniformly in $u\in\mathsf{U}^{(n)}$, when there is no risk of confusion. For two sequences of deterministic positive values $\{a_n\}$ and $\{b_n\},$ we write $a_n=\OO(b_n)$ if $a_n \leq C b_n$ for some positive constant $C>0.$ In addition, if both $a_n=\OO(b_n)$ and $b_n=\OO(a_n),$ we write $a_n \asymp b_n.$ Moreover, we write $a_n=\oo(b_n)$ if $a_n \leq c_n b_n$ for some positive sequence $c_n \to 0.$
For any probability measure $\sfP$ over $\Omega$, we denote $\mathcal{L}_2(\Omega, \sfP)$ as the  collection of $L_2$-integrable functions with respect to $\sfP$, that is, for any $f\in \mathcal{L}_2(\Omega, \sfP)$, we have $\|f\|_{\sfP}=\sqrt{\int_{\Omega} |f(y)|^2\sfP(dy)}<\infty$. 
For a vector $\bold{a} = (a_1,...,a_n)^\top \in \mathbb{R}^{n}$, we define its $\ell_p$ norm as $\| \bold{a} \|_p = \big(\sum_{i=1}^n |a_i|^p\big)^{1/p}$.  
We denote $\text{diag}(a_1,...,a_n)\in\R^{n\times n}$ as the diagonal matrix whose $i$-th diagonal entry is $a_i$.
For a matrix $ \bold{A}=(a_{ij})\in \R^{n\times n}$,  we define its Frobenius norm as $\| \bold{A}\|_F = \sqrt{ \sum_{i=1}^{n}\sum_{j=1}^{n} a^2_{ij}}$, 
and its operator norm as $\| \bold{A} \| =\sup_{\|\bold{x}\|_2\le 1}\|\bold{A}\bold{x}\|_2 $.
For any integer $n>0$, we denote the set $[n]=\{1,2,...,n\}$. 
For a random vector $\mathbf{g},$ we say it is sub-Gaussian if $
\mathbb{E} \exp(\mathbf{a}^\top \mathbf{g}) \leq \exp\left( \| \mathbf{a} \|_2^2/2 \right)$ 
for any deterministic vector $\mathbf{a}.$
Throughout, $C,C_1,C_2,...$ are universal constants independent of $n$, and can vary from line to line.

\section{Kernel-Spectral Embedding for High-Dimensional Noisy Data} \label{method.sec}

In this section, we introduce our proposed bandwidth selection and kernel-spectral embedding algorithm. After that, we discuss its unique features  compared with some popular existing methods. 

\subsection{Data-Adaptive Embedding Algorithm} \label{alg.sec}

Our proposed embedding method  is summarized as Algorithm \ref{al0}. 

\begin{algorithm}[t]
	\caption{High-dimensional noisy kernel-spectral embedding} \label{al0}
	\begin{algorithmic}
		\State {\bf Input:} Observed samples $\{\yb_i\}_{i\in[n]}$, {kernel function $f$,} { percentile} $\omega\in(0,1)$ and {eigenvector index set} $\Omega\subseteq\{1,2,...,n\}$. 
		\State 1. {\bf Bandwidth selection:} 
		\State \hspace{4mm} (i) let $d_{ij}=\|\yb_i-\yb_j\|_2^2$  for all $1\le i<j\le n$, and define the empirical cumulative distribution function
		\beq
		\nu_n(t)=\frac{2}{n(n-1)}\sum_{1\le i<j\le n}1_{\{d_{ij}\le t\}},
		\eeq 
		\State \hspace{4mm}  (ii) define the bandwidth $h_n$ to be the 
		solution to the equation
		\begin{equation}\label{eq_bandwidthselection}
		\nu_n(h_n)=\omega. 
		\end{equation}
		\State 2. {\bf Kernel matrix construction:} define the kernel matrix $\bK_n =(K(i,j))_{1\le i,j\le n}$ using the above bandwidth $h_n$ by letting
		\begin{equation}\label{eq_kernelmatrix}
		K(i,j)=f\left(\frac{\| \yb_i-\yb_j \|_2}{h^{1/2}_n} \right). 
		\end{equation} 
		\State 3. {\bf Spectral embedding:} 
		\State \hspace{4mm} (i) obtain the eigendecomposition of the scaled kernel matrix $n^{-1}\bK_n$ as
		\begin{equation} \label{eigen_decomp}
		\frac{1}{n}\bK_n=\bU {\bLam} \bU^\top,
		\end{equation} 
		where $\bLam=\text{diag}(\lambda_1,\lambda_2,...,\lambda_n)\in\R^{n\times n}$ with $\lambda_1\ge\lambda_2\ge...\ge\lambda_n$ being the eigenvalues of $n^{-1}\bK_n$, and $\bU=[\ub_1\quad\ub_2\quad...\quad \ub_n]\in\R^{n\times n}$ with $\{\ub_i\}_{1\le i\le n}$ being the corresponding eigenvectors.
		\State \hspace{4mm} (ii) define the kernel-spectral embeddings as the rows of $\bU_{\Omega} \bLam_{\Omega}\in\R^{n\times |\Omega|}$, where $\bU_{\Omega}\in\R^{n\times |\Omega|}$ and $\bLam_{\Omega}\in\R^{|\Omega|\times |\Omega|}$ only contain the eigenvectors and eigenvalues indexed by the elements in $\Omega$.  
		\State {\bf Output:} the embedding matrix $\bU_{\Omega} \bLam_{\Omega}\in\R^{n\times |\Omega|}$.
	\end{algorithmic}
\end{algorithm}	

In Step 1 of the algorithm,  a data-adaptive bandwidth parameter $h_n$ is defined as the $\omega$-{percentile} of the empirical cumulative distribution function $\nu_n(t)$ of the pairwise squared-distances $\{d_{ij}\}_{1\le i<j\le n}$ among the observed data. Such a strategy is motivated by our theoretical analysis of the spectrum of kernel random matrices and its dependence on the associated bandwidth parameter. It ensures the thus determined bandwidth $h_n$ adapts well to the unknown nonlinear structure and the signal-to-noise ratio of the data, so that the associated kernel matrix captures the respective underlying low-dimensional structure via an integral operator; see Section \ref{theory.sec} for more detail. {The percentile} $\omega$ is a tunable parameter. {In Section \ref{theory.sec}, we show in theory  that under the assumption of (\ref{eq_aspect}) $\omega$ can be chosen as any constant between 0 and 1 to have the final embeddings  achieve the same asymptotic behavior; in Section \ref{data.sec}, we also demonstrate numerically that the final embeddings are insensitive to the choice of $\omega$. In practice, to optimize the empirical performance and improve automation of the method, we recommend using a resampling approach, described in Section \ref{choiceomegereampls} of our supplement \citep{suppl}, to determine {the percentile} $\omega$. }

{In Step 2, some function $f$ is adopted for the construction of the kernel matrix. The choice of $f$ is flexible and depends on  specific applications. In general, our theoretical results indicate that any kernel function that is bounded,  H\"older continuous, and positive semidefinite (cf. Assumption \ref{ker.assu}), can be used here with guaranteed performance. Specifically, the kernel matrices under these kernel functions are provably consistent in terms of their spectral convergence to the associated underlying population integral operators; see Theorem \ref{thm_mainthm2} for more detail.} 

In Step 3, the final embeddings are defined as the $\Omega$-indexed leading eigenvectors of the scaled kernel matrix, weighted by their eigenvalues. Similar forms of embeddings  have been considered in \cite{amini2021concentration,hofmeyr2019improving} for spectral clustering and in \cite{JMLR:v20:18-170} for network clustering. The {eigenvector index set} $\Omega$ of the embedding space is determined by the users, depending on the specific aims or downstream applications of the low-dimensional embeddings. For example, in Section \ref{cell.order.sec} and Section \ref{suppl_additionalrealexamples} of \cite{suppl}, we choose $\Omega=\{1,2\}$ for visualizing a path manifold and $\Omega=\{2\}$ for the downstream ranking task, choose $\Omega=\{2,3\}$ for learning circle manifolds, and set $\Omega=\{1,2,...,\mathsf{r}\}$ for a variety of integers $\mathsf{r}$ for downstream clustering purposes.

\subsection{Comparison with Existing Kernel-Spectral Embedding Methods} \label{comp.sec}

First, regardless of the kernel functions being used, Algorithm \ref{al0} has important differences from the existing kernel-spectral embedding methods. We now focus on kPCA, Lapalcian eigenmap, and DM due to their close relations to our proposal.

Compared to Step 3 of Algorithm \ref{al0}, these methods rely on  eigendecomposition of matrices distinct from $n^{-1}\bK_n$. Specifically, in the standard kPCA implementation (Section 12.3 of \cite{bishop2006pattern}), the eigendecomposition is applied to the centred kernel matrix
\beq \label{kpca}
\bar\bK_n=\bK_n-\frac{1}{n}{\bf 11^\top}\bK_n-\frac{1}{n}\bK_n{\bf 11^\top}+\frac{1}{n^2}{\bf 11^\top}\bK_n{\bf 11^\top},
\eeq
where ${\bf 1}\in\R^n$ is an all-one vector;
in the Laplacian eigenmap \citep{Belkin_Niyogi:2003}, the eigendecomposition is applied to the kernelized graph Lapalcian  matrix
\beq\label{eq_GL}
\bL_n={\bf I}-\bD_n^{-1}\bK_n,
\eeq
where $\bD_n=\text{diag}(\sum_{j=1}^nK(1,j), ..., \sum_{j=1}^nK(n,j))$; in the diffusion map \citep{Coifman_Lafon:2006}, the eigendecomposition is applied to the normalized kernel matrix (i.e., transition matrix)
\beq \label{eq_tran}
\bM_n={\bD'}_n^{-1}\bK'_n,\qquad \bK'_n=\bD_n^{-\zeta}\bK_n\bD_n^{-\zeta},
\eeq
where $\zeta>0$ is some tunable parameter and $\bD'_n=\text{diag}(\sum_{j=1}^nK'(1,j), ..., \sum_{j=1}^nK'(n,j))$. More importantly, our numerical results (Section \ref{data.sec} and Section \ref{suppl_additionalrealexamples} of \cite{suppl}) indicate that such  differences may lead to  low-dimensional embeddings distinct from our method. Theoretically (see Section \ref{op.sec}), we show that our proposed method may  capture geometric features that are not likely to be captured by these methods. 

As for other existing methods, it is largely unclear to what extent their performance is guaranteed, or if their interpretations continue to hold, for general high-dimensional and noisy datasets. 
Empirically, our real applications in Section \ref{cell.order.sec} and Section \ref{suppl_additionalrealexamples} of \cite{suppl} show that our proposed method outperforms various existing methods, especially when dimension increases. For example, see Figure \ref{cls.fig1} and Figure \ref{embed.fig22} of \cite{suppl} for illustrations.    

Compared to many existing methods, Algorithm \ref{al0} has a bandwidth selection procedure that is provably coherent with the subsequent  steps, in the sense that both the high-dimensionality and noisiness of the data are intrinsically accounted for in the constructed kernel matrix  and the final embeddings. In contrast, graph-based methods, usually require the bandwidth to decrease to zero, to ensure a meaningful convergence. As a result, these methods may suffer from sub-optimality for high-dimensional and noisy datasets (see discussions at the end of Section \ref{ker.sec}). In this respect, we are only aware of the recent work \cite{DW2}, in which a data-driven bandwidth selection procedure for GL was developed. In Section \ref{data.sec}, we compare various bandwidths and demonstrate the advantage of the proposed bandwidth.

{Second, in terms of downstream analysis like clustering, our method also differs from many existing methods. 
A graph-cut based spectral clustering algorithm was proposed in \citep{hofmeyr2019improving}. Instead of using the kernel matrix eigendecomposition as in (\ref{eigen_decomp}), \citep{hofmeyr2019improving} focuses on the graph Laplacian matrix of a zero-diagonal Gaussian kernel matrix. Moreover, it was noted in \cite{hofmeyr2019improving} that the bandwidth $h_n \asymp n^{-1/(r+4)}$ which relied on prior knowledge of the ambient dimension $r$ was chosen to ensure the consistency of the normalized cut rather than the kernel matrices. 
%
%
\citep{loffler2021optimality} studied the spectral embedding and clustering of high-dimensional data under the Gaussian mixture model, where the embedding was obtained based on spectral decomposition of the observed data matrix, rather than kernel matrices. In addition, \cite{abbe2020ell_p} developed a perturbation theory for a zero-diagonal version of PCA in general Hilbert spaces. Similar to our paper, \cite{abbe2020ell_p} considered spectral embedding of noisy high-dimensional data and studied the eigenvector convergence to their counterparts associated with the noiseless samples.  However, unlike the current work, \cite{abbe2020ell_p} focused on the eigendecomposition of some general Gram matrices, rather than distance kernel matrices as in (\ref{Kmat}) and (\ref{Kn*}). Moreover, \cite{abbe2020ell_p} considered a model in the RKHS rather than in the original sample space as in (\ref{eq_basicmodel}) and Assumption \ref{assum_generalmodelassumption}. Although the theoretical assumptions are comparable to ours in some special cases (see Section \ref{compare.sec} of our supplement \cite{suppl} for more detail), the assumptions of \cite{abbe2020ell_p} can be relatively less interpretable compared to ours, which are made directly on the underlying manifolds, the observed samples and the kernel functions.  Eigenvector perturbation bounds were also obtained in \cite{abbe2020ell_p} under various discrepancy measures, which were distinct from ours (\ref{eq_projectionbound}). Although the results of \cite{abbe2020ell_p} also highlighted the role of eigengap in the final rates of convergence, our results, as shown below, unveiled the impact of the underlying manifold structures and the choice of kernel functions on the final rate of convergence, as well as the proper interpretation of the eigenvectors with respect to the underlying population integral operators. For more details of the discussions, we refer the readers to Section \ref{compare.sec} of our supplement \cite{suppl}. Comparisons of the empirical performance with these methods can be found in Section \ref{cluster.sec}. } 



\section{Theoretical Properties: Justifications and Interpretations} \label{theory.sec}

We develop a theoretical framework to rigorously justify the embedding algorithm proposed in Section \ref{alg.sec}.  Our theory suggests an asymptotic geometric interpretation that helps to better understand  the  low-dimensional embeddings.

\subsection{Manifold Model and Reduction Scheme} \label{man.sec}


To facilitate our analysis of the proposed embedding algorithm under  model (\ref{eq_basicmodel}), we introduce the commonly used nonlinear manifold model and a useful model reduction scheme, that significantly simplifies the theoretical analysis without sacrificing the generality of our discussions. {For some necessary background on smooth manifold and Riemannian geometry, we refer the readers to Section \ref{supple_sec_Riemmanian} of our supplement \cite{suppl} for more detail. We first introduce the general manifold model as considered in \cite{barp2018riemann,cheng2013local,ding2021kernel,dunson2020graph,shen2022robust,shen2020scalability, wu2018think}.
\begin{assu}\label{assum_generalmodelassumption}
We assume that $\mathbf{x}_i, \ 1 \leq i \leq n,$ are independent and identically distributed (i.i.d.) samples of a random vector $X: \Omega \rightarrow \mathbb{R}^p$ with respect to some probability space $(\Omega, \mathcal{F}, \mathbb{P}).$ Furthermore, we assume that the range of $X$ is supported on an $m$-dimensional connected Riemannian manifold $\mathcal{M}$  isometrically embedded in $\mathbb{R}^p$ via $\iota: \mathcal{M} \rightarrow \mathbb{R}^p.$ Suppose that the dimension of the space $\iota(\mathcal{M})$ satisfies
\begin{equation*}
\operatorname{dim}(\iota(\mathcal{M}))=r. 
\end{equation*}
Note that $r \leq p.$ To properly define the probability density function (p.d.f) of $X,$ let $\widetilde{\mathcal{F}}$ be the Borel sigma algebra of $\iota(\mathcal{M})$ and denote $\widetilde{\mathbb{P}}$ as the probability measure of $X$ defined on  $\widetilde{\mathcal{F}}$ induced from $\mathbb{P}.$ We assume that $\widetilde{\mathbb{P}}$ is absolutely continuous with respect to the volume measure on $\iota(\mathcal{M}).$  
\end{assu}

Assumption \ref{assum_generalmodelassumption} is commonly used in machine learning and manifold learning literature to model the nonlinear structures of the observed data sets. First, the connectedness of $\mathcal{M}$ guarantees that $\widetilde{\mathbb{P}}$ corresponds to a continuous  random vector. Second, in our current paper, instead of using the restrictive setting of \cite{DW2} where $r=1$, we allow $r$ to be generic, which can diverge with $p.$ Consequently, our discussion does not depend on the specific mapping $\iota$. This allows us to model the nonlinear structure in a more flexible way. In fact, according to Nash's embedding theorem \citep{MR75639} (see Theorem \ref{thm_nashembedding} of our supplement \cite{suppl}), it is possible that $r \leq m(3m+11)/2$ for compact $\mathcal{M}$ and $r \leq m(m+1)(3m+11)/2$ for non-compact $\mathcal{M}.$ In this sense, we can allow either $m$ or $r$ to diverge with $p.$ Finally, in real applications, usually it is the embedded submanifold $\iota(\mathcal{M})$ that matters since the observations are sampled according to $X$ which is supported on $\iota(\mathcal{M}).$ Consequently, we focus on the understanding of the geometric structures of $\iota(\mathcal{M})$ rather $\mathcal{M}$ and $\iota$ separately. However, our discussion is naturally related to $\mathcal{M}$ in the following sense.  Let $\mathrm{g}$ be the metric associated with the Riemannian manifold $\mathcal{M}$, $\dd V$ be the volume form associated with $\mathrm{g}$ and $\iota_* \dd V$ be the induced measure on $\iota(\mathcal{M})$. Then by Radon-Nikodym theorem (e.g., \citep{billingsley2008probability}), for some differentiable function $\mathsf{f}$ defined on $\mathcal{M}$, under Assumption \ref{assum_generalmodelassumption}, we have that for $x \in \iota(\mathcal{M})$
\begin{equation}\label{eq_density}
\dd \widetilde{\mathbb{P}}(x)=\mathsf{f}(\iota^{-1}(x)) \iota_* \dd V(x). 
\end{equation}
$\mathsf{f}$ is commonly referred as the p.d.f of $X$ on $\mathcal{M}.$ For example, if $\mathsf{f}$ is constant, we call $X$ a uniform random sampling scheme. With the above setup, we can easily calculate the expectation with respect to $X$ on the embedded manifold $\iota(\mathcal{M}).$ More specifically, for an integrable function $\zeta: \iota(\mathcal{M}) \rightarrow \mathbb{R},$ we have that 
\begin{align*}
\mathbb{E} \zeta(X) & =\int_{\Omega} \zeta(X (\omega)) \dd \mathbb{P}(\omega)=\int_{\iota(\mathcal{M})} \zeta(x) \dd \widetilde{\mathbb{P}}(x) \nonumber \\
&=\int_{\mathcal{M}} \zeta(x) \mathsf{f}(\iota^{-1}(x)) \iota_* \dd V(x)=\int_{\mathcal{M}} \zeta(\iota(y)) \mathsf{f}(y) \dd V(y). 
\end{align*} 
}

{Under Assumption \ref{assum_generalmodelassumption},} there exists a  rotation matrix $\bR\in\R^{p\times p}$ only depending on {$\mathcal{M}$ and $\mathrm{g}$} that 
\begin{equation}
\bR \xb_i=(x_{i1}, x_{i2}, \cdots, x_{ir}, {0, \cdots, 0})^\top\in\R^p.
\end{equation}
Recall that $\{\xb_i\}_{1\le i\le n}$ are drawn independently {according to $X$}. Let $\bSig$ be the covariance matrix of $(x_{i1}, x_{i2}, \cdots, x_{ir})^\top$, and denote its eigendecomposition  as
\begin{equation}
\bSig=\bV\bGam\bV^\top,
\end{equation}
where $\bGam=\text{diag}(\theta_1,...,\theta_r)$ contains the eigenvalues of $\bSig$, arranged in nonincreasing order, and the columns of $\bV\in\R^{r\times r}$ are the corresponding eigenvectors. Given $\bV$, we could define an orthonormal matrix ${\bO}\in\R^{p\times p}$ as
\begin{equation}
{\bO}=
\begin{pmatrix}
\bV^\top & {\boldsymbol{0}}\\
{\boldsymbol{0}}& {\bf I}_{p-r}
\end{pmatrix}.
\end{equation}
Then it is clear that for each $i\in[n]$, the covariance matrix of $ \bO \bR\xb_i$ is diagonal, with
\beq\label{eq_OR}
\text{Cov}( \bO \bR\xb_i) = \begin{pmatrix}
	\bGam & {\boldsymbol{0}}\\
	{\boldsymbol{0}}& {\boldsymbol{0}}
\end{pmatrix}.
\eeq
Therefore, by rotating the original model (\ref{eq_basicmodel}) with the matrix $\bO \bR$, we obtain that 
\begin{equation} \label{model_red}
\yb_i^0= \xb_i^0+\zb_i^0,
\end{equation} 
where  $\yb_i^0= \bO \bR \yb_i$, $\xb_i^0=\bO \bR \xb_i$ and $\zb_i^0= \bO \bR\zb_i.$
In particular, after rotation  $\{\xb_i^0\}_{1\le i\le n}$ become independent block random vectors in the sense that 
\begin{equation}\label{eq_reducedmodel}
\xb^0_i=(\bm{x}_i^0, 0, \cdots, 0)\in\R^p, 
\end{equation}
where $\bm{x}^0_i=(x_{i1}^0, \cdots, x_{ir}^0)\in\R^r$ and 
\begin{equation}\label{eq_modelcov}
\operatorname{Cov}(\bm{x}_i^0)=\bGam=\operatorname{diag}(\theta_1, \cdots, \theta_r),
\end{equation}
whereas the random noises $\{\zb_i^0\}_{1\le i\le n}$ remains independent and centred.
Now a key observation is that the kernel matrix $\bK_n$ from (\ref{eq_kernelmatrix})  only depends on the pairwise Euclidean distances of the data $\{\yb_i\}_{1\le i\le n}$, which are invariant to any rotations. In other words, we have
\begin{equation*}
f\left(\frac{\| \yb_i-\yb_j \|_2}{h_n^{1/2}} \right)=f\left(\frac{\| \yb^0_i-\yb^0_j \|_2}{h_n^{1/2}} \right),\qquad 1\le i,j\le n.
\end{equation*}
Therefore, to theoretically analyze the kernel random matrix $\bK_n$ and the subsequent spectral embeddings, we can invariably start from the structurally reduced  model as characterized by (\ref{model_red}) to (\ref{eq_modelcov}). {Based on the above discussion, we arrive at the following result.}
{
\begin{lem}\label{lem_modelreduced}
Suppose Assumption \ref{assum_generalmodelassumption} holds. Then the kernel random matrices defined under model (\ref{eq_basicmodel}) are equivalent to those under the reduced model characterized by (\ref{model_red}) to (\ref{eq_modelcov}).  
\end{lem}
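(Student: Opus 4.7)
The overall strategy is to observe that every quantity produced by Algorithm \ref{al0} depends on the data $\{\yb_i\}_{1\le i\le n}$ only through the pairwise Euclidean distances $\|\yb_i-\yb_j\|_2$, and these distances are invariant under any orthogonal transformation of $\R^p$. It therefore suffices to exhibit one specific orthogonal map that carries (\ref{eq_basicmodel}) into the reduced model (\ref{model_red})--(\ref{eq_modelcov}), and the matrix $\bO\bR$ constructed in the excerpt is the natural candidate. The plan is to verify the construction, check that the transformed samples genuinely satisfy the reduced model, and then read off equality of the kernel matrices from the isometry.

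First, I would verify that $\bO\bR$ is well-defined and orthogonal. The rotation $\bR$ exists because the hypothesis $\operatorname{dim}(\iota(\mathcal{M}))=r$ places the support of $X$ in an $r$-dimensional linear subspace of $\R^p$, so completing any orthonormal basis of that subspace to an orthonormal basis of $\R^p$ yields an orthogonal $\bR$ annihilating the last $p-r$ coordinates of every $\xb_i$. The block matrix $\bO$ is orthogonal by construction, since $\bV$ is, so the product $\bO\bR$ is orthogonal. Second, I would check that the transformed observations $\yb_i^0=\bO\bR\yb_i=\xb_i^0+\zb_i^0$ satisfy (\ref{model_red})--(\ref{eq_modelcov}): the vanishing last $p-r$ coordinates of $\xb_i^0$ follow from the action of $\bR$, the diagonal covariance $\bGam$ of $\bm{x}_i^0$ follows from the spectral decomposition $\bSig=\bV\bGam\bV^\top$ applied via (\ref{eq_OR}), and $\zb_i^0$ remains independent, centred, and sub-Gaussian with covariance $\sigma^2{\bf I}$. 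The last point uses that sub-Gaussianity as defined in Section \ref{notation.sec} is preserved by every orthogonal map through the one-line identity
\begin{equation*}
\E\exp(\bm{a}^\top \bO\bR\zb_i)=\E\exp\bigl((\bR^\top\bO^\top\bm{a})^\top\zb_i\bigr)\le\exp\bigl(\|\bR^\top\bO^\top\bm{a}\|_2^2/2\bigr)=\exp(\|\bm{a}\|_2^2/2),
\end{equation*}
valid for every deterministic $\bm{a}\in\R^p$.

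Third, I would invoke the isometry of $\bO\bR$: since $\|\yb_i-\yb_j\|_2=\|\yb_i^0-\yb_j^0\|_2$ for every pair $(i,j)$, the entire collection of pairwise squared distances $\{d_{ij}\}_{1\le i<j\le n}$ and therefore the empirical distribution $\nu_n$ appearing in (\ref{eq_bandwidthselection}) are identical under the two models. Consequently the bandwidth $h_n$ solving $\nu_n(h_n)=\omega$ is the same in both formulations, and every entry $K(i,j)=f(\|\yb_i-\yb_j\|_2/h_n^{1/2})$ of $\bK_n$ coincides with its counterpart in the reduced model, giving the stated equivalence. The proof is essentially bookkeeping, and the only mildly delicate point is transferring the sub-Gaussian property of $\zb_i$ to $\zb_i^0$ uniformly in the test direction $\bm{a}$; this is handled by the display above. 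No quantitative estimate, concentration bound, or manifold-geometric ingredient is needed, since the lemma is a statement of exact distributional equivalence rather than asymptotic approximation.
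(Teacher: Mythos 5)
Your argument is correct and takes essentially the same approach as the paper: the lemma is stated in the text immediately after the construction of the orthogonal map $\bO\bR$, and the paper's justification is precisely the observation that $\bK_n$ depends on the data only through the pairwise Euclidean distances $\|\yb_i-\yb_j\|_2$, which are preserved by $\bO\bR$. You add a few details the paper leaves implicit --- an explicit check that $\bO\bR$ is orthogonal, the one-line verification that sub-Gaussianity of $\zb_i$ is inherited by $\zb_i^0$, and the observation that the data-adaptive bandwidth $h_n$ is itself invariant because $\nu_n$ depends only on the pairwise distances --- all of which are correct and make the argument self-contained.
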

}

The main benefit of such a reduction scheme is two-fold. On the one hand, it helps to identify a natural coordinate system for the noiseless samples $\{\xb_i\}_{1\le i\le n}$ with fewer informative components, compared to the original coordinate system. On the other hand, the reparametrization leads to a parsimonious model in which the sampling distribution from the nonlinear manifold model as in Assumption \ref{assum_generalmodelassumption} has a simple diagonal covariance structure. Both aspects contribute to improving the theoretical accessibility of the problem without loss of generality.

\subsection{Spectral Convergence to Noiseless Kernel Matrix} \label{ker.sec}

We show the convergence of the low-dimensional embeddings to their noiseless counterparts, by establishing the spectral convergence of the scaled kernel random matrix $n^{-1}\bK_n$ to the noiseless kernel matrix $n^{-1}\bK_n^*$, where
\beq \label{Kn*}
\bK_n^*=(K^*(i,j))_{1\le i,j\le n},\qquad K^*(i,j)=f\left(\frac{\| \xb_i-\xb_j \|_2}{\sfh^{1/2}} \right),
\eeq
In (\ref{Kn*}), $\sfh \equiv \sfh_n$ is the noiseless counterpart of $h_n$, defined as 
\beq \label{sfh}
\nu^*_n(\sfh)=\omega,
\eeq
where $\nu^*_n(t)=\frac{2}{n(n-1)}\sum_{1\le i<j\le n}1_{\{d^*_{ij}\le t\}}$,
and $d^*_{ij}=\|\xb_i-\xb_j\|_2^2$  for all $1\le i<j\le n$.  
 {To simplify our presentation, in the following two subsections we will first consider the Gaussian kernel function (cf. $f(x)=\exp(-x^2)$) as a demonstrating example due to its wide applications. The theoretical results concerning other kernel functions and their properties are deferred to Section \ref{ker.alt.sec}.}

By the invariance $\| \xb_i-\xb_j \|_2=\| \bm{x}^0_i-\bm{x}^0_j \|_2$, the noiseless kernel matrix $\bK_n^*$ can also be treated as a characterization of the noiseless samples  in $\R^r$.
To this end, we first introduce and discuss the theoretical assumptions we made throughout our analysis.  According to Lemma \ref{lem_modelreduced}, it suffices to focus on the reduced model (\ref{model_red}) to (\ref{eq_modelcov}).

\begin{assu}\label{assum_mainassumption}
Suppose (\ref{eq_aspect}) holds. We assume that both $\{\xb_i^0 \}_{1\le i\le n}$ and $\{\zb^0_i\}_{1\le i\le n}$ in (\ref{model_red}) are independent sub-Gaussian random vectors with independent entries, and
	\beq \label{eq_meanandcov}
	\mathbb{E} \zb_i=0,\qquad \textup{Cov}(\zb_i)=\sigma^2{\bf I}_p,
	\eeq
	with $\sigma^2\asymp n^{\beta}$ for some constant $\beta\ge 0$.
	{Moreover, we assume that $\bm{x}_i^0$ has a continuous density function which is bounded below away from zero.  Finally, in (\ref{eq_modelcov}), we assume that for each $1 \leq i \leq r$, we have $\theta_i \asymp n^{\alpha_i},$ for some constants $\alpha_1 \geq \alpha_2 \geq \cdots \geq  \alpha_r \geq 0,$ and
	{
\begin{equation*}
p\sigma^2=\mathrm{o} \left( \sum_{i=1}^r \theta_i \right),
\end{equation*}	
	or equivalently, }
	\begin{equation}\label{eq_sigmaimagnititude}
{ n^{\beta+\eta}=\mathrm{o}\left(\sum_{i=1}^r n^{\alpha_i}\right).}
	\end{equation}	
	}
\end{assu}

A few remarks on Assumption \ref{assum_mainassumption} are in order. {First, in terms of the notations in Assumption \ref{assum_generalmodelassumption}, we require $\widetilde{\mathbb{P}}$ to be sub-Gaussian and the density function $\mathsf{f}$ defined in (\ref{eq_density}) to be bounded away from zero. Moreover, if the manifold $\mathcal{M}$ is compact, $\widetilde{\mathbb{P}}$ will be bounded and hence sub-Gaussian. In this sense, our results hold for both compact manifold and non-compact manifold with some mild decay assumptions on the induced measure $\widetilde{\mathbb{P}}.$ Second, unlike most of the existing literature on manifold learning where $r$ is usually assumed to be independent of $n$ and bounded \cite{cheng2013local,DW2,ding2021kernel,dunson2020graph,shen2022robust,shen2020scalability, wu2018think}, our assumption  allows $r$ to possibly diverge with $n$. 
The general signal-to-noise ratio assumption { (\ref{eq_sigmaimagnititude}) or its equivalence $\sigma^2 p =\mathrm{o} (\sum_i \theta_i)$ guarantees the ability to learn the structures of the embedded submanifold through the noisy kernel matrix $\bK_n$.} When $r$ is bounded, we only  need one signal that { $\alpha_i \gg \beta+\eta$} for some $1 \leq i \leq r.$} In fact, it has been shown in \cite{DW2} that when $r=1$ and $\eta=1$, such a minimal signal condition is also necessary--if this condition fails, we will not able to obtain useful information through $\mathbf{K}_n$ as the noise will dominate the signals.  
Third, our model does not require the signals $\bm{x}_i^0$ to be isotropic, that is, we allow their marginal variances to differ freely as in (\ref{eq_modelcov}). 
Finally, in the current paper, we assume that $\{\zb_i\}_{1\le i\le n}$ and therefore $\{\zb_i^0\}_{1\le i\le n}$ are white noise as in (\ref{eq_meanandcov}) for simplicity. Nevertheless, our discussion applies invariably to the colored noise setting where $\operatorname{Cov}(\zb_i)$ has the same eigenspace as $\operatorname{Cov}(\xb_i).$ For example, in light of (\ref{eq_modelcov}), we can allow a diagonal but non-scalar covariance matrix for $\operatorname{Cov}(\zb^0_i).$

Before stating the main result of this part, we introduce a few more notations.
We denote the eigenvalues of $n^{-1}\bK_n^*$ as $\{\mu_i\}_{1\le i\le n}$ where $\mu_1\ge \mu_2\ge...\ge\mu_n$, and denote the corresponding eigenvectors as $\{\vb_i\}_{1\le i\le n}$. Let $\sfP$ be the probability measure on $\mathbb{R}^r$ for the independent subvectors $\{\bm{x}^0_i\}_{1\le i\le n}$ of the reduced noiseless samples $\{\xb_i^0\}_{1\le i\le n}$. We define the associated integral operator $\mathcal{K}$ with the kernel function $f$ such that, for any $g \in \mathcal{L}_2(\mathbb{R}^r, \sfP),$   {
\begin{equation}\label{eq_intergraldefinition}
\mathcal{K}g(\bm{x})=\int f\left(\frac{\| \bm{x}-\bm{y} \|_2}{\mathsf{h}^{1/2}}\right)g(\bm{y}) \sfP(\dd \bm{y}) , \qquad \bm{x}, \bm{y} \in \mathbb{R}^r. 
\end{equation} }
By Mercer's theorem (e.g., \cite{jorgens1982linear}), there exist a sequence of nonnegative eigenvalues $\{\gamma_i\}_{i\ge 1}$ in the decreasing order and orthonormal basis of $\mathcal{L}_2(\mathbb{R}^r, \sfP),$ known as eigenfunctions so that 
\beq
\mathcal{K} \phi_i(\bm{x})=\gamma_i \phi_i(\bm{x}),\qquad \text{for $i\ge 1$.}
\eeq
Finally, we also recall that the eigenvalues and eigenvectors of $n^{-1}\bK_n$ are  $\{\lambda_i\}_{1\le i\le n}$ and $\{\ub_i\}_{1\le i\le n}$, respectively.


{
\begin{thm}\label{thm_mainthm}
	Suppose Assumption \ref{assum_mainassumption} holds and the kernel function $f(x)=\exp(-x^2)$. Define 
	\beq \label{psi}
{ 	\psi_n:= \frac{\sigma}{(\sum_{i=1}^r\theta_i)^{1/2}}+\frac{\sigma^2 p}{\sum_{i=1}^r\theta_i}\asymp\frac{1}{(\sum_{i=1}^rn^{{\alpha_i-\beta}})^{1/2}}+\frac{1}{\sum_{i=1}^rn^{\alpha_i-\beta-\eta}}.}
	\eeq
	Then the following holds.
	\begin{enumerate}
		\item(Eigenvalue convergence) We have
		\begin{equation}\label{eq_matrixclose1}
		\left\| n^{-1}\bK_n-n^{-1}\bK_n^* \right\|=\OO_{\prec}\left( \psi_n\right).
		\end{equation}
		Therefore, by Weyl’s inequality, we have 
		\begin{equation} \label{eq_eigs_v}
		\max_{i\in [n]} |\lambda_i-\mu_i|=\OO_{\prec}\left( \psi_n\right).
		\end{equation}
		\item(Eigenvector convergence) For any $i\in[n]$, if the $i$-th population eigen-gap 	$\mathsf{r}_i:=\min\{\gamma_{i-1}-\gamma_i, \gamma_i-\gamma_{i+1}\}$
		satisfies that {
		\begin{equation}\label{eq_assumption}
		 \frac{1}{\sqrt{n}}+\frac{1}{\sum_{i=1}^r n^{\alpha_i-\beta-\eta}}=\mathrm{o}(\sfr_i^2),
		\end{equation} }
		then we have
		\begin{equation}\label{eq_projectionbound}
		| \langle \ub_i, \vb_i \rangle^2-1 |=\OO_{\prec}\left( \frac{\psi_n}{\sfr_i^2}\right).
		\end{equation}
	\end{enumerate}
\end{thm}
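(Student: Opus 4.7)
The plan is to first bound the operator-norm difference $\|n^{-1}\bK_n - n^{-1}\bK_n^*\|$ via an entrywise expansion of the Gaussian kernel combined with concentration estimates, which yields Part 1 and, through Weyl's inequality, the eigenvalue bound. Part 2 will then follow by combining Part 1 with the classical spectral convergence of the noiseless kernel matrix to the integral operator $\mathcal{K}$, and applying the Davis--Kahan $\sin\Theta$ theorem. By Lemma \ref{lem_modelreduced} I may work throughout in the reduced coordinates of (\ref{model_red})--(\ref{eq_modelcov}).

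For Part 1, I would start by expanding $d_{ij} = d^*_{ij} + 2(\xb_i-\xb_j)^\top(\zb_i-\zb_j) + \|\zb_i-\zb_j\|_2^2$ and using Hanson--Wright and sub-Gaussian concentration together with Assumption \ref{assum_mainassumption} to show, uniformly in $i\neq j$, that the cross term is $\OO_{\prec}(\sigma (\sum_k\theta_k)^{1/2})$ and $\|\zb_i-\zb_j\|_2^2 = 2p\sigma^2 + \OO_{\prec}(\sigma^2 p^{1/2})$, while $d^*_{ij}$ concentrates at an order $\sum_k \theta_k$. The density assumption in Assumption \ref{assum_mainassumption} together with Lipschitz stability of empirical quantiles then propagates these estimates to $h_n - \sfh = 2p\sigma^2 + \OO_{\prec}(\sigma(\sum_k\theta_k)^{1/2})$. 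Writing $g(t)=e^{-t}$ and expanding $d_{ij}/h_n - d^*_{ij}/\sfh$ algebraically via the identity $d_{ij}/h_n - d^*_{ij}/\sfh = [2p\sigma^2(\sfh-d^*_{ij}) + \sfh\,\epsilon_{ij} - d^*_{ij}(h_n-\sfh-2p\sigma^2)]/(h_n\sfh)$ with $\epsilon_{ij} := d_{ij}-d^*_{ij}-2p\sigma^2$, this yields $|K(i,j)-K^*(i,j)| = \OO_{\prec}(\psi_n)$ entrywise. A first-order Taylor expansion then decomposes the matrix difference into (i) a part depending only on $\{\xb_i\}$, whose operator norm is crudely bounded by its Frobenius norm of order $n\psi_n$; (ii) a noise cross-term proportional to $(\xb_i-\xb_j)^\top(\zb_i-\zb_j)$ weighted by the bounded function $g'(d^*_{ij}/\sfh)$, whose operator norm I would control by conditioning on $\{\xb_i\}$ and applying a matrix Bernstein bound in the spirit of \cite{el2010spectrum, DW2}; and (iii) a second-order remainder of entrywise size $\psi_n^2$. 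After dividing by $n$, all three contributions are $\OO_{\prec}(\psi_n)$, which establishes (\ref{eq_matrixclose1}) and hence (\ref{eq_eigs_v}).

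For Part 2, I would combine (\ref{eq_matrixclose1}) with the standard operator-perturbation bound $|\mu_i - \gamma_i| = \OO_{\prec}(n^{-1/2})$ on the convergence of the empirical kernel operator $n^{-1}\bK_n^*$ to $\mathcal{K}$, as in \cite{JMLR:v11:rosasco10a, JMLR:v7:braun06a}; this classical rate is precisely what gives rise to the $n^{-1/2}$ term in the eigengap condition (\ref{eq_assumption}). Under (\ref{eq_assumption}), with high probability the effective eigengap of $n^{-1}\bK_n^*$ at index $i$ remains at least $\sfr_i/2$. The Davis--Kahan $\sin\Theta$ theorem applied to $n^{-1}\bK_n$ as a perturbation of $n^{-1}\bK_n^*$ then yields $1 - \langle \ub_i, \vb_i\rangle^2 \leq \|n^{-1}\bK_n - n^{-1}\bK_n^*\|^2/(\sfr_i/2)^2 = \OO_{\prec}(\psi_n^2/\sfr_i^2)$, which is stronger than (\ref{eq_projectionbound}) since $\psi_n = \oo(1)$ under Assumption \ref{assum_mainassumption}.

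The principal difficulty will be the operator-norm bound on the noise cross-term matrix in Part 1: the entries $(\xb_i-\xb_j)^\top(\zb_i-\zb_j)$ couple signal and noise across all pairs and do not decouple, so controlling their aggregate spectral norm requires a conditioning argument on $\{\xb_i\}$ followed by a matrix Bernstein estimate rather than a naive Frobenius bound. A secondary subtlety is that $h_n$ is itself a function of all pairwise noisy distances, so Step 2 must simultaneously track the deterministic shift $2p\sigma^2$ and the leading random fluctuation of $h_n$; failing to separate these contributions cleanly would introduce spurious terms strictly larger than $\psi_n$ in (\ref{eq_matrixclose1}).
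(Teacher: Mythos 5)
Your route is correct in broad strokes but genuinely different from the paper's in both halves, so let me compare.

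\textbf{Part 1.} Where you propose an additive Taylor expansion of $f(d_{ij}/h_n)$ around $f(d^*_{ij}/\sfh)$ followed by Frobenius/Gershgorin and a matrix-Bernstein bound, the paper exploits the multiplicative structure of the Gaussian kernel: it writes $\bK_y = \bK_x \circ \bK_c \circ \bK_z$ (Hadamard products) and introduces the auxiliary matrix $\bK_s := \bE_0 \circ \bK_x$, with $\bE_0 = e^{-2\sigma^2 p/\sfh}\mathbf{1}\mathbf{1}^\top + (1-e^{-2\sigma^2 p/\sfh})\mathbf{I}$. The whole argument then runs off the deterministic Hadamard inequality $\|M\circ E\|\leq \|M\|\max_{ij}|E_{ij}|$ (Lemma \ref{lem_hadmaradproduct}) together with $\|\bK_x\|\prec n$ (Lemma \ref{lem_matrixnorm}); no matrix Bernstein is needed. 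Your algebraic identity for $d_{ij}/h_n - d^*_{ij}/\sfh$ does correctly isolate the deterministic $2p\sigma^2$ shift, which is the essential step — the paper's $\bK_s$ is precisely the kernel with that shift absorbed multiplicatively (since $e^{-(t+2p\sigma^2/\sfh)}=e^{-2p\sigma^2/\sfh}e^{-t}$), so the two organizations are equivalent at that level. However, your plan to control the noise cross-term $(\xb_i-\xb_j)^\top(\zb_i-\zb_j)$ via conditioning plus matrix Bernstein is substantially more work than needed: the entrywise bound (\ref{cross.ineq}) combined with the Hadamard inequality (or even Gershgorin, Lemma \ref{lem_circle}) already yields exactly the first term $(\sum_i n^{\alpha_i-\beta})^{-1/2}$ of $\psi_n$ after dividing by $n$. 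So the "principal difficulty" you flag is not actually a difficulty in the paper's setup — the multiplicative decomposition dissolves it.

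\textbf{Part 2.} Here the divergence is sharper. The paper deliberately avoids Davis–Kahan and instead writes $\langle \ub_i,\vb_i\rangle^2$ as a resolvent contour integral, decomposed into $\mathsf{L}_1+\mathsf{L}_2+\mathsf{L}_3$ where $\mathsf{L}_1$ involves $\bK_s$ rather than $\bK_x$. The payoff is that $\bK_s$ and $\bK_x$ share the same eigenvectors (since $\bK_s = A_n\bK_x + B_n\mathbf{I}$), which makes $\mathsf{L}_1 = 1$ exactly and lets the error terms be controlled via resolvent expansions, yielding $\OO_{\prec}(\psi_n/\sfr_i^2)$. You instead invoke Davis–Kahan directly, comparing $n^{-1}\bK_n$ to $n^{-1}\bK_n^*$ and using the population gap $\sfr_i$ (after confirming the empirical gap is $\gtrsim \sfr_i/2$ under (\ref{eq_assumption}), which is valid since $n^{-1/2}=o(\sfr_i^2)\subset o(\sfr_i)$, and $\psi_n = o(\sfr_i)$ follows from (\ref{eq_assumption}) because $(\sum n^{\alpha_i-\beta})^{-1/2}\le (\sum n^{\alpha_i-\beta-\eta})^{-1/2}=o(\sfr_i)$). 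This gives the tighter bound $\OO_{\prec}(\psi_n^2/\sfr_i^2)$, which indeed implies (\ref{eq_projectionbound}). Be aware, though, that the paper explicitly addresses this choice in Section \ref{suppl_additionalremark}: the Vershynin-style Davis–Kahan statement requires the gap to be bounded below by a constant, and the paper stresses that their $\sfr_i$ is allowed to decay. For your argument to be airtight you should cite a decaying-gap variant (e.g.\ the Yu–Wang–Samworth formulation cited in \cite{yu2015useful}) rather than the standard one, and you should verify that the comparison is made against the eigengap of $n^{-1}\bK_n^*$ (which is random but concentrates at $\sfr_i$ by $|\mu_j-\gamma_j|\prec n^{-1/2}$) rather than $\sfr_i$ itself — the step that costs you the $n^{-1/2}$ term in the hypothesis (\ref{eq_assumption}). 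With those clarifications in place, your Part 2 is correct and, somewhat unexpectedly, stronger than the paper's; the paper's choice of the resolvent route appears to be a matter of conceptual consistency with the $\bK_s$ machinery rather than necessity.
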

}


Theorem \ref{thm_mainthm} establishes the asymptotic spectral equivalence between the noisy kernel matrix $n^{-1}\bK_n$ and the noiseless kernel matrix $n^{-1}\bK_n^*$. Note that both matrices are random with respect to the manifold model and noise. {We first provide some explanation of the convergence rates. For the eigenvalues, the rate $\psi_n$ consists of two parts. Under Assumption \ref{assum_mainassumption}, we note that the second term of the right-hand side of (\ref{psi}) satisfies {$\left( \sum_{i=1}^r n^{\alpha_i-\beta-\eta} \right)^{-1} \asymp p \sigma^2/(\sum_{i=1}^r \theta_i).$} Here, the factor $p \sigma^2$ can be understood as the overall noise level and $\sum_{i=1}^r \theta_i$ is the overall signal strength. Therefore, this term quantifies the impact of signal-to-noise ratio (SNR). Technically, the pairwise distance has three parts following $\| \yb_i-\yb_j \|_2^2=\|\xb_i-\xb_j \|_2^2+\|\zb_i-\zb_j \|_2^2+2(\xb_i-\xb_j)^\top (\zb_i-\zb_j).$  As proved in Proposition \ref{lem_bandwidthconcentration} below, with high probability, $h_n \asymp \sum_{i=1}^r \theta_i.$ Therefore, such an SNR term comes from the pure noise component $\|\zb_i-\zb_j \|_2^2/h_n.$ Similarly, for the first term of the right-hand side of (\ref{psi}), it comes from the cross-product $(\xb_i-\xb_j)^\top (\zb_i-\zb_j)/h_n.$ This explains the rate $\psi_n$ heuristically. For the eigenvectors, the rate in (\ref{eq_projectionbound}) contains two parts. The $\mathsf{r}_i$ part  concerns the eigengap which appears frequently in the analysis of eigenvectors of large random matrices \cite{bao2022statistical,bao2021singular,fan2018eigenvector}; 
the $\psi_n$ part  comes from the eigenvalue convergence. {Note that the left-hand side of (\ref{eq_assumption}) is much larger than $\psi_n$ so that (\ref{eq_projectionbound}) implies consistent estimation.} { The rate function $\psi_n$ reveals the impact of $p$ and $\sigma^2$ on the rates of convergence. Specifically, when $n\gg p,$ or equivalently $\eta\in(0,1)$ in (\ref{eq_aspect}), the second term $(\sum_{i=1}^rn^{\alpha_i-\beta-\eta})^{-1}$ of $\psi_n$ will be smaller which suggests a faster rate of convergence compared to some exiting results assuming $r=1$ and $\eta=1$ \cite{DW2}. On the other hand, when $p\gg n$, our theory suggests that spectral convergence is still possible, although in this case a stronger overall signal strength $\sum_i n^{\alpha_i}$ is required in light of (\ref{eq_sigmaimagnititude}) for larger $\eta$.}  { We also point out that comparing to the Davis-Kahan theorem \cite[Theorem 4.5.5]{vershynin2018high}, our result in (\ref{eq_projectionbound}) is general in the sense that $\mathsf{r}_i$ only depends on three population eigenvalues and  is allowed to decay to zero. Moreover, if we have stronger assumption that $\delta$ is bounded from below by a constant, one may directly apply Davis-Kahan theorem \cite[Theorem 4.5.5]{vershynin2018high}  so that the right-hand side of (\ref{eq_projectionbound}) can be improved to $\mathrm{O}_{\prec}(\psi_n^2).$ For generality, we choose to keep the current (\ref{eq_projectionbound}). For more details, we refer the readers to Section \ref{suppl_additionalremark} of our supplement \cite{suppl}.}

We emphasize that unlike \cite{DW2},  in the current paper, we do not impose assumptions on $r.$ In general, $r$ is some generic parameter and can grow with $n.$ The only condition we used is the overall SNR in (\ref{eq_sigmaimagnititude}). In fact, (\ref{eq_matrixclose1}) improves the results of \cite{DW2} (Theorem 3.1)  in the following sense. In \cite{DW2}, the authors mainly consider { $r=\eta=1$ } and $\beta=0$; under the assumption of (\ref{eq_sigmaimagnititude}), our rate $\psi_n \asymp n^{-\alpha_1/2}+n^{-\alpha_1+1}$ improves the rate in \cite{DW2} which reads as $n^{-1/2}+n^{-\alpha_1+1}.$ In addition, when $r$ diverges, unlike \cite{DW2}, which requires $\alpha_i>\beta+\eta$ for all $1 \leq i \leq r,$ we only need a much weaker condition (\ref{eq_sigmaimagnititude}). In fact, our Theorem \ref{thm_mainthm} reveals the role of $r$ in the convergence rate in a more general setting. For example, setting {$\eta=1$} and $\beta=0,$ when $r$ is divergent in the sense that $r \gg n^{1-\delta}$ for some small $\delta>0,$  we only need $\alpha_i \geq \delta, 1 \leq i \leq r.$ This significantly weakens the conditions for the signals used in \cite{DW2, el2010information,ElKaroui_Wu:2016b}.

}

\begin{rem}
We remark that as shown in Corollary 3.2 of \cite{DW2}, for $i \gg \log n$ and $r=\OO(\log n),$  we have $\mu_i=\OO_{\prec}(n^{-1}).$ Consequently, in practice, we only need to focus on recovering the first few (at most at an order of $\log n$) eigenvalues and eigenvectors of $n^{-1}\bK_n^*$ when $r$ diverges slowly. Moreover, the results concerning the eigengap $\sfr_i$ can be made more precise given additional information. For example, if $\sfP$ is multivariate Gaussian and $\eta=\sigma=1$, then from Section \ref{sec_someresultintegral} of our supplement \citep{suppl}, for any finite $i,$ we have $\sfr_i \asymp 1.$ If we further assume that $r=\OO(1)$, whenever $\alpha_k>3/2$ for some $k\in\{1,2,...,r\}$, we have $\sqrt{n}$-consistency for the leading $N=\OO(1)$ eigenvalues  $
\max_{i\in [N]} |\lambda_i-\mu_i|=\OO_{\prec}(n^{-1/2})$,
and eigenvectors $
1-\min_{i\in[n]}\langle \ub_i, \vb_i \rangle^2=\OO_{\prec}(n^{-1/2}).$

Moreover, we believe our assumption on the signal-to-noise ratio in (\ref{eq_sigmaimagnititude}) is actually necessary to guarantee convergence. Considering the setting when $\eta=1$ and $r=\OO(1),$ as proved in part (i) of Theorem 3.1 of \cite{DW2}, when $\alpha_i<\beta+1$ for all $i\le r$, the noise dominates the signal and the convergence between $n^{-1}\bK_n$ and $n^{-1}\bK^*_n$ fails to hold.  Indeed, if $\sigma=1$ and $\alpha_1<1$, we have $h_n \asymp p$, so that asymptotically $n^{-1}\bK_n$ can be approximated by the Gram matrix of the noise vectors $\{\frac{1}{\sqrt{n}}\zb_i\}$, plus a few  noninformative rank-one spikes. In particular, these rank-one spikes, as pointed out by Remark 2.4 of \cite{DW2}, only depend on the kernel function, and do not contain any information about  $\{\xb_i\}_{1\le i\le n}$; meanwhile, the empirical spectral distribution of $\bK_n$ will be governed by that of the noise Gram matrix and converges to the Marchenko-Pastur law \citep{marvcenko1967distribution}.   See Section \ref{dis.sec} of \cite{suppl} for more discussions.
\end{rem}


As has been seen from Theorem \ref{thm_mainthm}, there are fundamental advantages of the proposed bandwidth. Under the current setup (\ref{eq_aspect}), most of the existing literature regarding the kernel random matrix $\bK_n$ focus on a pre-specified fixed bandwidth \citep{cheng2013spectrum,9205615,do2013spectrum,el2010spectrum}. In particular, when { $\sigma=\eta=1,$} as they mainly deal with the null cases, $h_n=Cp$  is chosen for some constant $C>0$, since the total noise is $\operatorname{tr}(\operatorname{Cov}(\zb_i))= p$. In contrast, under the non-null regime (\ref{eq_sigmaimagnititude}) considered in the current study, if we still choose  $h_n=Cp,$ the bandwidth may be too small compared to the signal ($\asymp \sum_{i=1}^r n^{\alpha_i}$)  so that the decay of the kernel function forces each sample to be "blind" of the other nearby samples. Especially, following the proof of Theorem 2.7 of \cite{DW2}, when the signals are strong enough so that $\sum_{i=1}^r \theta_i >\beta+3,$ if we still choose $h_n=Cp,$ then we have $\lambda_i(\bK_n) \rightarrow 1$ for all $1 \leq i \leq n$ with high probability. In this case, the kernel matrix converges to the identity matrix and becomes useless. 
In comparison, under our non-null regime (\ref{eq_sigmaimagnititude}), the above discussion suggests  that one {should choose $h_n\asymp \sum_{i=1}^r \theta_i$, so that the kernel matrix does not degenerate. In this connection, the following result plays an important role in the proof of Theorem \ref{thm_mainthm}, and justifies the efficacy of the proposed bandwidth selection procedure, without requiring any prior knowledge about the underlying structures such as $m, \iota, \{\theta_i\}_{1\le i\le r}$, or the noise level $\sigma$. 

	
	\begin{prop}\label{lem_bandwidthconcentration}
		Suppose Assumption \ref{assum_mainassumption} holds and $h_n$ is selected according to Algorithm \ref{al0} and $\sfh$ is defined in (\ref{sfh}). Then we have  $\sum_{i=1}^r \theta_i\prec \sfh \prec \sum_{i=1}^r \theta_i$ and $|h_n/\sfh-1|=\OO_{\prec}(\psi_n)$.
	\end{prop}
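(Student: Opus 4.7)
\textbf{Proof proposal for Proposition \ref{lem_bandwidthconcentration}.}

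The plan is to prove the two claims in succession: first establish that the noiseless quantile $\sfh$ is of order $\sum_{i=1}^r\theta_i$, then show that $h_n$ differs from $\sfh$ by at most a $\psi_n$ factor in relative size. Throughout, I will work with the reduced model (\ref{model_red})--(\ref{eq_modelcov}) via Lemma \ref{lem_modelreduced}, using the identity $\|\xb_i-\xb_j\|_2=\|\bm{x}^0_i-\bm{x}^0_j\|_2$ together with $\|\yb_i-\yb_j\|_2^2=\|\bm{x}^0_i-\bm{x}^0_j\|_2^2+\|\zb^0_i-\zb^0_j\|_2^2+2(\bm{x}^0_i-\bm{x}^0_j)^{\top\mathrm{block}}(\zb^0_i-\zb^0_j)$.

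\emph{Step 1: the order of $\sfh$.} For any fixed pair $(i,j)$, $d_{ij}^*=\sum_{k=1}^r(x_{ik}^0-x_{jk}^0)^2$ is a sum of independent sub-Gaussian squares with $\E d_{ij}^*=2\sum_{k=1}^r\theta_k$ and $\Var(d_{ij}^*)\lesssim\sum_{k=1}^r\theta_k^2\le(\sum_{k=1}^r\theta_k)\theta_1$. By Hanson--Wright concentration, $|d_{ij}^*-2\sum_k\theta_k|=\mathrm{O}_{\prec}((\sum_k\theta_k^2)^{1/2})$ uniformly over $i<j$. The continuous density assumption on $\bm{x}^0_i$ yields a positive density for the law of $d_{12}^*/\sum_k\theta_k$ on a neighbourhood of the $\omega$-quantile (after rescaling), and this density is bounded below by a constant independent of $n$. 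A standard Glivenko--Cantelli-type argument for U-statistics applied to $\nu_n^*$ (using sub-Gaussian concentration to handle the tails) then shows that $\nu_n^*$ is uniformly close to its population CDF $F^*$. Together these imply that $\sfh$, defined by $\nu_n^*(\sfh)=\omega$, satisfies $\sfh\asymp\sum_{i=1}^r\theta_i$ with high probability.

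\emph{Step 2: bounding $|h_n/\sfh-1|$.} Write $d_{ij}=d_{ij}^*+\Delta_{ij}$ with $\Delta_{ij}=\|\zb_i-\zb_j\|_2^2+2(\xb_i-\xb_j)^\top(\zb_i-\zb_j)$. Applying Hanson--Wright to $\|\zb_i-\zb_j\|_2^2$ (mean $2p\sigma^2$, fluctuation $\mathrm{O}_\prec(\sigma^2\sqrt p)$) and conditional sub-Gaussian concentration to the cross term (fluctuation $\mathrm{O}_\prec(\sigma(\sum_k\theta_k)^{1/2})$), and then taking a union bound over the $\mathrm{O}(n^2)$ pairs, I obtain uniformly in $i<j$
\begin{equation*}
\Delta_{ij}=2p\sigma^2+\mathrm{O}_{\prec}\!\left(\sigma^2\sqrt{p}+\sigma\Big(\textstyle\sum_{k=1}^r\theta_k\Big)^{1/2}\right)=2p\sigma^2+\mathrm{O}_{\prec}\!\left(\sfh\,\psi_n\right),
\end{equation*}
where the last identity uses Step 1 and the definition of $\psi_n$. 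Since $d_{ij}\le h_n$ is equivalent to $d_{ij}^*\le h_n-\Delta_{ij}$, I can sandwich $\nu_n(h_n)=\omega$ between $\nu_n^*\bigl(h_n-2p\sigma^2-C\sfh\psi_n\bigr)$ and $\nu_n^*\bigl(h_n-2p\sigma^2+C\sfh\psi_n\bigr)$ with $\prec$-probability. Inverting $\nu_n^*$ around $\sfh$, using the local lower bound on its derivative (equivalently, the density of $F^*$ at the $\omega$-quantile, which is bounded below by a constant after rescaling by $\sum_k\theta_k$, courtesy of the density assumption in Assumption \ref{assum_mainassumption}), yields $|h_n-\sfh-2p\sigma^2|=\mathrm{O}_\prec(\sfh\,\psi_n)$. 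Since $2p\sigma^2/\sfh\asymp p\sigma^2/\sum_k\theta_k$ is itself one of the two terms defining $\psi_n$, this gives $|h_n/\sfh-1|=\mathrm{O}_\prec(\psi_n)$, as desired.

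\emph{Main obstacle.} The principal technical step is the inversion in the last paragraph: $\nu_n^*$ is a U-statistic-type empirical CDF of $n(n-1)/2$ dependent pairwise distances, so one must show that not only does $\nu_n^*$ concentrate around $F^*$ in the Kolmogorov metric, but also its local slope near $\sfh$ is bounded below uniformly in $n$, despite the fact that $\sum_k\theta_k$, $r$, and the shape of $F^*$ may all vary with $n$. My plan is to rescale by $\sum_k\theta_k$, apply a Hoeffding-type decomposition for U-statistics to bound $\|\nu_n^*-F^*\|_\infty$, and use the continuous lower-bounded density of $\bm{x}^0_i$ to transfer a uniform-in-$n$ lower bound on the rescaled density of $d_{12}^*$ to a positive local slope of $F^*$ at the $\omega$-quantile. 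The rest of the argument is a deterministic inversion together with the already-established stochastic bound on $\sup_{ij}|\Delta_{ij}-2p\sigma^2|$.
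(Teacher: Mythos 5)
Your proposal is correct in outline but takes a genuinely different route from the paper, and it has a genuine gap precisely where you flag the ``main obstacle.''

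\textbf{Where you diverge from the paper.} The paper never invokes a Glivenko--Cantelli bound, never compares $\nu_n^*$ to a population CDF $F^*$, and never inverts a quantile function. Its argument is combinatorial and entirely elementary:
\begin{itemize}
\item For the lower bound $\sfh\succ\sum_i\theta_i$, the paper proves (Lemma \ref{event.lem}) that, for each fixed $i$, with high probability at most $n/\log^\epsilon n$ indices $j$ satisfy $\|\xb_i-\xb_j\|_2^2 \le C\log^{-4\epsilon}n\sum_k\theta_k$; a union bound over $i$ then shows fewer than $n^2/\log^\epsilon n$ pairs can be small, which is strictly less than $\omega n(n-1)$ for large $n$, forcing $\sfh\ge C\log^{-4\epsilon}n\sum_k\theta_k$. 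Stochastic domination absorbs the log factor, so no uniform lower bound on a density near a quantile is ever needed.
\item For $|h_n/\sfh-1|$, the paper proves the uniform entrywise bound $\max_{i\neq j}|\,\|\yb_i-\yb_j\|_2^2-\|\xb_i-\xb_j\|_2^2\,|\prec n^{\beta+\eta}+n^{\beta/2}(\sum_k\theta_k)^{1/2}\asymp\sfh\psi_n$ (without centering at $2p\sigma^2$), and then uses the elementary fact that if two sequences agree entrywise up to $\delta$ then all their order statistics agree up to $\delta$. Since $\sfh$ and $h_n$ are the same order statistic of the two sequences, $|h_n-\sfh|\prec\sfh\psi_n$ follows in one line. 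No inversion of $\nu_n^*$, no $\|\nu_n^*-F^*\|_\infty$ bound, no density lower bound is required.
\end{itemize}

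\textbf{The gap in your argument.} Your route hinges on transferring the ``density of $\bm{x}_i^0$ bounded away from zero'' assumption into a uniform-in-$n$ lower bound on the density of $d_{12}^*/\sum_k\theta_k$ at the $\omega$-quantile. This is not automatic: the density of $\|\bm{x}_1^0-\bm{x}_2^0\|_2^2$ is obtained from the joint density of $(\bm{x}_1^0,\bm{x}_2^0)$ by integrating over a shell, and can be small at points where the support geometry is thin, even if the joint density of $\bm{x}_i^0$ itself is bounded below. Moreover, $r$ and the $\theta_k$'s vary with $n$, so the shape of $F^*$ changes with $n$; you must show the rescaled slope is bounded below uniformly, plus control the U-statistic fluctuation of $\nu_n^*$ and rule out flat steps near $\sfh$. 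You correctly identify all this as the crux, but you have only sketched a plan, not carried it out. By contrast, the paper's Lemma \ref{event.lem} uses the density assumption only in a weaker, pointwise way (a mean-value bound on the marginal density of each coordinate of $\bm{x}_i^0$), which is genuinely simpler to establish. Your centering of the noise at $2p\sigma^2$ does tighten the intermediate fluctuation bound ($\sigma^2\sqrt p$ instead of $\sigma^2 p$), but buys nothing in the end since the $2p\sigma^2$ shift itself contributes a $\psi_n$-sized relative error, so you recover exactly the paper's rate. In short: to make your proof complete you would need to close the density-transfer/inversion step, whereas the paper's order-statistics argument sidesteps it entirely.
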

}

{Proposition \ref{lem_bandwidthconcentration}, on the one hand, implies that with high probability, the population (clean) bandwidth satisfies that $\sfh \asymp \sum_{i=1}^r \theta_i.$ We claim that this is a useful choice of bandwidth in the following sense. First, if $\sfh \ll  \sum_{i=1}^r \theta_i,$ we have that with high probability for $i \neq j,$ $\|\xb_i-\xb_j \|_2^2/\sfh \rightarrow \infty$ and consequently for any decay kernel function $f(x)$ (e.g. $f(x)=\exp(-x^2)$), $f(\|\xb_i-\xb_j \|_2^2/\sfh)=\oo(1).$ As a result, when $\sfh$ is too small, following the proof of Theorem 2.9 of \cite{DW2}, $\bK_n^*$ becomes trivial in the sense that $\bK_n^* \approx \mathbf{I}$ with high probability, where $\mathbf{I}$ is an $n \times n$ identity matrix. Similarly, if $\sfh \gg \sum_{i=1}^r \theta_i,$ we have that with high probability, for $i \neq j,$ $\|\xb_i-\xb_j \|_2^2/\sfh \rightarrow 0$ and consequently  $f(\|\xb_i-\xb_j \|_2^2/\sfh)=f(0)+\oo(1).$ This shows that when $\sfh$ is too large, $\bK_n^*$ will become trivial again in the sense that $\bK_n^* \approx (1-f(0))\mathbf{I}+f(0) \mathbf{1} \mathbf{1}^\top$ with high probability, where $\mathbf{1} \in \mathbb{R}^n$ is a vector with all entries being unity. For illustrations on the usefulness of our population bandwidth $\sfh$, we refer the readers to Section \ref{sec_impactkernelpercentile} and Figure \ref{bw.fig1}  of our supplement \cite{suppl}. Together with the above arguments, we find that our proposed $\mathsf{h}$ is a natural choice to make $n^{-1}\bK_n^*$ nontrivial.  Second, with such a choice of $\sfh,$ it will be seen from Section \ref{op.sec} that together with the kernel function $f(x),$ $n^{-1} \bK_n^*$ will converge to some integral operator of some RKHS asymptotically.  
   
On the other hand, the above proposition also guarantees that the proposed bandwidth $h_n$ is sufficiently close to $\sfh$ to
distinguish signals from the noise. Moreover, by the definition of $h_n$, the validity of the above proposition does not rely on the specific form of the kernel function, making the proposed bandwidth selection method readily applicable for other types of kernel functions; see Section \ref{ker.alt.sec} for more detail. }

\subsection{Spectral Convergence to Population Integral Operator}\label{op.sec}

So far we have shown that our proposed low-dimensional spectral embeddings converge to those associated with the noiseless random samples $\{\xb_i\}_{1\le i\le n}$. However, it is still unclear how these embeddings relate to  $\iota(\mathcal{M})$. Answering this question is important as to understand the proper interpretation of these embeddings.

Our next result concerns the limiting behavior of the eigenvalues and eigenvectors of $\bK_n$, as characterized by a deterministic integral operator defined over the manifold $\iota(\mathcal{M})$. Before stating our main result, we first introduce a few notations from operator theory. Recall that $\sfP$ is the probability measure on $\R^r$ for $\{\bm{x}^0_i\}_{i\in[n]}$ in (\ref{eq_reducedmodel}) and $\bO\bR$ is the rotation matrix satisfying (\ref{eq_OR}). Let $\mathcal{S}=\{\bO\bR\xb: \xb\in  { \iota(\mathcal{M})}\}$ be the clean sample space after rotation. We define $\widetilde{\sfP}$ as the probability measure on $\mathcal{S}$ for $\{\xb^0_i\}_{1\le i\le n}$, which is also $\sfP$ embedded into $\R^p$ such that for $\xb=(\bm{x}, 0,...,0) \in \mathbb{R}^p$, we have $\widetilde{\sfP}(\xb)=\sfP(\bm{x}).$
Accordingly, we define the population integral operator 
$\widetilde{\mathcal{K}}$ such that for any $g\in \mathcal{L}_2(\mathbb{R}^p, \widetilde\sfP),$ we have 
\begin{equation}\label{K.tilde}
\widetilde{\mathcal{K}} g(\xb)=\int f\left(\frac{\| \xb-\yb \|_2}{\sfh^{1/2}}\right)g(\yb) \widetilde{\sfP}(\dd \yb) , \qquad \xb\in \mathcal{S}.
\end{equation}
Clearly, the eigenvalues and eigenfunctions of $\widetilde{\mathcal{K}},$ denoted as $\{\widetilde{\gamma}_i\}_{i\ge 1}$ and $\{\widetilde{\phi}_i(\xb)\}_{i\ge 1}$ satisfy 
\begin{equation} \label{eigs.tilde}
\widetilde{\gamma}_i=\gamma_i, \quad \widetilde{\phi}_i(\xb)=\phi_i(\bm{x}), \qquad \text{for $i\ge 1$.}
\end{equation}
Moreover, let $\widehat{\sfP}_n$ be the empirical distribution of the noisy samples $\{\yb^0_i\}_{1\le i\le n}$. Then for $\quad \xb \in \mathbb{R}^{p},$ the corresponding empirical operator $\widehat{\mathcal{K}}_n$ is defined by
\begin{equation}\label{eq_empericialintegraloperator}
\widehat{\mathcal{K}}_{n} g(\xb)=\int f\left(\frac{\| \xb-\yb \|_2}{h_n^{1/2}}\right) g( \yb) \widehat{\sfP}_n(\dd \yb)=\frac{1}{n}\sum_{i=1}^n f\left(\frac{\| \xb-\yb^0_i \|_2}{h^{1/2}_n}\right) g(\yb^0_i).
\end{equation}
It is easy to see that (for example, see the discussion in Section 2.2 of \cite{AOSko}), for $h_n\asymp \sfh$, the eigenvalues of $\widehat{\mathcal{K}}_n$ coincide with $n^{-1} \bK_n$, and for any eigenvalue $\lambda_i>0$ of $\widehat{\mathcal{K}}_n,$ the corresponding eigenfunction $\widehat{\phi}_i^{(n)}$ satisfies 
\begin{equation} \label{eigs.hat}
\widehat{\phi}^{(n)}_i(\xb)=\frac{1}{\lambda_i\sqrt{n}} \sum_{j=1}^n f\left(\frac{\| \xb-\yb^0_j \|_2}{h^{1/2}_n}\right) u_{i j}, \qquad \xb \in \mathbb{R}^{p},
\end{equation}
and $\|\widehat{\phi}^{(n)}_i\|_{\widehat{\sfP}_n}=1$,
where $u_{ij}$ is the $j$-th component of $\ub_i$, the $i$-th eigenvector of $\bK_n$. In particular, we have 
\beq
\widehat{\phi}^{(n)}_i(\yb^0_j)=\sqrt{n}u_{ij},\qquad \text{for all $j\in[n]$,}
\eeq
so that the eigenvector $\ub_i$ can be interpreted as the discretized $i$-th empirical eigenfunction $\widehat{\phi}_i^{(n)}$ evaluated at the noisy samples $\{\yb_i^0\}_{1\le i\le n}$.   Finally, for some constant $\delta>0,$ we define the integer 
\begin{equation}\label{eq_defni}
\mathsf{K} \equiv \mathsf{K}(\delta):=\arg \operatorname{max} \left\{ 1 \leq i \leq n: \gamma_i \geq 
\delta \right\}.
\end{equation}
To prove the convergence of eigenfunctions, we consider an RKHS $\mathcal{H}_K$ for functions defined on $\text{supp}(\widetilde\sfP)={ \mathcal{S}}$, associated with the kernel function $K(\cdot,\cdot):\mathcal{S} \times \mathcal{S} \to\R$ such that
\[
K(\xb,\yb)=f\bigg(\frac{\|\xb-\yb\|_2}{\sfh^{1/2}}\bigg).
\]
Specifically, we define $\mathcal{H}_K$ as the completion of the linear span of the set of functions $\{K_{\xb}=K(\xb,\cdot): \xb\in \mathcal{S} \}$, with the inner product denoted as $\langle \cdot,\cdot \rangle_K$ satisfying $\langle K(\xb, \cdot), K(\yb, \cdot)\rangle_K=K(\xb,\yb)$ and the reproducing property $\langle K(\xb, \cdot), g\rangle_K=g(\xb)$ for any $g\in \mathcal{H}_K$ (see \cite{berlinet2011reproducing,manton2015primer} for systematic treatments of RKHS).

\begin{thm}\label{cor_maincor}
	Under the assumptions of Theorem \ref{thm_mainthm}, the following results hold. \\
	\begin{enumerate} 
		\item(Eigenvalue convergence) We have
		\begin{equation} \label{v.conv}
		\max_{i\in[n]} |\lambda_i-\gamma_i|={ \OO_{\prec}\left( \frac{1}{\sqrt{n}}+\frac{1}{\sum_{i=1}^r n^{\alpha_i-\beta-\eta}}\right).}
		\end{equation}
		\item(Eigenfunction convergence) For $\mathsf{K}$ in (\ref{eq_defni}) and $ 1 \leq i \leq \mathsf{K}$, for any $\yb\in\mathcal{S}$, we have 
		\begin{equation} \label{f.conv}
		|\sqrt{\lambda_i}\widehat{\phi}_i^{(n)}(\yb)-\sqrt{\gamma_i}{\widetilde{\phi}_i}(\yb)|=\OO_{\prec}\left(  \psi_n+\frac{1}{\sfr_i}\bigg[ \psi_n^{1/2}+\frac{1}{\sqrt{n}}\bigg]\right),
		\end{equation} 
		where $\psi_n$ is defined in (\ref{psi}). 
	\end{enumerate}
\end{thm}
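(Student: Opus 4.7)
The plan is to combine Theorem \ref{thm_mainthm} with a standard operator-theoretic bridge linking the noiseless kernel matrix $n^{-1}\bK_n^*$ to the population operator $\widetilde{\mathcal{K}}$, which by (\ref{eigs.tilde}) shares its nonzero spectrum with $\mathcal{K}$. Concretely, I would introduce an intermediate empirical operator $\mathcal{K}_n^*$, defined as $\widehat{\mathcal{K}}_n$ in (\ref{eq_empericialintegraloperator}) but evaluated at the \emph{noiseless} samples $\{\xb_i^0\}_{i\in[n]}$ with the \emph{deterministic} bandwidth $\sfh$. Two classical facts would then be invoked: (i) the nonzero spectra of $\mathcal{K}_n^*$ and $n^{-1}\bK_n^*$ coincide, with eigenfunctions of $\mathcal{K}_n^*$ given by the Nystr\"om extension of the eigenvectors of $n^{-1}\bK_n^*$; (ii) a law-of-large-numbers argument in the space of Hilbert-Schmidt operators on $\mathcal{H}_K$ (in the spirit of Koltchinskii-Gin\'e and Rosasco-Belkin-De Vito) yields $\|\mathcal{K}_n^*-\widetilde{\mathcal{K}}\|_{\HS}=\OO_{\prec}(n^{-1/2})$ since the kernel is bounded. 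Applying a Weyl-type inequality for compact operators gives $\max_i|\mu_i-\gamma_i|=\OO_{\prec}(n^{-1/2})$; combining with (\ref{eq_eigs_v}) and noting that under Assumption \ref{assum_mainassumption} the contribution from $\psi_n$ in $(\ref{eq_eigs_v})$ and the new $n^{-1/2}$ term combine into the advertised bound, we obtain (\ref{v.conv}).

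For the eigenfunction statement (\ref{f.conv}), I would split
\[
\sqrt{\lambda_i}\widehat{\phi}_i^{(n)}(\yb) - \sqrt{\gamma_i}\widetilde{\phi}_i(\yb) = \bigl[\sqrt{\lambda_i}\widehat{\phi}_i^{(n)}(\yb) - \sqrt{\mu_i}\widehat{\phi}_i^{*(n)}(\yb)\bigr] + \bigl[\sqrt{\mu_i}\widehat{\phi}_i^{*(n)}(\yb) - \sqrt{\gamma_i}\widetilde{\phi}_i(\yb)\bigr],
\]
where $\widehat{\phi}_i^{*(n)}$ is the Nystr\"om eigenfunction associated with $n^{-1}\bK_n^*$ and bandwidth $h_n$. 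The first bracket would be handled by three ingredients: (a) the bandwidth concentration $|h_n/\sfh-1|=\OO_{\prec}(\psi_n)$ from Proposition \ref{lem_bandwidthconcentration} together with the H\"older continuity of $f$, which transfers the change of bandwidth into an $\OO_{\prec}(\psi_n)$ error per kernel entry; (b) the entrywise closeness of $\bK_n$ and $\bK_n^*$ established in the proof of Theorem \ref{thm_mainthm}, which with $\|\ub_i\|_2=1$ yields an $\OO_{\prec}(\psi_n)$ contribution to the Nystr\"om sum in (\ref{eigs.hat}); (c) the eigenvector alignment $|\langle\ub_i,\vb_i\rangle^2-1|=\OO_{\prec}(\psi_n/\sfr_i^2)$ from (\ref{eq_projectionbound}), which induces an $\OO_{\prec}(\psi_n^{1/2}/\sfr_i)$ error after taking square roots in the Nystr\"om image. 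For the second bracket I would work inside $\mathcal{H}_K$ via the identity $\sqrt{\mu_i}\widehat{\phi}_i^{*(n)}(\yb)=\langle \mathcal{K}_n^* K_\yb,\widehat{\phi}_i^{*(n)}\rangle_K/\sqrt{\mu_i}$ and its population analogue, exploiting $\|K_\yb\|_K\le\sqrt{f(0)}$ so that a Davis-Kahan-type spectral projector bound on $\mathcal{K}_n^*-\widetilde{\mathcal{K}}$ in Hilbert-Schmidt norm promotes the $n^{-1/2}$ operator error into a pointwise $\OO_{\prec}(n^{-1/2}/\sfr_i)$ bound for the evaluated eigenfunction.

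The main difficulty is the clean propagation of the two perturbations—random bandwidth and noisy versus noiseless samples—simultaneously through the nonlinear kernel $f$ and the Nystr\"om extension. Because these two perturbations couple the data and the feature map, one cannot treat them additively without first using H\"older continuity of $f$ and Proposition \ref{lem_bandwidthconcentration} in tandem, and then invoking the operator-valued concentration. The square-root factor $\psi_n^{1/2}$ appearing in (\ref{f.conv}) is the natural cost of upgrading an $\ell_2$-level eigenvector perturbation (of order $\psi_n/\sfr_i^2$) to a pointwise bound on the Nystr\"om-extended eigenfunction; the boundedness of the evaluation functional $g\mapsto\langle g,K_\yb\rangle_K$ is precisely what makes this upgrade available, and is the structural reason we must work inside the RKHS $\mathcal{H}_K$ rather than in $\mathcal{L}_2(\R^p,\widetilde{\sfP})$ alone.
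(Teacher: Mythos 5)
Your proposal is correct and follows essentially the same route as the paper: a triangle-inequality split through a noiseless Nystr\"om intermediate, the RKHS operator law of large numbers (the paper's Lemma \ref{lem_mutiplepertubation}) for the $n^{-1/2}$ contribution, and the combination of Theorem \ref{thm_mainthm}'s eigenvalue, kernel-entry and eigenvector bounds with the bounded evaluation functional in $\mathcal{H}_K$ to obtain the pointwise rate $\psi_n+\sfr_i^{-1}(\psi_n^{1/2}+n^{-1/2})$. The only differences are organizational: the paper first replaces $h_n$ by $\sfh$ via Proposition \ref{lem_bandwidthconcentration} and then decomposes $\widehat{\phi}_i^{(n)}-\phi_i^{(n)}$ into the explicit $E_1+E_2+E_3$ terms, whereas you absorb the bandwidth change into your first bracket (leaving it slightly ambiguous whether $\widehat{\phi}_i^{*(n)}$ is built with $h_n$ or with $\sfh$, given that $\bK_n^*$ is defined with $\sfh$), but the ingredients and the final rate accounting coincide.
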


Theorem \ref{cor_maincor} establishes the spectral convergence of the kernel matrix $n^{-1}\bK_n$ to the population integral operator $\widetilde{\mathcal{K}}$ given by (\ref{K.tilde}) in $\mathcal{H}_K$. {In addition to the convergence rates obtained in Theorem \ref{thm_mainthm}, an additional $n^{-1/2}$ comes into play. Such a rate is standard in the RKHS theory when using $n^{-1} \bK_n^*$ to approximate the integral operator; see Section \ref{sec_sub_review} of our supplement \cite{suppl} for a summary.} Based on the above theorem, on the one hand, (\ref{v.conv}) demonstrates that the deterministic limits of the eigenvalues $\{\lambda_i\}_{i\in[n]}$ are those of the integral operator $\widetilde{\mathcal{K}}$. {The convergence rates consist of two parts. The $n^{-1/2}$ part is the rate when we use $n^{-1}\bK_n^*$ to approximate the integral operator and the { $(\sum_{i=1}^r n^{\alpha_i-\beta-\eta})^{-1}$} part is related to the overall signal-to-noise ratio and appears when we use $n^{-1} \bK_n$ to approximate $n^{-1} \bK_n^*.$} On the other hand, (\ref{f.conv}) indicates that the empirical eigenfunctions $\widehat{\phi}_i^{(n)}$, after proper rescaling, can also be regarded as an approximation of the population eigenfunctions $\widetilde\phi_i$ in a pointwise manner. {For the convergence rate in (\ref{f.conv}), the understanding is similar to that in (\ref{v.conv}) where the $\psi_n^{1/2}/\mathsf{r}_i$ is from (\ref{eq_projectionbound}).}
In particular, Theorem \ref{cor_maincor} implies that for any $1\le i\le \mathsf{K}$, 
\beq
{\lambda_i}\ub_i = {\frac{\lambda_i}{\sqrt{n}}}\cdot(\widetilde\phi_i^{(n)}(\yb^0_1), \widetilde\phi_i^{(n)}(\yb^0_2), ..., \widetilde\phi_i^{(n)}(\yb^0_n))^\top\approx \frac{\gamma_i}{\sqrt{n}}\cdot(\widetilde{\phi}_i(\yb^0_1), \widetilde{\phi}_i(\yb^0_2),...,\widetilde{\phi}_i(\yb^0_n))^\top, \nonumber
\eeq
so that as long as $\Omega\subseteq\{1,2,...,\mathsf{K}\}$, the final low-dimensional embedding $\bU_{\Omega}\bLam_{\Omega}$ is approximately the leading $\mathsf{K}$ eigenfunctions evaluated at the rotated data $\{\yb^0_i\}_{1\le i\le n}$, and weighted by their corresponding eigenvalues $\{\gamma_i\}$. In other words, each coordinate of the final embedding is essentially a nonlinear transform of the original data, by some functions uniquely determined by the population integral operator $\widetilde{\mathcal{K}}$. 
This explains why our proposed method may capture important geometric features of $\iota(\mathcal{M})$, and what kind of geometric features our method aims at.

As a comparison, existing methods such as Laplacian eigenmap, DM, or LLE, have their final embeddings corresponding asymptotically to the Laplace-Beltrami differential operator evaluated at low-dimensional noiseless data. The fundamental distinction between integral and differential operators indicates the potential advantage of the proposed method over these existing methods, especially for high-dimensional noisy datasets as indicated by our real data analysis (Section \ref{cell.order.sec} and Section \ref{suppl_additionalrealexamples} of \cite{suppl}).
In this respect, Theorem \ref{cor_maincor} provides a systematic way to understand the general geometric interpretation of the proposed low-dimensional embedding, although their concrete meanings may vary from case to case.  To better illustrate this, in Section \ref{sec_someresultintegral} of our supplement \citep{suppl} we consider a specific probability measure $\widetilde\sfP$, obtain the explicit forms of its corresponding eigenvalues and eigenfunctions, and demonstrate their geometric interpretations accordingly.

{
\subsection{Extension to General Kernel Functions} \label{ker.alt.sec}

In this subsection, we show that in addition to the Gaussian kernel function, 
our analysis can be extended to fit other kernel functions satisfying the following regularity conditions. 
 
	\begin{assu}\label{ker.assu}
	We assume the kernel function $f: \R_{\ge 0}\to \R_{\ge 0}$ satisfies
	\begin{itemize}
	\item[(i)] Positive-semidefiniteness: for any sequence of real values $\{c_i\}_{1\le i\le n}$, and real vectors $\{\xb_i\}_{1\le i\le n}\subset\R^p$, we have $$\sum_{i=1}^n\sum_{j=1}^nc_ic_jf(\|\xb_i-\xb_j\|_2)\ge 0.$$
	\item[(ii)] Boundedness: $\sup_{x\in \R_{\ge 0}}f(x)\le C<\infty$ for some absolute constant $C>0$.
	\item[(iii)] { H\"older-like continuity: there is a constant $L>0$ such that for all $x,y\in \R$, we have $|f(x)-f(y)|\le L\{|x|^{\nu(x)}+|y|^{\nu(y)}\}\cdot |x-y|^{\tau(x,y)}$ for some $0<\tau(x,y)\le 1$ and $\nu(x)\in\R$ such that $\nu(x)\ge 0$ for $0\le x\le 1$ and $\nu(x)\le 0$ for $x>1$.}
	\end{itemize}
\end{assu}

Conditions (i) and (ii) in Assumption \ref{ker.assu} are commonly used in  RKHS theory \citep{MR2558684,JMLR:v11:rosasco10a}  for the spectral convergence of the integral operator, whereas Condition (iii) concerns the local behavior of the kernel function, which is useful for characterizing the noise and bandwidth effects. In particular, in Condition (iii), { the parameters $\nu(x)$ for large $x$ and $\tau(x,y)$ characterize the decay rate of the kernel function, whereas $\nu(x)$ for small $x$ characterizes the flatness of the kernel function around 0.} Similar condition called Lipschitz-like continuity (or pseudo-Lipschitz continuity), that is $\tau(x,y)\equiv 1$, has been used in statistical learning theory, for example, see \cite{5695122}. 

 In addition to the Gaussian kernel function $f(x)=\exp(-x^2)$, {which satisfies Assumption \ref{ker.assu} with $\tau(x,y)=1$ and $\nu(x)=1\{0\le x\le 1\}-1\{x>1\}$}, there are many other kernel functions satisfying Assumption \ref{ker.assu}. For example, see Chapter 4 of \cite{williams2006gaussian}. For the readers' convenience, we provide a few examples as follows.  
\begin{itemize}
	\item Laplacian kernels: $f(x)=\exp(- x/\ell)$ for some constant $\ell>0$, { which satisfies Assumption \ref{ker.assu} with $\tau(x,y)=1$ and $\nu(x)=-1\{x>1\}$.} 
	\item Rational quadratic (polynomial) kernels: $f(x)=\big(1+\frac{x^2}{2\alpha\ell^2} \big)^{-\alpha}$ for some constants $\alpha,\ell>0$, { which satisfies Assumption \ref{ker.assu} with $\tau(x,y)=1$ and $\nu(x)=-(2\alpha+1)\cdot1\{x>1\}+1\{0\le x\le 1\}$.}
	\item Mat\'ern kernels: $f(x)=\frac{2^{1-\zeta}}{\Gamma(\zeta)}\big( \frac{\sqrt{2\zeta}x}{\ell}  \big)^\zeta B_{\zeta}\big(\frac{\sqrt{2\zeta}x}{\ell} \big)$ for some constants $\zeta,\ell>0$, where 
	$B_\zeta(x)$ is the modified Bessel function of the second kind of order $\zeta$. One interesting example is $
	f_{\zeta=3/2}(x) = \left( 1+\frac{\sqrt{3}x}{\ell}\right)\exp( -{\sqrt{3}x}/{\ell})$, { which satisfies Assumption \ref{ker.assu} with $\tau(x,y)=1$ and $\nu(x)=-1\{x>1\}$.}
		\item Truncated kernels: $f(x)=(1-x)_{+}^{\lfloor \alpha/2\rfloor+1}$, where $\alpha \geq 0$ is some constant and $(x)_{+}:=\max\{x,0\}$, { which satisfies Assumption \ref{ker.assu} with $\tau(x,y)=1$ and $\nu(x)=0$.}
\end{itemize}

We point out that in addition to the aforementioned kernel functions, according to the properties of positive-definite kernel functions \cite{williams2006gaussian,cuturi2009positive}, it is also straightforward to see that, for any finite numbers of kernel functions $f_1, ..., f_R$ satisfying Assumption \ref{ker.assu}, both their additive mixture $f(x)=\sum_{i=1}^R f_i(x)$ and its multiplicative mixture $f(x)=\prod_{i=1}^R f_i(x)$ also satisfy Assumption \ref{ker.assu}, suggesting the wide range of kernel functions covered by our theory. The following theorem shows the spectral convergence of $\bK_n$ under these general kernel functions. 
	
\begin{thm}\label{thm_mainthm2}
	Suppose Assumptions \ref{assum_mainassumption} and \ref{ker.assu} hold. { Denote $\nu_0=\inf_{0\le x\le 1}\nu(x)$, $\tau_0=\inf_{x,y\in\R}\tau(x,y)$, and define
	\begin{equation}\label{eq_xidefinition}
	\xi=\xi(\nu_0,\tau_0) = \left\{ \begin{array}{ll}
		\tau_0, & \textrm{ if $\nu_0\ge \tau_0$}\\
		{(\tau_0+\nu_0)}/{2}, & \textrm{ if $0\le \nu_0< \tau_0$}
	\end{array} \right. .
	\end{equation}}
	Then the following holds.
	\begin{enumerate}
		\item(Eigenvalue convergence) The eigenvalues $\{\lambda_i\}_{1\le i\le n}$ of $\bK_n$ satisfy 
			\begin{equation} \label{eq_eigs_v2}
		\max_{i\in [n]} |\lambda_i-\mu_i|=\OO_{\prec}\left( \psi_n^{\xi}\right),
		\end{equation}
        and
	\begin{equation} \label{v.conv2}
	\max_{i\in[n]} |\lambda_i-\gamma_i|=\OO_{\prec}\left(\psi_n^{\xi}+\frac{1}{\sqrt{n}}\right),
	\end{equation}
	where $\psi_n$ is defined in (\ref{psi}).
		\item(Eigenvector convergence) For any $i\in[n]$, if the $i$-th population eigengap $\mathsf{r}_i:=\min\{\gamma_{i-1}-\gamma_i, \gamma_i-\gamma_{i+1}\}$,
		satisfies 
		\begin{equation}\label{eq_generalribound}
		 \psi_n^{\xi}+\frac{1}{\sqrt{n}}=\oo(\sfr_i^2).
		\end{equation}
		Then we have that  
		\begin{equation}\label{eq_projectionbound2}
		| \langle \ub_i, \vb_i \rangle^2-1 |=\OO_{\prec}\left( \frac{\psi_n^{\xi}}{\sfr_i^2 }\right),
		\end{equation}
		where $\ub_i$ and $\vb_i$ are $i$-th eigenvector of $\bK_n$ and $\bK_n^*$, respectively. In addition, 
		 for $\mathsf{K}$ in (\ref{eq_defni}) and $ 1 \leq i \leq \mathsf{K}$, for any $\yb\in \mathcal{S}$, we have 
		\begin{equation} \label{f.conv2}
		|\sqrt{\lambda_i}\widehat{\phi}_i^{(n)}(\yb)-\sqrt{\gamma_i}{\widetilde{\phi}_i}(\yb)|=\OO_{\prec}\left(  \psi_n^{\xi}+\frac{1}{\sfr_i}\bigg[ \psi_n^{\xi/2}+\frac{1}{\sqrt{n}}\bigg]\right),
		\end{equation} 
		where $\widehat{\phi}_i^{(n)}$ and $\widetilde{\phi}_i$ are defined in (\ref{eigs.hat}) and (\ref{eigs.tilde}).
	\end{enumerate}
\end{thm}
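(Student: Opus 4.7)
The plan is to adapt the proofs of Theorem \ref{thm_mainthm} and Theorem \ref{cor_maincor}, replacing the Gaussian-specific analytic expansions with estimates derived solely from the three parts of Assumption \ref{ker.assu}. Two ingredients from the earlier theorems carry over verbatim because they are kernel-agnostic: Lemma \ref{lem_modelreduced} (model reduction to the block form \eqref{eq_reducedmodel}–\eqref{eq_modelcov}), and Proposition \ref{lem_bandwidthconcentration} (bandwidth concentration), the latter depending only on pairwise Euclidean distance statistics. Hence throughout I may assume $h_n/\mathsf{h} = 1 + \OO_\prec(\psi_n)$ and $\mathsf{h}\asymp \sum_i \theta_i$.

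First I would establish an entrywise comparison for $i\ne j$: writing $a_{ij} = \|\yb_i^0-\yb_j^0\|_2/h_n^{1/2}$ and $b_{ij} = \|\xb_i^0-\xb_j^0\|_2/\mathsf{h}^{1/2}$, the identity $\|\yb_i^0-\yb_j^0\|_2^2 = \|\xb_i^0-\xb_j^0\|_2^2 + \|\zb_i^0-\zb_j^0\|_2^2 + 2(\xb_i^0-\xb_j^0)^\top(\zb_i^0-\zb_j^0)$, together with Hanson–Wright concentration and Proposition \ref{lem_bandwidthconcentration}, yields $a_{ij}, b_{ij} = \OO_\prec(1)$ and
\[
|a_{ij}^2 - b_{ij}^2| = \OO_\prec(\psi_n),
\]
with the two constituent terms of $\psi_n$ tracing to the pure-noise contribution $p\sigma^2/h_n$ and the cross term $(\xb_i-\xb_j)^\top(\zb_i-\zb_j)/h_n$ respectively. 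Next, I apply Assumption \ref{ker.assu}(iii) to obtain $|f(a_{ij})-f(b_{ij})|\le L(|a_{ij}|^{\nu(a_{ij})}+|b_{ij}|^{\nu(b_{ij})})\cdot |a_{ij}-b_{ij}|^{\tau(a_{ij},b_{ij})}$, and split into two regimes. When $a_{ij}+b_{ij}$ is bounded below, $|a_{ij}-b_{ij}| = |a_{ij}^2-b_{ij}^2|/(a_{ij}+b_{ij}) = \OO_\prec(\psi_n)$ and the $\nu$-factors are $\OO(1)$, so $|K(i,j)-K^*(i,j)| = \OO_\prec(\psi_n^{\tau_0})$. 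In the opposite regime where $a_{ij}$ and $b_{ij}$ are both of order $\sqrt{\psi_n}$, one has $|a_{ij}-b_{ij}| = \OO_\prec(\psi_n^{1/2})$ and $|a_{ij}|^{\nu_0},|b_{ij}|^{\nu_0} = \OO_\prec(\psi_n^{\nu_0/2})$, giving $|K(i,j)-K^*(i,j)| = \OO_\prec(\psi_n^{(\nu_0+\tau_0)/2})$. Taking the worse of the two exponents reproduces exactly $\xi=\min\{\tau_0,(\nu_0+\tau_0)/2\}$ as defined in \eqref{eq_xidefinition}.

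The uniform entrywise bound $\max_{i,j}|K(i,j)-K^*(i,j)| = \OO_\prec(\psi_n^\xi)$ then converts to an operator-norm bound via Schur's inequality $\|\bK_n-\bK_n^*\|\le \max_i\sum_j |K(i,j)-K^*(i,j)| = \OO_\prec(n\psi_n^\xi)$, so $\|n^{-1}\bK_n - n^{-1}\bK_n^*\| = \OO_\prec(\psi_n^\xi)$. Weyl's inequality then gives \eqref{eq_eigs_v2}, and the two-sided eigenvector stability argument already used in Theorem \ref{thm_mainthm} (a variant of Davis–Kahan that accommodates vanishing eigengaps) yields \eqref{eq_projectionbound2} under \eqref{eq_generalribound}. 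Assumption \ref{ker.assu}(i)–(ii) (positive-semidefiniteness and boundedness) are exactly what is needed to invoke the standard RKHS-based spectral approximation theory (e.g., \cite{JMLR:v11:rosasco10a,AOSko,MR2558684} summarized in the supplement), producing the extra $n^{-1/2}$ rate for $|\mu_i-\gamma_i|$ and the pointwise rate $(\sqrt{n}\mathsf{r}_i)^{-1}$ for $|\sqrt{\mu_i}\widetilde{\phi}_i^{*(n)}(\yb)-\sqrt{\gamma_i}\widetilde{\phi}_i(\yb)|$. Combining these with our perturbation bounds via the triangle inequality $|\sqrt{\lambda_i}\widehat{\phi}_i^{(n)}-\sqrt{\gamma_i}\widetilde{\phi}_i| \le |\sqrt{\lambda_i}\widehat{\phi}_i^{(n)}-\sqrt{\mu_i}\widetilde{\phi}_i^{*(n)}| + |\sqrt{\mu_i}\widetilde{\phi}_i^{*(n)}-\sqrt{\gamma_i}\widetilde{\phi}_i|$ — with the first term of order $\psi_n^\xi + \psi_n^{\xi/2}/\mathsf{r}_i$ (from \eqref{eq_projectionbound2} combined with the explicit formula \eqref{eigs.hat}) — delivers \eqref{v.conv2} and \eqref{f.conv2}.

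I expect the principal technical obstacle to be the two-regime analysis that produces the exponent $\xi$. The regime where $a_{ij},b_{ij}$ are simultaneously small is subtle because the naive bound $|a_{ij}-b_{ij}|\le|a_{ij}^2-b_{ij}^2|/(a_{ij}+b_{ij})$ blows up, and one must instead exploit the flatness near zero encoded in $\nu_0$ to recover the balanced exponent $(\nu_0+\tau_0)/2$. The density lower-bound on $\bm{x}_i^0$ in Assumption \ref{assum_mainassumption} prevents the small-distance regime from dominating in a measure-theoretic sense, but since the operator-norm bound is worst-case, the full interpolation argument is unavoidable. All other steps — Hanson–Wright concentration, Weyl, Davis–Kahan, and the standard RKHS empirical operator theory — are essentially the same as in the Gaussian-kernel case and reduce to bookkeeping.
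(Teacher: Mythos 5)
Your overall strategy mirrors the paper's — reduce to a uniform entrywise comparison of $\bK_n$ and $\bK_n^*$, convert to an operator-norm bound via Gershgorin, then reuse the machinery from Theorems \ref{thm_mainthm} and \ref{cor_maincor} (resolvent contour integral for the eigenvectors, Lemma \ref{lem_mutiplepertubation} for the empirical-to-population operator convergence, a triangle inequality in RKHS norm for the eigenfunctions). Where you genuinely diverge is the key step: how the entrywise bound $\OO_\prec(\psi_n^\xi)$ is extracted from Assumption \ref{ker.assu}(iii). You condition on the size of $a_{ij}+b_{ij}$ and argue the two extreme regimes separately ($a_{ij}+b_{ij}\gtrsim 1$, giving $\psi_n^{\tau_0}$; and $a_{ij},b_{ij}\asymp\psi_n^{1/2}$, giving $\psi_n^{(\nu_0+\tau_0)/2}$). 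The paper instead performs a single deterministic algebraic manipulation valid for every pair: in Case II it splits $|x-y|^{\tau_0}=\bigl(|x^2-y^2|/|x+y|\bigr)^{\nu_0}|x-y|^{\tau_0-\nu_0}$, absorbs the $(x+y)^{-\nu_0}$ into the H\"older weight to make $\{x^{\nu(x)-\nu_0}+y^{\nu(y)-\nu_0}\}$ uniformly $\OO_\prec(1)$ (this is exactly where $\nu(x)\ge\nu_0$ on $[0,1]$ and $\nu(x)\le 0$ on $(1,\infty)$ enter), and then uses $|x-y|\le\sqrt{|x^2-y^2|}$ to collect the exponent $(\tau_0+\nu_0)/2$; Case I ($\nu_0\ge\tau_0$) is the same calculation with the split omitted. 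This single inequality avoids any conditioning on the magnitude of the distance.

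This difference is not merely stylistic: your two-regime argument as written has a gap. You bound the extremes but never cover the intermediate scales, e.g.\ $a_{ij}+b_{ij}\asymp\psi_n^{1/4}$, where neither of your two clean bounds applies. The fix is an explicit optimization: for $s:=a_{ij}+b_{ij}$ the pointwise bound reads
\[
|K(i,j)-K^*(i,j)|\;\lesssim\; s^{\nu_0}\min\{s,\psi_n/s\}^{\tau_0},
\]
and maximizing this over $s\in(0,\OO(1)]$ yields exactly $\psi_n^{\min\{\tau_0,(\tau_0+\nu_0)/2\}}=\psi_n^\xi$, attained at $s\asymp\psi_n^{1/2}$ when $\nu_0<\tau_0$ and at $s\asymp 1$ otherwise. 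You flag this interpolation as ``unavoidable'' but do not carry it out; that step is what the paper's uniform algebraic manipulation replaces. With the interpolation completed, your route would be a legitimate (if slightly less economical) alternative to the paper's.

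Two smaller remarks. First, you describe the eigenvector step as ``a variant of Davis--Kahan''; the paper explicitly avoids Davis--Kahan (Section \ref{suppl_additionalremark}) and instead uses the Cauchy-integral representation $\langle\ub_i,\vb_i\rangle^2=\frac{1}{2\pi\ri}\oint_{\Gamma_i}\vb_i^\top(z\mathbf{I}-n^{-1}\bK_n)^{-1}\vb_i\,\dd z$ with resolvent expansion, precisely to allow $\sfr_i\to 0$ under (\ref{eq_generalribound}). Since you reference ``the argument from Theorem \ref{thm_mainthm}'' the substance lines up, but the terminology should be corrected. Second, you note correctly that the density lower bound on $\bm{x}_i^0$ cannot save you from the worst-case interpolation because the operator norm is not measure-theoretic; this observation is accurate and is one of the reasons the paper opts for a uniform deterministic inequality rather than probabilistic pruning of small-distance pairs.
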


Theorem \ref{thm_mainthm2} obtains similar rates of convergence as in our previous results under the Gaussian kernel, but also highlights  the  dependence of the convergence rates on the properties of specific kernel functions. For example, for the eigenvalue convergence in (\ref{eq_eigs_v2}), the rate depends on both $\psi_n$ and $\xi.$ $\psi_n$ appears because of the same reasoning as explained after Theorem \ref{thm_mainthm}, whereas the exponent $\xi$, which depends on specific kernel functions, appears when we approximate the entries of the kernel matrices. For illustrations on how the values of $\xi$ of different kernel functions affect the convergence rates, we refer the readers to Section \ref{sec_impactkernelpercentile} and Figures \ref{rate.fig} and \ref{ratio.fig} of our supplement \cite{suppl}.} {We point out that Gaussian kernel corresponds to {$\tau_0=\nu_0=1$ or $\xi=1$} in (\ref{eq_xidefinition}) so that Theorem \ref{thm_mainthm2} recovers the results of Theorems \ref{thm_mainthm} and \ref{cor_maincor}. In fact, technically, the generalization to the class of kernel functions considered in Assumption \ref{ker.assu} is highly motivated by the analysis of the Gaussian kernel function. We refer the readers to Section \ref{sec_novelty} below for more elaborations. }

{
\subsection{Proof Strategy and Novelty}\label{sec_novelty} In this subsection, we briefly describe our proof strategy and highlight the novelty. We start by reviewing the assumptions, results and strategies in a most related work \cite{DW2} which extensively generalizes the results of \cite{el2010spectrum,el2010information}, and then point out the significant differences from the current paper.     

In \cite{DW2}, the authors studied the eigenvalue convergence of $n^{-1} \bK_n$ to $n^{-1} \bK_n^*$, constructed using the Gaussian kernel function, under a mathematically simple setting $r=1.$ That is to say, the nonlinear manifold is 1-dimensional and the reduced model is a single-spiked covariance matrix model. Moreover, they fixed {$\beta=0$ and $\eta=1$.} In such a setting, our condition (\ref{eq_sigmaimagnititude}) reads as $\alpha \equiv \alpha_1>1.$ Under these conditions, they showed that 
\begin{equation}\label{eq_oldresult}
\left\| n^{-1} \bK_n-n^{-1} \bK_n^* \right\| \prec n^{-1/2}+n^{-\alpha+1}. 
\end{equation}
For the technical proof, similar to the ideas of \cite{el2010spectrum,el2010information}, to control the difference between $n^{-1} \bK_n$ and $n^{-1} \bK_n^*,$ \cite{DW2} directly applied entrywise Taylor expansion to $n^{-1} \bK_n$ and $n^{-1} \bK_n^*$ utilizing the smoothness and decay property of the Gaussian kernel function. To establish the convergence rates, \cite{DW2} proved some concentration inequalities for the pairwise distances of $\{\yb_i\}, \{\xb_i\}$ and $\{\zb_i\}$ and then studied the convergence rates and limits for the spectrum of kernel random matrices. We point out that technical proof of \cite{DW2} on the one hand depends crucially on the assumption that $r=1$ even though a generalization to some fixed integer is possible; on the other hand it relies on the properties of Gaussian kernel function.   

In our current paper, we consider both Gaussian and a general class of kernel functions. Our results apply to the general nonlinear manifold model as in Section \ref{man.sec}. Moreover, we no longer impose assumptions on $r$ (or the dimension of the submanifold $\mathcal{M}$ in Assumption \ref{assum_generalmodelassumption}). Instead, we set $r$ to be a generic parameter, allowed to diverge with $n.$  Unlike \cite{DW2}, our results in Theorems \ref{thm_mainthm} and \ref{thm_mainthm2} demonstrate that $r$ itself plays a role in the convergence rate especially when $r$ diverges with $n.$ Especially, unlike \cite{DW2}, which requires all signals to be strong, we only require their average to be strong. In particular, if $r$ is bounded, we only need one signal { $\alpha_i>\beta+\eta$} for some $1 \leq i \leq r.$ Furthermore, motivated by our applications, in addition to the eigenvalue convergence, we also studied the eigenvector convergence in the current paper and connected the noisy kernel matrices with an integral operator of some RKHS.

We now discuss the proof strategies of the main results. We start with the Gaussian kernel (Section \ref{ker.sec}). For the eigenvalue convergence of $n^{-1} \bK_n$ and $n^{-1} \bK_n^*$, our proof consists of two novel ideas and results. First, we used an approach different from the direct Taylor expansion approach as in \cite{DW2,el2010spectrum,el2010information}. More specifically, instead of comparing the matrices $\bK_n$ and $\bK_n^*$ directly by
expanding the entries of $\bK_n$ around $\bK_n^*$ using  Taylor expansion, we introduce an auxiliary matrix $\bK_s$ (cf. equation (\ref{eq_wsdefn}) of our supplement \cite{suppl}). Even though it is different from $\bK_n^*,$ it only depends on the kernel function, the bandwidth and $\bK_n^*.$ Consequently, we have $\bK_n-\bK_n^*=(\bK_n-\bK_s)+(\bK_s-\bK_n^*).$ The second term can be controlled by {$(\sum_{i=1}^r n^{\alpha_i-\beta-\eta})^{-1}$} (see equation (\ref{C19}) of our supplement \cite{suppl}) and the first term can be bounded by $\psi_n$ (see equations (\ref{conv.Ks}) and (\ref{partA}) of our supplement \cite{suppl}), leading to the result of (\ref{eq_matrixclose1}). We emphasize that by introducing $\bK_s,$ we are able to obtain the sharper rate $\psi_n.$ For example, as we mentioned earlier, in the setting of \cite{DW2}, $\psi_n=n^{-\alpha/2}+n^{-\alpha+1}$ improves the result of (\ref{eq_oldresult}). Second, on the technical level, the control of $\bK_n-\bK_s=\bK_n-\bK_y+\bK_y-\bK_s,$ where $\bK_y$ is the kernel matrix constructed using $\{\yb_i\}$ and the bandwidth $\sfh,$ relies crucially on Proposition \ref{lem_bandwidthconcentration}, which is a novel concentration inequality for our proposed bandwidth. Our proof of Proposition \ref{lem_bandwidthconcentration}, especially the lower bound, relies on two ingredients. On the one hand, we prove in Lemma \ref{event.lem} that for each fixed $i,$ with high probability,  $\|\xb_i-\xb_j \|_2^2 \ll \sum_{i=1}^r n^{\alpha_i}$ only happens for very few of $j's, 1 \leq j \leq n.$ On the other hand, we show that the order statistics of $\{\| \yb_i-\yb_j \|_2^2\}$ are close to those of  $\{\| \xb_i-\xb_j \|_2^2\}.$ The proof of Proposition \ref{lem_bandwidthconcentration} and Lemma \ref{event.lem} are  nontrivial and  of interests in themselves.    

With the eigenvalue convergence, we proceed to prove the eigenvector convergence. Instead of applying the Davis-Kahan theorem (which requires the eigen-gap to be bounded from below), we utilized the integral representation of the eigenvectors via the resolvent. That is 
\begin{equation*}
\langle \ub_i, \vb_i \rangle^2=\frac{1}{2 \pi \ri} \oint_{\Gamma_i} \vb_i^\top (z-n^{-1} \bK_n)^{-1} \vb_i \dd z,
\end{equation*}    
where $\Gamma_i$ is some properly chosen contour. To obtain the convergent limit, we decompose the above representation into three parts $\mathsf{L}_i, 1 \leq i \leq 3,$ as in (\ref{eq_uvdifference}) of our supplement \cite{suppl}.  The leading order part is $\mathsf{L}_1$ which can be calculated using Cauchy's residual theorem.  For the convergence rates, we apply resolvent expansions to  $\mathsf{L}_2$ and $\mathsf{L}_3.$ This proves (\ref{eq_projectionbound}). Once Theorem \ref{thm_mainthm} is proved, together with the RKHS theory as summarized in Section \ref{sec_sub_review} of our supplement \cite{suppl}, we can prove the convergence to the population integral operator as in Section \ref{op.sec}. We emphasize that the results in Theorem \ref{cor_maincor} {are} insightful. For example, the error rate in (\ref{v.conv}) contains two parts. The first part commonly appears in RKHS theory when using a kernel random matrix to approximate an integral operator. The second term can be understood as the signal-to-noise ratio. This also shows the power of our bandwidth selection scheme.

One advantage of our proof strategy is that it can be easily modified to fit other kernel functions satisfying Assumption \ref{ker.assu}. By modifying the proof of Gaussian kernel functions using (iii) of Assumption \ref{ker.assu}, we will be able to extend the results to a general class of kernel functions.  We emphasize that our theoretical results and numerical simulations demonstrate that even though our proposed bandwidth selection scheme and its property (cf. Proposition \ref{lem_bandwidthconcentration}) are independent of kernel functions, the convergence rates depend on the geometric properties of specific kernel functions crucially.     

}

\section{Numerical Studies and Real Data Analysis} \label{data.sec}
In this section, we provide numerical simulations and a real data example to illustrate the usefulness of our proposed method. More real data examples can be found in Section \ref{suppl_additionalrealexamples} of our supplement \cite{suppl}. 



\subsection{Simulation Studies} \label{simu.sec}
\subsubsection{Impact of  kernel functions and percentiles}\label{sec_impactkernelpercentile}
{ First, we conduct numerical simulations to show that our adaptive bandwidth selection scheme (\ref{eq_bandwidthselection}) is robust with respect to various underlying structures, kernel functions and the choices of percentiles.}  For given $n$, we set $p=\lfloor n/5\rfloor$, and generate $\zb_i\sim_{i.i.d.} \mathcal{N}({\bf 0}, {\bf I}_p)$. For the noiseless samples $\{\xb_i\}_{1\le i\le n}$, we set $\xb_i=(\bm{x}_i, 0, ..., 0)\in\R^p$ for some $\bm{x}_i\in\R^r$, and generate $\{\bm{x}_i\}_{1\le i\le n}$ from  one  of the following settings (see Figure \ref{man.fig} of our supplement \cite{suppl} for illustrations):
\begin{itemize}
	\item  { "Smiley face" with $r=2$: set $\bm{x}_i=n^{2/3}\bm{x}'_i$ and generate independent samples $\{\bm{x}'_i\}_{1\le i\le n}$ from a ``smiley face" $S=S_1\cup S_2\cup S_3\cup S_4\subset\R^2$, where $S_1=\{(x,y)\in\R^2: (x-0.5)^2+(y-0.5)^2\le 0.1\}$,
		$S_2=\{(x,y)\in\R^2: (x+0.5)^2+(y-0.5)^2\le 0.1\}$,
		$S_3=\{(x,y)\in\R^2: 1.8\le x^2+y^2\le 2\}$, and
		$S_4=\{(x,y)\in\R^2: 0.9\le x^2+y^2\le 1.1, y\le 0\}.$ }
	\item "Mammoth"  with $r=3$:  set $\bm{x}_i=n^{2/3}\bm{x}'_i$ and generate samples $\{\bm{x}'_i\}_{1\le i\le n}$ uniformly from a  "mammoth" manifold \citep{wang2021understanding,smithsonian} embedded in $\R^3$.
	\item { "Cassini oval" with $r=3$: set $\bm{x}_i=n^{2/3}\bm{x}'_i$ and generate samples $\{\bm{x}'_i\}_{1\le i\le n}$ uniformly from a Cassini oval in $\R^3$, defined by $x_1(t)=\sqrt{\cos(2t)+\sqrt{\cos(2t)^2+0.2}}\cos t$,		$x_2(t)=\sqrt{\cos(2t)+\sqrt{\cos(2t)^2+0.2}}\sin t$, and
    $x_3(t)=0.3\sin(t+\pi)$, where $t\in[0,2\pi).$ }
	\item { "Torus" with $r=3$: set $\bm{x}_i=n^{2/3}\bm{x}'_i$ and generate samples $\{\bm{x}'_i\}_{1\le i\le n}$ uniformly from a torus in $\R^3$, defined by $x_1(u,v)=(2+0.8\cos u)\cos v$, $x_2(u,v)=(2+0.8\cos u)\sin v$, and $x_3(u,v)=0.8 \sin u$, where $u,v\in [0,2\pi)$. }
\end{itemize}
 { Note that the global scaling factor $n^{2/3}$ quantifies the strength of the signals. In our simulation setting, for the rate $\psi_n$ in (\ref{psi}), we have that $\psi_n=n^{-1/3}.$ For the above four different simulation settings, we evaluate the proposed adaptive bandwidth selection scheme under various choices of kernel functions and percentiles $\omega$. Specifically, we consider a Gaussian kernel $f(x)=\exp(-x^2)$,  a Laplacian kernel $f(x)=\exp(-x)$, and a rational quadratic (polynomial) kernel $f(x)=(1+x^2/4)^{-2}$, and set $\omega\in\{0.05,0.25,0.5,0.75,0.95\}$. 
 
  In Figure \ref{rate.fig} of our supplement \cite{suppl}, we show that the spectral error $\|n^{-1}\bK_n-n^{-1}\bK_n^*\|$ between the sample kernel matrix $\bK_n$ and the underlying noiseless kernel matrix $\bK_n^*$ as $n$ increases from $500$ to $4000$, demonstrating the spectral convergence of the two matrices as $n\to\infty$ under different choices of kernels and percentile parameter $\omega$. Overall, for a given underlying structure the variation in the performance of the proposed method appeared to be small across different kernel functions and over a wide range of $\omega$, except that in some cases the rate of convergence was slower for very small $\omega$. In Section \ref{choiceomegereampls} of our supplement \citep{suppl}, we show that $\omega$ selected by a data-driven resampling approach can achieve similar convergence behavior in these examples.  Interestingly, we observed that the rates of convergence under the Gaussian kernel matrix were in general faster than that under the Laplacian kernel, and were similar to that under the polynomial kernel as reported in Figure \ref{ratio.fig} of \cite{suppl}; this phenomenon was captured by our theory in Theorem \ref{thm_mainthm2}, highlighting the role of the H\"older continuity property of the kernel function in the convergence rate.

Moreover, we use numerical simulations to demonstrate that the noiseless kernel matrix $\bK_n^*$ based on the bandwidth $\sfh$ defined in (\ref{sfh}) is potentially more desirable than other bandwidths that are either much larger or smaller than $\sfh$ as discussed below Proposition \ref{lem_bandwidthconcentration}. In particular, the fixed bandwidth $\sfh\equiv p$ considered in many existing works \cite{cheng2013spectrum,9205615,do2013spectrum,el2010information,el2010spectrum,ElKaroui_Wu:2016b} lies in the latter case whenever the signal is sufficiently strong, or $\sum_{i=1}^r\theta_i\gg p$. Thus, for each of the above manifold structures, we set $n=100$ and generate noiseless samples to construct noiseless Gaussian kernel matrices using (i) our proposed bandwidth $\sfh$ with $\omega=0.5$, (ii) a relatively large bandwidth $\sfh=n^5$, and (iii) a relatively small bandwidth $\sfh=p$. In Figure \ref{bw.fig1} of \cite{suppl}, we show the eigenvalues of these kernel matrices. We find that the kernel matrices based on our proposed bandwidth $\sfh$ are in general more informative compared to others based on $\sfh=p$ or $\sfh=n^5$: in the former case, the kernel matrices essentially degenerated into an identity matrix, whereas in the latter case they degenerated into a rank-one constant matrix since $f(0)=1$ for Gaussian kernel. 

}

\subsubsection{Comparison with other spectral clustering methods} \label{cluster.sec}

{
Another important application of spectral embedding is in the clustering of high-dimensional data. To demonstrate the advantage of the proposed method, we generate both linear and nonlinear clustered data and compare our method with some existing ones.} We set $p=n=300$ and generate $\zb_i\sim_{i.i.d.} \mathcal{N}(0,{\bf I}_p)$. For the signal samples $\{\xb_i\}_{1\le i\le n}$, we set $\xb_i=(\bm{x}_i,0,...,0)\in\R^p$ for some $\bm{x}_i\in\R^r$ generated from one of the following settings:
\begin{itemize}
	\item { Gaussian mixture model with $r=6$: for some global scaling parameter $\theta>0$, we set $\bm{x}_i=\theta\bm{x}'_i$ and generate $i.i.d.$ samples $\{\bm{x}'_i\}_{1\le i\le n}$ from a Gaussian mixture model in $\R^6$ with six clusters of equal class proportion. The six clusters have their mean vector coincide with the Euclidean basis in $\R^6$.} 
	\item Nested spheres with $r=6$: for some global scaling parameter $\theta>0$, we set $\bm{x}_i=\theta\bm{x}'_i$ and generate $i.i.d.$ samples $\{\bm{x}'_i\}_{1\le i\le n}$ from a mixture model of nested spheres $\sum_{i=1}^6w_iP_i$, where the cluster proportion $w_i=1/6$, and $\{P_i\}_{1\le i\le 6}$ are uniform distributions on nested spheres in $\R^6$ of radii $\{1,2,3,4,5,6\}$ respectively.
\end{itemize}
{ Under each setting, we evaluate the clustering performance of k-means applied to the low-dimensional embeddings based on: (1). the proposed algorithm ("prop"); (2). the proposed algorithm with fixed bandwidth $h_n=p$ ("h=p"); (3).  the proposed algorithm with the bandwidth selected according to \cite{AOSko} ("SBY"); (4). the linear spectral embedding of \citep{loffler2021optimality} ("LZZ"); (5). the "hollowed" linear spectral embedding of \cite{abbe2020ell_p} ("AFW");\footnote{{ Here we only compared with the linear (Euclidean) spectral embedding method of \cite{abbe2020ell_p} because (i) for given kernel matrices, the proposal of \cite{abbe2020ell_p} is equivalent to Step 3 of our algorithm, and (ii) the main advantage of our method lies in the construction of kernel matrices and bandwidth selection.}} (6). a graph-cut-based spectral clustering method "spuds" \citep{hofmeyr2019improving}.

\begin{figure}[ht]
	\centering
	\includegraphics[angle=0,width=14cm,height=7cm]{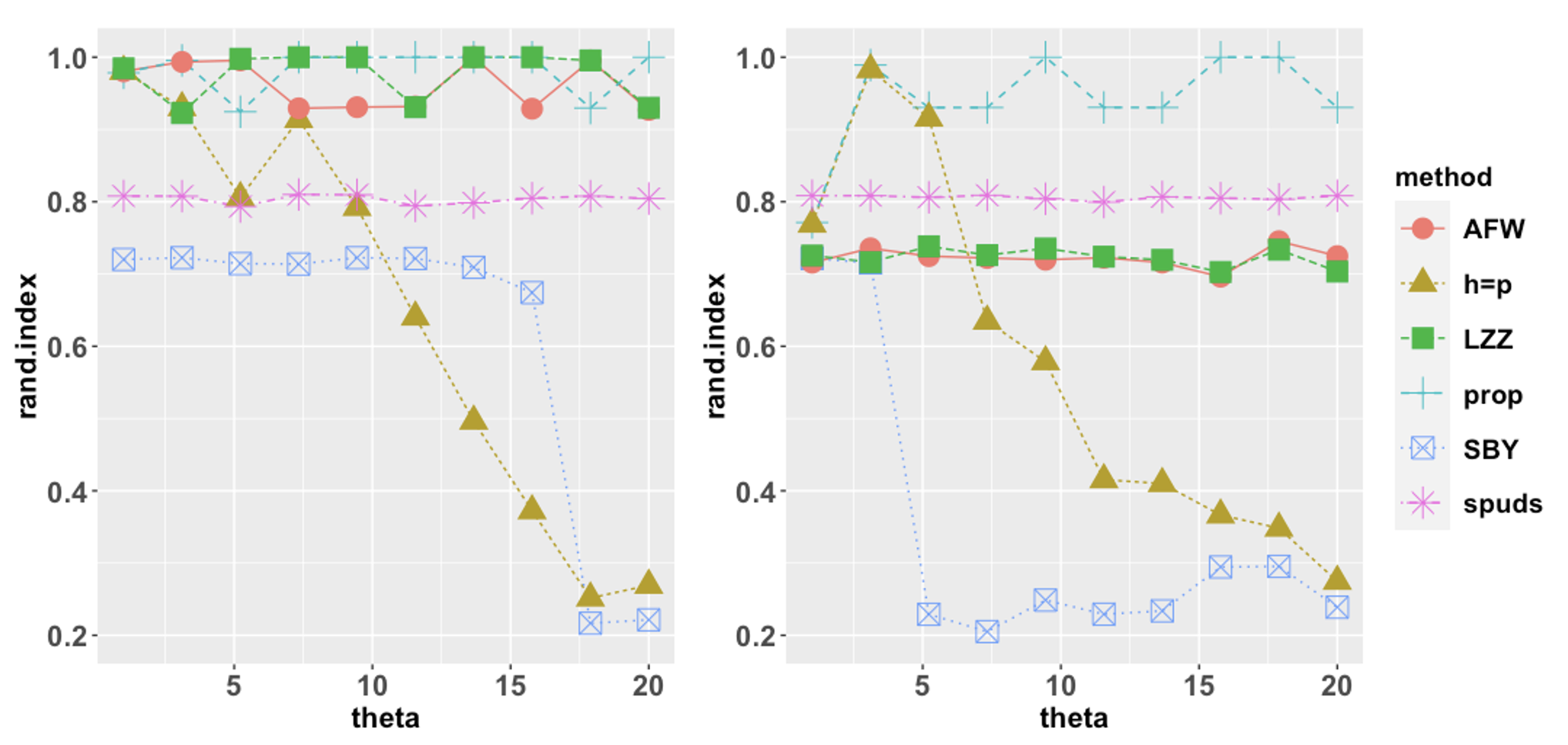}
	    \vspace*{-4mm}
	\caption{Comparison of spectral clustering methods. Left: Gaussian mixture model; Right: nested sphere model. "prop": the proposed algorithm; "h=p": proposed algorithm with fixed bandwidth $h_n=p$; "SBY": proposed algorithm with bandwidth selected according to \cite{AOSko}; "LZZ": linear spectral embedding method \citep{loffler2021optimality}; "AFW": "hollowed" linear spectral embedding of \citep{abbe2020ell_p}; "spuds": graph-cut based spectral clustering method  \citep{hofmeyr2019improving}.} 
	\label{cls.fig1}
\end{figure}

Figure \ref{cls.fig1} contains the Rand index for each of the above methods under various settings. A higher Rand index indicates better clustering performance. In the simulations, we used the Gaussian kernel function. In general, under the Gaussian mixture model, "AFW," "LZZ" and our method "prop" has a better performance compared to the other methods, whereas under the nested sphere model, our method "prop" is superior to all the other methods. Our results demonstrate the advantage of our proposed method, including our bandwidth $h_n$, in dealing with noisy nonlinear structures. 

 }

\subsection{Applications to Real-World Datasets} \label{cell.order.sec}


Our  example concerns pseudotemporal ordering of single cells. As an important problem in single-cell biology, pseudotemporal cell ordering aims to determine the pattern of a dynamic process experienced by cells and then arrange cells according to their progression through the process, based on single-cell RNA-Seq data collected at multiple time points. In the following, we show that our proposed embedding method is able to capture such an underlying progression path as a one-dimensional  manifold and then help determine a more precise ordering of cells, compared to the existing state-of-art methods. The dataset consists of single-cell RNA sequencing  reads for 149 human primordial germ cells ranging from 4 weeks  to 19 weeks old \citep{guo2015transcriptome}. Specifically, there are 6 cells of 4 weeks old, 37 cells of 7 weeks old, 20 cells of 10 weeks old, 27 cells of 11 weeks old, and 57 cells of 19 weeks old; these are treated as the ground truth for cell ordering. The raw count data were preprocessed, normalized, and scaled following the standard procedure (R functions \texttt{CreateSeuratObject}, \texttt{NormalizeData} and \texttt{ScaleData} under default settings) as incorporated in the R package \texttt{Seurat}.
We also applied the R function \texttt{FindVariableFeatures} in \texttt{Seurat} to identify  $p\in\{2500, 3000, 3500, 4000\}$ most variable genes for subsequent analysis. The final dataset consists of standardized expression levels of $p$ genes for the 149 cells. We apply our proposed method with  a variety of $\omega\in\{0.25,0.5,0.75\}$ and  consider a two-dimensional embedding $\Omega=\{1,2\}$ using Gaussian kernel function, where the one-dimensional manifold is already clearly captured; see left of Figure \ref{embed.fig3} of \cite{suppl} for an illustration.

Observing that the path manifold is aligned with the second eigenvector, we order the cells according to their values in the second eigenvector. Inferred cell orders are also obtained using state-of-art cell ordering algorithms such as TSCAN \citep{ji2016tscan}, SCORPIUS \citep{cannoodt2016scorpius}, SerialRank \citep{fogel2014serialrank}, and the  classical PCA. Note that SerialRank essentially has Laplacian eigenmap as its core. For TSCAN and SCORPIUS, we applied functions from their R packages\footnote{https://github.com/zji90/TSCAN, and https://cran.rstudio.com/web/packages/SCORPIUS/index.html} under the default settings. To evaluate the quality of the inferred orders, we compare them with the ground truth by evaluating their respective Kendall's tau statistic.
On the left of Figure \ref{embed.fig22} of \cite{suppl}, the temporal ordering based on the proposed method ($\omega=0.5$) aligns better with the actual cell order than the other methods, with almost three-fold improvement over the second best method TSCAN in Kendall's tau on average. Moreover, the superiority of the proposed method is not sensitive to the dimension $p$, or the percentile parameter $\omega$. 

Finally, in Figure \ref{embed.fig3} of \cite{suppl} we show the scatter plots of the 2-dimensional embeddings based on PCA and the proposed method, denoted as KEF standing for kernel eigenfunctions, when $p=3000$. The advantage of the proposed embedding over its linear counterpart, such as informativeness and robustness to outliers, is visible and significant.

{
\section*{Acknowledgment}
The authors would like to thank the editor, the associate editor and two anonymous reviewers for their suggestions and comments, which have resulted in a significant improvement of the manuscript. }

\vspace*{4pt}

{\center{\bf  \uppercase{Supplement to "Learning low-dimensional nonlinear structures from high-dimensional noisy data: an integral operator approach"}}}

\appendix

\renewcommand{\thesection}{\Alph{section}}
\counterwithin{table}{section}
\counterwithin{figure}{section}

\section{Preliminary Results}\label{sec_pre}

In this section, we summarize and prove some useful technical results, which play an important role in our proof of the main results.

\subsection{Spectral Convergence of Integral Operators in RKHS}\label{sec_sub_review}

We summarize and prove the spectral convergence of the empirical integral operator to the population integral operator. Most of the results can be found in \cite{JMLR:v7:braun06a, BJKO, JMLR:v11:rosasco10a, ICMLko, AOSko,Smale2007LearningTE, MR2558684}.
Consider that we observe $n$ i.i.d. samples $\{\xb_i\}_{1\le i\le n}$ drawn from some probability distribution $\sfP$ in $\mathbb{R}^{p}$. Then the population integral operator $\mathcal{K}$ with respect to  $\sfP$ and the reproducing kernel
\beq \label{eq_kernel}
K(\xb,\yb):=f\left(\frac{\|\xb-\yb \|_2}{h^{1/2}}\right),\qquad \xb,\yb \in \text{supp}(\sfP),
\eeq
for some $f:\R_{\ge 0}\to \R_{\ge 0}$ satisfying {Assumption \ref{ker.assu}}
is defined by
\begin{equation}  \label{K}
{\mathcal{K}} g(\xb)=\int f\left(\frac{\|\xb-\yb \|_2}{h^{1/2}}\right)g(\yb) {\sfP}(\dd \yb) , \qquad \xb, \yb\in \text{supp}(\sfP),
\end{equation}
and its empirical counterpart $\mathcal{K}_n$ is defined by
\begin{equation} \label{K_n}
{\mathcal{K}_n} g(\xb)=\int f\left(\frac{\|\xb-\yb \|_2}{h^{1/2}}\right) g( \yb) {\sfP}_n(\dd \yb)=\frac{1}{n}\sum_{i=1}^n f\left(\frac{\|\xb-\xb_i\|_2}{h^{1/2}}\right) g(\xb_i), \qquad \xb \in \text{supp}(\sfP).
\end{equation}
The reproducing kernel Hilbert space (RKHS) $\mathcal{H}_K$ associated with the kernel function $K(\xb,\yb):=f\left(\frac{\|\xb-\yb \|_2}{h^{1/2}}\right)$	
in (\ref{K}) and (\ref{K_n}) is defined as the completion of the linear span of the set of functions $\{K_{\xb}=K(\xb,\cdot): \xb \in \text{supp}(\sfP)\}$ with the inner product denoted as $\langle \cdot,\cdot \rangle_K$ satisfying $\langle K(\xb, \cdot), K(\yb, \cdot)\rangle_K=K(\xb,\yb)$ and the reproducing property $\langle K(\xb, \cdot), g\rangle_K=g(\xb)$ for any $g\in \mathcal{H}_K$.
Note that $\mathcal{K}$ and $\mathcal{K}_n$ may be considered as self-adjoint operators on $\mathcal{H}_K$, or on their respective $L_2$ spaces (that is, $\mathcal{L}_2(\Omega,\sfP)$ and $\mathcal{L}_2(\Omega,\sfP_n)$, $\Omega=\text{supp}(\sfP)$). 

In addition, it is easy to see that (for example, Section 2.2 of \cite{AOSko}) the eigenvalues of $\mathcal{K}_n$ coincide with $n^{-1} \bK^*_n$, where $\bK_n^*=(K(\xb_i,\xb_j))_{1\le i,j\le n}$ with $K(\cdot,\cdot)$ given by (\ref{eq_kernel}). The eigenfunctions $\{\phi_i^{(n)}\}$ associated with nonzero eigenvalues $\{\mu_i\}$ of $\mathcal{K}_n$ or $n^{-1}\bK_n^*$ satisfy
\beq \label{eigenvec}
{\phi}^{(n)}_i(\xb)=\frac{1}{\mu_i\sqrt{n}} \sum_{j=1}^n f\left(\frac{\| \xb-\xb_j \|_2}{h^{1/2}}\right) v_{i j}, \qquad \xb \in \text{supp}(\sfP),
\eeq
where $\vb_i=(v_{i1},v_{i2},...,v_{in})^\top$ is the $i$-th eigenvector of $n^{-1} \bK_n^*$, and that  $\|\phi_i^{(n)}\|_{\sfP_n}=1$.

In the following we will mainly consider them to be operators in $\mathcal{H}_K$ to facilitate quantitative comparison.
The following lemma concerns the convergence of the eigenvalues and eigenfunctions of $\mathcal{K}_n$ to those of $\mathcal{K}$.

\begin{lem}\label{lem_mutiplepertubation} For self-adjoint operators $\mathcal{K}$ and $\mathcal{K}_n$ on $\mathcal{H}_K$, defined by (\ref{K}) and (\ref{K_n}), under Assumption \ref{ker.assu}, we have 
	\begin{equation}\label{eq_evlimitclose}
	\| \mathcal{K}-\mathcal{K}_n \| \prec \frac{1}{\sqrt{n}}.
	\end{equation}
	Let $\{\gamma_i\}$ and $\{\phi_i\}$ be the eigenvalues and eigenfunctions of $\mathcal{K}$. We define the $i$-th population eigen-gap as $\mathsf{r}_i=\min\{\gamma_{i-1}-\gamma_i, \gamma_i-\gamma_{i+1}\}$. If $n^{-1/2}=o(\sfr_i)$,  then the eigenfunctions  $\{\phi_i^{(n)}\}$ of $\mathcal{K}_n$ associated with nonzero eigenvalues satisfy
	\begin{equation} \label{eq_eigenfunction_conv}
	\left\| \sqrt{\gamma_i}{\phi_i}-\sqrt{\mu_i}\phi_i^{(n)} \right\|_{K} \prec \frac{1}{ \mathsf{r}_i \sqrt{n}},
	\end{equation}
	where $\| \sqrt{\gamma_i}{\phi_i}\|_K=\|\sqrt{\mu_i}\phi_i^{(n)} \|_K=1$, and $\|\phi_i^{(n)} \|_{\sfP_n}=1$.
\end{lem}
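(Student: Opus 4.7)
The plan is to exploit the RKHS representation of both operators as averages of rank-one tensor products and then reduce the lemma to a Hilbert-space concentration estimate plus a standard resolvent perturbation argument.

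First, I would rewrite both operators as Bochner integrals of rank-one operators on $\mathcal{H}_K$. Using the reproducing property, for any $g \in \mathcal{H}_K$ one has $\mathcal{K} g = \int g(y) K_y \, \sfP(dy) = \mathbb{E}[(K_X \otimes K_X) g]$, so $\mathcal{K} = \mathbb{E}[K_X \otimes K_X]$ and $\mathcal{K}_n = \frac{1}{n}\sum_{i=1}^n K_{\xb_i} \otimes K_{\xb_i}$, where $u \otimes u$ denotes the rank-one operator $v \mapsto \langle v, u\rangle_K u$. Thus $\mathcal{K}_n - \mathcal{K}$ is exactly the centered empirical mean of the i.i.d.\ Hilbert-Schmidt-valued random variables $K_{\xb_i} \otimes K_{\xb_i} - \mathbb{E}[K_X \otimes K_X]$.

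Second, I would apply a Pinelis/Hoeffding-type concentration inequality in the Hilbert space of Hilbert-Schmidt operators on $\mathcal{H}_K$. By boundedness (Assumption~\ref{ker.assu}(ii)), each summand satisfies $\|K_{\xb_i} \otimes K_{\xb_i}\|_{\HS} = \|K_{\xb_i}\|_K^2 = K(\xb_i, \xb_i) = f(0) \leq C$ almost surely. A standard Hilbert-space Hoeffding inequality (or the bound in Lemma~2 of Rosasco et al., 2010) then yields $\|\mathcal{K} - \mathcal{K}_n\|_{\HS} \prec n^{-1/2}$ uniformly. Since the operator norm is dominated by the Hilbert-Schmidt norm, this establishes (\ref{eq_evlimitclose}). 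The eigenvalue statement embedded in the operator-norm claim also follows since Weyl's inequality for compact self-adjoint operators gives $\max_i |\gamma_i - \mu_i| \leq \|\mathcal{K} - \mathcal{K}_n\|$.

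Third, for the eigenfunction estimate I would use the contour-integral representation of spectral projectors. Let $P_i$ and $P_i^{(n)}$ be the rank-one orthogonal projections in $\mathcal{H}_K$ onto the eigenspaces of $\gamma_i$ and $\mu_i$, and let $\Gamma_i$ be a circle centered at $\gamma_i$ of radius $\sfr_i/2$. Under the event $\|\mathcal{K} - \mathcal{K}_n\| \leq \sfr_i/4$ (which holds with high probability by step two, given the assumption $n^{-1/2} = \oo(\sfr_i)$), both resolvents are analytic on $\Gamma_i$ and the contour encloses only $\gamma_i$ (resp.\ $\mu_i$). The second resolvent identity $(z-\mathcal{K}_n)^{-1} - (z-\mathcal{K})^{-1} = (z-\mathcal{K}_n)^{-1}(\mathcal{K}_n-\mathcal{K})(z-\mathcal{K})^{-1}$ together with the bound $\|(z-\mathcal{K})^{-1}\|, \|(z-\mathcal{K}_n)^{-1}\| \lesssim \sfr_i^{-1}$ on $\Gamma_i$ and the contour length $\pi \sfr_i$ yields
\begin{equation*}
\|P_i - P_i^{(n)}\| \;\prec\; \sfr_i \cdot \sfr_i^{-2} \cdot \|\mathcal{K}-\mathcal{K}_n\| \;\prec\; \frac{1}{\sfr_i \sqrt{n}}.
\end{equation*}

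Finally, I would translate the projector bound into (\ref{eq_eigenfunction_conv}). In $\mathcal{H}_K$ the canonically normalized eigenvectors of a Mercer operator are $\sqrt{\gamma_i}\phi_i$ and $\sqrt{\mu_i}\phi_i^{(n)}$, since $\|\phi_i\|_K^2 = \gamma_i^{-1}$ (and likewise for $\phi_i^{(n)}$), which follows from $\mathcal{K}\phi_i = \gamma_i \phi_i$ combined with $\|\phi_i\|_{\sfP} = 1$. After choosing signs so that $\langle \sqrt{\gamma_i}\phi_i, \sqrt{\mu_i}\phi_i^{(n)}\rangle_K \geq 0$, one has $\|\sqrt{\gamma_i}\phi_i - \sqrt{\mu_i}\phi_i^{(n)}\|_K \leq \sqrt{2}\|P_i - P_i^{(n)}\|$ by the elementary rank-one sin-$\theta$ identity, which gives the stated rate.

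The main obstacle is the Hilbert-space concentration step in the first two paragraphs: one must verify that a Hoeffding/Bernstein inequality with the stated exponential tails applies to the $\mathcal{H}_K$-valued summands (so that the bound actually holds with the polynomial-probability sense of $\prec$), and one must justify the normalization identity $\|\phi_i\|_K^2 = \gamma_i^{-1}$ carefully to connect the projector bound with the weighted-eigenfunction difference in the RKHS norm.
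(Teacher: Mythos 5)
Your proof is correct and follows essentially the same route as the paper, which obtains \eqref{eq_evlimitclose} and \eqref{eq_eigenfunction_conv} by citing Smale--Zhou \cite[Proposition~1, Corollary~1]{MR2558684} (equivalently \cite{JMLR:v11:rosasco10a}) together with the perturbation bound recalled as Lemma~\ref{lem_pertubationargument}, and then verifies the normalization $\|\sqrt{\gamma_i}\phi_i\|_K=\|\sqrt{\mu_i}\phi_i^{(n)}\|_K=1$. You simply re-derive those cited ingredients from first principles (Pinelis/Hoeffding in the Hilbert--Schmidt space, then the contour-integral resolvent expansion to bound $\|P_i-P_i^{(n)}\|$ and the rank-one $\sin\theta$ identity), which is precisely the mechanism underlying the quoted lemma.
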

\begin{proof}
	To obtain (\ref{eq_evlimitclose}), according to \cite[Proposition 1]{MR2558684} or \cite[Theorem 7]{JMLR:v11:rosasco10a}, in view of our notion of stochastic domination,  we have that 
	\begin{equation*}
	\| \mathcal{K}-\mathcal{K}_n \|_{\HS} \prec \frac{1}{\sqrt{n}},
	\end{equation*} 
	where we used Assumption \ref{ker.assu}(ii) that the kernel function $K$
	in (\ref{K}) and (\ref{K_n}) is bounded. Then the result follows from the elementary relation $\| \mathcal{K}-\mathcal{K}_n \| \leq \| \mathcal{K}-\mathcal{K}_n \|_{\HS}.$
	
	For the eigenfunctions, the proof is similar to \cite[Corollary 1]{MR2558684} or \cite[Theorem 12]{JMLR:v11:rosasco10a} using the perturbation argument in Lemma \ref{lem_pertubationargument}. Specifically, from \cite[Corollary 1]{MR2558684}, we have 
	\beq \label{smale-zhou}
	\|\sqrt{\gamma_i}{\phi_i}-\Phi_i\|_{K}\prec \frac{1}{ \mathsf{r}_i \sqrt{n}},
	\eeq
	where $$\Phi_i(\xb)=\frac{1}{\sqrt{n\mu_i}}\sum_{j=1}^nf\left(\frac{\| \xb-\xb_j \|_2}{h^{1/2}}\right) v_{i j},$$
	$\vb_i=(v_{i1},...,v_{in})^\top$ and $\mu_i$ are $i$-th eigenvector and eigenvalue of $\frac{1}{n}\bK^*_n=n^{-1}(K(\xb_i,\xb_j))_{1\le i,j\le n}$, respectively. By definition, we have
	\[
	\phi_i^{(n)}=\frac{1}{\sqrt{n}\mu_i}\sum_{j=1}^nK(\xb_i,\cdot)v_{ij}=\frac{1}{\sqrt{\mu_i}}\Phi_i,
	\]
	this implies (\ref{eq_eigenfunction_conv}). Finally, to verify the normalizing factors in (\ref{eq_eigenfunction_conv}),
	direct calculation yields
	\begin{align*}
	\|\sqrt{\gamma_i}\phi_i\|^2_K & =\gamma_i\langle \phi_i,\phi_i\rangle_K = \langle \mathcal{K}\phi_i,\phi_i\rangle_K = \bigg\langle \int K(\cdot,\yb)\phi_i(\yb) {\sfP}(\dd \yb) ,\phi_i\bigg\rangle_K \\
	&=\int \langle K(\cdot,\yb) ,\phi_i\rangle_K \phi_i(\yb) {\sfP}(\dd \yb) \\
	& =\int[\phi_i(\yb)]^2{\sfP}(\dd \yb)=\|\phi_i\|_{\sfP}^2=1,
	\end{align*}
	and
	\begin{align*}
	&\|\Phi_i\|^2_K=\bigg\langle \frac{1}{\sqrt{n\mu_i}}\sum_{j=1}^nK(\xb_i, \cdot)v_{i j},\frac{1}{\sqrt{n\mu_i}}\sum_{j=1}^nK(\xb_i, \cdot)v_{i j}\bigg\rangle_K\\
	& =\frac{1}{{n\mu_i}}\sum_{j=1}^n\sum_{k=1}^nv_{i j}v_{ik}\langle K(\xb_j, \cdot),K(\xb_k, \cdot)\rangle_K =\frac{1}{{n\mu_i}}\sum_{j=1}^n\sum_{k=1}^nv_{i j}v_{ik}K(\xb_j,\xb_k) \\
	&=\frac{1}{\mu_i} \vb_i^\top (n^{-1}\bK_n^*)\vb_i=1,
	\end{align*}
	and 
	\begin{align*}
	&\|\phi_i^{(n)}\|_{\sfP_n}^2=\bigg\langle \frac{1}{{\sqrt{n}\mu_i}}\sum_{j=1}^nK(\xb_j, \cdot)v_{i j},\frac{1}{{\sqrt{n}\mu_i}}\sum_{j=1}^nK(\xb_j, \cdot)v_{i j}\bigg\rangle_{P_n}\\
	&=\frac{1}{n\mu_i^2}\sum_{j=1}^n\sum_{k=1}^n v_{ij}v_{ik}\int K(\xb_j, \yb)K(\xb_k, \yb)\sfP_n(\dd\yb)\\
	&=\frac{1}{n^2\mu_i^2}\sum_{j=1}^n\sum_{l=1}^n\sum_{k=1}^n v_{ij}v_{ik}K(\xb_j, \xb_l)K(\xb_l, \xb_k)\\
	&=\frac{1}{\mu_i^2}\vb_i^\top(n^{-1}\bK_n^*)^2\vb_i=1.
	\end{align*}
	
\end{proof}

{
\subsection{Brief Summary of Riemannian Manifold and Embedding Theorems}\label{supple_sec_Riemmanian} We provide a succinct overview of definitions, important results and the embedding theorems in the theory of smooth manifolds and Riemannian geometry. For a complete introduction, we refer the readers to the monographs \cite{boothby2003introduction, lee2013smooth}.

We start by introducing some important definitions. A \emph{topological $m$-manifold} is a second countable Hausdorff  topological space and locally Euclidean of dimension $m$. The topological manifold can be characterized by a collection of \emph{charts}, commonly called \emph{atlas}. In order to do calculus on the manifold, we need to provide a smooth structure which is usually known as a \emph{complete atlas}. A \emph{smooth $m$-manifold} is a topological $m$-manifold with a smooth structure. The calculus of a smooth manifold is conducted on its \emph{tangent space} $T_{\mathfrak{p}} \mathcal{M}$, $\mathfrak{p} \in \mathcal{M}.$ A \emph{Riemannian metric} $\mathrm{g}$ on the smooth $m$-manifold is a symmetric and positive definite smooth tensor field. Roughly speaking, $\mathrm{g}=(\mathrm{g}|_{\mathfrak{p}}, \ \mathfrak{p} \in \mathcal{M})$ is a collection of inner products (bilinear forms) defined on $ T_{\mathfrak{p}} \mathcal{M} \times T_{\mathfrak{p}} \mathcal{M}$ for all $\mathfrak{p} \in \mathcal{M}$ and all $\mathrm{g}|_{\mathfrak{p}}$ vary smoothly. With $\mathrm{g},$ we can define other important geometric quantities like the norm, angle and curvature.

A \emph{Riemannian manifold} is a smooth manifold $\mathcal{M}$ equipped with a Riemannian metric $\mathrm{g},$ denoted as the pair $(\mathcal{M}, \mathrm{g}).$ When there is no confusion, we usually omit the metric $\mathrm{g}$ and simply write $\mathcal{M}.$ For a Riemannian manifold, it is useful to use the \emph{Riemannian density or volume form} to do the integration on the manifold, denoted as $\dd V,$ where $V \equiv V_g.$ Especially, if $f: \mathcal{M} \rightarrow \mathbb{R}$ is a compactly supported continuous function, we denote the integral of $f$ over $\mathcal{M}$ as $\int_{\mathcal{M}} f \dd V.$

In statistics and data science, observations are oftentimes collected in Euclidean space. Moreover, it is much easier to do calculations in Euclidean space. Motivated by these aspects, it is useful to link an arbitrary Riemannian manifold $(\mathcal{M}, \mathrm{g})$ isometrically to some subspace of the Euclidean space with a specific metric. The feasibility of the above statement is guaranteed by the embedding theory. An \emph{embedding} $\iota: \mathcal{M} \rightarrow \mathbb{R}^r,$ for some $r \geq m,$ is a smooth map and a homeomorphism onto its image. When this happens, $\mathcal{M}$ is called a $m$-dimensional \emph{submanifold} of the Euclidean space and $\mathbb{R}^r$ is said to be its \emph{ambient space}. Moreover, we call $\iota(\mathcal{M})$ the \emph{embedded submanifold}. From the computational perspective, we are interested in the \emph{isometric embedding} so that the calculations of the distances, angles and curvatures reduce to those in the Euclidean space. 

The following theorem of John Nash \cite{MR75639} indicates that every  Riemannian manifold can be considered as a submanifold of some ambient space $\mathbb{R}^r.$   

\begin{thm}\label{thm_nashembedding}
Let $(\mathcal{M}, \mathrm{g})$ be a $m$-dimensional Riemannian manifold. Then there exists an isometric embedding $\iota: \mathcal{M} \rightarrow \mathbb{R}^r$ from $\mathcal{M}$ to $\mathbb{R}^r$ for some $r$. Moreover, when $\mathcal{M}$ is compact, it is possible that 
\begin{equation*}
r \leq \frac{m(3m+11)}{2};
\end{equation*}
when $\mathcal{M}$ is non-compact, it is possible that 
\begin{equation*}
r \leq \frac{m(m+1)(3m+11)}{2}. 
\end{equation*}
\end{thm}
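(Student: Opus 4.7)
The statement is Nash's celebrated isometric embedding theorem, so I would follow (the modern version of) Nash's original 1956 argument. The plan is to first handle the compact case and then deduce the non-compact case by an exhaustion with careful book-keeping of dimensions. For the compact case, I would begin by invoking Whitney's embedding theorem to obtain some smooth (not necessarily isometric) embedding $u_0:\mathcal{M}\to\mathbb{R}^{r_0}$ with $r_0$ large enough that $u_0$ can be made \emph{free}, i.e.\ the first and second partial derivatives of $u_0$ at every point span a subspace of dimension $m+\binom{m+1}{2}$ in $\mathbb{R}^{r_0}$. After a rescaling, I can arrange that $u_0$ is \emph{short}, meaning $\mathrm{g}-u_0^*\mathrm{g}_{\mathrm{Euc}}$ is a positive-definite symmetric 2-tensor on $\mathcal{M}$.

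The heart of the proof is then to correct this short embedding into a truly isometric one by solving the nonlinear system
\[
\partial_i u \cdot \partial_j u \;=\; \mathrm{g}_{ij}, \qquad 1\le i,j\le m,
\]
via an iterative scheme. The naive linearization $2\,\partial_i v \cdot \partial_j u = h_{ij}$ (with $h$ the defect tensor) loses one derivative, so a direct contraction mapping argument fails. This loss-of-derivatives phenomenon is the main obstacle, and I would overcome it by Nash's insight: combine each Newton step with a smoothing operator $S_\theta$ (e.g.\ convolution with a rescaled mollifier) and choose the smoothing parameter $\theta_k\to\infty$ at a geometric rate tuned so that the quadratic gain of Newton's method beats the polynomial loss from smoothing. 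This is the Nash--Moser iteration; its convergence requires inverting the linearized operator, and here \emph{freeness} of $u_0$ is crucial: the symbol of the linearization becomes surjective at each point, giving a bounded right inverse provided $r\ge m+\binom{m+1}{2}$. One verifies that this freeness is preserved along the iteration (by taking the correction sufficiently small in $C^2$) and that the iterates converge in $C^k$ for every $k$.

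For the dimension counts, I would track the degrees of freedom carefully. The linearized isometric embedding equation has $\binom{m+1}{2}=m(m+1)/2$ scalar equations, so freeness already forces $r\ge m+m(m+1)/2$. Nash's actual compact bound $r\le m(3m+11)/2=\tfrac{m(m+1)}{2}\cdot 3+m$ comes from allowing enough slack to simultaneously maintain freeness and control the $C^2$-norm of the perturbations throughout the Nash--Moser iteration; I would derive it by splitting the embedding into a ``base'' part carrying the geometry and an ``auxiliary'' part of dimension roughly $3m(m+1)/2$ that absorbs the iterative corrections. For the non-compact case, I would exhaust $\mathcal{M}$ by an increasing sequence of relatively compact open sets $U_k\Subset U_{k+1}\Subset\mathcal{M}$, embed each $\overline{U_k}$ isometrically using the compact result on a slight thickening, and glue the pieces through a locally finite partition of unity. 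Each local embedding uses $m(3m+11)/2$ coordinates, but gluing requires keeping all local maps simultaneously active on overlaps, which introduces an extra factor of $(m+1)$ coming from the chromatic number of a good cover on an $m$-manifold; this yields the stated bound $r\le m(m+1)(3m+11)/2$. The hardest step throughout is the Nash--Moser convergence analysis, where one must balance the Newton quadratic error, the commutator error from smoothing, and the loss of freeness under perturbation within a single tight estimate.
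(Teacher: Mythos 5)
The paper does not prove this theorem; it is stated as background and attributed to Nash's original 1956 paper \cite{MR75639}. Your sketch therefore is not taking a ``different route from the paper'' --- the paper takes no route at all, and for a result of this depth citing the literature is exactly what is expected. That said, since you chose to sketch an argument, a few things in it deserve flagging.

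Your high-level outline --- Whitney embedding, make the map short and free, then run a Nash--Moser hard-implicit-function iteration to kill the metric defect, with freeness supplying the right inverse to the linearized operator --- is the correct skeleton of Nash's smooth embedding proof. But the dimension bookkeeping, which is the only part of the statement you actually need to justify beyond existence, contains an arithmetic error: you assert
\[
\frac{m(3m+11)}{2} \;=\; \frac{m(m+1)}{2}\cdot 3 + m,
\]
but the right-hand side equals $\frac{3m^{2}+5m}{2}$, not $\frac{3m^{2}+11m}{2}$, so your proposed ``base plus auxiliary'' split does not produce Nash's bound and the subsequent heuristic (``roughly $3m(m+1)/2$ auxiliary dimensions'') is not consistent with it either. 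The freeness count $m+\binom{m+1}{2}$ you give is right, but the additional $m(m+4)$ coordinates Nash needed come from a specific construction in his iteration and are not recovered by your slack argument. Similarly, the claim that the factor $(m+1)$ in the non-compact case arises ``from the chromatic number of a good cover'' is a plausible slogan but is not how Nash's non-compact reduction works; in the original paper the non-compact case is handled by decomposing $\mathcal{M}$ along a Morse-type exhaustion into pieces that are each embedded using the compact construction with controlled overlaps, and the extra $(m+1)$ enters through that bookkeeping rather than through a cover-coloring argument. Finally, note that the modern route (G\"unther's rewriting of the linearized equation) avoids Nash--Moser altogether and gives better bounds; if you were to actually carry out a proof rather than cite one, that route is considerably cleaner. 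For the purposes of this paper, though, the only defensible ``proof'' is the citation.
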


 With the above embedding $\iota$ and the \emph{pushforward}, we consider the induced metric $\iota_* \mathrm{g}$ on the embedded submanifold $\iota(\mathcal{M})$ which is clearly another Riemannian manifold. For integration, we consider the induced volume form $\iota_* \dd V$ which is the Riemannian density on $\iota(\mathcal{M}).$ Consequently, for any integrable function $f: \iota(\mathcal{M}) \rightarrow \mathbb{R},$ we can define the associated integral as $\int_{\iota(\mathcal{M})} f \iota_* \dd V.$ 

Finally, as in Section \ref{man.sec}, we explain how the above manifold model is connected to our statistical applications. Suppose we observe i.i.d. samples $\mathbf{x}_i, 1 \leq i \leq n,$ according to the random vector $X$ as in Assumption \ref{assum_generalmodelassumption}. Moreover, we assume that the support of $X$ is $\iota(\mathcal{M}).$ Therefore, for any integrable function $\zeta: \iota(\mathcal{M}) \rightarrow \mathbb{R},$ since $X$ is supported on $\iota(\mathcal{M}),$ the calculations of $\mathbb{E}(\zeta(X))$ can be efficiently reduced to the integration on $\mathcal{M}$ using the above induced measure and volume form (Section \ref{man.sec}). We emphasize that in real applications, it is the embedded submanifold that matters since the observations are sampled according to $X$ which is supported on $\iota(\mathcal{M}).$ Consequently, we focus on the understanding of the geometric structure of $\iota(\mathcal{M})$ rather $\mathcal{M}$ and $\iota$ separately. For more discussions on this perspective, we refer the readers to \cite{cheng2013local,DUNSON2021282, shen2022robust,wu2018think}.


}

\subsection{Concentration Inequalities}


The following lemma concerns tails of sub-Gaussian random vectors. {
\begin{lem} \label{sg.bnd.lem}
	Let $B$ be an $m\times n$ matrix, and let $\mathbf{x}$ be a mean zero, sub-Gaussian random vector in $\R^n$ with parameter bounded by $K$. Then for any $t\ge 0$, we have
	\[
	\mathbb{P}(\|B \mathbf{x} \|_2\ge CK\|B\|_F+t)\le \exp\bigg(-\frac{ct^2}{K^2\|B\|^2}\bigg).
	\]
\end{lem}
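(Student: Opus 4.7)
The plan is to decompose $\|B\mathbf{x}\|_2$ into its expectation plus its fluctuation, control each separately, and combine. Throughout I use the fact, built into Assumption \ref{assum_mainassumption}, that $\mathbf{x}$ has independent sub-Gaussian entries, so the sub-Gaussian parameter of $\mathbf{a}^{\top}\mathbf{x}$ is at most $CK\|\mathbf{a}\|_2$ for any deterministic $\mathbf{a}$.

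First I would bound the expectation. Since every linear form $\mathbf{a}^{\top}\mathbf{x}$ is mean-zero sub-Gaussian with parameter $\lesssim K\|\mathbf{a}\|_2$, we have $\mathbb{E}(\mathbf{a}^{\top}\mathbf{x})^2 \le CK^2\|\mathbf{a}\|_2^2$. Applying this to each of the $m$ rows $B_i^{\top}$ of $B$ and summing gives
\[
\mathbb{E}\|B\mathbf{x}\|_2^2 \;=\; \sum_{i=1}^{m}\mathbb{E}(B_i^{\top}\mathbf{x})^2 \;\le\; CK^2\sum_{i=1}^{m}\|B_i\|_2^2 \;=\; CK^2\|B\|_F^2,
\]
whence Jensen's inequality yields $\mathbb{E}\|B\mathbf{x}\|_2 \le CK\|B\|_F$ (absorbing the square root into the constant).

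Next I would establish the concentration of $\|B\mathbf{x}\|_2$ around its mean. The map $\mathbf{x}\mapsto \|B\mathbf{x}\|_2$ is convex (composition of the linear map $B$ with the convex Euclidean norm) and Lipschitz with constant $\|B\|$, since $\bigl|\|B\mathbf{x}\|_2-\|B\mathbf{y}\|_2\bigr|\le \|B(\mathbf{x}-\mathbf{y})\|_2\le \|B\|\,\|\mathbf{x}-\mathbf{y}\|_2$. Talagrand's concentration inequality for convex Lipschitz functions of vectors with independent sub-Gaussian coordinates (see, e.g., Theorem 5.2.15 in Vershynin, \emph{High-Dimensional Probability}) then gives, for every $t\ge 0$,
\[
\mathbb{P}\Bigl(\bigl|\|B\mathbf{x}\|_2-\mathbb{E}\|B\mathbf{x}\|_2\bigr|>t\Bigr) \;\le\; 2\exp\!\Bigl(-\frac{ct^2}{K^2\|B\|^2}\Bigr).
\]
Combining the two steps, $\|B\mathbf{x}\|_2 \ge CK\|B\|_F + t$ implies $\|B\mathbf{x}\|_2 - \mathbb{E}\|B\mathbf{x}\|_2 \ge t$, which gives the claimed one-sided bound.

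The main obstacle is Step 2: a bare MGF-type sub-Gaussian assumption on $\mathbf{x}$ is \emph{not} in general strong enough to yield Lipschitz concentration, so invoking Talagrand's inequality relies essentially on the independent-entries assumption (from Assumption \ref{assum_mainassumption}) together with convexity of $\|B\cdot\|_2$. An alternative route would be to apply the Hanson--Wright inequality to the quadratic form $\mathbf{x}^{\top}(B^{\top}B)\mathbf{x}=\|B\mathbf{x}\|_2^2$ using $\|B^{\top}B\|=\|B\|^2$ and $\|B^{\top}B\|_F\le \|B\|\,\|B\|_F$, then derive the desired bound on $\|B\mathbf{x}\|_2$ by completing the square via $\|B\mathbf{x}\|_2^2 \ge (CK\|B\|_F + t)^2 \ge C^2K^2\|B\|_F^2 + 2CK\|B\|_F\,t + t^2$; this gives the same exponent $ct^2/(K^2\|B\|^2)$ after taking the relevant term in the $\min$ of the Bernstein-type tail.
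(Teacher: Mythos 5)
Your Step~1 (bounding $\mathbb{E}\|B\mathbf{x}\|_2$ by $CK\|B\|_F$ via the second moments of the linear forms and Jensen) is fine. The problem is Step~2. The version of Talagrand's concentration inequality for convex Lipschitz functions that you want to invoke (Theorem~5.2.16 in Vershynin's book, not~5.2.15) requires the coordinates $X_i$ to be \emph{uniformly bounded} almost surely, i.e.\ $|X_i|\le K$ a.s.\ --- not merely sub-Gaussian. Independence and convexity of $\|B\cdot\|_2$ alone do not rescue the argument: boundedness is used in an essential way in Talagrand's proof, and for unbounded sub-Gaussian coordinates one would need to add a truncation step or invoke a different (and less standard) theorem. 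So as written the main path has a genuine gap.

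The ``alternative route'' you sketch at the end --- apply the Hanson--Wright inequality to the quadratic form $\mathbf{x}^\top(B^\top B)\mathbf{x}=\|B\mathbf{x}\|_2^2$, use $\|B^\top B\|=\|B\|^2$, $\|B^\top B\|_F\le\|B\|\,\|B\|_F$, and complete the square to pass from $\|B\mathbf{x}\|_2^2$ back to $\|B\mathbf{x}\|_2$ --- is correct and should be promoted to the main argument. In fact, this is exactly how the cited source proves the lemma: the statement is Vershynin's Theorem~6.3.2 (concentration of anisotropic random vectors), which is derived directly from the Hanson--Wright inequality (Theorem~6.2.1). Rearranging your write-up so that Hanson--Wright is the primary route (and dropping Talagrand) would make it match both the correct proof and the one the paper points to. One further minor point worth flagging: both Hanson--Wright and the quoted Theorem~6.3.2 require the coordinates of $\mathbf{x}$ to be independent, which the paper's Assumption~\ref{assum_mainassumption} supplies but the bare statement of the lemma does not; it is worth saying explicitly that this hypothesis is being used.
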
 
\begin{proof}
See page 144 of \cite{vershynin2018high}.
\end{proof}

The next lemma is the classical Chernoff bound for Binomial  random variables.

\begin{lem} \label{cher.lem}
	Let $x_1, ..., x_n$ be independent random variables with $\mathbb{P}(x_k=1)=\mathsf{p}$ and $\mathbb{P}(x_k=0)=1-\mathsf{p}$ for  each $k$. Then for any $t>n\mathsf{p}$, we have
	\[
	\mathbb{P}\bigg(\sum_{k=1}^n x_k>t\bigg)\le e^{-n\mathsf{p}}\bigg( \frac{en \mathsf{p}}{t}\bigg)^t.
	\]
\end{lem}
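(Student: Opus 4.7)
The plan is to prove this via the standard exponential moment (Chernoff) method. First, for any $\lambda > 0$, I would apply Markov's inequality to the random variable $e^{\lambda S_n}$ where $S_n = \sum_{k=1}^n x_k$, obtaining
\[
\mathbb{P}(S_n > t) \le e^{-\lambda t}\, \mathbb{E}\bigl[e^{\lambda S_n}\bigr].
\]
By independence of $\{x_k\}$, the moment generating function factorizes, and each factor equals $\mathbb{E}[e^{\lambda x_k}] = 1 - \mathsf{p} + \mathsf{p} e^{\lambda}$. Hence $\mathbb{E}[e^{\lambda S_n}] = (1 - \mathsf{p} + \mathsf{p} e^{\lambda})^n$.

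Next, I would apply the elementary inequality $1 + u \le e^{u}$ with $u = \mathsf{p}(e^{\lambda}-1)$ to get the clean upper bound $\mathbb{E}[e^{\lambda S_n}] \le \exp\bigl(n\mathsf{p}(e^{\lambda}-1)\bigr)$, yielding
\[
\mathbb{P}(S_n > t) \le \exp\bigl(-\lambda t + n\mathsf{p}(e^{\lambda}-1)\bigr) \quad \text{for all } \lambda > 0.
\]
The remaining step is to optimize over $\lambda$. Differentiating the exponent in $\lambda$ and setting it to zero gives $\lambda^* = \log(t/(n\mathsf{p}))$; since $t > n\mathsf{p}$ by assumption, this choice is indeed positive and admissible. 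Substituting $\lambda^*$ into the exponent produces $-t\log(t/(n\mathsf{p})) + (t - n\mathsf{p}) = t\log(en\mathsf{p}/t) - n\mathsf{p}$, which after exponentiation gives exactly $e^{-n\mathsf{p}}(en\mathsf{p}/t)^t$.

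There is no substantive obstacle here: the result is the textbook multiplicative Chernoff bound, and each step (Markov, MGF factorization, $1+u \le e^u$, explicit optimization) is routine. The only thing to verify carefully is that the condition $t > n\mathsf{p}$ is precisely what is needed to make the optimizing $\lambda^*$ positive, so that applying Markov's inequality is valid.
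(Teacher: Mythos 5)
Your proof is correct, and it is exactly the exponential moment (Chernoff) argument that the cited reference (Vershynin, Section 2.3) uses; the paper's ``proof'' is simply a citation to that textbook, so there is no substantive difference in approach. All steps check out, and your observation that $t>n\mathsf{p}$ is precisely what makes $\lambda^*=\log(t/(n\mathsf{p}))$ positive is the right point to flag.
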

\begin{proof}
See Section 2.3 of  \cite{vershynin2018high}.
\end{proof}

The next lemma provides a Bernstein type inequality of sub-exponential random variables.
\begin{lem}\label{lem_subexponentialconcentration} Let $x_i, 1 \leq i \leq n,$ be independent mean zero sub-exponential random variables. Then for every $t \geq 0$
\begin{equation*}
\mathbb{P}\left( \left| \sum_{i=1}^n x_i \right|\geq t \right) \leq 2 \exp\left[-c \min \left( \frac{t^2}{\sum_{i=1}^n \| x_i\|_{\psi_1}}, \frac{t}{\max_i \| x_i\|_{\psi_1}} \right) \right],
\end{equation*} 
where $c>0$ is some universal constant and $\|x_i \|_{\psi_1}$ is the sub-exponential norm of $x_i,$ i.e., $\| x_i\|_{\psi_1}=\inf\{t>0: \mathbb{E} \exp(|x_i|/t) \leq 2\}.$ 
\end{lem}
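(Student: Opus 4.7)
The plan is to prove Lemma \ref{lem_subexponentialconcentration} by the classical Chernoff--Cram\'er exponential moment method, combined with the standard sub-exponential moment generating function (MGF) bound, followed by a two-regime optimization over the free parameter $\lambda$.

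First, I would set up the Chernoff bound. For any $\lambda>0$, Markov's inequality applied to $e^{\lambda S_n}$ with $S_n=\sum_{i=1}^n x_i$ yields
\begin{equation*}
\mathbb{P}(S_n\geq t)\leq e^{-\lambda t}\,\mathbb{E} e^{\lambda S_n}=e^{-\lambda t}\prod_{i=1}^n\mathbb{E} e^{\lambda x_i},
\end{equation*}
using independence. The next step is to bound each factor $\mathbb{E} e^{\lambda x_i}$. Here I would invoke the standard equivalence for centered sub-exponential variables (cf.\ Proposition 2.7.1 in \cite{vershynin2018high}): there exist absolute constants $c_0, C_0>0$ such that for all $|\lambda|\leq c_0/\|x_i\|_{\psi_1}$,
\begin{equation*}
\mathbb{E} e^{\lambda x_i}\leq \exp\bigl(C_0\lambda^2\|x_i\|_{\psi_1}^2\bigr).
\end{equation*}
Substituting back gives, for all $\lambda\in[0, c_0/\max_i\|x_i\|_{\psi_1}]$,
\begin{equation*}
\mathbb{P}(S_n\geq t)\leq \exp\Bigl(C_0\lambda^2 V-\lambda t\Bigr),\qquad V:=\sum_{i=1}^n\|x_i\|_{\psi_1}^2.
\end{equation*}

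Then I would optimize $\lambda$ in two regimes. The unconstrained minimizer is $\lambda^*=t/(2C_0V)$. If this value lies in the admissible range, that is, if $t\leq 2C_0c_0 V/\max_i\|x_i\|_{\psi_1}$, choosing $\lambda=\lambda^*$ yields $\mathbb{P}(S_n\geq t)\leq \exp(-t^2/(4C_0V))$, which is the Gaussian-type (sub-Gaussian) tail. Otherwise, I take $\lambda=c_0/\max_i\|x_i\|_{\psi_1}$, the endpoint of the admissible range, which gives the sub-exponential tail $\mathbb{P}(S_n\geq t)\leq \exp(-c_0 t/(2\max_i\|x_i\|_{\psi_1}))$; in this regime $c_0\lambda^2V\leq \lambda t/2$, so the exponent is bounded above by $-\lambda t/2$. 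Taking the minimum over both regimes produces the combined $\min\{\cdot,\cdot\}$ form displayed in the statement (with $V$ in place of $\sum\|x_i\|_{\psi_1}$; note that the statement as written appears to omit the square on $\|x_i\|_{\psi_1}$, but the same argument yields the Vershynin form). Finally, applying the identical argument to $-S_n$ produces the corresponding lower-tail bound, and adding the two gives the factor of $2$ out front.

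There is no real obstacle beyond invoking the correct MGF estimate for sub-exponential variables; the only subtlety is making sure that one handles the constraint $|\lambda|\leq c_0/\|x_i\|_{\psi_1}$ correctly, which is exactly what forces the two regimes and produces the characteristic $\min$ of a sub-Gaussian and a sub-exponential tail. Since this is a standard result, in the actual proof I would likely just cite \cite{vershynin2018high} (Theorem 2.8.1) rather than reproduce the argument in full.
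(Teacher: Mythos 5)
Your proof is correct and takes the same approach as the paper: the paper simply cites Theorem 2.8.1 of \cite{vershynin2018high}, and the Chernoff--Cram\'er argument with the two-regime optimization over $\lambda$ that you sketch is precisely the standard proof of that theorem. You are also right to flag the typo: the denominator in the quadratic term should be $\sum_{i=1}^n \|x_i\|_{\psi_1}^2$ (squared), as in Vershynin's original statement, not $\sum_{i=1}^n \|x_i\|_{\psi_1}$ as written in the lemma.
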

\begin{proof}
See Theorem 2.8.1 of \cite{vershynin2018high}.
\end{proof}
}

In the following lemmas, we will use stochastic domination to characterize the high-dimensional concentration. The next lemma collects some useful concentration inequalities for the noise vector.

\begin{lem}\label{lem_concentrationinequality} Suppose Assumption \ref{assum_mainassumption} holds. Then we have that 
$$
	\max_{i \neq j}	\frac{1}{\sigma^2p}\left| \zb_i^\top \zb_j \right| \prec  {p^{-1/2}},
$$
	and 
$$ 
	\max_i	\left| \frac{1}{ \sigma^2p} \| \zb_i \|_2^2-1 \right| \prec  { p^{-1/2}}. 
$$
\end{lem}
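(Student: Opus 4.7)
Both bounds are standard consequences of sub-Gaussian/sub-exponential concentration combined with a union bound over at most $n^2$ pairs. Since $\zb_i = \sigma \tilde{\zb}_i$ where $\tilde{\zb}_i$ has identity covariance and sub-Gaussian norm $K = \mathrm{O}(1)$, I will work with $\tilde{\zb}_i$ and restore the scaling at the end.

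For the cross-term bound, the plan is to condition on $\zb_j$ and regard $\zb_i^\top \zb_j$ as a linear functional of the sub-Gaussian vector $\tilde{\zb}_i$. Applying Lemma \ref{sg.bnd.lem} with the $1 \times p$ matrix $B = \zb_j^\top$, for which $\|B\|_F = \|B\| = \|\zb_j\|_2$, yields
\begin{equation*}
\mathbb{P}\bigl( |\zb_i^\top \zb_j| \geq CK \sigma \|\zb_j\|_2 + t \mid \zb_j \bigr) \leq \exp\!\left( - \frac{c t^2}{K^2 \sigma^2 \|\zb_j\|_2^2} \right).
\end{equation*}
Choosing $t = n^{\upsilon} K \sigma \|\zb_j\|_2$ makes the conditional probability at most $\exp(-c n^{2\upsilon})$. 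Combining with the second display of the lemma (which I establish next), $\|\zb_j\|_2 \prec \sigma \sqrt{p}$, and taking a union bound over the at most $n^2$ index pairs gives $\max_{i\neq j} |\zb_i^\top \zb_j| \prec \sigma^2 \sqrt{p}$, i.e.\ $(\sigma^2 p)^{-1} \max_{i\neq j} |\zb_i^\top \zb_j| \prec p^{-1/2}$, as claimed.

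For the norm bound, writing $\|\zb_i\|_2^2 - \sigma^2 p = \sigma^2 \sum_{k=1}^p (\tilde z_{ik}^2 - 1)$ where the summands are independent centred sub-exponential random variables with uniformly bounded $\psi_1$-norm, I will apply Bernstein's inequality (Lemma \ref{lem_subexponentialconcentration}). For $t = n^{\upsilon} \sigma^2 \sqrt{p}$, the right-hand side of Bernstein's bound is dominated by the Gaussian regime $\exp(-c n^{2\upsilon})$ as soon as $\upsilon$ is a fixed positive constant and $p \to \infty$. A union bound over the $n$ samples then gives $\max_i | \|\zb_i\|_2^2 - \sigma^2 p | \prec \sigma^2 \sqrt{p}$, which is the claim after dividing by $\sigma^2 p$.

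There is no real obstacle here; the only thing to be careful about is ensuring that the union bound over $n^2$ pairs (respectively $n$ indices) is absorbed into the $n^{\upsilon}$ factor provided by the stochastic domination notion of Definition \ref{defn_stochasdomi}, which is automatic since $n^2 \cdot \exp(-c n^{2\upsilon}) \leq n^{-D}$ for any $D > 0$ once $n$ is large enough. The polynomial relation $p \asymp n^{\eta}$ from \eqref{eq_aspect} guarantees that the $\log n$ factors implicit in $n^{\upsilon}$ are harmless relative to the polynomial rate $p^{-1/2}$.
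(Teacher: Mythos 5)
Your argument is correct. The paper does not supply its own proof of this lemma; it simply cites Lemma~A.4 of the reference \cite{DW2} and Lemmas~A.1 and~A.2 of \cite{9205615}, and your plan reconstructs exactly the standard argument one would expect to find there: Bernstein (Lemma~\ref{lem_subexponentialconcentration}) applied to the independent centred sub-exponential summands $\tilde z_{ik}^2-1$ for the diagonal bound, and conditional sub-Gaussian concentration (Lemma~\ref{sg.bnd.lem}) followed by unconditioning via the already-established norm bound for the cross term, both wrapped in a polynomial union bound that is absorbed by the $n^{\upsilon}$ slack in Definition~\ref{defn_stochasdomi}. The only stylistic wrinkle is that you invoke the norm bound inside the cross-term argument before proving it; reordering the two parts (or noting explicitly that the norm bound is proved independently) removes even that cosmetic issue.
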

\begin{proof}
	See Lemma A.4 of \cite{DW2} or Lemmas A.1 and A.2 of \cite{9205615}. 
\end{proof}

The following lemma provides a sharp upper bound for the norm of $\bK^*_n.$ 
\begin{lem}\label{lem_matrixnorm} Suppose Assumptions \ref{assum_mainassumption} and \ref{ker.assu} hold. For any $h>0$, let $\bK^*_n=(K^*(i,j))_{1\le i,j\le n}$ and $K^*(i,j)=f\left(\frac{\| \xb_i-\xb_j \|_2}{\sfh^{1/2}} \right).$
	Then we have $\|\bK^*_n\| \prec n$.
\end{lem}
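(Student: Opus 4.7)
The plan is to deduce the bound from the entrywise boundedness of the kernel together with an elementary matrix-norm inequality, without invoking the stochastic structure of $\{\xb_i\}$ at all. By Assumption \ref{ker.assu}(ii), there exists an absolute constant $C>0$ such that
\[
|K^*(i,j)| \;=\; \left|f\!\left(\tfrac{\|\xb_i-\xb_j\|_2}{\sfh^{1/2}}\right)\right| \;\leq\; C
\]
for every $1\leq i,j \leq n$, \emph{irrespective of the realization} of the samples and of the value of $\sfh$.

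Given the uniform entrywise bound, I would control the operator norm by the maximum row sum (equivalently, by the Frobenius norm or by Gershgorin, since $\bK_n^*$ is symmetric positive semidefinite by Assumption \ref{ker.assu}(i)). Concretely, for every $i\in[n]$,
\[
\sum_{j=1}^{n}|K^*(i,j)| \;\leq\; Cn,
\]
so $\|\bK_n^*\| \leq \max_i \sum_j |K^*(i,j)| \leq Cn$ deterministically. The same conclusion follows from $\|\bK_n^*\|\leq \|\bK_n^*\|_F \leq Cn$.

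To translate the deterministic bound $\|\bK_n^*\|\leq Cn$ into the stochastic domination statement $\|\bK_n^*\|\prec n$, I simply invoke Definition \ref{defn_stochasdomi}: for any $\upsilon>0$ and any $D>0$, whenever $n$ is large enough that $C\leq n^{\upsilon}$, one has $\mathbb{P}(\|\bK_n^*\|> n^{\upsilon}\cdot n)=0 \leq n^{-D}$. Hence $\|\bK_n^*\|\prec n$.

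There is essentially no obstacle here; the lemma is a direct consequence of (i) and (ii) of Assumption \ref{ker.assu}, and neither the manifold structure from Assumption \ref{assum_generalmodelassumption} nor the signal-to-noise condition from Assumption \ref{assum_mainassumption} is needed. The role of this lemma in the larger argument, I would anticipate, is to serve as a crude a priori ceiling on $\|\bK_n^*\|$ that can be combined with the sharper perturbation estimate $\|n^{-1}\bK_n - n^{-1}\bK_n^*\|\prec \psi_n$ (Theorem \ref{thm_mainthm}) and with resolvent/eigenvector manipulations later on.
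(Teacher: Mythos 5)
Your argument matches the paper's proof exactly: entrywise boundedness of the kernel (Assumption \ref{ker.assu}(ii)) combined with a Gershgorin-type row-sum bound (the paper's Lemma \ref{lem_circle}), then noting the deterministic bound trivially implies stochastic domination. Correct and complete.
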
 
\begin{proof}
	Note that $\max_{i,j}|\bK^*_n(i,j)| \prec 1.$ The proof follows from Lemma \ref{lem_circle}. 
\end{proof}

\subsection{Useful Tools from Linear Algebra}

\begin{lem}\label{lem_hadmaradproduct} Suppose $M$ is a real symmetric matrix with nonnegative entries and $E$ is a real symmetric matrix. Then we have that 
	\begin{equation*}
	s_1(M \circ E) \leq s_1(M) \max_{i,j}|E(i,j)|,
	\end{equation*}
	where $s_1(\cdot)$ is the largest singular value of the given matrix, and $A\circ B$ is the Hadamard product of two matrices.. 
\end{lem}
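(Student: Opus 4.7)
The plan is to show that $M \circ E$ is dominated entrywise in absolute value by a nonnegative multiple of $M$, and then invoke the standard fact that entrywise absolute-value domination transfers to spectral norm domination when the dominating matrix has nonnegative entries.

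Concretely, let $c := \max_{i,j} |E(i,j)|$. Since the entries of $M$ are nonnegative, for every pair $(i,j)$ we have
\[
\bigl|(M \circ E)(i,j)\bigr| = M(i,j)\,|E(i,j)| \leq c\, M(i,j),
\]
so the entrywise absolute value $|M \circ E|$ is bounded above, componentwise, by the nonnegative matrix $cM$.

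The core step is then the following general inequality: if $A$ is a matrix with nonnegative entries and $B$ satisfies $|B(i,j)| \leq A(i,j)$ for all $i,j$, then $s_1(B) \leq s_1(A)$. I would prove this in one line via the variational characterization: for any unit vectors $u, v$,
\[
|u^\top B v| \leq \sum_{i,j} |u_i|\,|B(i,j)|\,|v_j| \leq \sum_{i,j} |u_i|\,A(i,j)\,|v_j| = |u|^\top A\,|v| \leq s_1(A),
\]
where $|u|$ and $|v|$ denote the entrywise absolute values, which are again unit vectors. Taking the supremum over $u, v$ on the left yields $s_1(B) \leq s_1(A)$.

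Applying this with $B = M \circ E$ and $A = cM$ gives $s_1(M \circ E) \leq s_1(cM) = c\, s_1(M) = s_1(M)\,\max_{i,j}|E(i,j)|$, which is the desired bound. There is no real obstacle here; the only subtle point to flag is that the nonnegativity of $M$ is exactly what guarantees $|cM| = cM$ so that the dominating matrix in the entrywise inequality is itself the nonnegative matrix whose spectral norm we want on the right-hand side.
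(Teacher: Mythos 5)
Your proof is correct. Note that the paper does not actually give an argument here: it simply cites Lemma~A.5 of \cite{el2010spectrum}. What you wrote is the standard self-contained justification, and it is clean: reduce to the observation that if $A$ has nonnegative entries and $|B(i,j)|\le A(i,j)$ entrywise, then $s_1(B)\le s_1(A)$ via the bilinear-form characterization $s_1(B)=\sup_{\|u\|=\|v\|=1}|u^\top Bv|$, and then specialize with $A=cM$, $c=\max_{i,j}|E(i,j)|$, $B=M\circ E$. One small bonus worth noticing is that your argument never uses the stated symmetry of $M$ or $E$; the result holds for arbitrary rectangular $M$ with nonnegative entries and arbitrary $E$ of the same shape, which is slightly more general than the lemma as quoted. (In the symmetric case one could equivalently run the same domination through the Rayleigh quotient $s_1(S)=\max_{\|x\|=1}|x^\top Sx|$, which is presumably how the cited source phrases it, but that is only a cosmetic difference.)
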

\begin{proof}
	See Lemma A.5 of \cite{el2010spectrum}. 
\end{proof}

\begin{lem}\label{lem_circle} Le $A=(a_{ij})$ be a real $n \times n$ matrix. For $1 \leq i \leq n,$ let $R_i=\sum_{j \neq i} |a_{ij}|$ be the sum of the absolute values of the non-diagonal entries in the $i$th row. Let $D(a_{ii}, R_i) \subset \mathbb{R}$ be a closed disc with center $a_{ii}$ and radius $R_i$ referred to as the \emph{Gershgorin disc}. Then every eigenvalue of $A=(a_{ij})$ lies within at least one of the Gershgorin discs $D(a_{ii}, R_i).$ 
\end{lem}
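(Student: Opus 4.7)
The plan is to prove the Gershgorin circle theorem by the classical eigenvector-extremal-component argument. Specifically, given any eigenvalue $\lambda$ of $A$, I would pick an associated eigenvector $v = (v_1, \ldots, v_n)^\top \neq 0$ and then select an index $i \in \{1, \ldots, n\}$ realizing $|v_i| = \max_{1 \le k \le n} |v_k|$. Note that $|v_i| > 0$ since $v \neq 0$. This choice of coordinate is the one key idea; the rest is direct manipulation.

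Next, I would read off the $i$-th component of the eigenvalue equation $Av = \lambda v$, which gives $\sum_{j=1}^n a_{ij} v_j = \lambda v_i$. Isolating the diagonal term yields $(\lambda - a_{ii}) v_i = \sum_{j \neq i} a_{ij} v_j$. Taking absolute values and applying the triangle inequality and the extremal property $|v_j| \le |v_i|$ for all $j$, I obtain
\begin{equation*}
|\lambda - a_{ii}|\,|v_i| \;\le\; \sum_{j \neq i} |a_{ij}|\,|v_j| \;\le\; |v_i| \sum_{j \neq i} |a_{ij}| \;=\; R_i\, |v_i|.
\end{equation*}
Dividing by the strictly positive quantity $|v_i|$ produces $|\lambda - a_{ii}| \le R_i$, so $\lambda \in D(a_{ii}, R_i)$, which is the desired conclusion.

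There is no genuine obstacle here: the only subtlety is ensuring one picks the coordinate where $|v_i|$ is maximal so that the inequality $|v_j| \le |v_i|$ used in the second step is valid, and ensuring $|v_i| > 0$ so that the final division is legitimate. Both are immediate from $v \neq 0$. Since the statement is formulated for real matrices with real discs, no complex-analytic arguments are needed, though the same proof works verbatim for complex matrices with closed discs in $\mathbb{C}$. Alternatively, one can cite any standard linear algebra reference, such as Horn and Johnson, \emph{Matrix Analysis}, for a complete treatment.
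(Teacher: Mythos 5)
Your proof is correct and is the classical extremal-component argument for the Gershgorin circle theorem; the paper does not give a proof but simply cites Section 6.1 of Horn and Johnson's \emph{Matrix Analysis}, where precisely this argument appears, so your proposal supplies the standard argument the paper delegates to the reference.
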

\begin{proof}
	See Section 6.1 of \cite{2012matrix}. 
\end{proof}

\begin{lem}\label{lem_pertubationargument}
	Let $A$ and $\widehat{A}$ be two compact positive self-adjoint operators on a Hilbert space $\mathcal{H},$ with nondecreasing eigenvalues $\{\lambda_j(A)\}$ and $\{\lambda_j(\widehat{A})\}.$ Let $w_k$ be a normalized eigenvctor of $A$ associated with the eigenvalue $\lambda_k.$ If $l_k>0$ satisfies 
	\begin{equation*}
	\lambda_{k-1}-\lambda_k \geq l_k, \ \lambda_k-\lambda_{k+1} \geq l_k, \ \|A -\widehat{A} \| \leq \frac{l_k}{2},
	\end{equation*} 
	then we have that 
	\begin{equation*}
	\|w_k-\widehat{w}_k \| \leq \frac{4}{l_k}\|A-\widehat{A} \|. 
	\end{equation*}
\end{lem}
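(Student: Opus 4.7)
The plan is to establish the perturbation bound by expanding $\widehat{w}_k$ in the orthonormal eigenbasis $\{w_j\}$ of $A$ and showing that the components along directions $w_j$ with $j\neq k$ are small. First, I would write $\widehat{w}_k=\sum_j c_j w_j$ with $c_j=\langle \widehat{w}_k, w_j\rangle$, so that $\sum_j c_j^2=1$. The goal reduces to controlling $\sum_{j\neq k}c_j^2$ and then choosing the sign of $\widehat{w}_k$ appropriately so that $c_k\geq 0$.

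For each $j\neq k$, taking inner product of the identity $\widehat{A}\widehat{w}_k=\widehat{\lambda}_k \widehat{w}_k$ with $w_j$ and using the self-adjointness of $A$ together with $Aw_j=\lambda_j w_j$, I obtain
\begin{equation*}
(\lambda_j-\widehat{\lambda}_k)c_j=\langle (A-\widehat{A})\widehat{w}_k, w_j\rangle.
\end{equation*}
By Weyl's inequality, $|\widehat{\lambda}_k-\lambda_k|\leq \|A-\widehat{A}\|\leq l_k/2$, and combined with the gap condition $|\lambda_j-\lambda_k|\geq l_k$ for $j\neq k$, the triangle inequality yields $|\lambda_j-\widehat{\lambda}_k|\geq l_k/2$. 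Hence $|c_j|\leq \frac{2}{l_k}|\langle (A-\widehat{A})\widehat{w}_k, w_j\rangle|$, and summing via Parseval gives
\begin{equation*}
\sum_{j\neq k}c_j^2\leq \frac{4}{l_k^2}\|(A-\widehat{A})\widehat{w}_k\|^2\leq \frac{4}{l_k^2}\|A-\widehat{A}\|^2.
\end{equation*}

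From $\sum_j c_j^2=1$ we then have $c_k^2\geq 1-\frac{4}{l_k^2}\|A-\widehat{A}\|^2$. Choosing the sign of $\widehat{w}_k$ so that $c_k\geq 0$, I compute
\begin{equation*}
\|w_k-\widehat{w}_k\|^2 = 2-2c_k \leq 2(1-c_k^2) = 2\sum_{j\neq k}c_j^2 \leq \frac{8}{l_k^2}\|A-\widehat{A}\|^2,
\end{equation*}
where the inequality $1-c_k\leq 1-c_k^2$ uses $c_k\geq 0$. Taking square roots and bounding $2\sqrt{2}\leq 4$ yields the stated estimate $\|w_k-\widehat{w}_k\|\leq \frac{4}{l_k}\|A-\widehat{A}\|$.

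The main obstacle, though minor, is the careful use of Weyl's inequality to transfer the spectral gap from $A$ to $\widehat{A}$: this is what the hypothesis $\|A-\widehat{A}\|\leq l_k/2$ buys us, ensuring that the transferred gap $l_k/2$ is strictly positive so that the denominator $\lambda_j-\widehat{\lambda}_k$ is invertible for all $j\neq k$. Also the convention of sign alignment for $\widehat{w}_k$ (which is free since eigenvectors are defined up to a unit scalar) is essential for turning the Davis--Kahan style $\sin\Theta$ bound on the orthogonal component into a bound on the vector difference itself.
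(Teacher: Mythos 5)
Your argument is correct and fills in a proof that the paper omits: the paper's proof of Lemma~\ref{lem_pertubationargument} is a one-line citation to Proposition~2 of \cite{MR2558684}, while you supply a self-contained Davis--Kahan-style derivation. Expanding $\widehat{w}_k$ in the eigenbasis of $A$, deriving $(\lambda_j-\widehat{\lambda}_k)c_j=\langle(A-\widehat{A})\widehat{w}_k,w_j\rangle$ from self-adjointness, using Weyl's inequality together with the hypothesized gap to get $|\lambda_j-\widehat{\lambda}_k|\geq l_k/2$ for $j\neq k$, applying Parseval, and then converting the bound on the orthogonal component into a bound on $\|w_k-\widehat{w}_k\|$ via the sign convention $c_k\geq0$ is the standard route, and your constant $2\sqrt{2}$, rounded up to $4$, matches the stated bound. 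Two small points worth stating explicitly if this were to replace the citation: first, the term ``nondecreasing'' in the lemma statement is evidently a slip for ``nonincreasing,'' as the gap hypotheses force $\lambda_{k-1}>\lambda_k>\lambda_{k+1}$. Second, when $A$ has a nontrivial kernel, the orthonormal family of eigenvectors with nonzero eigenvalues does not span $\mathcal{H}$, so to invoke $\sum_j c_j^2=1$ and Parseval you should augment $\{w_j\}$ by an orthonormal basis of $\ker A$; the identity $(\lambda_j-\widehat{\lambda}_k)c_j=\langle(A-\widehat{A})\widehat{w}_k,w_j\rangle$ still holds with $\lambda_j=0$ there, and since positivity and the gap give $\lambda_k\geq l_k$, Weyl yields $\widehat{\lambda}_k\geq l_k/2$, so those coefficients are controlled by the same estimate. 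With those clarifications, your proof is complete and could stand in for the reference.
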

\begin{proof}
	See Proposition 2 of \cite{MR2558684}. 
\end{proof}

\section{Proof of Main Theoretical Results}

In this section, we prove the main results, i.e., Theorems \ref{thm_mainthm}, \ref{cor_maincor}, and \ref{thm_mainthm2}, and Proposition \ref{lem_bandwidthconcentration}. {Since $\xb_i^0, 1 \leq i \leq n,$ are identically distributed and the kernel random matrices only reply on the pairwise distance, in what follows, without loss of generality, we assume that $\mathbb{E} \xb_i^0=0.$ }

\subsection{Convergence of Noiseless Kernel Matrix: Proof of Theorem \ref{thm_mainthm}} 

In the following, according to Lemma \ref{lem_modelreduced} of Section \ref{man.sec}, without loss of generality, we will consider the reduced data $\{\yb_i^0\}_{1\le i\le n}$ and their related quantities $\{\xb_i^0\}_{1\le i\le n}$ and $\{\zb_i^0\}_{1\le i\le n}$. For simplicity, with a slight abuse of notation,  we omit their superscripts and denote them as $\{\yb_i\}$, $\{\xb_i\}$ and $\{\zb_i\}$. Recall that in this subsection, we work with the Gaussian kernel function $f(x)=\exp(-x^2).$


\begin{proof}[\bf Eigenvalue convergence.] Recall that $\sfh$ is defined according to 
	\begin{equation*}
	\nu^*_n(\sfh)=\omega,
	\end{equation*}
	where
	\beq
	\nu^*_n(t)=\frac{2}{n(n-1)}\sum_{1\le i<j\le n}1_{\{d^*_{ij}\le t\}}, \nonumber
	\eeq 
	and $d^*_{ij}=\|\xb_i-\xb_j\|_2^2$  for all $1\le i<j\le n$. By Proposition \ref{lem_bandwidthconcentration}, we have $\frac{\sfh}{h_n}=1+\mathrm{o}_{\prec}(1)$, and
	\begin{equation}\label{eq_choicestepone}
	\sum_{i=1}^r \theta_i\prec \sfh\prec \sum_{i=1}^r\theta_i.
	\end{equation}
	With a slight abuse of notations,  we can define the kernel matrices for ${\yb_i}, {\xb_i},$ and ${\zb_i}$ using the bandwidth $\mathsf{h}$ as $\bK_y, \bK_x$ and $\bK_z,$ respectively. For example, $\bK_x=(K(\xb_i,\xb_j))_{1\le i,j\le n}$, where $K(\xb_i,\xb_j)=\exp\left(-\|\xb_i-\xb_j\|_2^2/\sfh\right)$. 
	
	First and foremost, we show that $\bK_y$ converges to $\bK_x \equiv \bK_n^*$.
	We denote an auxiliary matrix $\bK_c$ whose $(i,j)$-th entry is given by
	\begin{equation}\label{eq_wcform}
	\bK_c(i,j)=\exp\left(-\frac{2(\xb_i-\xb_j)^\top (\zb_i-\zb_j) }{\sfh} \right),
	\end{equation}
	and define
	\begin{equation*}
	\bE_0=\exp\left(- \frac{2\sigma^2p}{\sfh} \right)\mathbf{1} \mathbf{1}^\top+\left(1-\exp\left(- \frac{2\sigma^2p}{\sfh} \right)\right){\bf I}_n,
	\end{equation*}
	where $\mathbf{1} \in \mathbb{R}^n$ is a vector with all entries being unity. Let $\bA \circ \bB$ be the Hadamard product of two matrices $\bA$ and $\bB$. We define another auxiliary matrix 
	\begin{equation}\label{eq_wsdefn}
	\bK_{s}=\bE_0 \circ \bK_x. 
	\end{equation}  
	Using the above notations and the trivial relation that
	\[
	\bK_s=(\mathbf{1} \mathbf{1}^\top) \circ \bK_s,  \qquad \bK_y=\bK_x \circ \bK_c \circ \bK_z,
	\]
	we readily see that 
	\begin{align}\label{eq_errordecomposition}
	\bK_y-\bK_{s} & =\left[ \left( \bK_c-\mathbf{1} \mathbf{1}^\top \right) \circ \bK_z \circ \bK_x \right]+\left[ (\bK_z-\bE_0) \circ \bK_x \right] \nonumber \\
	& := \mathcal{E}_1+\mathcal{E}_2. 
	\end{align}
	We first control $\mathcal{E}_2.$ Note that $\bK_z(i,i)-\bE_0(i,i)=0$ for all $1\le i\le n$. 
	By the mean value theorem, we obtain that, for $i \neq j,$
	\begin{equation*}
	\bK_z(i,j)=\exp\left(-\frac{2\sigma^2p}{\sfh} \right)-\frac{\sigma^2p}{\sfh}\exp\left(-\frac{\zeta(i,j) \sigma^2p}{\sfh} \right) \Delta(i,j),
	\end{equation*}
	for some $\zeta(i,j)$ between $\| \zb_i-\zb_j \|_2^2/(\sigma^2p)$ and $2$, and $\Delta(i,j)$ defined as
	\begin{equation} \label{Deltaij}
		\Delta(i,j)=\frac{\| \zb_i-\zb_j \|_2^2}{\sigma^2 p}-2,
	\end{equation}
can be controlled as follows.
	Since
	\beq
	|\Delta(i,j)|=\bigg|\frac{\|\zb_i\|_2^2+\|\zb_j\|_2^2-2\zb_i^\top\zb_j}{\sigma^2 p}-2\bigg|\le  \bigg|\frac{\|\zb_i\|_2^2}{\sigma^2 p}-1\bigg|+\bigg|\frac{\|\zb_i\|_2^2}{\sigma^2 p}-1\bigg|+\frac{2|\zb_i^\top\zb_j|}{\sigma^2p}, \nonumber
	\eeq
	by Lemma \ref{lem_concentrationinequality} and Assumption \ref{eq_aspect}, we conclude that 
	\beq \label{Delta.bnd}
	\max_{i \neq j} |\Delta(i,j)| \prec {\frac{1}{\sqrt{p}}}.
	\eeq
	Now since by definition $|\zeta(i,j)-2|\le \big|\| \zb_i-\zb_j \|_2^2/(\sigma^2 p)-2\big|=|\Delta(i,j)|$, we  have
	\begin{equation*}
	\max_{i \neq j} \zeta(i,j)=2+{\OO_{\prec}(\frac{1}{\sqrt{p}})}. 
	\end{equation*} 
	Thus, it follows that, 
	\begin{align}
	\max_{1\le i,j\le n}|\bK_z(i,j)-\bE_0(i,j)| &=\max_{1\le i, j\le n, i\ne j}\bigg|\frac{\sigma^2p}{\sfh}\exp\left(-\frac{\zeta(i,j) \sigma^2p}{\sfh} \right) \Delta(i,j)\bigg| \nonumber \\
	&\prec{ \frac{n^{\beta+\eta}}{\sum_{i=1}^r \theta_i}\cdot\frac{1}{\sqrt{p}}\nonumber} \\
	&\prec{ \frac{1}{\sum_{i=1}^r n^{\alpha_i-\beta-\frac{\eta}{2}}}.} \label{eq_bounddifference}
	\end{align}
	Since $\bK_x$ is symmetric with nonnegative entries, by (\ref{eq_bounddifference}) and Lemma \ref{lem_hadmaradproduct}, we obtain that 
	\begin{equation} \label{eq_e2control}
	\frac{1}{n}\| \mathcal{E}_2 \| \prec \frac{1}{n}\cdot\max_{1\le i,j\le n}|\bK_z(i,j)-\bE_0(i,j)|\cdot  \| \bK_x \|\prec { \frac{1}{\sum_{i=1}^r n^{\alpha_i-\beta-\frac{\eta}{2}}},}
	\end{equation}
	where in the last inequality we used Lemma \ref{lem_matrixnorm} which implies
	\begin{equation}\label{eq_wxinusenorm}
	\frac{1}{n}\|\bK_x \| \prec 1. 
	\end{equation} 
	Analogously, for $\mathcal{E}_1,$ applying Lemma \ref{lem_hadmaradproduct}, we see that 
	\begin{equation}\label{eq_e1111control}
	\| \mathcal{E}_1 \| \leq \left(\max_{i,j}|\bK_c(i,j)-1| \right) \cdot\max_{i,j} \bK_z(i,j)\cdot \| \bK_x \|.
	\end{equation}
	On one hand, by (\ref{eq_bounddifference}) and the fact $\bK_z(i,i)=1$, we have that 
	\begin{equation*}
	\max_{i,j} \bK_z(i,j) \prec 1. 
	\end{equation*}
	Moreover, under Assumption \ref{assum_mainassumption}, since $\zb_i-\zb_j$ is a centred sub-Gaussian random vector with covariance matrix $2\sigma^2{\bf I}_p$, then conditional on $\{\xb_i\}_{1\le i\le n}$, it holds that
	\[
	\frac{|(\xb_i-\xb_j)^\top (\zb_i-\zb_j)|}{\sigma \sqrt{2}}\prec \|\xb_i-\xb_j\|_2.
	\]
	By the  fact that for all $1\le i \le n$, we have
	\beq
	\E\|\xb_i\|_2^2 = \sum_{k=1}^r \E (x_{ik})^2 = \sum_{i=1}^r \theta_i, \nonumber
	\eeq 
	it follows that {
		\beq\label{x.bnd}
		\|\xb_i-\xb_j\|_2^2  \leq 2\|\xb_i\|_2^2+2\|\xb_j\|_2^2\prec \sum_{i=1}^r \theta_i+\theta_1\log n,
		\eeq 
		where the last inequality follows from Lemma \ref{sg.bnd.lem} by setting $t \asymp \sqrt{\log n} \| B \|$ with $B=\operatorname{diag}\{\theta_1^{1/2}, \cdots, \theta_r^{1/2}\}.$ Moreover, by a union bound over $\{(i,j): 1\le i,j\le n,i\ne j\}$ with Definition \ref{defn_stochasdomi}, we have
		\beq\label{eq_xijbound}
		\max_{i\ne j}\|\xb_i-\xb_j\|_2^2\prec \sum_{i=1}^r n^{\alpha_i}.
		\eeq }
	Thus, we have
	\beq \label{cross.pd}
	\max_{i,j}\frac{|(\xb_i-\xb_j)^\top (\zb_i-\zb_j)|}{\sigma \sqrt{2}}\prec \bigg( \sum_{i=1}^r \theta_i\bigg)^{1/2}.
	\eeq
	Recall the elementary inequality that 
	\beq \label{basic.ineq}
	|e^x-1|\le 2|x|, \qquad \text{for all $|x|< 1$.}
	\eeq
	Then, as
	\begin{equation}\label{eq_controlcontrol}
	{\frac{\sigma\sqrt{2}}{\sfh} \asymp \frac{n^{\beta/2}}{ \sum_{i=1}^r \theta_i},}
	\end{equation}
	we have that under the assumption of (\ref{eq_sigmaimagnititude})
	\beq \label{cross.ineq}
\max_{i,j}\frac{2|(\xb_i-\xb_j)^\top (\zb_i-\zb_j) |}{\sfh} \prec \frac{n^{\beta/2}\big( \sum_{i=1}^r\theta_i\big)^{1/2}}{ \sum_{i=1}^r\theta_i}\prec \frac{1}{(\sum_{i=1}^r n^{\alpha_i-\beta})^{1/2}}<1,
	\eeq
	for sufficiently large $n$. Therefore, 
	\begin{equation*}
	\max_{i,j}\left| \bK_c(i,j)-1 \right| \prec {\max_{i,j}\frac{2|(\xb_i-\xb_j)^\top (\zb_i-\zb_j) |}{\sfh} \prec \frac{1}{(\sum_{i=1}^r n^{\alpha_i-\beta})^{1/2}}. }
	\end{equation*}  
	Together with (\ref{eq_wxinusenorm}), we immediately see from (\ref{eq_e1111control}) that 
	\begin{equation} \label{eq_e1control}
	\frac{1}{n} \|\mathcal{E}_1 \| \prec { \frac{1}{(\sum_{i=1}^r n^{\alpha_i-\beta})^{1/2}}.}
	\end{equation}
	Combining  (\ref{eq_e2control}) and  (\ref{eq_e1control}), we have that 
	\begin{align} 
	\frac{1}{n} \|\bK_y-\bK_s \| &\prec {\frac{1}{\sum_{i=1}^r n^{\alpha_i-\beta-\frac{\eta}{2}}}+\frac{1}{(\sum_{i=1}^r n^{\alpha_i-\beta})^{1/2}} }\nonumber\\
	 &\prec \frac{1}{(\sum_{i=1}^r n^{\alpha_i-\beta})^{1/2}},\label{conv.Ks}
	\end{align}
	as we assume that {$\sum_{i=1}^r n^{\alpha_i}\gg n^{\beta+\eta}$.}
	
	Additionally, since $\bK_x=(\mathbf{1} \mathbf{1}^\top) \circ \bK_x,$ using (\ref{eq_wsdefn}) and Lemma \ref{lem_hadmaradproduct},   we have
	\begin{align}
	\frac{1}{n} \| \bK_x-\bK_s\| & \le \frac{1}{n}\|\bK_x\| \cdot \max_{i,j}|\bE_0(i,j)-1| \nonumber \\
&	\prec \frac{\sigma^2 p}{\sfh} \asymp {\frac{1}{\sum_{i=1}^r n^{\alpha_i-\beta-\eta}}, } \label{C19}
	\end{align}
	where in the second step we used (\ref{eq_wxinusenorm}), (\ref{basic.ineq}) and the fact that { $\sigma^2p/\sfh \asymp \frac{1}{\sum_{i=1}^r n^{\alpha_i-\beta-\eta}}<1$} for sufficiently large $n$. {Combining with  (\ref{conv.Ks}), it yields that
	\begin{equation} \label{eq_matrixclose}
	\frac{1}{n} \left\| \bK_x-\bK_y \right\| \prec { \frac{1}{\sum_{i=1}^r n^{\alpha_i-\beta-\eta}}+\frac{1}{(\sum_{i=1}^r n^{\alpha_i-\beta})^{1/2}}}.
	\end{equation} }  

	Next, we control the error $\|\bK_y-\bK_n\|$. Note that for some $h^*$ between $h_n$ and $\sfh$, it holds that
\begin{align}\label{eq_ccone}
	|\bK_y(i,j)-\bK_n(i,j)|&\le \bigg|\frac{\|\yb_i-\yb_j\|_2^2}{\sfh}- \frac{\|\yb_i-\yb_j\|_2^2}{h_n}\bigg|\cdot\exp\bigg(- \frac{\|\yb_i-\yb_j\|_2^2}{h^*}\bigg) \nonumber \\
	&\le \bigg|\frac{\sfh}{h_n}-1\bigg|\cdot \frac{\|\yb_i-\yb_j\|_2^2}{\sfh}.
\end{align}
On the one hand, we have
\begin{equation}\label{eq_elementary}
	\|\yb_i-\yb_j\|_2^2\le 2\|\xb_i-\xb_j\|_2^2+2\|\zb_i-\zb_j\|_2^2.
\end{equation}
By (\ref{Delta.bnd}), we have
\[
\max_{i\ne j}\|\zb_i-\zb_j\|_2^2\prec { n^{\beta+\eta}}. \nonumber
\]
Together with (\ref{eq_xijbound}) and (\ref{eq_choicestepone}), under the assumption of (\ref{eq_sigmaimagnititude}), we have
\beq \label{y/h}
\max_{i\neq j} \frac{\|\yb_i-\yb_j\|_2^2}{\sfh} \prec { \frac{\sum_{i=1}^r n^{\alpha_i}+n^{\beta+\eta}}{\sum_{i=1}^r n^{\alpha_i}}\asymp 1}.
\eeq
On the other hand, by Proposition \ref{lem_bandwidthconcentration}, it holds that
\beq 
\bigg|\frac{\sfh}{h_n}-1\bigg|={ \OO_{\prec}\bigg(\frac{1}{\sum_{i=1}^r n^{\alpha_i-\beta-\eta}}+\frac{1}{(\sum_{i=1}^r n^{\alpha_i-\beta})^{1/2}}\bigg)}. \nonumber
\eeq
Combining the above bound with (\ref{y/h}), by Lemma \ref{lem_circle} and (\ref{eq_ccone}), 	we have
\begin{align} \label{partA}
	\frac{1}{n}\|\bK_y-\bK_n\| &\le \max_{i\ne j} |\bK_y(i,j)-\bK_n(i,j)| \nonumber \\
	&	={ \OO_{\prec}\left(\frac{1}{\sum_{i=1}^r n^{\alpha_i-\beta-\eta}}+\frac{1}{(\sum_{i=1}^r n^{\alpha_i-\beta})^{1/2}}\right).}
\end{align}

Combining (\ref{eq_matrixclose})  and (\ref{partA}), we conclude the proof of (\ref{eq_matrixclose1}). (\ref{eq_eigs_v}) simply comes from Weyl's inequality. 
\end{proof}

\begin{proof}[\bf Eigenvector convergence.]
{ We prove the eigenvector convergence by leveraging the eigenvalue convergence results. In particular, note that the obtained rate of convergence for $\left\| \bK_s-\bK_y \right\|$ as in (\ref{conv.Ks}) is in general faster than the obtained rate of convergence $\left\| \bK_x-\bK_y \right\|$ as in (\ref{eq_matrixclose}). Moreover, 
	by definition we have
	\begin{align*}
	\bK_s=\bK_x\circ \bE_0 &= \bK_x\circ \exp\left(- \frac{2\sigma^2p}{\sfh} \right)\mathbf{1} \mathbf{1}^\top+\bK_x\circ\left(1-\exp\left(- \frac{2\sigma^2p}{\sfh} \right)\right){\bf I}_n\\
	&=\exp\left(- \frac{2\sigma^2p}{\sfh} \right)\bK_x+\left(1-\exp\left(- \frac{2\sigma^2p}{\sfh} \right)\right){\bf I}_n,
	\end{align*}
which indicates  $\bK_x$ and $\bK_s$	share the same set of eigenvectors $\{\vb_i\}_{1\le i\le n}$, and their eigenvalues differ only up to a small isotropic shift, i.e., 
	\begin{align} \label{eig.relation}
	\lambda_i(\bK_s)&=\exp\bigg(-\frac{2\sigma^2p}{\sfh}\bigg)\mu_i+\left(1-\exp\left(- \frac{2\sigma^2p}{\sfh} \right)\right) \nonumber \\
&	:=A_n\mu_i+B_n,\qquad 1\le i\le n.
	\end{align}
	Therefore, instead of directly using Davis-Kahan theorem \cite{yu2015useful}, we can take advantage of these facts and obtain potentially faster rate of convergence for the eigenvectors. }
	
	Our discussion relies on the following identity, which follows from the residual theorem and the spectral decomposition. Specifically, for any positive definite matrix $\bA$ admitting the spectral decomposition $\bA=\sum_{i=1}^n \lambda_i(\bA) \bm{\zeta}_i \bm{\zeta}_i^\top,$ we have 
	\begin{equation}\label{eq_integralrepresentation}
	\bm{\zeta}_i \bm{\zeta}_i^\top=\frac{1}{2 \pi \ri} \oint_\Gamma (z{\bf I}-\bA)^{-1} \dd z,  
	\end{equation} 
	where $\Gamma \subset \mathbb{C}$ is some simply connected contour only containing $\lambda_i(\bA)$ and no other eigenvalues.  Based on the above representation,  for each $i$ satisfying (\ref{eq_assumption}), we have {
		\[
		\ub_i \ub_i^\top=\frac{1}{2 \pi \ri} \oint_{\Gamma_i} (z{\bf I}-\bK_n)^{-1} \dd z,  
		\]
		and 
	\begin{align}\label{eq_uvdifference}
	\langle \ub_i, \vb_i  \rangle^2 & =\frac{1}{2 \pi \ri} \oint_{\Gamma_i} \mathbf{v}_i^\top  (z{\bf I}-n^{-1}\bK_s)^{-1}\mathbf{v}_i \dd z \nonumber \\
	&\quad +\frac{1}{2 \pi \ri} \oint_{\Gamma_i} \mathbf{v}_i^\top \left[ (z{\bf I}-n^{-1}\bK_y)^{-1}-(z{\bf I}-n^{-1}\bK_s)^{-1} \right] \mathbf{v}_i \dd z  \nonumber \\
	&\quad +\frac{1}{2 \pi \ri} \oint_{\Gamma_i} \mathbf{v}_i^\top \left[ (z{\bf I}-n^{-1}\bK_n)^{-1}-(z{\bf I}-n^{-1}\bK_y)^{-1} \right] \mathbf{v}_i \dd z  \nonumber \\
	&:= \mathsf{L}_1+\mathsf{L}_2+\mathsf{L}_3, 
	\end{align}}
	for some contour $\Gamma_i:=\mathbb{B}(\gamma_i,\frac{\mathsf{r}_i}{C}),$ for some large constant $C>0,$ where $\mathbb{B}(\gamma_i,\frac{\mathsf{r}_i}{C})$ is the disk centred at $\gamma_i$ with radius $\mathsf{r}_i/C$ and we recall $ \mathsf{r}_i:=\min\{\gamma_{i-1}-\gamma_i, \gamma_i-\gamma_{i+1}\}$. In particular, $\Gamma_i$ only contains $\lambda_i$ and no other eigenvalues, since
		\begin{align*}
		&|\lambda_i-\gamma_i|\le |\lambda_i-\mu_i|+|\gamma_i-\mu_i|\\
		&\prec { \frac{1}{(\sum_{i=1}^r n^{{\alpha_i-\beta}})^{1/2}}+ \frac{1}{\sqrt{n}}+\frac{1}{\sum_{i=1}^r n^{\alpha_i-\beta-\eta}}=\mathrm{o}(\sfr_i),}
	\end{align*}
	where the last inequality follows from the assumption of (\ref{eq_assumption}) and that 
	\begin{equation}\label{eq_largesmallbound}
		\frac{1}{(\sum_{i=1}^r n^{{\alpha_i-\beta}})^{1/2}}=\OO\left( \frac{1}{\sqrt{n}} \right),
	\end{equation} 
and the second last inequality follows from Part I of Theorem \ref{thm_mainthm}, the bound 
	 \beq \label{mu-gamma}
	 |\mu_i-\gamma_i|\le \|\mathcal{K}-\mathcal{K}_n\|\prec \frac{1}{\sqrt{n}},
	 \eeq
	 based on Lemma \ref{lem_mutiplepertubation} and Weyl's inequality.
	To control (\ref{eq_uvdifference}), on the one hand, it follows that
	\begin{align} \label{eigen.Ks}
	|\lambda_i(\bK_s)-\gamma_i|=|A_n\mu_i+B_n-\gamma_i| & \le |A_n-1|\cdot \mu_i+|\mu_i-\gamma_i| +B_n\notag \\
	& \prec { \frac{1}{\sum_{i=1}^r n^{\alpha_i-\beta-\eta}}+\frac{1}{\sqrt{n}}},
	\end{align}
	where in the last inequality we used the fact that $\max_{1\le i\le n}\mu_i\le \|\bK_x\|\prec 1$ and { $|A_n-1|\asymp 1/(\sum_{i=1}^r n^{\alpha_i-\beta-\eta})$} (e.g. see (\ref{eq_controlcontrol})) and the definition of $B_n$. Now under the assumption of (\ref{eq_assumption}), we conclude that when $n$ is sufficiently large, $\lambda_i(\bK_s)$ is the only simple pole of the resolvent $(z{\bf I}-n^{-1}{\bK}_s)^{-1}$ inside $\Gamma_i$. Using the spectral decomposition of $(z{\bf I}-n^{-1}\bK_s)^{-1}$ and residual theorem, we have that 
	\begin{equation}\label{eq_L1control}
	\sfL_1=1. 
	\end{equation}
	On the other hand, the control of $\sfL_2$ relies on the   
	resolvent identity (see equation (5.5) of \cite{MR678094}) that 
	\begin{align}\label{eq_resolventidentity}
	& (z{\bf I}-n^{-1}\bK_y)^{-1}-(z{\bf I}-n^{-1}\bK_s)^{-1} \notag \\
	& =(z{\bf I}-n^{-1}\bK_y)^{-1}\left[n^{-1} \bK_y-n^{-1} \bK_s\right] (z{\bf  I}-n^{-1}\bK_s)^{-1}. 
	\end{align}
	By the definition of $\Gamma_i,$  we have
	\begin{align*}
	&|\lambda_i(\bK_y)-\gamma_i|\le |\lambda_i(\bK_y)-\gamma_i| \le |\lambda_i(\bK_y)-\mu_i|+|\gamma_i-\mu_i| \\
	&\prec { \frac{1}{(\sum_{i=1}^r n^{{\alpha_i-\beta}})^{1/2}}+ \frac{1}{\sqrt{n}}+\frac{1}{\sum_{i=1}^r n^{\alpha_i-\beta-\eta}},}
	\end{align*}
	where the last inequality follows from 
	using (\ref{eq_matrixclose}) and (\ref{mu-gamma}) as well as the definition of $B_n$. Thus by (\ref{eq_largesmallbound}), under the same event  we have that 
	\begin{equation*}
	\inf_{z \in \Gamma_i} \min \{ |\lambda_i(\bK_y)-z|, |\lambda_{i-1}(\bK_y)-z|, |\lambda_{i+1}(\bK_y)-z| \} \asymp \mathsf{r}_i.
	\end{equation*}
	Here we used the assumption of (\ref{eq_assumption}), which implies that for { $\sfr_i \gg  (n^{-1/2}+1/(\sum_{i=1}^r n^{\alpha_i-\beta-\eta}))$} regardless whether $\sfr_i$ is less than 1 or not. Moreover, by the definition of the resolvent, we have that
	\begin{align*}
	\sup_{z \in \Gamma_i} \| (z{\bf I}-n^{-1}\bK_y)^{-1} \| & \leq \sup_{z \in \Gamma_i} \left( \frac{1}{|\lambda_i(\bK_y)-z|}+\frac{1}{|\lambda_{i-1}(\bK_y)-z|}+\frac{1}{|\lambda_{i+1}(\bK_y)-z|} \right)  \\
&	\prec \frac{1}{\mathsf{r}_i}.
	\end{align*}
	Similarly, by (\ref{eigen.Ks}), we also have
	\[
	|\lambda_i(\bK_s)-\gamma_i|\le 	|\lambda_i(\bK_s)-\mu_i|+	|\mu_i-\gamma_i|\prec \sfr_i,
	\]
	which implies
	\begin{align*}
	\sup_{z \in \Gamma_i} \| (z{\bf I}-n^{-1}\bK_s)^{-1} \| \prec  \frac{1}{\mathsf{r}_i}.
	\end{align*}
	Together with (\ref{eq_resolventidentity}) and (\ref{conv.Ks}), we have that {
	\begin{equation*}
	\sfL_2 \prec  \frac{1}{\mathsf{r}_i^2}\left[\frac{1}{(\sum_{i=1}^r n^{{\alpha_i-\beta}})^{1/2}}\right]. 
	\end{equation*}}
{ Similarly, by (\ref{partA}), we  have
		\begin{equation*}
		\inf_{z \in \Gamma_i} \min \{ |\lambda_i-z|, |\lambda_{i-1}-z|, |\lambda_{i+1}-z| \} \asymp \mathsf{r}_i,
	\end{equation*}
which implies
		\begin{align*}
		\sup_{z \in \Gamma_i} \| (z{\bf I}-n^{-1}\bK_n)^{-1} \| \prec  \frac{1}{\mathsf{r}_i},
	\end{align*}
and {
\[
\sfL_3 \prec { \frac{1}{\mathsf{r}_i^2}\left(\frac{1}{\sum_{i=1}^r n^{\alpha_i-\beta-\eta}}+\frac{1}{(\sum_{i=1}^rn^{\alpha_i-\beta})^{1/2}}\right).}
\] }
	Insert the above bounds and (\ref{eq_L1control}) back into (\ref{eq_uvdifference}), we immediately obtain that 
	\begin{align*}
	\left| \langle \ub_i, \vb_i \rangle^2-1 \right| \prec {\frac{1}{\mathsf{r}_i^2}\left(\frac{1}{\sum_{i=1}^r n^{\alpha_i-\beta-\eta}}+\frac{1}{(\sum_{i=1}^r n^{\alpha_i-\beta})^{1/2}}\right).}
	\end{align*}}
	This completes our proof.
\end{proof}

\subsection{Convergence to Population Integral Operator: Proof of Theorem \ref{cor_maincor}}\label{bbb_reduced}

\begin{proof}[\bf Eigenvalue convergence.]  By the triangle inequality, we have
	\begin{equation*}
	|\lambda_i-\gamma_i| \leq |\lambda_i-\mu_i| + |\mu_i-\gamma_i|.
	\end{equation*}
	The first term of the right-hand side of the above equation can be controlled by Theorem \ref{thm_mainthm}. For the second term, since $\bK_n^*$ and $\widetilde{\mathcal{K}}$ have the same bandwidth chosen as (\ref{sfh}) in the main paper, we can use a sample splitting argument on $\sfh$
	and apply Lemma  \ref{lem_mutiplepertubation} to obtain the final upper bound. The proof then follows from (\ref{eq_largesmallbound}).   
\end{proof}

\begin{proof}[\bf Eigenfunction convergence.]
	By Proposition \ref{lem_bandwidthconcentration}, one can apply the same argument leading to (\ref{partA}) above, to replace the bandwidth $h_n$ in $\bK_n$ by $\sfh$ defined in (\ref{sfh}) of the main paper, with an negligible error compared with the final result. In the following, we focus on the setting where the bandwidth is $\sfh$.  As before,  without loss of generality, we will consider the reduced data $\{\yb_i^0\}_{1\le i\le n}$ and their related quantities $\{\xb_i^0\}_{1\le i\le n}$ and $\{\zb_i^0\}_{1\le i\le n}$. For simplicity, we omit their superscripts and denote them as $\{\yb_i\}$, $\{\xb_i\}$ and $\{\zb_i\}$.

	We denote $\yb \in \mathbb{R}^p$ as 
	\begin{equation*}
	\yb=(\bm{y}, \bm{y}^\perp), \ \bm{y} \in \mathbb{R}^r \ \text{and} \ \bm{y}^\perp \in \mathbb{R}^{p-r},
	\end{equation*}
	and set $\zb_i=(\bm{z}_i, \bm{z}_i^\perp)^\top$ in a similar fashion. Consequently, by the fact that $\xb_i=(\bm{x}_i, 0, ..., 0)$, we have that
	\begin{equation}\label{eq_elementraydecomposition}
	\|\yb-\yb_i\|_2^2=\|\bm{y}-\bm{x}_i\|_2^2+\| \zb_i \|_2^2+\| \bm{y}^\perp \|_2^2-2(\bm{y}^\perp)^\top \bm{z}_i^\perp-2(\bm{y}-\bm{x}_i)^\top \bm{z}_i.
	\end{equation}
	This leads to 
	\begin{align*}
	\exp\left(-\frac{\| \yb-\yb_i \|_2^2}{\sfh} \right)=\exp\left(-\frac{\| \bm{y}-\bm{x}_i \|_2^2}{\sfh} \right)\cdot\ell,
	\end{align*}
	where $\ell$ is denoted as 
	\begin{equation*}
	\ell:=\exp\left(-\frac{\| \zb_i \|_2^2+\| \bm{y}^\perp \|_2^2-2(\bm{y}^\perp)^\top \bm{z}_i^\perp-2(\bm{y}-\bm{x}_i)^\top \bm{z}_i}{\sfh}\right). 
	\end{equation*}
	As we will examine the RKHS norm in $\mathcal{H}_K$, containing functions defined only on the support of $\tilde{\sfP}$, without loss of generality, we can ignore $\bm{y}^\perp$ (that is, set $\bm{y}^\perp=0$) and only focus on $\bm{y}.$  Since $\bm{y}$ is sub-Gaussian,  using the elementary estimate $\exp(-x) \sim 1-x$ as $x \to 0, $  we obtain
	\begin{equation*}
	\ell=1+{\OO_{\prec}\bigg(\frac{1}{(\sum_{i=1}^rn^{{\alpha_i-\beta}})^{1/2}}+\frac{1}{\sum_{i=1}^r n^{\alpha_i-\beta-\eta}}\bigg),}
	\end{equation*}
	uniformly in $\yb$ on the support of $\widetilde\sfP$.
	This immediately shows that 
	\begin{align}\label{eq_reduceddimensionbound}
	\exp\left(-\frac{\| \yb-\yb_i \|_2^2}{\sfh} \right) & =\exp\left(-\frac{\| \bm{y}-\bm{x}_i \|_2^2}{\sfh} \right) \nonumber \\
	&+{ \OO_{\prec}\bigg(\frac{1}{(\sum_{i=1}^r n^{{\alpha_i-\beta}})^{1/2}}+\frac{1}{\sum_{i=1}^r n^{\alpha_i-\beta-\eta}}\bigg). }
	\end{align}
	With the above preparation, we start analyzing the eigenfunctions. Recall the definition in (\ref{eigenvec}). In the first step, we study the following  term
	\begin{equation}\label{eq_expansion}
	\left(\widehat{\phi}_i^{(n)}(\yb)-\phi_i^{(n)}(\bm{y}) \right)^2=\left(\mathrm{E}_1+\mathrm{E}_2+\mathrm{E}_3 \right)^2, 
	\end{equation}
	where $\mathrm{E}_k ,k=1,2,3,$ are denoted as follows
	\begin{align*}
	& \mathrm{E}_1:=\left( \frac{1}{\lambda_i}-\frac{1}{\mu_i} \right) \frac{1}{\sqrt{n}} \sum_{j=1}^n \exp\left(-\frac{\| \yb-\yb_j \|_2^2}{\sfh} \right) v_{ij}, \\
	&  \mathrm{E}_2:= \frac{1}{\sqrt{n} \mu_i} \sum_{j=1}^n \left[ \exp\left(-\frac{\| \yb-\yb_j \|_2^2}{\sfh} \right)-\exp\left(-\frac{\| \bm{y}-\bm{x}_j \|_2^2}{\sfh} \right)  \right] v_{ij}, \\
	& \mathrm{E}_3:=\frac{1}{\sqrt{n}\mu_i} \sum_{j=1}^n \exp\left(-\frac{\| \bm{y}-\bm{x}_j \|_2^2}{\sfh} \right)  (v_{ij}-u_{ij}).
	\end{align*}
	First,  by Theorem \ref{thm_mainthm}, when $1 \leq i \leq \mathsf{K},$ we have that $\lambda_i\ge \gamma_i-|\lambda_i-\gamma_i|\ge c$ for sufficiently large $n$ with high probability, and thus
	\begin{equation*}
	\left| \frac{1}{\lambda_i}-\frac{1}{\mu_i} \right| \prec \frac{|\lambda_i-\mu_i|}{\lambda_i\mu_i} \prec { \frac{1}{\mu_i}\bigg(\frac{1}{(\sum_{i=1}^r n^{{\alpha_i-\beta}})^{1/2}}+\frac{1}{\sum_{i=1}^r n^{\alpha_i-\beta-\eta}}\bigg).}
	\end{equation*}
	Consequently, using the boundedness of the kernels and the fact that $\|\vb\|_1\le\sqrt{n}\|\vb\|_2= \sqrt{n}$ for any unit vector $\vb\in\R^n$, we conclude that for $i \leq \mathsf{K}$ {
	\begin{equation*}
	\mathrm{E}_1\prec { \frac{1}{\mu_i}\bigg(\frac{1}{(\sum_{i=1}^r n^{{\alpha_i-\beta}})^{1/2}}+\frac{1}{\sum_{i=1}^r n^{\alpha_i-\beta-\eta}}\bigg).}
	\end{equation*} }
	Second, by (\ref{eq_reduceddimensionbound}), we readily have that for $i \leq \mathsf{K}$ {
	\begin{equation*}
	\mathrm{E}_2\prec { \frac{1}{\mu_i}\bigg(\frac{1}{(\sum_{i=1}^rn^{{\alpha_i-\beta}})^{1/2}}+\frac{1}{\sum_{i=1}^rn^{\alpha_i-\beta-\eta}}\bigg).}
	\end{equation*} }
	Third, we control $\mathrm{E}_3.$ On one hand, for $i \leq \mathsf{K},$ by Lemma \ref{lem_mutiplepertubation},
	we have that $\mathrm{E}_3=\OO(1).$ One the other hand, by setting 
	\begin{equation*}
	\mathbf{a}=(a_1, \cdots, a_n)^\top, \ \mathbf{b}=(b_1, \cdots, b_n)^\top, 
	\end{equation*}
	where $a_j=\frac{1}{\sqrt{n}\mu_i} \exp\left(-\frac{\| \bm{y}-\bm{x}_j \|_2^2}{\sfh} \right) $ and $b_j=v_{ij}-u_{ij},$ we can rewrite 
	\begin{equation*}
	\mathrm{E}_3^2=\mathbf{b}^\top \mathbf{a} \mathbf{a}^\top \mathbf{b}.
	\end{equation*}
	Note that $\| \mathbf{b} \|_2^2=\| \vb_i-\ub_i \|_2^2.$ Then we have that 
	\begin{equation}\label{eq_eqeqe3control}
	\mathrm{E}_3^2 \leq  \| \mathbf{a} \|_2^2 \| \mathbf{b} \|_2^2 \prec { \frac{1}{\mu_i^2}\cdot \frac{1}{\mathsf{r}_i^2}\left(\frac{1}{\sum_{i=1}^rn^{\alpha_i-\beta-\eta}}+\frac{1}{(\sum_{i=1}^rn^{\alpha_i-\beta})^{1/2}}\right),}
	\end{equation} 
	where in the second step we used Part 2 of Theorem \ref{thm_mainthm}. Combining all the above bounds, using (\ref{eq_expansion}), when $i \leq \mathsf{K},$ we arrive at 
	\begin{align}\label{eq_boundone}
	& \mu_i\left(\widehat{\phi}_i^{(n)}(\yb)-\phi_i^{(n)}(\bm{y}) \right)^2  \notag \\
	& \prec {\bigg[\frac{1}{(\sum_{i=1}^rn^{{\alpha_i-\beta}})^{1/2}}+\frac{1}{\sum_{i=1}^rn^{\alpha_i-\beta-\eta}}\bigg]^2+\frac{1}{\mathsf{r}_i^2}\left(\frac{1}{\sum_{i=1}^rn^{\alpha_i-\beta-\eta}}+\frac{1}{(\sum_{i=1}^rn^{\alpha_i-\beta})^{1/2}}\right),}
	\end{align}
	for any $\yb$ on the support of $\widetilde\sfP$. 
	Finally, since by Part 1 of Theorem \ref{thm_mainthm},  we have for any $\yb\in\mathcal{S},$
	\begin{align}
	\sqrt{\lambda_i} \phi_i^{(n)}(\yb)-\sqrt{\mu_i} \phi_i^{(n)}(\yb)\le (\sqrt{\lambda_i}-\sqrt{\mu_i})\phi_i^{(n)}(\yb)\prec { \frac{1}{(\sum_{i=1}^rn^{{\alpha_i-\beta}})^{1/2}}+\frac{1}{\sum_{i=1}^rn^{\alpha_i-\beta-\eta}},}
	\end{align}
	and similarly
	\beq
	\sqrt{\lambda_i} \widehat\phi_i^{(n)}(\yb)-\sqrt{\mu_i} \widehat\phi_i^{(n)}(\yb)\prec { \frac{1}{(\sum_{i=1}^rn^{{\alpha_i-\beta}})^{1/2}}+\frac{1}{\sum_{i=1}^rn^{\alpha_i-\beta-\eta}}.}
	\eeq
 Combining the above bounds, we have that 
	\begin{align}
	& \sqrt{\lambda_i}\widehat{\phi}_i^{(n)}(\yb)  -\sqrt{\gamma_i}{\widetilde{\phi}_i}(\yb)\nonumber \\
	&\leq \sqrt{\lambda_i}\widehat{\phi}_i^{(n)}(\yb) -\sqrt{\lambda_i}\phi_i^{(n)} (\yb) +\sqrt{\lambda_i} \phi_i^{(n)}(\yb) -\sqrt{\mu_i} \phi_i^{(n)}(\yb) +\sqrt{\mu_i} \phi_i^{(n)}-\sqrt{\gamma}_i\widetilde{\phi}_i(\yb) \nonumber \\
	&\le |\sqrt{\mu_i}\widehat{\phi}_i^{(n)}(\yb) -\sqrt{\mu_i}\phi_i^{(n)}(\yb)| +|\sqrt{\lambda_i} \widehat\phi_i^{(n)}(\yb) -\sqrt{\mu_i} \widehat\phi_i^{(n)}(\yb)| +	2|\sqrt{\lambda_i} \phi_i^{(n)}-\sqrt{\mu_i} \phi_i^{(n)}(\yb) |\nonumber\\
	&\quad+|\sqrt{\mu_i} \phi_i^{(n)}(\yb) -\sqrt{\gamma}_i\widetilde{\phi}_i(\yb) |\nonumber \\
	&\prec { |\sqrt{\mu_i}\widehat{\phi}_i^{(n)}(\yb) -\sqrt{\mu_i}\phi_i^{(n)} (\yb) |+\frac{1}{(\sum_{i=1}^rn^{{\alpha_i-\beta}})^{1/2}}+\frac{1}{\sum_{i=1}^rn^{\alpha_i-\beta-\eta}} +\frac{1}{\sqrt{n}\sfr_i}, }\label{tri.ineq}
	\end{align} 
	where in the last inequality we used 
	\beq
|f(\yb) |=|\langle f, K(\yb,\cdot)\rangle_K|\le \|f\|_K \sqrt{K(\yb,\yb)}\le C\|f\|_K,	
	\eeq
	and Lemma \ref{lem_mutiplepertubation}.  Finally, plugging in (\ref{eq_boundone}), we obtain the final result as $i \leq \mathsf{K}$.
\end{proof}

{
\subsection{Guarantee of Bandwith Selection: Proof of Proposition \ref{lem_bandwidthconcentration}}

We first obtain upper and lower bounds for $\sfh$ defined in (\ref{sfh}). 
Recall from (\ref{eq_xijbound}) that 
\beq
\max_{1\le i, j\le n}\|\xb_i-\xb_j\|_2^2\prec \sum_{i=1}^r\theta_i.
\eeq
This implies the upper bound that 
\beq \label{b1}
\sfh \prec \sum_{i=1}^r\theta_i.
\eeq

For the lower bound, we show that, with high probability, for any sufficiently small constant $\epsilon>0$, there exist at most $\frac{n^2}{\log^\epsilon n}$ pairs of $(\xb_i,\xb_j)$ so that
\beq\label{eq_eijbound}
\|\xb_i-\xb_j\|_2^2 \le \frac{C}{\log^{4\epsilon}n}\sum_{i=1}^r\theta_i, 
\eeq
where $C>0$ is some universal constant. To see this, we define the events
\beq
E(i,j)=\bigg\{\|\xb_i-\xb_j\|_2^2\le \frac{C}{ \log^{4\epsilon}n}\sum_{i=1}^r\theta_i \bigg\},\ \ 1\le i \neq j\le n. \nonumber
\eeq
We need the following lemma, whose proof will be given at the end of this section.

\begin{lem} \label{event.lem}
	For any given $i\in\{1,2,...,n\}$,  with high probability, at most $\frac{n}{\log^{\epsilon} n}$ events in $\{ E(i,j): 1\le j\le n, j\ne i\}$ are true.
\end{lem}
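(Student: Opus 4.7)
The plan is to condition on $\mathbf{x}_i$ and exploit the conditional independence of $\{E(i,j) : j \neq i\}$: since $\mathbf{x}_1, \ldots, \mathbf{x}_n$ are independent, $N_i := \sum_{j \neq i} 1_{\{E(i,j)\}}$ becomes a sum of independent Bernoulli variables once $\mathbf{x}_i$ is fixed. The argument then consists of two steps: (i) establish a uniform upper bound $q := \sup_{\mathbf{x}_i} \mathbb{P}(E(i,j) \mid \mathbf{x}_i) \le C/\log^{2\epsilon} n$ via anti-concentration, and (ii) apply the Chernoff tail bound (Lemma \ref{cher.lem}) to the resulting sum of independent Bernoullis.

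For step (i), I would project $\mathbf{u} := \mathbf{x}_j - \mathbf{x}_i$ onto its first coordinate (the direction of largest variance $\theta_1$). Under the continuity of the density in Assumption \ref{assum_mainassumption}, after rescaling by $\sqrt{\theta_1}$, the marginal density of $x_{j1}/\sqrt{\theta_1}$ is bounded, which yields the one-dimensional anti-concentration bound
\begin{equation*}
\mathbb{P}(|u_1| \le s \mid \mathbf{x}_i) \;\le\; Cs/\sqrt{\theta_1}, \qquad s > 0.
\end{equation*}
Since $E(i,j) \subset \{|u_1|^2 \le C\sum_k\theta_k/\log^{4\epsilon} n\}$, taking $s = \sqrt{C\sum_k\theta_k/\log^{4\epsilon} n}$ gives
\begin{equation*}
\mathbb{P}(E(i,j) \mid \mathbf{x}_i) \;\le\; C\sqrt{\textstyle\sum_k\theta_k/\theta_1}\,/\log^{2\epsilon} n.
\end{equation*}
This already delivers $q \le C/\log^{2\epsilon} n$ whenever the spectrum is anisotropic, i.e.\ $\sum_k \theta_k \asymp \theta_1$. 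In the complementary regime $\sum_k \theta_k \gg \theta_1$ with large effective rank, I would instead invoke Lemma \ref{lem_subexponentialconcentration} applied to $\|\mathbf{u}\|_2^2 - \mathbb{E}[\|\mathbf{u}\|_2^2 \mid \mathbf{x}_i]$: the target deviation is of order $\sum_k \theta_k$ (comparable to the conditional mean), and with the effective rank $(\sum_k\theta_k)^2/\sum_k\theta_k^2$ diverging, the resulting bound is exponentially small, hence far below $\log^{-2\epsilon} n$.

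For step (ii), Lemma \ref{cher.lem} with threshold $t = n/\log^\epsilon n$ and per-pair success probability bounded by $q \le C/\log^{2\epsilon} n$ yields
\begin{equation*}
\mathbb{P}\left(N_i > n/\log^\epsilon n \mid \mathbf{x}_i\right) \;\le\; \left(eq\log^\epsilon n\right)^{n/\log^\epsilon n} \;\le\; \left(Ce/\log^\epsilon n\right)^{n/\log^\epsilon n}.
\end{equation*}
Taking logarithms yields $\exp\!\left(-\Omega(n\log\log n/\log^\epsilon n)\right)$, which is smaller than $n^{-D}$ for every $D > 0$ once $n$ is large (since $n/\log^{\epsilon+1} n \to \infty$). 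The bound is uniform in $\mathbf{x}_i$, so the unconditional statement follows. The main obstacle will be step (i): controlling the per-pair probability uniformly requires combining the single-coordinate density bound (efficient when the spectrum is dominated by $\theta_1$) with Bernstein-type concentration (efficient when the effective rank is large). Ensuring that one of these two bounds is always strong enough to give $q \le C/\log^{2\epsilon} n$, especially when $r$ may diverge with $n$ and the density regularity in Assumption \ref{assum_mainassumption} is only mild, is the most delicate point of the argument.
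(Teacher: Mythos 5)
Your proposal follows the same overall strategy as the paper's proof: condition on $\xb_i$ to obtain conditional independence of $\{1_{E(i,j)}\}_{j \neq i}$, bound the conditional per-pair probability $\mu_{\xb_i}$ via a dichotomy between single-coordinate anti-concentration and Bernstein concentration, and close with the Chernoff bound of Lemma~\ref{cher.lem}. The two proofs differ chiefly in how the case split is organized. The paper uses \emph{three} cases: (I) when $r = \oo(\log^\epsilon n)$, it bounds $\mu_{\xb_i} \le Cr/\log^{2\epsilon}n$ via a union bound over all $r$ coordinates (using that $\|\xb_i-\xb_j\|^2$ is large whenever every coordinate increment is large); (II-a) when $r \gtrsim \log^\epsilon n$ and $\sum_k\theta_k/\theta_1 \ge c_1\log\log^\epsilon n$, it invokes Bernstein (Lemma~\ref{lem_subexponentialconcentration}) to get $\mu_{\xb_i} \le 2\exp(-c\sum_k\theta_k/\theta_1) \le \log^{-c_2\epsilon}n$; (II-b) when $r \gtrsim \log^\epsilon n$ but $\sum_k\theta_k/\theta_1 < c_1\log\log^\epsilon n$, it falls back to a single-coordinate bound, getting $\mu_{\xb_i} \lesssim 1/\log^{3\epsilon/2}n$. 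Your two-case scheme always uses the single-coordinate bound when the spectrum is concentrated and Bernstein when the effective rank is large; this is arguably cleaner, since $\sqrt{\sum_k\theta_k/\theta_1}/\log^{2\epsilon}n \le \sqrt{r}/\log^{2\epsilon}n$ improves on the paper's Case~I union bound $r/\log^{2\epsilon}n$, so your approach subsumes Case~I as well.

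Two points to tighten. First, the target $q \le C/\log^{2\epsilon}n$ is stronger than necessary and is not always achievable: e.g.\ when $r \asymp \log^{\epsilon/2}n$ the single-coordinate bound only gives $\mu_{\xb_i} \lesssim 1/\log^{7\epsilon/4}n$, which exceeds $C/\log^{2\epsilon}n$. What the Chernoff step actually needs is $q \le c/\log^\epsilon n$ for some constant $c < 1/e$; with this weaker target the final bound is $(ec)^{n/\log^\epsilon n}$, which still beats every $n^{-D}$ because $n/\log^\epsilon n \gg \log n$. Second, the claim that the Bernstein bound is ``exponentially small, hence far below $\log^{-2\epsilon}n$'' is imprecise: the bound is $2\exp(-c\sum_k\theta_k/\theta_1)$, which is below $1/\log^\epsilon n$ only when $\sum_k\theta_k/\theta_1 \gtrsim \log\log n$ (the paper's threshold). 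To make the dichotomy airtight you should observe that the single-coordinate bound covers $\sum_k\theta_k/\theta_1 = O(\log^{2\epsilon}n)$ and Bernstein covers $\sum_k\theta_k/\theta_1 \gtrsim \log\log n$; since $\log\log n = \oo(\log^{2\epsilon}n)$ the two regimes overlap, so some bound always works. One last implicit assumption worth flagging: the one-dimensional anti-concentration $\PP(|u_1| \le s \mid \xb_i) \lesssim s/\sqrt{\theta_1}$ requires the marginal density to be bounded \emph{above}, a condition the paper invokes as well (via the mean-value-theorem step leading to (\ref{mvt})) even though Assumption~\ref{assum_mainassumption} only states boundedness from below; continuity alone on an unbounded support does not guarantee it.
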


By Lemma \ref{event.lem}, we can apply the union bound over all $1\le i\le n$, and conclude that, with high probability, at most $\frac{n^2}{\log^\epsilon n}$ pairs of $(\mathbf{x}_i, \mathbf{x}_j)$ satisfy (\ref{eq_eijbound}).  Now for any percentile parameter $\omega\in(0,1)$ that is independent of $n$, when $n$ is sufficiently large, we always have $n(n-1)\omega>\frac{n^2}{\log^{\epsilon} n}$, which means the signal bandwidth $\sfh$ selected according to (\ref{sfh}) satisfies
\[
\sfh\ge \frac{C}{\log^{4\epsilon}n}\sum_{i=1}^r\theta_i ,
\]
with high probability. This leads to the lower bound 
\beq \label{b2}
\sfh\succ \sum_{i=1}^r\theta_i.
\eeq

Next we control $|\sfh/h_n-1|$. 
Note that 
\begin{equation}\label{eq_originaldecomposition}
\|\yb_i-\yb_j\|_2^2=\|\xb_i-\xb_j\|_2^2+\|\zb_i-\zb_j\|_2^2-2(\xb_i-\xb_j)^\top(\zb_i-\zb_j).
\end{equation}
By Lemma \ref{lem_concentrationinequality} and Assumption \ref{assum_mainassumption}, we have that
\beq
\max_{1\le i,j\le n}\|\zb_i-\zb_j\|_2^2\prec{ \sigma^2 p\asymp n^{\beta+\eta}.} \nonumber
\eeq
By an argument similar to that leading to (\ref{cross.pd}) and a union bound, we have
\beq \label{cross.pd0}
\max_{1\le i,j\le n}|2(\xb_i-\xb_j)^\top(\zb_i-\zb_j)|\prec \sigma \bigg(\sum_{i=1}^r\theta_i\bigg)^{1/2}\asymp \bigg(\sum_{i=1}^r n^{\alpha_i+\beta}\bigg)^{1/2}.
\eeq
Combining the above arguments, we find that  for any $(i,j), i\ne j$ 
\begin{align}
\big|\|\yb_i-\yb_j\|_2^2-\|\xb_i-\xb_j\|_2^2\big|&=  \big|\|\zb_i-\zb_j\|_2^2-2(\xb_i-\xb_j)^\top(\zb_i-\zb_j)\big|\nonumber \\
&\prec { n^{\beta+\eta}+n^{\beta/2}\big(\sum_{i=1}^r\theta_i\big)^{1/2}. }\label{y.bnd}
\end{align}

Moreover, for any $k$ such that $k=\omega n$ for some $\omega \in(0,1).$ Without loss of generality, we assume that $k$ is an integer. Let $X_{(k)}$ be the $k$-th largest element in $\{\|\xb_i-\xb_j\|^2_2:i\ne j\}$, then by Lemma \ref{event.lem}, we have that 
\beq \label{Xk.bnd}
X_{(k)}\ge \frac{C}{\log^{4\epsilon}n}\sum_{i=1}^r\theta_i.
\eeq
Together with (\ref{y.bnd}) and (\ref{eq_originaldecomposition}), we find that there are at least $k$ elements in $\{\|\yb_i-\yb_j\|^2_2:i\ne j\}$  satisfying
\beq
\|\yb_i-\yb_j\|_2^2 \ge X_{(k)}-{ \OO_{\prec}\bigg(n^{\beta+\eta}+n^{\beta/2}\big(\sum_{i=1}^r\theta_i\big)^{1/2}\bigg).} \nonumber
\eeq
which follows from the fact that  the first $k$  elements $\{X_{(i)}:1\le i\le k\}$ are no smaller than $X_{(k)}$ and hence they do not satisfy (\ref{eq_eijbound}). As a result, the $k$-th largest element $ Y_{(k)}$ in $\{\|\yb_i-\yb_j\|^2_2:i\ne j\}$ satisfies
\beq \label{Yklbnd}
Y_{(k)}\ge X_{(k)}-{ \OO_{\prec}\bigg(n^{\beta+\eta}+n^{\beta/2}\big(\sum_{i=1}^r \theta_i\big)^{1/2}\bigg).}
\eeq
On the other hand, we claim that there are at least $n-k+1$ elements in $\{\|\yb_i-\yb_j\|_2:i\ne j\}$ satisfying
\[
\|\yb_i-\yb_j\|_2^2 \le X_{(k)}+{\OO_{\prec}\bigg(n^{\beta+\eta}+n^{\beta/2}\big(\sum_{i=1}^r\theta_i\big)^{1/2}\bigg).}
\]
This is because Lemma \ref{event.lem} and (\ref{y.bnd}) imply that the last $n-k+1$ elements $\{X_{(i)}:k\le i\le n\}$ are no greater than $X_{(k)}$. As a result, we also have
\beq \label{Ykubnd}
Y_{(k)}\le X_{(k)}+{ \OO_{\prec}\bigg(n^{\beta+\eta}+n^{\beta/2}\big(\sum_{i=1}^r\theta_i\big)^{1/2}\bigg).}
\eeq

Hence, using the definitions of $\sfh$ and $h_n,$ by (\ref{Yklbnd}), (\ref{Ykubnd}) and (\ref{Xk.bnd}), we have that 
\beq \label{h/hn}
\frac{\sfh}{h_n}=1+{ \OO_{\prec}\bigg(\frac{1}{\sum_{i=1}^rn^{\alpha_i-\beta-\eta}}+\frac{1}{(\sum_{i=1}^rn^{\alpha_i-\beta})^{1/2}} \bigg) \equiv 1+\OO_{\prec}(\psi_n).}
\eeq

Finally, we prove Lemma \ref{event.lem}.


\begin{proof}[\bf Proof of Lemma \ref{event.lem}.]
Define the indicator function $1_{E(i,j)}$ so that $1_{E(i,j)}=1$ if $E(i,j)$ is true and $1_{E(i,j)}=0$ otherwise. We shall prove that, for any fixed $i\in\{1,2,...,n\}$, we have
\beq \label{mar.p}
\mathbb{P}\bigg(\sum_{1\le j\le n, j\ne i}^{} 1_{E(i,j)}\le \frac{n}{\log^{\epsilon} n}\bigg)\ge 1-n^{-C},
\eeq
for any large constant $C>0$ and sufficiently large $n$. Note that
\begin{align*}
\mathbb{P}\bigg(\sum_{1\le j\le n, j\ne i}^{} 1_{E(i,j)}\le \frac{n}{\log^{\epsilon} n}\bigg)&=\int \mathbb{P}\bigg(\sum_{1\le j\le n, j\ne i}^{} 1_{E(i,j)}\le \frac{n}{\log^{\epsilon} n}\bigg| \xb_i\bigg)\mathrm{d} \mathbb{P}_{\xb_i}.
\end{align*}
It suffices to show that, for any $\xb_i\in \text{supp}(\widetilde{\sfP})$, we have
\beq \label{cond.p}
\mathbb{P}\bigg(\sum_{1\le j\le n, j\ne i}^{} 1_{E(i,j)}\le \frac{n}{\log^{\epsilon} n}\bigg|\xb_i\bigg)\ge 1-n^{-C}.
\eeq
Now since
conditional on $\xb_i$, the events $\{ E(i,j): 1\le j\le n, j\ne i\}$ are mutually independent, so that $\{1_{E(i,j)}: 1\le j\le n, j\ne i\}$ are $i.i.d.$ Bernoulli random variables, with mean
\beq
\mu_{\xb_i}=\mathbb{P}(E(i,j)|\xb_i)=\E [1_{E(i,j)}|\xb_i]. \nonumber
\eeq
Note that when 
\beq \label{pn}
\frac{C\log n}{n}\le  \mu_{\xb_i}\le \frac{1}{\log^{\epsilon} n},
\eeq
using Lemma \ref{cher.lem} with $t=en\mu_{\xb_i}$ leads to
\beq
\mathbb{P}\bigg( \sum_{1\le j\le n, j\ne i}1_{E(i,j)}\ge en\mu_{\xb_i}\bigg|\xb_i\bigg)\le  \exp(-n\mu_{\xb_i}),  \nonumber
\eeq
which implies (\ref{cond.p}) and therefore (\ref{mar.p}). Moreover, if
\beq 
0\le \mu_{\xb_i}\le \frac{C\log n}{n}, \nonumber
\eeq
using Lemma \ref{cher.lem} with $t=\frac{n}{\log^{\epsilon}n}$ leads to
\beq
\mathbb{P}\bigg( \sum_{1\le j\le n, j\ne i} 1_{E(i,j)}\ge \frac{n}{\log^\epsilon n}\bigg|\xb_i\bigg)\le  \exp(-n\mu_{\xb_i})( e\mu_{\xb_i} \log^\epsilon n)^{n/\log^\epsilon n}\le n^{-C}, \nonumber
\eeq
for any large constant $C>0$ and sufficiently large $n$. As a result, we only need to show that 
\beq \label{pn1}
\mu_{\xb_i}\le \frac{1}{\log^{\epsilon} n}.
\eeq
The proof of (\ref{pn1}) is separated into two cases as follows.

\vspace{3pt}
\noindent Case I: $r=\mathrm{o}(\log^{\epsilon}n).$ 
For each $j=\{1,2,...,n\}$, we have
\begin{align*}
\mathbb{P}\bigg(    \|\xb_i-\xb_j\|_2^2  \ge \frac{C}{ \log^{4\epsilon}n}\sum_{k=1}^r\theta_i\bigg|\xb_i\bigg)&\ge \mathbb{P}\bigg( \bigcap_{k=1}^r \bigg\{ (x_{ik}-x_{jk})^2\ge \frac{C\theta_k}{\log^{4\epsilon} n} \bigg|\xb_i \bigg\}\bigg)\\
&\ge 1-\mathbb{P}\bigg( \bigcup_{k=1}^r\bigg\{ (x_{ik}-x_{jk})^2< \frac{C\theta_k}{\log^{4\epsilon}n}  \bigg\}\bigg| \xb_i\bigg)\\
&\ge 1-\sum_{i=1}^r \mathbb{P}\bigg( (x_{ik}-x_{jk})^2< \frac{C\theta_k}{\log^{4\epsilon}n}  \bigg| \xb_i\bigg).
\end{align*}
Now under the assumption that, for $1\le k\le r$, the marginal probability density of $x_{jk}/\sqrt{\theta_k}$ is continuous, bounded, and strictly bounded away from zero on $\text{supp}(\widetilde{\sfP}_k)/\sqrt{\theta_k}$, where $\widetilde{\sfP}_k$ is the marginal distribution of $x_{jk}, j=1,2,...,n$, by the mean value theorem, we have
\beq \label{mvt}
\mathbb{P}\bigg(|x_{ik}-x_{jk}|\le \frac{\sqrt{\theta_k}}{\log^{2\epsilon}n}\bigg| \xb_i\bigg)\asymp \frac{1}{\log^{2\epsilon}n},\qquad k=1,2,...,r.
\eeq
Thus, we have
\beq
\mathbb{P} \bigg(    \|\xb_i-\xb_j\|_2^2  \ge \frac{C}{ \log^{4\epsilon}n}\sum_{i=1}^r\theta_i\bigg|\xb_i\bigg)\ge  1-\frac{Cr}{\log^{2\epsilon}n}. \nonumber
\eeq
So that (\ref{pn1}) holds whenever $r=\mathrm{o}(\log^{\epsilon}n)$.

\vspace{3pt}
\noindent Case II: $r\gtrsim \log^\epsilon n$. The argument depends  on the magnitude of the ratio $\sum_{i=1}^r\theta_i/\theta_1$. On the one hand, when $\sum_{i=1}^r\theta_i/\theta_1\ge c_1\log \log^\epsilon n$ for some large constant $c_1>0$, since $(x_{ik}-x_{jk})^2, 1 \leq k \leq n$ are i.i.d. sub-exponential random variables,  we can apply Lemma \ref{lem_subexponentialconcentration} to obtain
\begin{align*}
&\mathbb{P}\bigg(    \|\xb_i-\xb_j\|_2^2  \le \frac{C}{ \log^{4\epsilon}n}\sum_{k=1}^r \theta_i\bigg|\xb_i\bigg)\\
&\le 2\exp\bigg(-c\min\bigg\{ \frac{(\sum_{i=1}^r\theta_i)^2+\|\xb_i\|_2^4}{\sum_{i=1}^r \theta_i^2} ,\frac{\sum_{i=1}^r\theta_i+\|\xb_i\|_2^2}{\theta_1}  \bigg\}  \bigg)\\
&\le 2\exp\bigg(-c\min\bigg\{ \frac{(\sum_{i=1}^r\theta_i)^2}{\sum_{i=1}^r\theta_i^2} ,\frac{\sum_{i=1}^r\theta_i}{\theta_1}  \bigg\}  \bigg) \\
& \le 2\exp\bigg(-c\frac{\sum_{i=1}^r\theta_i}{\theta_1} \bigg)\le \log^{-\epsilon c_2}n,
\end{align*}
for some $c_2>1$ and sufficiently large $r$, where the third step follows from the H\"older's inequality $\sum_{i=1}^r\theta_i^2\le \theta_1\cdot\sum_{i=1}^r\theta_i$. This implies (\ref{pn1}) for sufficiently large $n$. On the other hand when $\sum_{i=1}^r\theta_i/\theta_1<c_1\log\log^\epsilon n$, we have
\begin{align*}
\mathbb{P}\bigg(    \|\xb_i-\xb_j\|_2^2 & \le \frac{C}{ \log^{4\epsilon}n}\sum_{k=1}^r \theta_i\bigg|\xb_i\bigg) \\
& \le \mathbb{P}\bigg(    \|\xb_i-\xb_j\|_2^2  \le \frac{C_2\log \log^\epsilon n}{ \log^{4\epsilon}n}\theta_1\bigg|\xb_i\bigg)\\
&\le \mathbb{P}\bigg( (x_{i1}-x_{j1})^2\le \frac{C_2\theta_1}{\log^{3\epsilon} n} \bigg|\xb_i \bigg)\lesssim \frac{1}{\log^{3\epsilon/2}n},
\end{align*}
where the last inequality follows from (\ref{mvt}). This again implies   (\ref{pn1}) for sufficiently large $n$.  This proves (\ref{pn1}) and therefore completes the proof of Lemma \ref{event.lem}.
\end{proof}

\subsection{Extension to General Kernel Functions: Proof of Theorem \ref{thm_mainthm2}}

Similar as in the proof of Theorems \ref{thm_mainthm} and \ref{cor_maincor}, without loss of generality, we consider the reduced random vectors $\{\yb_i^0\}_{1\le i\le n}$, $\{\xb_i^0\}_{1\le i\le n}$ and $\{\zb_i^0\}_{1\le i\le n}$, and denote them for simplicity as $\{\yb_i\}$, $\{\xb_i\}$ and $\{\zb_i\}$. Without loss of generality, we assume they are centered. Again, we  denote the kernel matrices for ${\yb_i}, {\xb_i},$ and ${\zb_i},$  with  bandwidth $\sfh$ as $\bK_y, \bK_x$ and $\bK_z,$ respectively. 

\subsubsection{Convergence to Noiseless Kernel Matrix} \label{b.4.1}

\begin{proof}[\bf Eigenvalue convergence.] 
	Firstly, we study $n^{-1}\|\bK_y-\bK_x\|$. { Our discussion can be separated into three cases as follows. 
	
\noindent {Case I:}	{ $\inf_{0<x<1}\nu(x)\ge \tau_0$. By (iii) of  Assumption \ref{ker.assu}, for any $1\le i,j\le n$ and $i\ne j$, if we denote $x= \frac{\|\xb_i-\xb_j\|_2}{\sfh^{1/2}}$ and $y=\frac{\|\yb_i-\yb_j\|_2}{\sfh^{1/2}}$,
	we have
	\begin{align}
	\bK_y(i,j) &=f(y)\le f(x)+C\{ x^{\nu(x)}+ y^{\nu(y)} \}\cdot|x-y|^{\tau_0} \label{eq_startingexample}\\
	&\le f(x)+C\{ x^{\nu(x)}+ y^{\nu(y)} \}\cdot\bigg| \frac{x^2-y^2}{x+y}\bigg|^{\tau_0} \nonumber \\
	&\le f(x)+C\{ x^{\nu(x)-\tau_0}+ y^{\nu(y)-\tau_0} \}\cdot|{x^2-y^2}|^{\tau_0} \nonumber \\
	&\le f(x)+\OO_{\prec}\bigg(\bigg| \frac{\|\yb_i-\yb_j\|_2^2-\|\xb_i-\xb_j\|_2^2}{\sfh}\bigg|^{\tau_0}\bigg), \nonumber
	\end{align}
where in the last step we used the assumption $\nu(x) \geq \tau_0$ for all $0<x<1$, and $\nu(x)<0$ for all $x>1$,} and the fact
\[
\|\yb_i-\yb_j\|_2\prec \sfh^{1/2}, \quad \|\xb_i-\xb_j\|_2\prec \sfh^{1/2}.
\]
This shows that 
\beq
|\bK_y(i,j) -\bK_x(i,j) |\prec  \bigg(\frac{\|\zb_i-\zb_j\|_2^2+2|(\xb_i-\xb_j)^\top(\zb_i-\zb_j)|}{\sfh}\bigg)^{\tau}. \nonumber
\eeq
By (\ref{Delta.bnd}), we have
\[
\frac{\|\zb_i-\zb_j\|_2^2}{\sfh}\prec {\frac{\sigma^2 p}{\sfh}\prec \frac{1}{\sum_{i=1}^rn^{\alpha_i-\beta-\eta}}.}
\]
This along with (\ref{cross.ineq}) yields
\[
\max_{1\le i,j\le n}|\bK_y(i,j) -\bK_x(i,j) |\prec {\bigg(\frac{1}{\sum_{i=1}^rn^{\alpha_i-\beta-\eta}}+\frac{1}{(\sum_{i=1}^rn^{{\alpha_i-\beta}})^{1/2}}\bigg)^{\tau_0}.}
\]
Using a discussion similar to (\ref{partA}) with Lemma \ref{lem_circle}, we find that 
\begin{align} 
	\|n^{-1}\bK_y -n^{-1}\bK_x \| & \le \max_{1\le i,j\le n}|\bK_y(i,j) -\bK_x(i,j) |\nonumber\\
	&\prec { \bigg(\frac{1}{\sum_{i=1}^rn^{\alpha_i-\beta-\eta}}+\frac{1}{(\sum_{i=1}^rn^{{\alpha_i-\beta}})^{1/2}}\bigg)^{\tau_0}}.\label{value.bnd}
\end{align}

\noindent Case II: { $0\le\inf_{0<x<1}\nu(x)<\tau_0$. In this case, we denote $\nu_{0}=\inf_{0<x<1}\nu(x)$. Similar to the discussion of (\ref{eq_startingexample}), we have
\begin{align*}
	\bK_y(i,j) &=f(y)\le f(x)+C\{x^{\nu(x)}+y^{\nu(y)} \}\cdot \frac{|x^2-y^2|^{\nu_{0}}}{|x+y|^{\nu_{0}}}|x-y|^{\tau_0-{\nu_{0}}}\\
	&\le f(x)+C\{x^{\nu(x)-\nu_{0}}+y^{\nu(y)-\nu_{0}} \}\cdot {|x^2-y^2|^{\nu_{0}}}|x-y|^{\tau_0-{\nu_{0}}}\\
	&\le f(x)+C\{x^{\nu(x)-\nu_{0}}+y^{\nu(y)-\nu_{0}} \}\cdot {|x^2-y^2|^{\frac{\tau_0+\nu_{0}}{2}}}\\
	&\le f\bigg(\frac{\|\xb_i-\xb_j\|_2}{\sfh^{1/2}} \bigg)+\OO_{\prec}\bigg(\bigg| \frac{\|\yb_i-\yb_j\|_2^2-\|\xb_i-\xb_j\|_2^2}{\sfh}\bigg|^{\frac{\tau_0+\nu_{0}}{2}}\bigg),
\end{align*}
where in the third step we used the inequality
\begin{equation*}
x-y \leq \sqrt{x^2-y^2},  \ \text{for all} \ x \geq y \geq 0.
\end{equation*}
and in the last step we used the assumption that $\nu(x)>\nu_{0}$ for $0<x<1$ and $\nu(x)<0$ for $x\ge 1$.} This leads to 
\begin{align} 
	\|n^{-1}\bK_y -n^{-1}\bK_x \| & \le \max_{1\le i,j\le n}|\bK_y(i,j) -\bK_x(i,j) |\nonumber\\
	&\prec{ \bigg(\frac{1}{\sum_{i=1}^rn^{\alpha_i-\beta-\eta}}+\frac{1}{(\sum_{i=1}^rn^{{\alpha_i-\beta}})^{1/2}}\bigg)^{\frac{\tau_0+\nu_{0}}{2}}}.\label{value.bnd2}
\end{align}
}

Next, we study $\|\bK_y-\bK_n\|$.
Note that for some $h^*$ between $h_n$ and $\sfh$, by (iii) of Assumption \ref{ker.assu}, it holds that
\begin{align*}
	|\bK_y(i,j)-\bK_n(i,j)| &\le L \bigg|\frac{\|\yb_i-\yb_j\|_2}{\sfh^{1/2}}- \frac{\|\yb_i-\yb_j\|_2}{h_n^{1/2}}\bigg|^{\tau_0} \\
	&\le \bigg|\sqrt{\frac{\sfh}{h_n}}-1\bigg|\cdot \bigg( \frac{\|\yb_i-\yb_j\|_2^2}{\sfh}\bigg)^{\tau_0/2} \\
	&	\prec { \frac{1}{\sum_{i=1}^rn^{\alpha_i-\beta-\eta}}+\frac{1}{(\sum_{i=1}^rn^{\alpha_i-\beta})^{1/2}},}
\end{align*}
where the last step follows from (\ref{y/h}) and Proposition \ref{lem_bandwidthconcentration}. Similar to the discussion of (\ref{partA}), we have
\begin{align}
	\frac{1}{n}\|\bK_y-\bK_n\| & \le \max_{i\ne j} |\bK_y(i,j)-\bK_n(i,j)| \nonumber \\
	&	\prec { \left(\frac{1}{\sum_{i=1}^rn^{\alpha_i-\beta-\eta}}+\frac{1}{(\sum_{i=1}^rn^{\alpha_i-\beta})^{1/2}}\right).} \label{Ky2}
\end{align}  
Combining the above results, we have proven (\ref{eq_eigs_v2}). 
\end{proof}

\begin{proof}[\bf Eigenvector convergence.]
		{
 Based on the integral representation  (\ref{eq_integralrepresentation}), for each $i$ satisfying (\ref{eq_generalribound}), we have 
	\begin{align}\label{eq_uvdifference2}
	\langle \ub_i, \vb_i  \rangle^2 & =\frac{1}{2 \pi \ri} \oint_{\Gamma_i} \mathbf{v}_i^\top  (z{\bf I}-n^{-1}\bK_x)^{-1}\mathbf{v}_i \dd z \nonumber \\
	&\quad +\frac{1}{2 \pi \ri} \oint_{\Gamma_i} \mathbf{v}_i^\top \left[ (z{\bf I}-n^{-1}\bK_y)^{-1}-(z{\bf I}-n^{-1}\bK_x)^{-1} \right] \mathbf{v}_i \dd z  \nonumber \\
		&\quad +\frac{1}{2 \pi \ri} \oint_{\Gamma_i} \mathbf{v}_i^\top \left[ (z{\bf I}-n^{-1}\bK_n)^{-1}-(z{\bf I}-n^{-1}\bK_y)^{-1} \right] \mathbf{v}_i \dd z  \nonumber \\
	&:= \mathsf{L}_1+\mathsf{L}_2+\mathsf{L}_3, 
	\end{align}}
	for some contour $\Gamma_i:=\mathbb{B}(\gamma_i,\frac{\mathsf{r}_i}{C}),$ for some large constant $C>0,$ where $\mathbb{B}(\gamma_i,\frac{\mathsf{r}_i}{C})$ is the disk centered at $\gamma_i$ with radius $\mathsf{r}_i/C$ and we recall $ \mathsf{r}_i:=\min\{\gamma_{i-1}-\gamma_i, \gamma_i-\gamma_{i+1}\}$.
	
	On the one hand, by
	\beq \label{eigen.Ks2}
	|\mu_i-\gamma_i|\prec \frac{1}{\sqrt{n}},
	\eeq
	and the assumption of (\ref{eq_generalribound}) that ${ \sfr_i\gg \big[\big(\frac{1}{\sum_{i=1}^rn^{\alpha_i-\beta-\eta}}+\frac{1}{(\sum_{i=1}^rn^{{\alpha_i-\beta}})^{1/2}}\big)^{\xi}+\frac{1}{\sqrt{n}}\big]}$,  when $n$ is sufficiently large, $\lambda_i(\bK_x)$ is the only simple pole of the resolvent $(z{\bf I}-n^{-1}{\bK}_x)^{-1}$ inside $\Gamma_i$. Using the spectral decomposition of $(z{\bf I}-n^{-1}\bK_x)^{-1}$ and residual theorem, we have that 
	\begin{equation}\label{eq_L1control2}
	\sfL_1=1. 
	\end{equation}
	On the other hand, 
	by the definition of $\Gamma_i,$  we have
	\begin{align*}
	&|\lambda_i(\bK_y)-\gamma_i|  \le |\lambda_i(\bK_y)-\mu_i|+|\gamma_i-\mu_i| \\
&	\prec { \bigg(\frac{1}{\sum_{i=1}^rn^{\alpha_i-\beta-\eta}}+\frac{1}{(\sum_{i=1}^rn^{{\alpha_i-\beta}})^{1/2}}\bigg)^{\xi}+ \frac{1}{\sqrt{n}},}
	\end{align*}
	so that under the same event, by (\ref{eq_assumption}), we have
	\begin{equation*}
	\inf_{z \in \Gamma_i} \min \{ |\lambda_i(\bK_y)-z|, |\lambda_{i-1}(\bK_y)-z|, |\lambda_{i+1}(\bK_y)-z| \} \asymp \mathsf{r}_i.
	\end{equation*}
	Moreover, by the definition of the resolvent, we have that
	\begin{align*}
	\sup_{z \in \Gamma_i} \| (z{\bf I}-n^{-1}\bK_y)^{-1} \| & \leq \sup_{z \in \Gamma_i} \left( \frac{1}{|\lambda_i(\bK_y)-z|}+\frac{1}{|\lambda_{i-1}(\bK_y)-z|}+\frac{1}{|\lambda_{i+1}(\bK_y)-z|} \right) \\
	&\prec \frac{1}{\mathsf{r}_i}.
	\end{align*}
	Similarly, by (\ref{eigen.Ks2}), we also have $	|\mu_i-\gamma_i|=o(\sfr_i)$ with high probability, 
	which implies
	\begin{align*}
	\sup_{z \in \Gamma_i} \| (z{\bf I}-n^{-1}\bK_x)^{-1} \| \prec  \frac{1}{\mathsf{r}_i}.
	\end{align*}
	Now recall the
	resolvent identity 
	\begin{equation*}
	(z{\bf I}-n^{-1}\bK_y)^{-1}-(z{\bf I}-n^{-1}\bK_x)^{-1}=(z{\bf I}-n^{-1}\bK_y)^{-1}\left[n^{-1} \bK_y-n^{-1} \bK_x\right] (z{\bf  I}-n^{-1}\bK_x)^{-1}.
	\end{equation*}
	Applying (\ref{value.bnd}), we have 
	\begin{equation*}
	\sfL_2 \prec { \frac{1}{\mathsf{r}_i^2}\left[\frac{1}{\sum_{i=1}^rn^{\alpha_i-\beta-\eta}}+\frac{1}{(\sum_{i=1}^rn^{{\alpha_i-\beta}})^{1/2}}\right]^{\xi}. }
	\end{equation*}
{Similarly, by (\ref{Ky2}), we also have
\begin{align*}
	\sup_{z \in \Gamma_i} \| (z{\bf I}-n^{-1}\bK_n)^{-1} \| \prec  \frac{1}{\mathsf{r}_i},
\end{align*}
and therefore
\[
\sfL_3 \prec {\frac{1}{\mathsf{r}_i^2}\left(\frac{1}{\sum_{i=1}^rn^{\alpha_i-\beta-\eta}}+\frac{1}{(\sum_{i=1}^rn^{\alpha_i-\beta})^{1/2}}\right).}
\]}
	Insert the above bounds and (\ref{eq_L1control2}) back into (\ref{eq_uvdifference2}), we immediately obtain that 
	\begin{align} \label{vec.ker}
	\left| \langle \ub_i, \vb_i \rangle^2-1 \right| \prec { \frac{1}{\mathsf{r}_i^2}\left[\frac{1}{\sum_{i=1}^rn^{\alpha_i-\beta-1}}+\frac{1}{(\sum_{i=1}^rn^{{\alpha_i-\beta}})^{1/2}}\right]^{\xi},}
	\end{align}
as $\xi\le 1$.
	This completes our proof of (\ref{eq_projectionbound2}).
	
\end{proof}

\subsubsection{Convergence to Population Integral Operator}

\begin{proof}[\bf Eigenvalue convergence.]  The proof of (\ref{v.conv2}) is the same as Theorem \ref{cor_maincor}.
\end{proof}

\begin{proof}[\bf Eigenfunction convergence.]
	Again, in light of Proposition \ref{lem_bandwidthconcentration} and by the same argument as in the beginning of Section \ref{bbb_reduced}, it suffices to prove the result under the setting where the bandwidth is chosen as (\ref{eq_choicestepone}). Recall (\ref{eq_elementraydecomposition}).
%
%
By an argument similar to Section \ref{b.4.1}, we have 
	\begin{align*}
	f\left(\frac{\| \yb-\yb_i \|_2}{\sfh^{1/2}} \right)\le f\left(\frac{\| \bm{y}-\bm{x}_i \|_2}{\sfh^{1/2}} \right)+{ \mathrm{O}_{\prec}\bigg(\left[\frac{1}{\sum_{i=1}^rn^{\alpha_i-\beta-\eta}}+\frac{1}{(\sum_{i=1}^rn^{{\alpha_i-\beta}})^{1/2}}\right]^{\xi}\bigg),}
	\end{align*}
	uniformly in $\yb$ on the support of $\widetilde\sfP$.
	Similar to (\ref{eq_expansion}), we decompose
	\begin{equation}\label{eq_expansion2}
	\left(\widehat{\phi}_i^{(n)}(\yb)-\phi_i^{(n)}(\bm{y}) \right)^2=\left(\mathrm{E}_1+\mathrm{E}_2+\mathrm{E}_3 \right)^2, 
	\end{equation}
	where
	\begin{align*}
	& \mathrm{E}_1:=\left( \frac{1}{\lambda_i}-\frac{1}{\mu_i} \right) \frac{1}{\sqrt{n}} \sum_{j=1}^n f\left(\frac{\| \yb-\yb_j \|_2}{\sfh^{1/2}} \right) v_{ij}, \\
	&  \mathrm{E}_2:= \frac{1}{\sqrt{n} \mu_i} \sum_{j=1}^n \left[ f\left(\frac{\| \yb-\yb_j \|_2}{\sfh^{1/2}} \right)-f\left(\frac{\| \bm{y}-\bm{x}_j \|_2}{\sfh^{1/2}} \right)  \right] v_{ij}, \\
	& \mathrm{E}_3:=\frac{1}{\sqrt{n}\mu_i} \sum_{j=1}^n f\left(\frac{\| \bm{y}-\bm{x}_j \|_2}{\sfh^{1/2}} \right)  (v_{ij}-u_{ij}).
	\end{align*}
	For $\mathrm{E}_1$ and $\mathrm{E}_2$, following the same argument as in proof of Theorem \ref{cor_maincor}, along with (ii) of Assumption \ref{ker.assu}, we have
	\[
	\mathrm{E}_1\prec { \frac{1}{\mu_i}\bigg(\frac{1}{\sum_{i=1}^rn^{\alpha_i-\beta-\eta}}+\frac{1}{(\sum_{i=1}^rn^{{\alpha_i-\beta}})^{1/2}}\bigg)^{\xi},}
	\]
	\[
	\mathrm{E}_2\prec { \frac{1}{\mu_i}\bigg(\frac{1}{\sum_{i=1}^rn^{\alpha_i-\beta-\eta}}+\frac{1}{(\sum_{i=1}^rn^{{\alpha_i-\beta}})^{1/2}}\bigg)^{\xi}.}
	\]
	For $\mathrm{E}_3$, we denote $
	\mathbf{a}=(a_1, \cdots, a_n)^\top$ and $\mathbf{b}=(b_1, \cdots, b_n)^\top$, 
	where $a_j=\frac{1}{\sqrt{n}\mu_i} f\left(\frac{\| \bm{y}-\bm{x}_j \|_2}{\sfh^{1/2}} \right) $ and $b_j=v_{ij}-u_{ij}.$ Similar to (\ref{eq_eqeqe3control}), we have that 
	\begin{equation*}
	\mathrm{E}_3^2 \leq  \| \mathbf{a} \|_2^2 \| \mathbf{b} \|_2^2 \prec { \frac{1}{\mu_i^2}\cdot \frac{1}{\mathsf{r}_i^2}\left[\frac{1}{\sum_{i=1}^rn^{\alpha_i-\beta-\eta}}+\frac{1}{(\sum_{i=1}^rn^{{\alpha_i-\beta}})^{1/2}}\right]^{\xi},}
	\end{equation*} 
	where in the second step we used (\ref{vec.ker}). Combining all the above bounds, using (\ref{eq_expansion2}), when $i \leq \mathsf{K},$ we arrive at 
	\begin{align}\label{eq_boundone2}
	\mu_i\left(\widehat{\phi}_i^{(n)}(\yb)-\phi_i^{(n)}(\bm{y}) \right)^2& \prec { \bigg[\frac{1}{\sum_{i=1}^rn^{\alpha_i-\beta-\eta}}+\frac{1}{(\sum_{i=1}^rn^{{\alpha_i-\beta}})^{1/2}}\bigg]^{2\xi}} \nonumber\\
	&\quad+ { \frac{1}{\mathsf{r}_i^2}\left[\frac{1}{\sum_{i=1}^rn^{\alpha_i-\beta-\eta}}+\frac{1}{(\sum_{i=1}^rn^{{\alpha_i-\beta}})^{1/2}}\right]^{\xi},}
	\end{align}
	for any $\yb$ on the support of $\widetilde\sfP$.
	Finally, by Part 1 of Theorem \ref{thm_mainthm2},  we have
	\begin{align*}
	|\sqrt{\lambda_i} \phi_i^{(n)}(\yb)-\sqrt{\mu_i} \phi_i^{(n)}(\yb)| &\le (\sqrt{\lambda_i}-\sqrt{\mu_i})|\phi_i^{(n)}(\yb)|\prec { \psi_n^{\xi},}
	\end{align*}
	and similarly
	\beq
	|\sqrt{\lambda_i} \widehat\phi_i^{(n)}(\yb)-\sqrt{\mu_i} \widehat\phi_i^{(n)}(\yb)|\le (\sqrt{\lambda_i}-\sqrt{\mu_i})|\widehat\phi_i^{(n)}(\yb)|\prec {\psi_n^{\xi}.}
	\eeq
	Then by a discussion similar to (\ref{tri.ineq}), we have
	\begin{align*}
	|\sqrt{\lambda_i}\widehat{\phi}_i^{(n)}(\yb)-\sqrt{\gamma_i}{\widetilde{\phi}_i}(\yb)| \prec  |\sqrt{\mu_i}\widehat{\phi}_i^{(n)}(\yb)-\sqrt{\mu_i}\phi_i^{(n)}(\yb) |+{ \psi_n^{\xi}}+\frac{1}{\sqrt{n}\sfr_i}.
	\end{align*} 
	Plugging the above bounds into (\ref{eq_boundone2}), we obtain the final result.
\end{proof}
}

\section{Some Further Discussions} \label{dis.sec}

\subsection{Extensions and Related Problems}

We propose a kernel-spectral embedding algorithm for learning low-dimensional nonlinear structure from noisy and high-dimensional datasets under a manifold setup. A key component is to develop an adaptive bandwidth selection procedure that is free from prior knowledge of the manifold. Our proposed method is theoretically justified and our embeddings correspond to an  integral operator from the RKHS associated to the underlying manifold. While our results pave the way towards statistical foundations for nonlinear dimension reduction, manifold learning, among others, there are various problems to be investigated further. We now highlight a few of them.    

First, {although this paper focuses on a general class of kernel functions for the construction of the low-dimensional embeddings, there are still many other interesting kernel functions, such as $f(x)=\frac{\sin x}{x}$ and $f(x)=\frac{1}{e^x+e^{-x}}$, that are excluded from our discussion. Moreover, the current paper only concerns the distance-type kernel matrices as in (\ref{Kmat}), leaving the inner-product kernel matrices with $K(i,j)=f(\yb_i^\top\yb_j/h_n)$ unaddressed. Nevertheless, we believe this is only an initial step towards  interpretable manifold learning under high-dimensional noisy data, and our results  stand as a vantage point for exploring and understanding other kernel-spectral embedding algorithms.}


Second,  in Part 2 of Theorem \ref{cor_maincor}, the convergence of eigenfunctions is obtained in a pointwise manner. In some applications,  a more precise local characterization of the  embeddings is needed. For example, one may need to study the eigenfunction convergence under the $L_{\infty}$ norm, defined by $\|f\|_\infty=\sup_{\xb}|f(\xb)|$. However, to our best knowledge, even in the noiseless setting, the $L_{\infty}$ convergence has only been studied for the normalized integral operators  \cite{luxburg2004convergence,10.2307/25464638} and the linear differential Laplace–Beltrami operator  \cite{calder2020lipschitz,DUNSON2021282,wormell2021spectral}. Less is known for the unnormalized integral operator considered in our paper.  Moreover, to understand the $L_{\infty}$ convergence under noisy datasets, in contrast to (\ref{eq_projectionbound}), we also need to study the $\ell_{\infty}$ convergence of the eigenvectors. The analysis requires a more sophisticated argument from  random matrix theory as in \cite{bao2022statistical,fan2018eigenvector}. We will pursue these directions in future works.

Additionally, the current work mainly concerns the super-critical {non-null regime $n^{\beta+\eta}=\oo(\sum_{i=1}^rn^{\alpha_i})$. When $n^{\beta+\eta}\asymp\sum_{i=1}^rn^{\alpha_i}$,} according to (2) of Theorem 3.1 of \cite{DW2}, since the signal part has a comparable  strength as the noise part, neither of them dominates the other in terms of the behavior of $n^{-1}\bK_n$. Similar results have also been established in \cite{el2010information}. In this critical regime, to better utilize the kernel matrix $n^{-1}\bK_n,$ one needs to develop  a high-dimensional nonlinear signal and noise separation procedure. 

Finally, in the current paper, we use an integral operator to learn the geometric structure of the underlying manifold. To enhance our understanding of the nonlinear structures,  it is of interest to explore the relationship between the eigenfunctions of our integral operator and those of the Laplace-Beltrami operator. 
The main challenge is that, in the high-dimensional noisy setting, the normalized graph Laplacian would no longer converge to the Laplace-Beltrami operator  \citep{DW2}, so that novel methodologies are needed to facilitate the comparison. Moreover, in the current paper, we focus on an RKHS-based method. It will be interesting to study the graph-based methods for high-dimensional noisy datasets under the current setting.

\subsection{Comparisons with Some Existing Works} \label{compare.sec}

{In Section \ref{comp.sec} of the our manuscript, we have briefly summarized the differences between the current work and those of \cite{hofmeyr2019improving,loffler2021optimality, abbe2020ell_p}. In what follows, we provide some more details.
	
First, in \cite{hofmeyr2019improving}, the author proposed a graph-cut based spectral clustering algorithm ("SPUDS"), in which a major step was to obtain the spectral embedding based on some graph Laplacian matrix.  Specifically, the method starts by constructing a Gaussian kernel matrix $\bK_n=(K(i,j))_{1\le i,j\le n}$ using the following bandwidth 
\begin{equation} \label{h1}
	h_n=sn^{-1/(r+4)},
\end{equation}
 where $s$ is some user-specified value which is bounded both above and away from zero, and then obtains the eigendecomposition of the graph Laplacian matrix ${\bf L}_n={\bf I}-\bD_n^{-1/2} \widehat{\bK}_n\bD_n^{-1/2}$, where $\bD=\text{diag}(\sum_{j\ne 1}K(1,j),...,\sum_{j\ne n}K(n,j))$, and $\widehat{\bK}_n=({K}(i,j)\cdot 1_{\{i\ne j\}})_{1\le i,j\le n}$ is the zero-diagonal kernel matrix. For some given initial value for the number of clusters, K-means is then used along with the spectral embedding (i.e., the eigenvectors of ${\bf L}_n$) to determine the cluster labels and update the number of clusters iteratively using the normalized cut.  In general, both SPUDS and our proposed method rely on some kernel spectral embeddings based on some kernel matrices, and require careful selection of the bandwidths. However, there are many significant differences between the two methods. First, \cite{hofmeyr2019improving} focuses on the noiseless data sets when clusters are closely related to the modes of the density functions whereas we consider high-dimensional noisy data sets.  Second, the two methods focus on the spectrum of two very different matrices: our method relies on the eigendecomposition of the kernel matrix itself, whereas SPUDS relies on the graph Laplacian matrix of the zero-diagonal kernel matrix. Third, the bandwidth selection schemes are also very different. On the one hand, {in \cite{hofmeyr2019improving},  $h_n$ in (\ref{h1}) is chosen to ensure the consistency of the normalized cut (see Theorem 2 therein) rather than the kernel matrices or graph Laplacian matrices. In fact, it is not clear from \cite{hofmeyr2019improving} that whether such a choice of bandwidth ensures consistency and  meaningful convergence of the kernel matrices or graph Laplacian matrices. In contrast, our selected bandwidth is capable of capturing the underlying signals and therefore guarantees the convergence of the kernel matrices to some population integral operator.} On the other hand, {our bandwidth selection procedure is data-driven and does not require any prior knowledge.  In contrast, (\ref{h1}) relies on the values of both the (ambient) dimension $r$ which is in general unknown and some constant $s.$ } To choose $s,$
for practical purpose, in Section 4 of \cite{hofmeyr2019improving}, the author suggest
\begin{equation} \label{h2}
h_n=1.2\sqrt{\bar{\lambda}_{r}} \bigg( \frac{4}{(2+r)n} \bigg)^{\frac{1}{r+4}},
\end{equation}
 where $\bar{\lambda}_{r}$ is the average of the largest $r$ eigenvalues of the covariance matrix of the data. However, in our setting even when $r$ is fixed, $\bar{\lambda}_r$ usually diverges which violates the boundedness requirement of $s$ in (\ref{h1}).  Numerically, in Section \ref{cluster.sec} and Figure \ref{cls.fig1} of the main manuscript, compared to the method in \cite{hofmeyr2019improving}, we show that our proposed method has better performance in terms of spectral clustering using two high-dimensional noisy data sets.

Second, in \cite{loffler2021optimality}, the authors studied the spectral embedding and clustering of high-dimensional data under the Gaussian mixture model. The spectral embedding $\hat{\bf Y}\in \mathbb{R}^{n\times k}$ was obtained based on the spectral decomposition 
\[
{\bf Y} = \sum_{i=1}^{\min\{n,p\}}\hat\lambda_i\hat u_i\hat v_i^\top\in \mathbb{R}^{n\times p},\qquad \hat\lambda_1\ge ...\ge \hat\lambda_{\min\{p,n\}},
\]
of the observed data matrix ${\bf Y}$, rather than some kernel matrices. Consequently, they do not need to select a bandwidth. For clustering, they used $\hat{\bf Y}=\hat{\bf U}\hat{\bf \Lambda}$, where $\hat{\bf \Lambda}=\text{diag}(\hat\lambda_1,...\hat\lambda_k)$ and $\hat{\bf U}=(\hat u_1,...,\hat u_k)$, with $k$ being the target embedding dimension. K-means was then used on $\hat{\bf Y}$ to determine the clusters. Compared with Step 3(ii) of our Algorithm \ref{al0}, the embedding $\hat{\bf Y}$ shared a similar form as our proposal. However, the linear embedding  $\hat{\bf Y}$ may not be as effective as our method in dealing with data sets with nonlinear structures. For illustrations,  in Section \ref{cluster.sec} of the main manuscript, we compared the numerical performance of these methods under both a Gaussian mixture model and a nonlinear nested sphere model. Figure \ref{cls.fig1} of the main manuscript shows that, while the method of \cite{loffler2021optimality} performed equally well as ours under the Gaussian mixture model, its performance was significantly worse than our method under the nonlinear nested sphere model. 

Finally, in \cite{abbe2020ell_p}, the authors developed a perturbation theory for a hollowed version of PCA in general Hilbert spaces. Specifically, the model
\begin{equation} \label{model2}
	\yb'_i = {\bf x}'_i+{\bf z}'_i\in \mathcal{H},\qquad 1\le i\le n,
\end{equation}
was considered for some reproducing kernel Hilbert space (RKHS) $\mathcal{H}$ associated with the map $\phi:\mathcal{X}\to \mathcal{H}$ via the kernel $K(\cdot,\cdot): \mathcal{X}\times \mathcal{X}\to \mathbb{R}$ with $K(\xb,\yb)=\langle \phi(\xb),\phi(\yb)\rangle$. In (\ref{model2}),  $\{\xb'_i\}_{1\le i\le n}$ are the noiseless signals, and  $\{{\bf z}'_i\}_{1\le i\le n}$ are sub-Gaussian noise. The general idea, like our paper, was to learn the geometric information of some underlying structure contained in the eigenfunctions of some integral operator. The focus of \cite{abbe2020ell_p} was on the perturbation analysis of the eigenspaces associated to the noiseless infinite-dimensional self-adjoint operator, or Gram matrix ${\bf H}_n^*={\bf X'X'}^\top$, where
$$
{\bf X}' = \begin{bmatrix}
	{\bf x}'_1\\
	{\bf x}'_2\\
	\vdots\\
	{\bf x}'_n
\end{bmatrix},
$$
such that ${\bf X}'({\bf g})= (\langle {\bf x}'_1,{\bf g}\rangle, ...\langle {\bf x}'_n,{\bf g}\rangle).$ Furthermore, for sub-Gaussian and Gaussian mixture models, the spectral embedding $\hat{\bf Y}'={\bf U}'{\bf \Lambda'}^{1/2}$ was proposed, where ${\bf \Lambda'}=\text{diag}(\lambda'_1,...,\lambda'_k)$ and ${\bf U}'=({\bf u}_1, ..., {\bf u}_k)$ contain the $k$ leading eigenvalues and eigenvectors of the zero-diagonal Gram matrix ${\bf H}_n=\mathsf{H}({\bf Y}'{\bf Y}'^\top)$, with the operator $\mathsf{H}(\cdot)$ zeroing out all diagonal entries of a square matrix. 

 In general, we agree that \cite{abbe2020ell_p} is similar to our paper in the sense that both works consider spectral embedding of noisy data and study the convergence of some eigenvectors based on the observed samples to their counterparts associated with the noiseless samples. In what follows, we compare and contrast the two papers in the following aspects. First,  the noiseless data of \cite{abbe2020ell_p} is assumed to be generated from some RKHS as in (\ref{model2}). In contrast, our noiseless data is sampled from a nonlinear manifold model as in (\ref{eq_basicmodel}) and Assumption \ref{assum_generalmodelassumption}. Second, \cite{abbe2020ell_p} focused on the spectrum of the Gram matrices ${\bf H}_n^*$ and ${\bf H}_n$ defined above, whereas our paper focused on the distance kernel matrices $\bK_n^*$ and $\bK_n$ as in (\ref{Kmat}) and (\ref{Kn*}). The main difference between the two types of matrices can be seen as follows. On the one hand, construction of Gram matrices relies on specifying the map $\phi$, whereas our kernel matrices rely on the choice of kernel functions and a careful selection of the bandwidth. On the other hand, a key idea of \cite{abbe2020ell_p} was to zero out the diagonal entries of the Gram matrix ${\bf Y}'{\bf Y}'^\top$ as in \cite{kg2000,ElKaroui_Wu:2016b,ms2018}, which led to better spectral convergence rates. However, for the kernel matrices considered in our manuscript, since the diagonal part is isotropic (i.e., $f(0)\mathbf{I}$ for some given kernel function $f(\cdot)$), zeroing out the diagonals does not change the eigenvectors, nor the convergence results.  Third, in terms of theoretical assumptions, on the one hand, the assumptions of \cite{abbe2020ell_p} on the signal-to-noise ratio are comparable to ours when  $\mathcal{H}=\mathbb{R}^p$. For example, under our model (Assumption \ref{assum_mainassumption}) with finite $r$, { $\eta=1$ and $\beta=0$}, the assumptions of \cite{abbe2020ell_p} (cf. Assumptions 2.4 and 2.6 therein) imply that $\alpha>1$, which is the same as our paper. On the other hand, for general nonlinear space $\mathcal{H}$, since the assumptions of \cite{abbe2020ell_p}
are made on the Gram matrix ${\bf H}_n^*$, they can be relatively less interpretable compared to ours, which are made directly on the underlying manifold (cf. Assumptions \ref{assum_generalmodelassumption} and \ref{assum_mainassumption}) and the kernel functions (cf. Assumption \ref{ker.assu}). Finally, in terms of theoretical results, both papers showed spectral convergence of some matrices to their noiseless counterparts. However, the eigenvector perturbation bounds were obtained under different discrepancy measures (cf. Theorem 2.1 of \cite{abbe2020ell_p} and our Theorems \ref{thm_mainthm} and \ref{thm_mainthm2}). In particular, our results highlighted the impact of signal-to-noise ratio,  the underlying manifold structures and the choice of kernel functions on the final rate of convergence, and, by connecting to the population integral operator, yielded the proper interpretation of the eigenvectors with respect to the underlying manifold structures. Such results have not been established in \cite{abbe2020ell_p}.  

In addition, in terms of the proof technique,  \cite{abbe2020ell_p} relies on the leave-one-out analysis, and uses the Davis-Kahan theorem for proving the eigenvector perturbation bounds, whereas our proof of the eigenvector convergence relies on the integral representation of the eigenvectors along with the resovlent expansion. Since bandwidth plays an important role in our algorithm, we also need to establish a novel concentration inequality (cf. Proposition \ref{lem_bandwidthconcentration}) for our bandwidth selection scheme.  Numerically, since for given kernel matrices, the proposal of \cite{abbe2020ell_p} is equivalent to Step 3 of our algorithm, and the main advantage of our method lies in the construction of kernel matrices and bandwidth selection. For illustrations, in Section \ref{cluster.sec}, we compared our method with \cite{abbe2020ell_p} using two mixture models. 
The result is summarized in Figure \ref{cls.fig1} of the manuscript which demonstrates the advantage of our proposal especially when data sets are highly nonlinear.
}

{ \subsection{Some additional remarks}\label{suppl_additionalremark} In this subsection, we provide a few more remarks.

First, we make a comparison with Davis-Kahan theorem in terms of  the results in (\ref{eq_projectionbound}). In the literature, there exist different variants of Davis-Kahan theorem. For definiteness and easy comparison, we focus on the one from the popular monograph \cite{vershynin2018high} (see Theorem 4.5.5 therein);  that is
\begin{thm}[Davis Kahan Theorem] Let $\mathbf{S}$ and $\mathbf{T}$ be two symmetric matrices with the same dimension. For each fixed $i,$ denote
\begin{equation}\label{eq_deltadefinition}
\delta:=\min_{j: j \neq i} |\lambda_i(\mathbf{S})-\lambda_j(\mathbf{S})|.
\end{equation} 
Assuming that $\delta \geq \tau>0$ for some constant $\tau>0,$ we have that
\begin{equation}\label{eq_daviskahanresult}
|\langle \mathbf{u}_i, \mathbf{v}_i \rangle^2-1| \leq  \frac{4 \| \mathbf{S}-\mathbf{T} \|^2}{\delta^2},
\end{equation}
where $\mathbf{u}_i, \mathbf{v}_i$ are respectively the eigenvectors of $\mathbf{S}$ and $\mathbf{T}.$
\end{thm}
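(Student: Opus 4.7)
The plan is to expand $\mathbf{v}_i$ in the orthonormal eigenbasis of $\mathbf{S}$. Writing $\mathbf{v}_i = \sum_j c_j \mathbf{u}_j$ with $c_j = \langle \mathbf{u}_j, \mathbf{v}_i \rangle$, the target quantity becomes
\begin{equation*}
|\langle \mathbf{u}_i, \mathbf{v}_i \rangle^2 - 1| = 1 - c_i^2 = \sum_{j \neq i} c_j^2,
\end{equation*}
so the task reduces to bounding the mass of $\mathbf{v}_i$ on the eigen-directions of $\mathbf{S}$ other than $\mathbf{u}_i$.

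First, I would leverage the twin eigen-relations $\mathbf{S}\mathbf{u}_j = \lambda_j(\mathbf{S}) \mathbf{u}_j$ and $\mathbf{T}\mathbf{v}_i = \lambda_i(\mathbf{T}) \mathbf{v}_i$ to get the key algebraic identity
\begin{equation*}
(\mathbf{S}-\mathbf{T})\mathbf{v}_i = (\mathbf{S} - \lambda_i(\mathbf{T})\mathbf{I})\mathbf{v}_i = \sum_j c_j\bigl(\lambda_j(\mathbf{S}) - \lambda_i(\mathbf{T})\bigr) \mathbf{u}_j,
\end{equation*}
so that by Pythagoras, $\|(\mathbf{S}-\mathbf{T})\mathbf{v}_i\|_2^2 = \sum_j c_j^2 (\lambda_j(\mathbf{S}) - \lambda_i(\mathbf{T}))^2 \geq \sum_{j \neq i} c_j^2 (\lambda_j(\mathbf{S}) - \lambda_i(\mathbf{T}))^2$. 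On the other hand, the left-hand side is trivially controlled by $\|\mathbf{S}-\mathbf{T}\|^2 \|\mathbf{v}_i\|_2^2 = \|\mathbf{S}-\mathbf{T}\|^2$.

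Next, I would lower-bound $|\lambda_j(\mathbf{S}) - \lambda_i(\mathbf{T})|$ for $j \neq i$ by combining the definition of $\delta$ in \eqref{eq_deltadefinition} with Weyl's inequality $|\lambda_i(\mathbf{S}) - \lambda_i(\mathbf{T})| \leq \|\mathbf{S}-\mathbf{T}\|$, yielding the reverse triangle bound $|\lambda_j(\mathbf{S}) - \lambda_i(\mathbf{T})| \geq \delta - \|\mathbf{S}-\mathbf{T}\|$. To produce the clean constant $4$, I would then split into two cases: if $\|\mathbf{S}-\mathbf{T}\| < \delta/2$, the bound $\delta - \|\mathbf{S}-\mathbf{T}\| > \delta/2$ combined with the previous display gives $\sum_{j \neq i} c_j^2 \leq 4 \|\mathbf{S}-\mathbf{T}\|^2/\delta^2$; if $\|\mathbf{S}-\mathbf{T}\| \geq \delta/2$, then $4\|\mathbf{S}-\mathbf{T}\|^2/\delta^2 \geq 1 \geq |\langle \mathbf{u}_i, \mathbf{v}_i\rangle^2 - 1|$ and the inequality is trivial.

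The argument is essentially routine linear algebra, so I do not anticipate a genuine obstacle; the only subtle point is that the hypothesis merely requires $\delta \geq \tau > 0$ without comparing $\delta$ to $\|\mathbf{S}-\mathbf{T}\|$, which is precisely why the dichotomy case-split is needed to absorb the regime where the perturbation exceeds half the spectral gap.
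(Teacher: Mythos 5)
Your proof is correct and self-contained; I can confirm each step. The key identity $(\mathbf{S}-\mathbf{T})\mathbf{v}_i = \sum_j c_j(\lambda_j(\mathbf{S}) - \lambda_i(\mathbf{T}))\mathbf{u}_j$ is right, Pythagoras plus the operator-norm bound gives $\sum_{j\neq i} c_j^2 (\lambda_j(\mathbf{S}) - \lambda_i(\mathbf{T}))^2 \leq \|\mathbf{S}-\mathbf{T}\|^2$, the reverse triangle inequality plus Weyl gives $|\lambda_j(\mathbf{S}) - \lambda_i(\mathbf{T})| \geq \delta - \|\mathbf{S}-\mathbf{T}\|$ for $j \neq i$, and the dichotomy $\|\mathbf{S}-\mathbf{T}\| \gtrless \delta/2$ (combined with the trivial bound $\sum_{j\neq i}c_j^2 \leq 1$) yields the constant $4$ without any auxiliary hypothesis on the perturbation size. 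Be aware, though, that the paper does not actually prove this statement: it is stated only as a point of comparison and is cited wholesale as Theorem 4.5.5 of Vershynin's monograph. Your argument is the standard elementary, basis-expansion proof of the rank-one Davis--Kahan inequality and is arguably more illuminating here than the citation; Vershynin's own proof of that theorem passes through a general subspace $\sin\Theta$ perturbation bound proved via a block-matrix/polar-decomposition argument, which is more powerful (it handles eigenspaces of arbitrary dimension) but less transparent for the single-eigenvector case the paper actually invokes. The only implicit assumption worth flagging in your write-up is that $\mathbf{u}_i$ and $\mathbf{v}_i$ are taken to be unit vectors so that $\sum_j c_j^2 = 1$; this is the usual convention and consistent with the paper's usage.
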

We see that in order to correctly apply Davis-Kahan and obtain (\ref{eq_daviskahanresult}), we need that $\delta$ is bounded from below by some constant. In contrast, in our equation (\ref{eq_projectionbound}), our bound reads $\psi_n/\mathsf{r}_i^2.$ 

We now make a comparison for both the denominators and numerators of the bounds. For the $\mathsf{r}_i^2$ part, on the one hand, our $\mathsf{r}_i$ only depends on three population eigenvalues, i.e., $\mathsf{r}_i=\min\{\gamma_i-\gamma_{i+1}, \gamma_{i-1}-\gamma_i\}$. In fact, $\delta$ used in Davis-Kahan theorem, i.e., (\ref{eq_deltadefinition}), requires much more information than our $\mathsf{r}_i$ (even though this condition may be relaxed using the arguments in \cite{yu2015useful}). On the other hand and more importantly, {our $\mathsf{r}_i$ is allowed to decay to zero} once equation (\ref{eq_assumption}) is satisfied. For the $\psi_n$ part, it is weaker than the results $\psi_n^2$ from Davis-Kahan which relies on stronger assumption on $\mathsf{r}_i.$ Nevertheless, if $\delta$ is bounded from below by a constant, we can improve our results using Davis-Kahan. To summarize, Davis-Kahan may result in faster convergence rates but at the cost of much stronger assumptions on the eigenvalues separation whereas our results are obtained under general and weaker separation conditions. In the current paper, for the purpose of generality and practical usability, we keep the current results in (29).

Second, it is true that the left-hand side of (\ref{eq_assumption}) is large than $\psi_n.$ Therefore, (\ref{eq_assumption}) means that $\psi_n$ is much smaller than $\mathsf{r}_i^2$  so that (\ref{eq_deltadefinition}) implies consistent estimation. Sorry for the confusion.  Finally, the reason that the left-hand side of (\ref{eq_assumption}) is larger than the ideal term $\psi_n$ is technical. Our proof relies on an integral representation via the resolvent, such an assumption is needed to make sure that we can find a contour which only contains $\lambda_i$ so that the residual theorem can be applied; see the discussions between equations (\ref{eq_uvdifference}) and (\ref{eq_largesmallbound}).

}

\section{Some Results on Gaussian Integral Operators}\label{sec_someresultintegral}
In this section, we collect some results on integral operators. Consider a special case when $\{\bm{x}_i\}$ are Gaussian. In such a setting, the eigenvalues and eigenfunctions of the integral operator can be calculated explicitly. We first state the results when $r=1.$ Let $\sfp(x)$ be the density function of $\mathcal{N}(0, \sigma^2)$ random variable. For the ease of statement, we write the Gaussian kernel function that 
$
k(x,y)=e^{-\frac{\|x-y \|_2^2}{2h}},
$
for some fixed bandwidth $h.$ Consequently, we can construct the following integral operator 
\begin{equation*}
\mathcal{K}^{\sigma}_1 f(x)=\int k(x,y) f(y) \sfp(y) \dd y. 
\end{equation*}
Denote $\{\gamma_i\}$ and $\{\phi_i(x)\}$ as the sequence of the eigenvalues and eigenfunctions of $\mathcal{K}^{\sigma}_1$ with
\begin{equation*}
\mathcal{K}_1^{\sigma} \phi_i(x)=\gamma_i \phi_i(x), \qquad \int \phi_i^2(x) \sfp(x) \dd x=1.  
\end{equation*}
Note that the eigenfunctions are in fact defined in $\mathbb{R}$ via  the relation that
\begin{equation*}
\phi_i(x)=\frac{1}{\gamma_i} \int k(x,y) \phi_i(y) \sfp(y) \dd y, \ \text{if} \ \lambda_i>0.
\end{equation*}
The following lemma provides the exact form for the eigenvalues and eigenfunctions.
\begin{lem} Denote $\beta=2 \sigma^2/h.$ The eigenvalues and eigenfunctions of $\mathcal{K}_1^{\sigma}$ are 
	\begin{equation}\label{eq_eigenvalueform}
	\gamma_i=\frac{\sqrt{2}}{\sqrt{1+\beta+\sqrt{1+2\beta}}} \left( \frac{\beta}{1+\beta+\sqrt{1+2 \beta}}\right)^i,
	\end{equation} 
	\begin{equation}\label{eq_eigenfunction}
	\phi_i(x)=\frac{(1+2 \beta)^{1/8}}{\sqrt{2^i i!}} \exp \left(-\frac{x^2}{2 \sigma^2} \frac{\sqrt{1+2 \beta}-1}{2} \right)H_i\left( \left(\frac{1}{4}+\frac{\beta}{2} \right)^{1/4} \frac{x}{\sigma} \right),
	\end{equation}
	where $H_i(\cdot)$ is the $i$th order Hermite polynomial. 
\end{lem}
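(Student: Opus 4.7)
The plan is to reduce the problem to a classical Mercer decomposition via Mehler's formula. Because $\mathcal{K}_1^{\sigma}$ acts on $\mathcal{L}_2(\mathbb{R},\sfp)$, it is convenient to conjugate by $\sqrt{\sfp}$: define $\psi_i(x)=\phi_i(x)\sqrt{\sfp(x)}$ and the symmetric Hilbert--Schmidt operator $\widetilde{\mathcal{K}}$ on $\mathcal{L}_2(\mathbb{R},\dd x)$ with kernel $\widetilde{k}(x,y)=k(x,y)\sqrt{\sfp(x)\sfp(y)}$. Then $(\gamma_i,\phi_i)$ are eigenpairs of $\mathcal{K}_1^{\sigma}$ iff $(\gamma_i,\psi_i)$ are eigenpairs of $\widetilde{\mathcal{K}}$, with $\|\psi_i\|_{L^2(\dd x)}=1\Leftrightarrow\|\phi_i\|_{\sfp}=1$. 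A direct expansion gives
\begin{equation*}
\widetilde{k}(x,y)=\frac{1}{\sigma\sqrt{2\pi}}\exp\bigl(-A(x^2+y^2)+Bxy\bigr),\qquad A=\tfrac{1}{2h}+\tfrac{1}{4\sigma^2},\ B=\tfrac{1}{h}.
\end{equation*}

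Next, I would invoke Mehler's formula (physicists' convention)
\begin{equation*}
\sum_{n=0}^{\infty}\frac{t^n}{2^n n!}H_n(u)H_n(v)\,e^{-(u^2+v^2)/2}=\frac{1}{\sqrt{1-t^2}}\exp\!\Bigl(-\tfrac{(1+t^2)(u^2+v^2)}{2(1-t^2)}+\tfrac{2t\,uv}{1-t^2}\Bigr),
\end{equation*}
and substitute $u=cx$, $v=cy$. Matching the two quadratic forms forces
\begin{equation*}
\frac{c^2(1+t^2)}{2(1-t^2)}=A,\qquad \frac{2tc^2}{1-t^2}=B,
\end{equation*}
whose ratio yields a quadratic in $t$ with parameter $\beta=2\sigma^2/h$. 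Selecting the root with $|t|<1$ and rationalizing gives precisely $t=\beta/(1+\beta+\sqrt{1+2\beta})$, which is the ratio appearing in \eqref{eq_eigenvalueform}. Back-solving from $B=2tc^2/(1-t^2)$ and using the identity $(\sqrt{1+2\beta})^2=1+2\beta$ to simplify yields $c^2=\sqrt{1+2\beta}/(2\sigma^2)$, i.e.\ $c=\sigma^{-1}(\tfrac14+\tfrac{\beta}{2})^{1/4}$, which matches the argument of $H_i$ in \eqref{eq_eigenfunction}.

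Finally, I would read off the Mercer decomposition: $\widetilde{k}(x,y)=\sum_i\gamma_i\psi_i(x)\psi_i(y)$ with $\psi_i(x)=C_i H_i(cx)e^{-c^2x^2/2}$, where $C_i^2=c/(\sqrt{\pi}2^i i!)$ is fixed by $\|\psi_i\|_{L^2(\dd x)}=1$ via the orthogonality relation $\int H_i^2(u)e^{-u^2}\dd u=\sqrt{\pi}\,2^i i!$. The eigenvalue is then
\begin{equation*}
\gamma_i=\frac{\sqrt{1-t^2}}{\sigma\sqrt{2\pi}\cdot 2^i i!\,C_i^2}\,t^i=\frac{\sqrt{1-t^2}}{c\sigma\sqrt{2}}\,t^i,
\end{equation*}
and converting to $\phi_i=\psi_i/\sqrt{\sfp}$ multiplies the exponent by $e^{x^2/(4\sigma^2)}$, giving the effective Gaussian factor $\exp\!\bigl(-\tfrac{x^2(2c^2\sigma^2-1)}{4\sigma^2}\bigr)=\exp\!\bigl(-\tfrac{x^2}{2\sigma^2}\cdot\tfrac{\sqrt{1+2\beta}-1}{2}\bigr)$ in \eqref{eq_eigenfunction}. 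Completeness of Hermite functions in $L^2(\mathbb{R},\dd x)$ guarantees that this enumerates all eigenpairs.

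The main obstacle is purely algebraic bookkeeping: the quadratic-in-$t$ has to be solved and rationalized cleanly, and the normalizing factors (involving the ratio $c\sigma\sqrt{2}=(1+2\beta)^{1/4}$ and the simplification $(1+\sqrt{1+2\beta})^2=2(1+\beta+\sqrt{1+2\beta})$) must be handled carefully to reconcile $\sqrt{1-t^2}/(c\sigma\sqrt{2})$ with $\sqrt{2}/\sqrt{1+\beta+\sqrt{1+2\beta}}$. Beyond that, the proof is a direct verification once Mehler's formula is in hand.
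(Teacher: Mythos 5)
The paper does not prove this lemma; it simply cites Proposition 1 of \cite{AOSko}. Your Mehler-formula derivation is the classical argument for the Gaussian-kernel/Gaussian-measure eigendecomposition and it is correct and complete. I checked the bookkeeping: with $\beta=2\sigma^2/h$ one has $A=(1+\beta)/(4\sigma^2)$, $B=\beta/(2\sigma^2)$, hence $A/B=(1+\beta)/(2\beta)$; the resulting quadratic $t^2-\tfrac{2(1+\beta)}{\beta}t+1=0$ has its root in $(0,1)$ equal, after rationalizing, to $t=\beta/(1+\beta+\sqrt{1+2\beta})$; back-substituting into $B=2tc^2/(1-t^2)$ with the identity $1-t^2=\tfrac{2\sqrt{1+2\beta}}{\beta}\,t$ gives $c^2=\sqrt{1+2\beta}/(2\sigma^2)$, i.e.\ $c=\sigma^{-1}(\tfrac14+\tfrac{\beta}{2})^{1/4}$. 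Using $(1+\sqrt{1+2\beta})^2=2(1+\beta+\sqrt{1+2\beta})$ one gets $1-t^2=2\sqrt{1+2\beta}/(1+\beta+\sqrt{1+2\beta})$, so $\sqrt{1-t^2}/(c\sigma\sqrt{2})=\sqrt{2}/\sqrt{1+\beta+\sqrt{1+2\beta}}$, matching \eqref{eq_eigenvalueform}. The normalization $C_i(\sigma\sqrt{2\pi})^{1/2}=(1+2\beta)^{1/8}/\sqrt{2^i i!}$ and the exponent simplification $-c^2x^2/2+x^2/(4\sigma^2)=-\tfrac{x^2}{2\sigma^2}\cdot\tfrac{\sqrt{1+2\beta}-1}{2}$ reproduce \eqref{eq_eigenfunction}. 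Your closing appeal to completeness of the Hermite functions in $L^2(\mathbb{R},\dd x)$ is exactly what is needed to rule out additional spectrum. Since the paper defers to a citation, there is no competing proof strategy to compare against; your argument is effectively the one the cited reference supplies.
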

\begin{proof}
	See Proposition 1 of \cite{AOSko}. 
\end{proof}
The above results can also be generalized to the $r$-dimensional setting when $\sfp(\bm{x})$ is the density function of $\mathcal{N}(\mathbf{0}, \Sigma)$ where $\Sigma$ is a diagonal matrix so that 
\begin{equation*}
\Sigma=\operatorname{diag}\{\theta_1, \cdots, \theta_{r}\},\qquad\text{where $\theta_i=\sigma_i^2$}
\end{equation*}
Due to invariance of the kernel, the associated integral operator, denoted as $\mathcal{K}_r,$ has a direct sum decomposition  
$
\mathcal{K}_{r}=\bigoplus_{i=1}^r \mathcal{K}_1^{\sigma_i}. 
$
Analogously, the eigenvalues and eigenfunctions of $\mathcal{K}_r$ can also be calculated explicitly  as follows. Recall that for given two operators $\mathcal{F}_1$ and $\mathcal{F}_2,$ the spectrum of their direct sum $\mathcal{F}_1 \bigoplus \mathcal{F}_2$ consists of pairwise products $\gamma_i(\mathcal{F}_1) \gamma_j(\mathcal{F}_2).$ Moreover, the eigenfunction of the product can be written into a multindex form so that corresponding to $\gamma_i(\mathcal{F}_1) \gamma_j(\mathcal{F}_2)$, we have
\begin{equation*}
\phi_{[i,j]}(x_1, x_2)=\phi_i(x_1) \psi_j(x_2),
\end{equation*} 
where $\{\phi_i(x)\}$ and $\{\psi_j(x)\}$ are the eigenfunctions of the operators $\mathcal{F}_1$ and $\mathcal{F}_2,$ respectively. Consequently, we can obtain the following results.
\begin{lem}\label{lem_multiplecase}
	Denote $\beta_i=2 \sigma^2_i/h.$ Then the eigenvalues and eigenfunctions of $\mathcal{K}_{r}$ can be represented using the multindex over all components, i.e., 
	$$
	\gamma_{[i_1, i_2, \cdots, i_r]}=\prod_{j=1}^{r} \gamma_{i_j}(\mathcal{K}_1^{\sigma_j}),
	$$
	and for $\bm{x}=(x_1, x_2, \cdots, x_r)$, we have
	$$
	\phi_{[i_1, i_2, \cdots, i_r]}(\bm{x})=\prod_{j=1}^r \phi_{i_j}(\mathcal{K}_{1}^{\sigma_j})(x_j). 
$$
\end{lem}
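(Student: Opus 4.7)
The plan is to exploit the product structure of the Gaussian kernel and the Gaussian density in the diagonal-covariance setting. Since $k(\bm{x},\bm{y}) = \exp(-\|\bm{x}-\bm{y}\|_2^2/(2h))$ and $\Sigma$ is diagonal, both the kernel and the density factor completely across coordinates:
\begin{equation*}
k(\bm{x},\bm{y}) = \prod_{j=1}^r \exp\!\left(-\frac{(x_j-y_j)^2}{2h}\right), \qquad \sfp(\bm{x}) = \prod_{j=1}^r \sfp_j(x_j),
\end{equation*}
where $\sfp_j$ is the $\mathcal{N}(0,\sigma_j^2)$ density. Thus $\mathcal{K}_r$ is naturally the tensor product of the one-dimensional operators $\mathcal{K}_1^{\sigma_j}$ acting on each coordinate (which is what the paper's $\bigoplus$ notation is intended to convey for the purposes of describing its spectrum).

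Next I would verify the claimed eigenpairs by direct computation. Fix a multi-index $[i_1,\ldots,i_r]$ and define the candidate eigenfunction $\phi_{[i_1,\ldots,i_r]}(\bm{x}) = \prod_{j=1}^r \phi_{i_j}(x_j)$, where each $\phi_{i_j}$ is the Hermite-type eigenfunction of $\mathcal{K}_1^{\sigma_j}$ given by the preceding lemma with eigenvalue $\gamma_{i_j}(\mathcal{K}_1^{\sigma_j})$. Applying Fubini's theorem (justified by the rapid Gaussian decay that makes the integrand absolutely integrable) yields
\begin{equation*}
\mathcal{K}_r \phi_{[i_1,\ldots,i_r]}(\bm{x}) = \prod_{j=1}^r \int_{\mathbb{R}} \exp\!\left(-\frac{(x_j-y_j)^2}{2h}\right) \phi_{i_j}(y_j) \sfp_j(y_j)\,\dd y_j = \prod_{j=1}^r \gamma_{i_j}(\mathcal{K}_1^{\sigma_j}) \cdot \phi_{[i_1,\ldots,i_r]}(\bm{x}).
\end{equation*}
This confirms both the eigenvalue formula $\gamma_{[i_1,\ldots,i_r]} = \prod_{j=1}^r \gamma_{i_j}(\mathcal{K}_1^{\sigma_j})$ and the eigenfunction formula.

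To finish, one must verify that these eigenpairs are exhaustive, i.e., no eigenvalues or eigenfunctions are missed. This follows from completeness: each $\{\phi_{i_j}(\mathcal{K}_1^{\sigma_j})\}_{i_j \geq 0}$ is an orthonormal basis of $\mathcal{L}_2(\mathbb{R},\sfp_j)$ (a classical fact from the Hermite representation (\ref{eq_eigenfunction}) combined with the density of Hermite polynomials in $\mathcal{L}_2$ of a Gaussian measure), so the product functions $\{\phi_{[i_1,\ldots,i_r]}\}$ form an orthonormal basis of the tensor product Hilbert space $\mathcal{L}_2(\mathbb{R}^r,\sfp) \cong \bigotimes_{j=1}^r \mathcal{L}_2(\mathbb{R},\sfp_j)$. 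Since $\mathcal{K}_r$ is self-adjoint and compact (inheriting these from each $\mathcal{K}_1^{\sigma_j}$, which is Hilbert--Schmidt with the bounded Gaussian kernel), its spectrum is fully described by this enumeration.

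The proof is largely a bookkeeping exercise once the tensor product viewpoint is fixed; the only mild technicality is the completeness step, where one must be careful to invoke the tensor product structure of $\mathcal{L}_2$ spaces under product measures. Since each one-dimensional factor was already handled in the previous lemma (with the Hermite basis providing completeness), I anticipate no substantive obstacle beyond clean exposition.
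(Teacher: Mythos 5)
Your proof is correct, and it fills in what the paper leaves to a citation: the paper's "proof" of this lemma is the single line "See Section 3 of \cite{ICMLko}," with only a brief sketch of the tensor-product idea in the surrounding text. Your direct verification---factor the Gaussian kernel and the diagonal-covariance Gaussian density across coordinates, apply Fubini to confirm each product $\prod_j \phi_{i_j}$ is an eigenfunction with eigenvalue $\prod_j \gamma_{i_j}$, and then invoke completeness of each one-dimensional Hermite eigenbasis together with the tensor decomposition $\mathcal{L}_2(\mathbb{R}^r,\sfp)\cong\bigotimes_j \mathcal{L}_2(\mathbb{R},\sfp_j)$ to get exhaustiveness---is exactly the content the paper delegates to the reference, written out cleanly. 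You also correctly and tactfully flag that the paper's $\bigoplus$ notation is in fact the tensor product $\bigotimes$: a genuine direct sum would have spectrum equal to the union of the one-dimensional spectra, not the set of pairwise products, so your reading of the notation is the right one. The only minor point to tighten in a final writeup is the compactness claim: it is cleanest to note directly that $k(\cdot,\cdot)$ is bounded and $\sfp$ is a probability measure, so $\mathcal{K}_r$ is Hilbert--Schmidt on $\mathcal{L}_2(\mathbb{R}^r,\sfp)$, rather than saying it "inherits" compactness from the one-dimensional factors (tensor products of compact operators are compact, but the direct statement is simpler). No substantive gap.
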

\begin{proof}
	See Section 3 of \cite{ICMLko}. 
\end{proof}

\section{Tuning Parameter Selection and Additional Numerical Results}\label{sec_choiceofomega}

\subsection{A Resampling Method for Selecting Percentile $\omega$ in (\ref{eq_bandwidthselection})}\label{choiceomegereampls}
{As shown by our theory in Section \ref{theory.sec},  under our assumptions, $\omega$ can be chosen as any constant between 0 and 1 to have the final embeddings that achieve the same asymptotic behavior. Especially, our proof of Proposition \ref{lem_bandwidthconcentration} depends on the aspect ratio (\ref{eq_aspect}) fundamentally. In practice, to optimize the empirical performance and improve automation of the method, we recommend using a resampling approach as follows. }

A resampling-based algorithm for determining the percentile parameter was proposed in Algorithm 1 of \cite{DW2}, which has also been used in \cite{ding2021spiked1,ding2021spiked}.  The method provides a choice of $\mathsf{s}$ using resampling method to distinguish the larger outlier eigenvalues and bulk eigenvalues of the kernel matrix $\bK_n$, where the outlier eigenvalues stand for the signal parts and bulk eigenvalues are for the noise part. A key observation underlying this approach is that the bulk eigenvalues are close to each other so that the ratios of two consecutive eigenvalues are close to one (see Remark 2.9 of \cite{DW2} for more details). Given the choice of $\mathsf{s},$ the algorithm is summarized below for self-completeness. \\

\begin{enumerate}
	\item For a pre-selected sequence of percentiles$\{\omega\}_{i=1}^T,$ calculated the associated bandwidths according to (\ref{eq_bandwidthselection}), denoted as $\{h_i\}_{i=1}^T.$ 
	
	\item For each $1 \leq i \leq T,$ calculate the eigenvalues of  $\bK_{n,i}$, constructed using the bandwidth $h_i.$ Denote the eigenvalues of $\bK_{n,i}$ in the decreasing order as $\{\lambda_{k}^{(i)}\}_{k=1}^n.$
	
	\item For some $\mathsf{s}>0,$ denote
	\begin{equation*}
	\mathsf{k}(\omega_i):=\max_{1 \leq k \leq  n-1} \left\{ k \bigg \vert \frac{\lambda_k^{(i)}}{\lambda_{k+1}^{(i)}} \geq 1+\mathsf{s}  \right\}.
	\end{equation*}
	Choose the percentile $\omega$ such that 
	\begin{equation}\label{omega_choice}
	\omega=\max_{i}[\argmax_{\omega_i} \mathsf{k}(\omega_i)]. 
	\end{equation}
\end{enumerate}
In particular, the last step can be replaced by other criteria according to users' purpose. Here we choose the largest one for the purpose of robustness. 
To demonstrate the empirical performance of the proposed percentile selection  algorithm, we consider the four examples  in Section \ref{simu.sec}, and evaluate the empirical percentile chosen by (\ref{omega_choice}), across a variety of  sample sizes. In each setting, we used $\textsf{s}=1$. 
Figure \ref{CV.fig2} shows the empirical percentile $\omega$ selected by the resampling method in each example. Along with Figures \ref{rate.fig} and \ref{ratio.fig} below, these results indicate the usefulness and robustness of the percentile selection algorithm.

\begin{figure}
	\centering
	\includegraphics[angle=0,width=10cm]{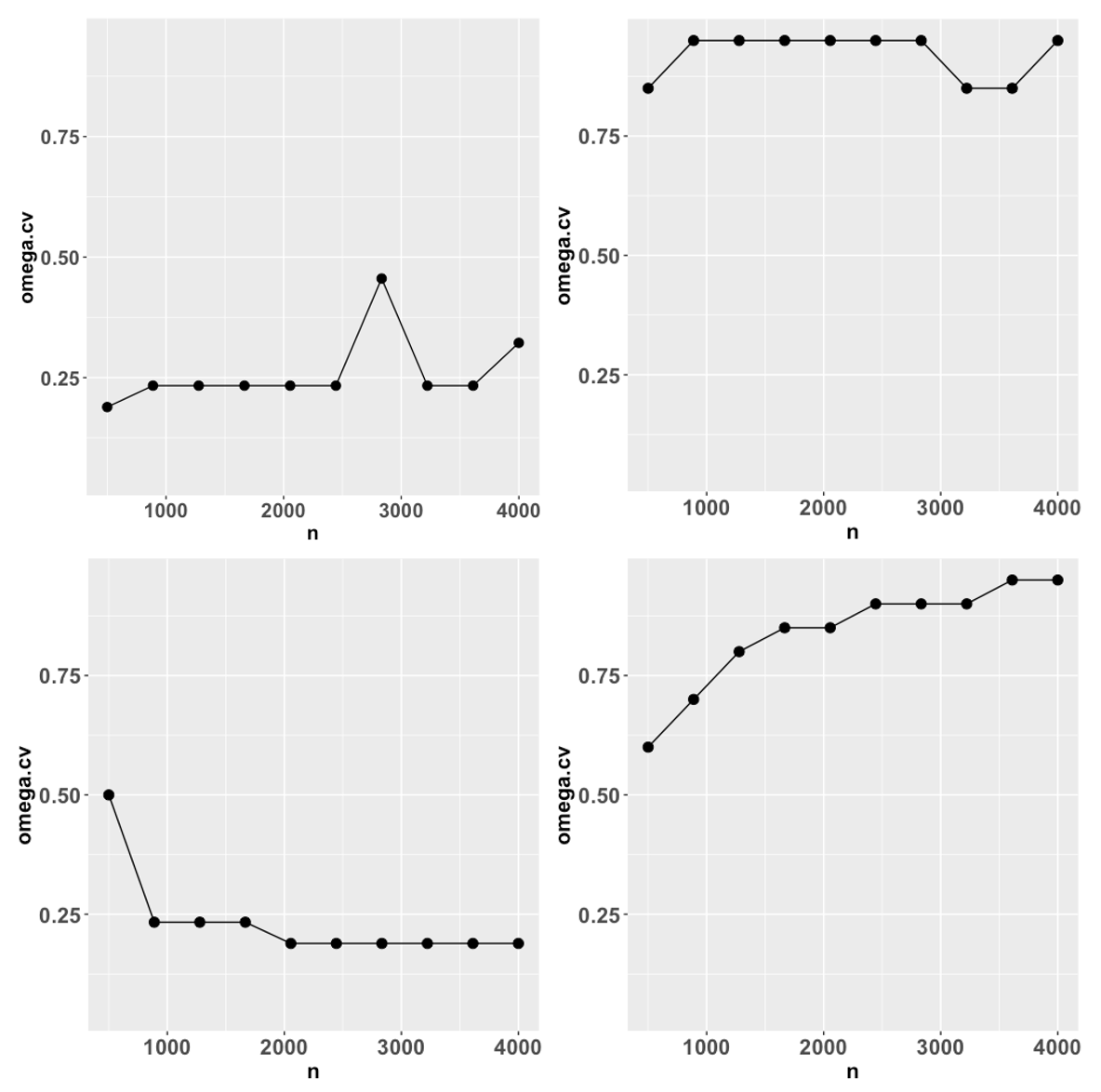}
	\caption{Plots of $\omega$ selected by the resampling method (\ref{omega_choice}). Top row from left to right:  "smiley face" and "mammoth"; Bottom row from left to right: "Cassini oval" and "torus."} 
	\label{CV.fig2}
\end{figure} 

%

{
\subsection{Additional Results from Simulation Studies} \label{simu.supp.sec}

In Figure \ref{man.fig}, we illustrate the manifold structures considered in the simulation studies. In Figure \ref{rate.fig}, we show the convergence of kernel matrices. 
In Figure \ref{bw.fig1}, we show the usefulness of our proposed population bandwidth. In Figure \ref{ratio.fig}, we compare the rates of convergence under different kernel functions. In particular, the top row of Figure \ref{ratio.fig} shows that the convergence rate under the Gaussian kernel is faster than that under the Laplacian kernel, whereas the bottom row of Figure \ref{ratio.fig} shows that convergence rate under the Gaussian kernel is similar to that under the quadratic polynomial kernel. These empirical results agree with our theory in Section \ref{ker.alt.sec}.

\begin{figure}[bt]
	\centering
	\includegraphics[angle=0,width=14cm]{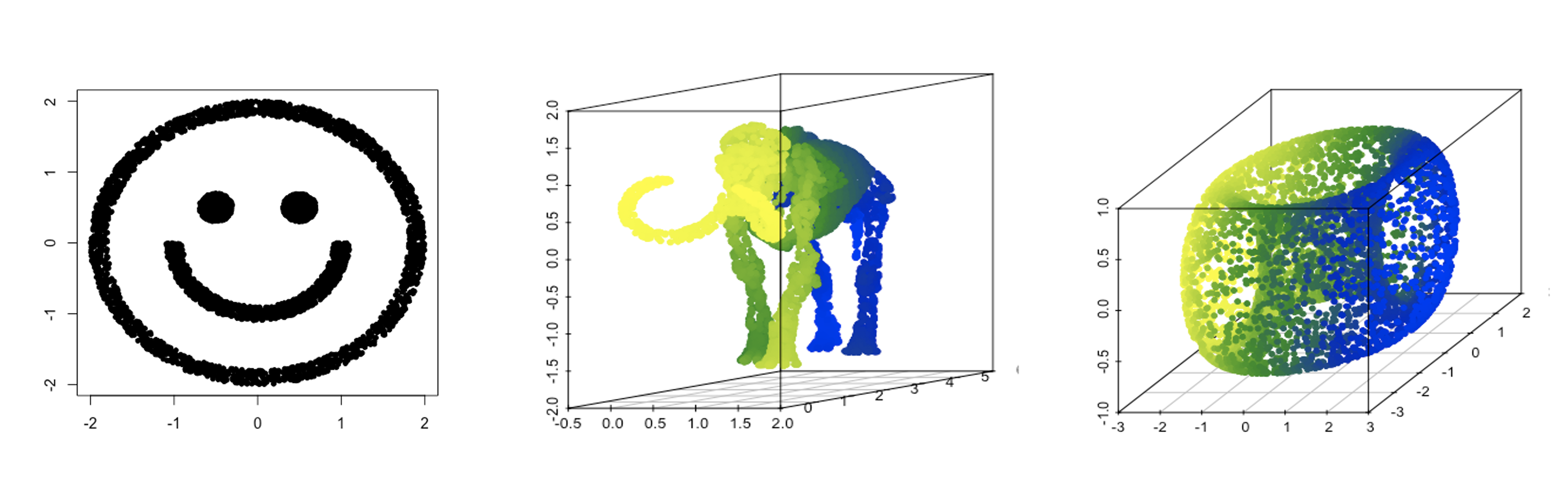}
		\includegraphics[angle=0,width=14cm]{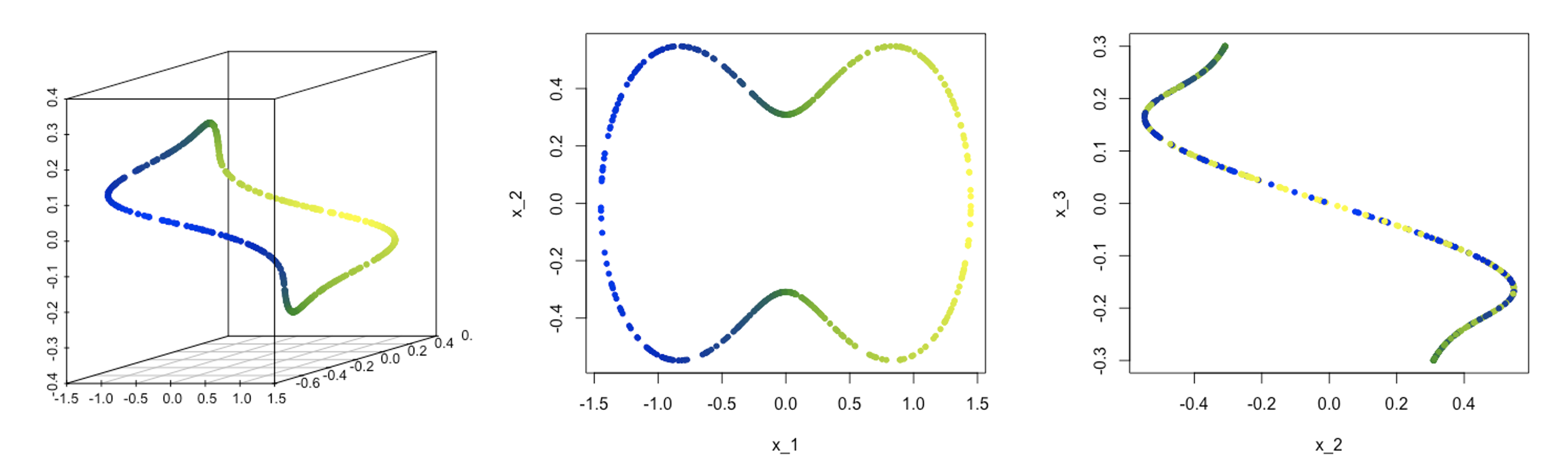}
	\caption{Top: "smiley face," "mammoth," and "torus" (from left to right); Bottom: "Cassini oval" and its two-dimensional projections.} 
	\label{man.fig}
\end{figure} 
}

	\begin{figure}[bt]
	\centering
	\includegraphics[angle=0,width=14cm]{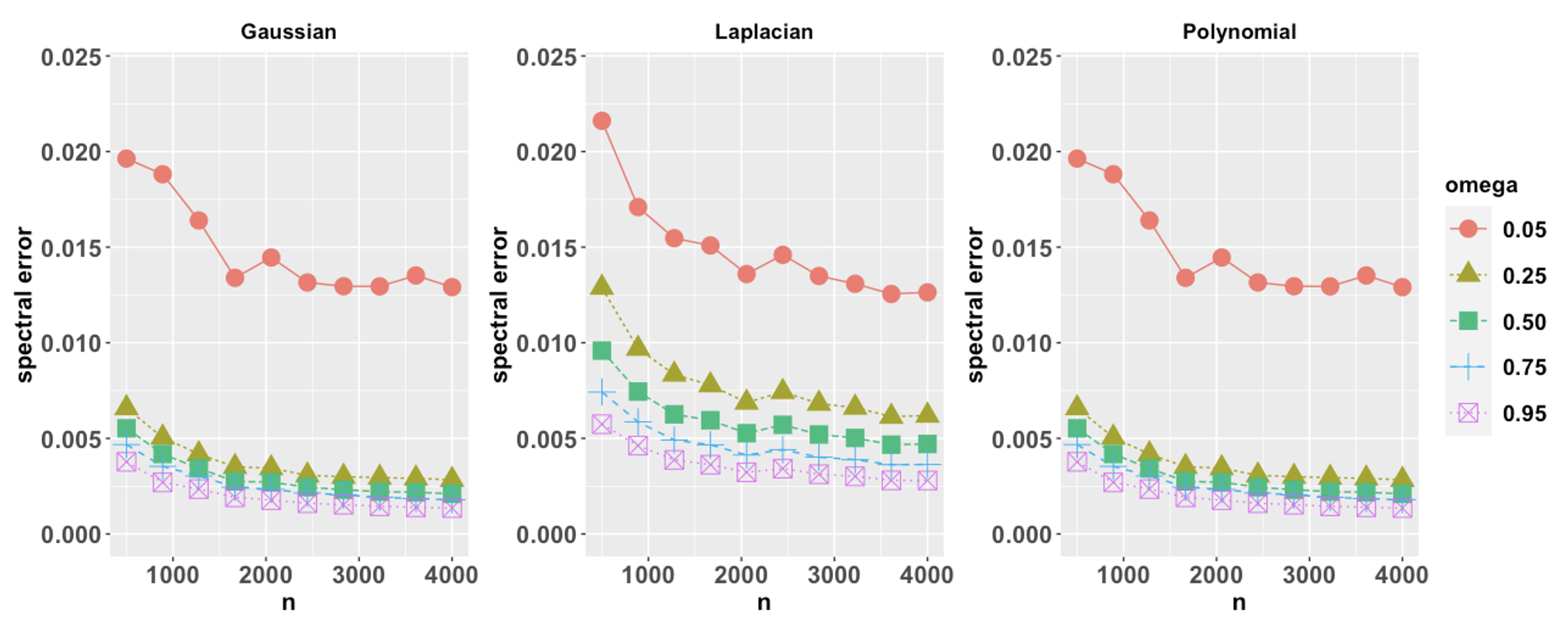}
	\includegraphics[angle=0,width=14cm]{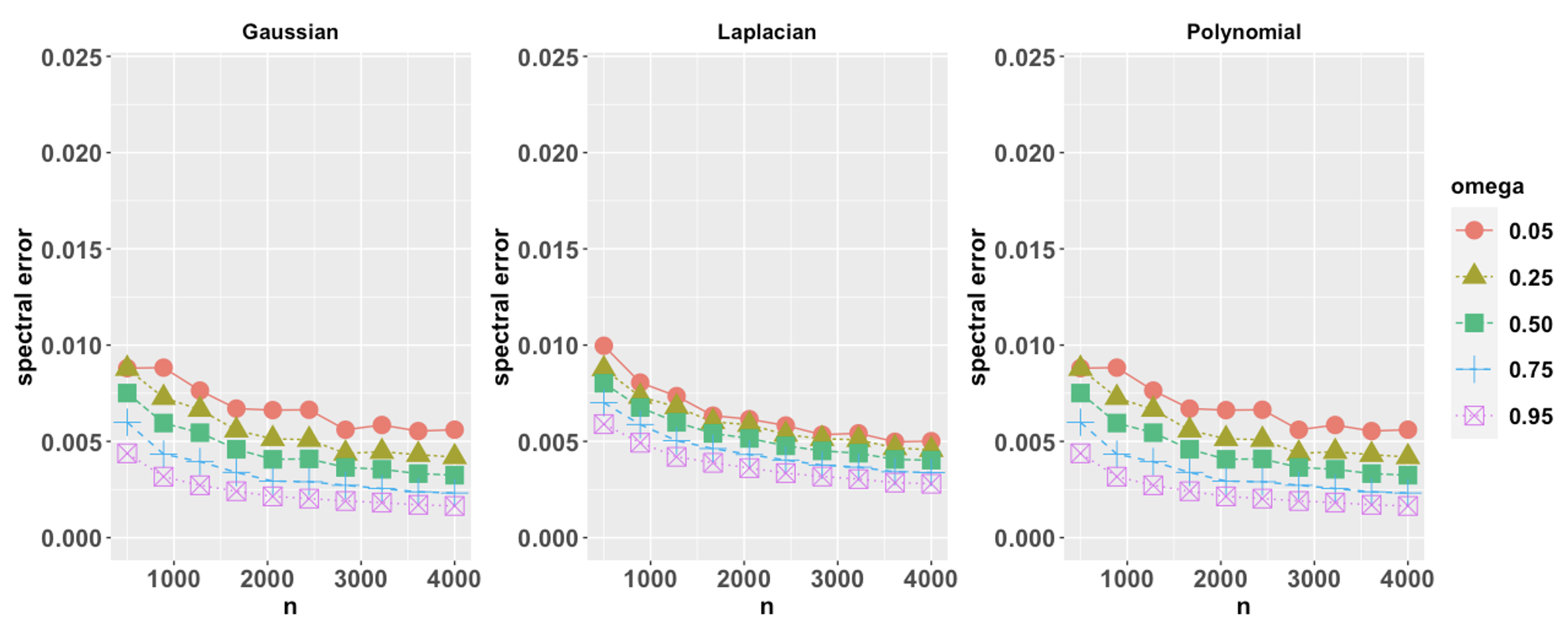}
	\includegraphics[angle=0,width=14cm]{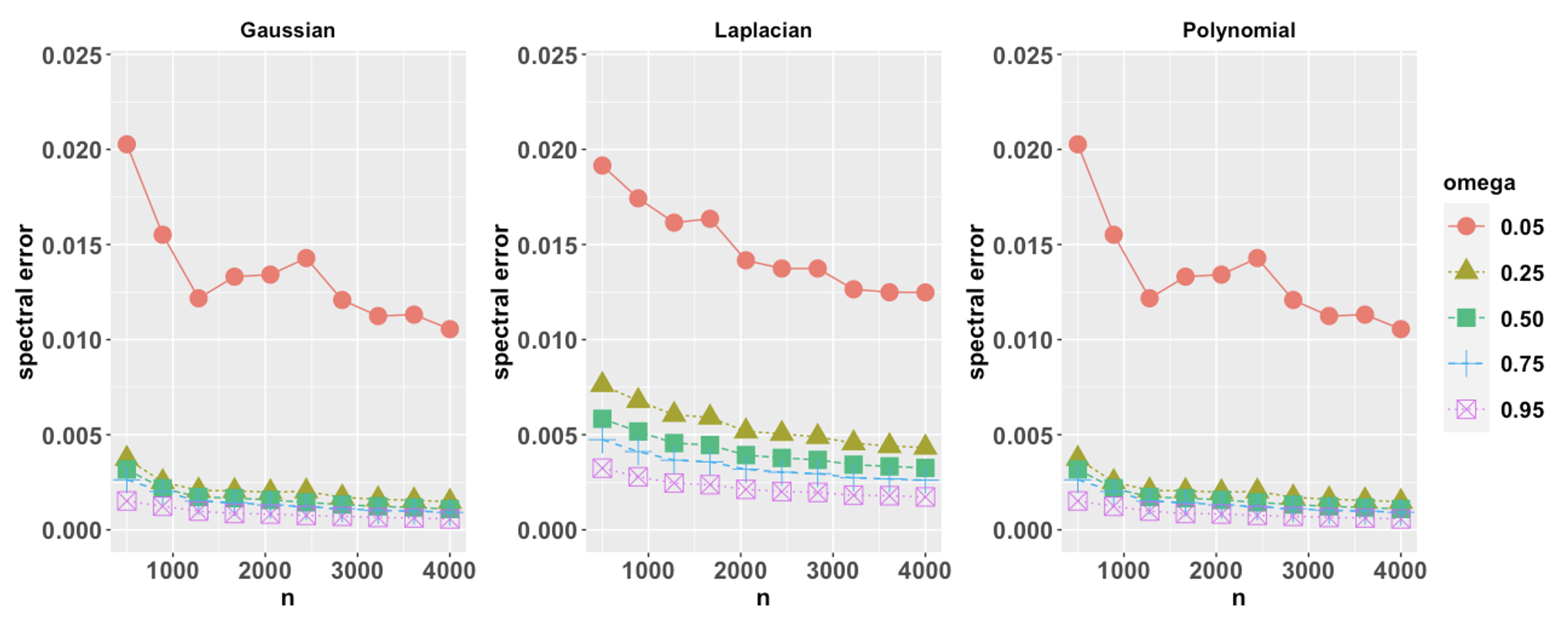}
	\includegraphics[angle=0,width=14cm]{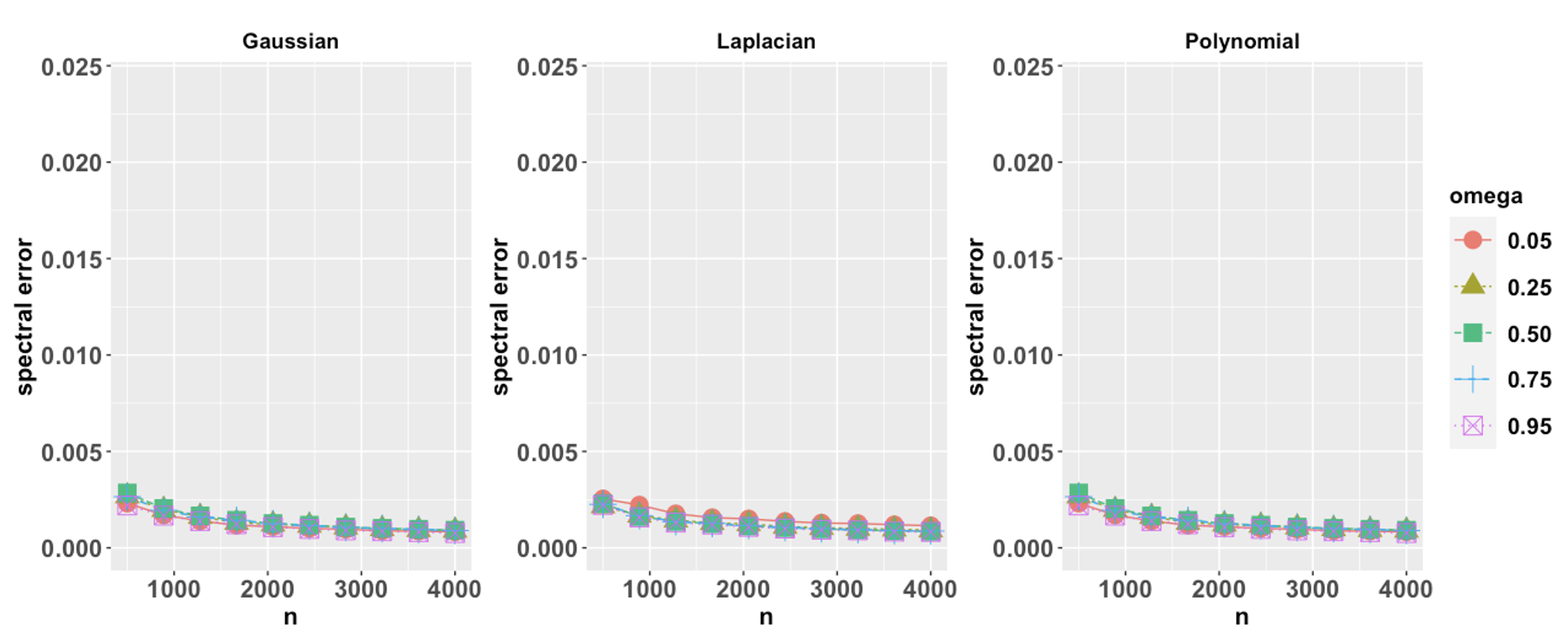}
	\caption{Spectral convergence of kernel matrices under various kernel functions and $\omega\in\{0.05,0.25,0.5,0.75,0.95\}$. Left: Gaussian kernel; Middle: Laplacian kernel; Right: polynomial kernel. From top to bottom: "smiley face,"  "mammoth," "Cassini oval," and "torus."} 
	\label{rate.fig}
\end{figure} 

\begin{figure}[bt]
	\centering
	\includegraphics[angle=0,width=15cm,height=11cm]{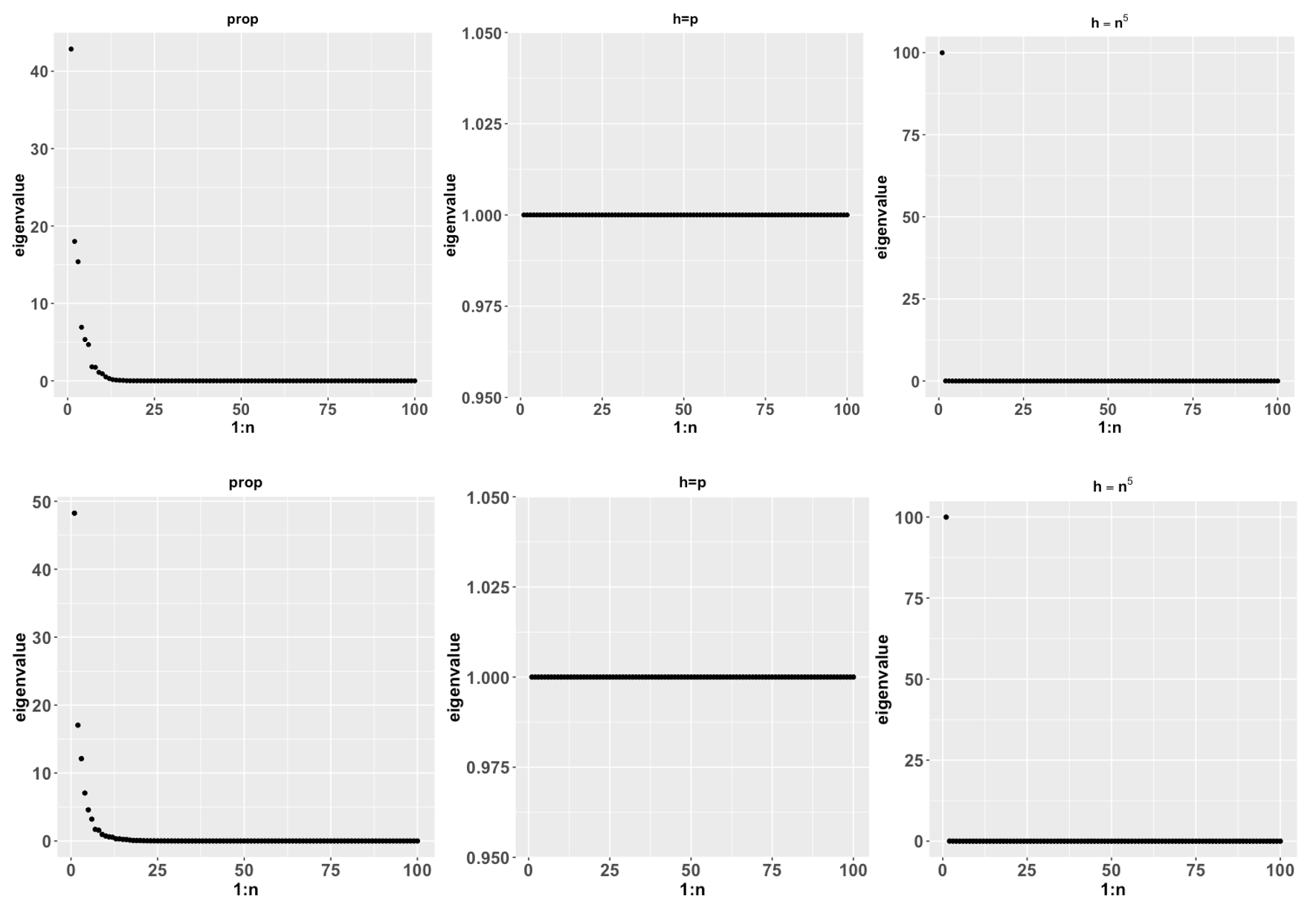}
	\includegraphics[angle=0,width=15cm,height=11cm]{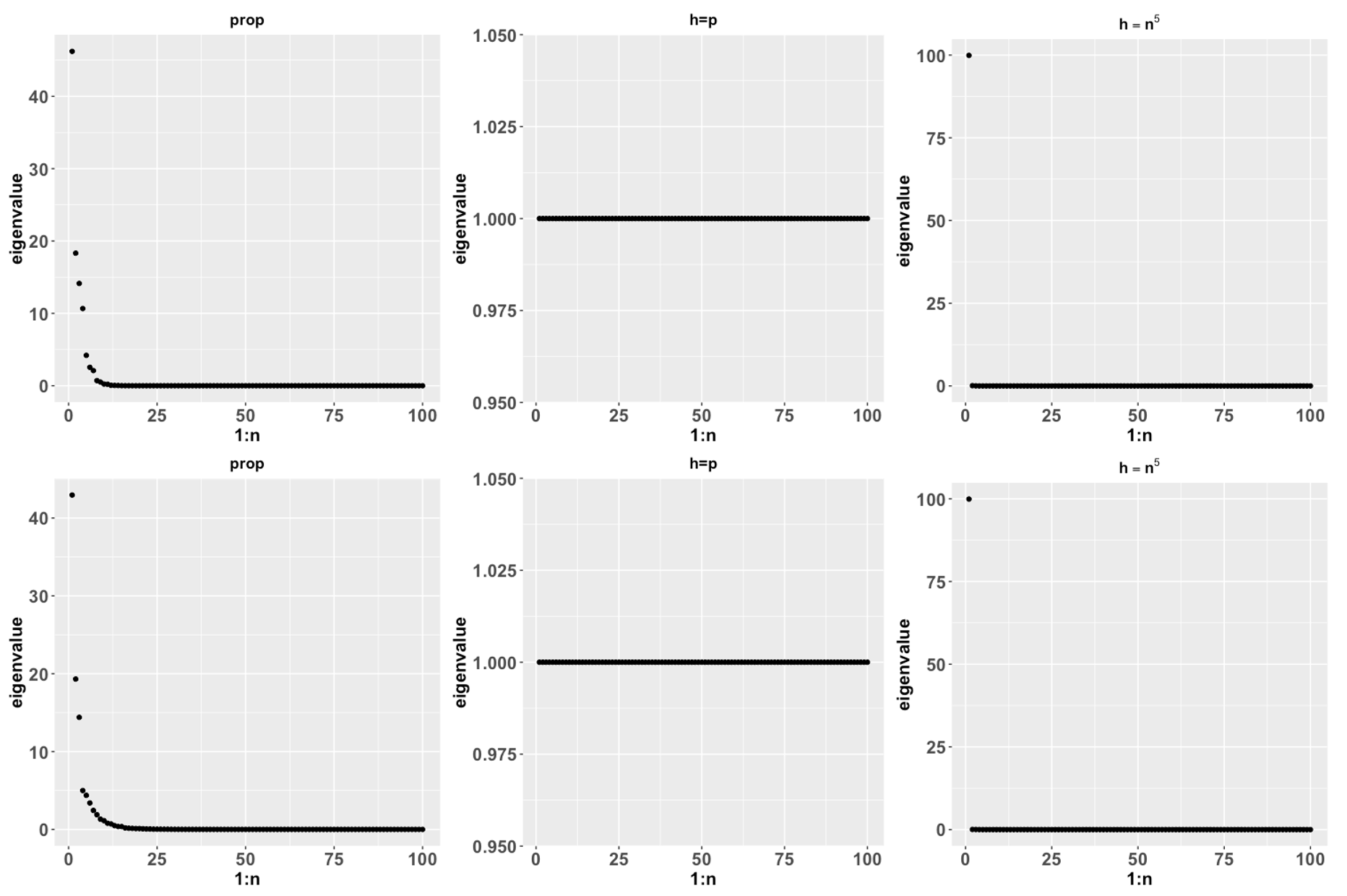}
	\caption{Comparison of eigenvalues of kernel matrices under various bandwidths. Left: proposed bandwidth $\sfh=\sfh_n$ with $\omega=0.5$; Middle: large bandwidth $\sfh=n^5$; Right: small bandwidth $\sfh=p$. From top to bottom: "smiley face,"  "mammoth," "Cassini oval," and "torus."} 
	\label{bw.fig1}
\end{figure} 
%
%

\begin{figure}[bt]
	\centering
	\includegraphics[angle=0,width=14cm,height=6cm]{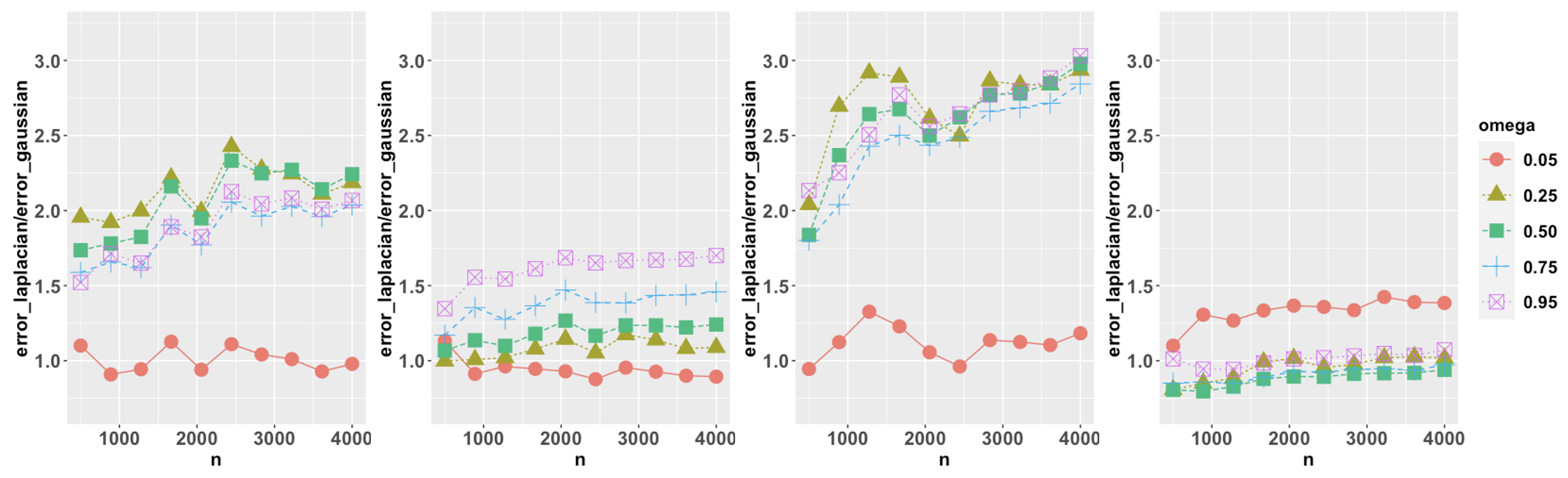}
	\includegraphics[angle=0,width=14cm,height=6cm]{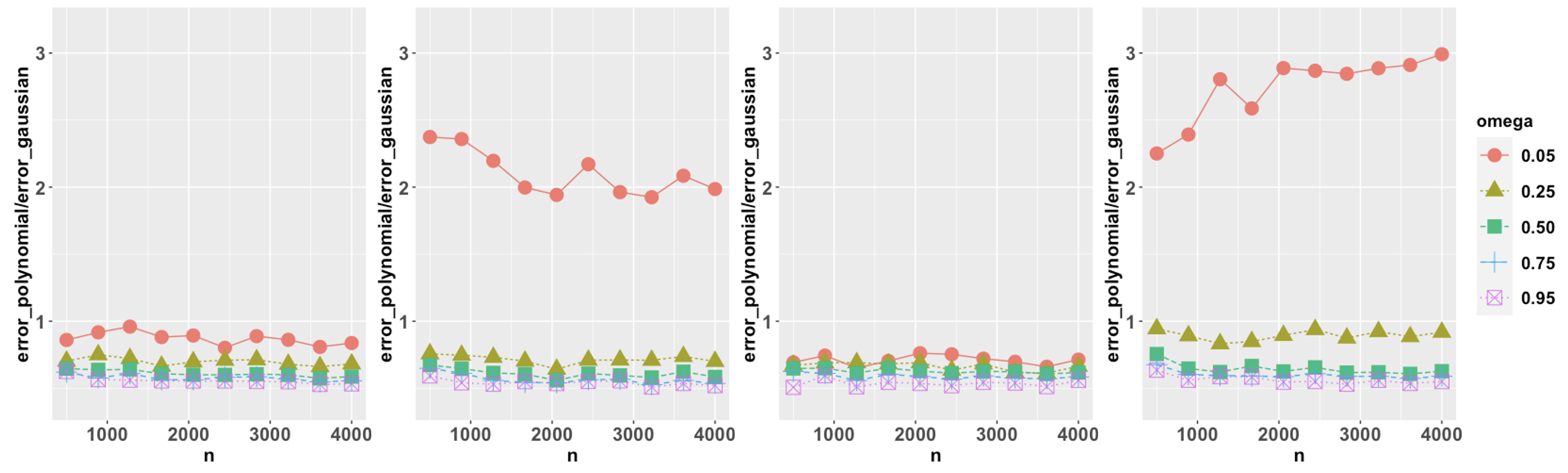}
	\caption{Top: ratios between the convergence rates under  the Laplacian kernel and under the Gaussian kernel. Bottom: ratios between the convergence rates under  the polynomial kernel and under the Gaussian kernel. From left to right:  "smiley face," "mammoth," "Cassini oval" and "torus."} 
	\label{ratio.fig}
\end{figure} 

\subsection{Two Additional Real Data Applications}\label{suppl_additionalrealexamples}

In this subsection, we provide two additional real data analysis examples. 

\subsubsection{Cell Cycle Reconstruction} \label{cycle.sec}

Our first example concerns the reconstruction of cell cycles from single-cell RNA-Seq data. The cell cycle, or cell-division cycle, is the series of events that take place in a cell that cause it to divide into two daughter cells\footnote{https://en.wikipedia.org/wiki/Cell\_cycle}. Determining the cell cycle stages of individual cells analyzed during development is important for understanding its wide-ranging effects on cellular physiology and gene expression profiles. Our  dataset contains 288 mouse embryonic stem cells, whose cell cycle stages were determined using flow cytometry sorting. As a result, one-third (96) of the cells are in the G1 stage, one-third in the S stage, and the rest in the G2M stage. The raw count data were preprocessed  and normalized using the same method as in Section \ref{cell.order.sec}, leading to a dataset consisting of standardized expression levels of $p\in\{2500, 3000,3500, 4000\}$ most variable genes for the 288 cells. We apply our proposed method with  a variety of $\omega\in\{0.25,0.5,0.75\}$. Observing that the leading eigenvector is approximately a constant vector and therefore non-informative, we consider a two-dimensional embedding with $\Omega=\{2,3\}$, expected to capture the underlying circle manifold. As a comparison, we also obtain two-dimensional embeddings  based on PCA, MDS, Laplacian eigenmap, LLE and DM, using functions implemented in the R package \texttt{dimRed} under their default settings.  To recover the cell cycle stages, or the underlying position of the cells on the circle manifold, we project each embedding to the two-dimensional unit circle, and then identify the cell stages with their respective angles on the unit circle. We compare the reconstruction performance of different methods based on their Kendall's tau distance to the true stages, up to a possible circular shift of the reconstructed cycles.

The right panel of Figure \ref{embed.fig22} shows that our proposed method has outstanding performance compared to the other methods. In addition, Figure \ref{cycle} shows the unit-circle projections of the two-dimensional embeddings obtained from each method under $p=3000$. It can be seen that the projected cells reconstructed by the proposed method were more separated and well-ordered according to their true cycle stages.


\begin{figure}[bt]
	\centering
	\includegraphics[angle=0,width=14cm]{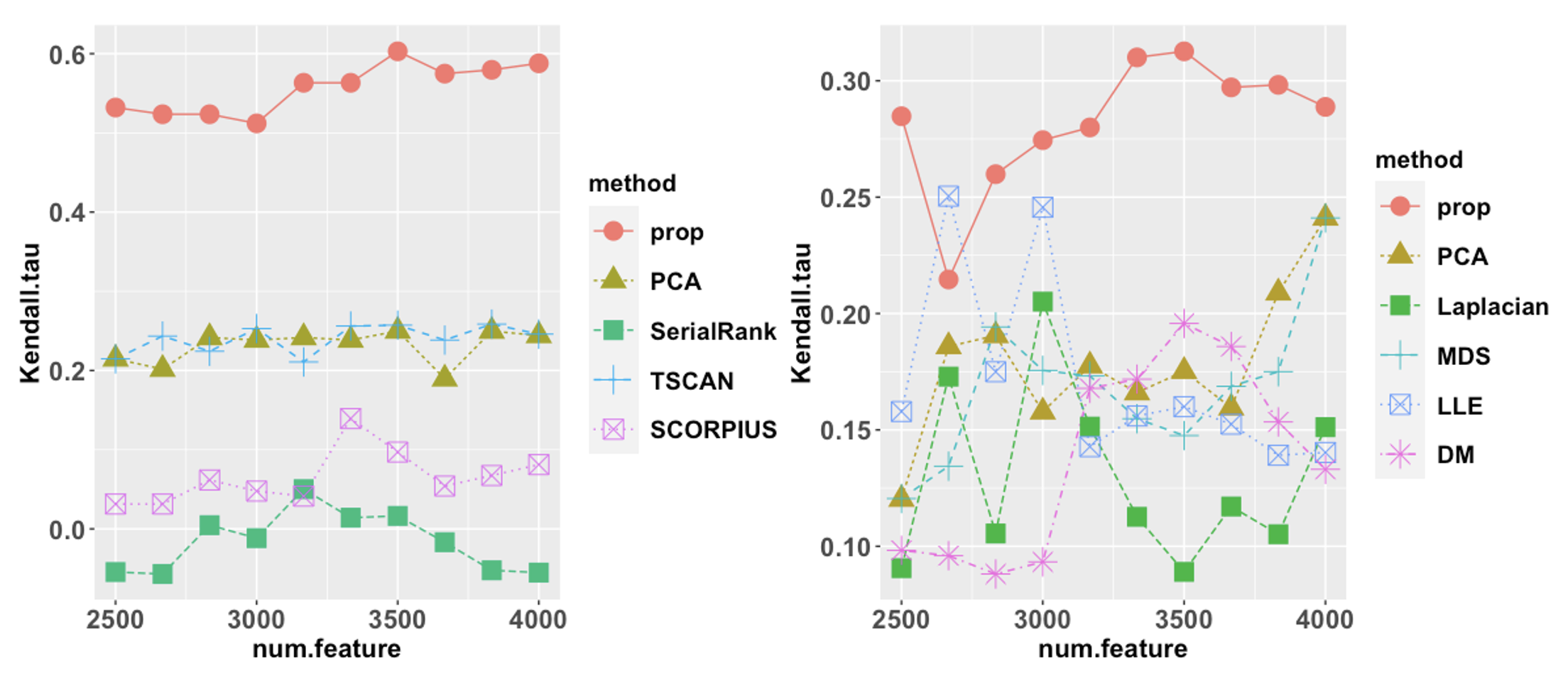}
	\caption{Left: comparison of five methods for cell ordering in Section \ref{cell.order.sec}. Right: comparison of six methods for cell cycle reconstruction in Section \ref{cycle.sec}. In both plots, the proposed method Algorithm \ref{al0} uses $\omega=0.5$.} 
	\label{embed.fig22}
\end{figure} 

\begin{figure}[bt]
	\centering
	\includegraphics[angle=0,width=6cm]{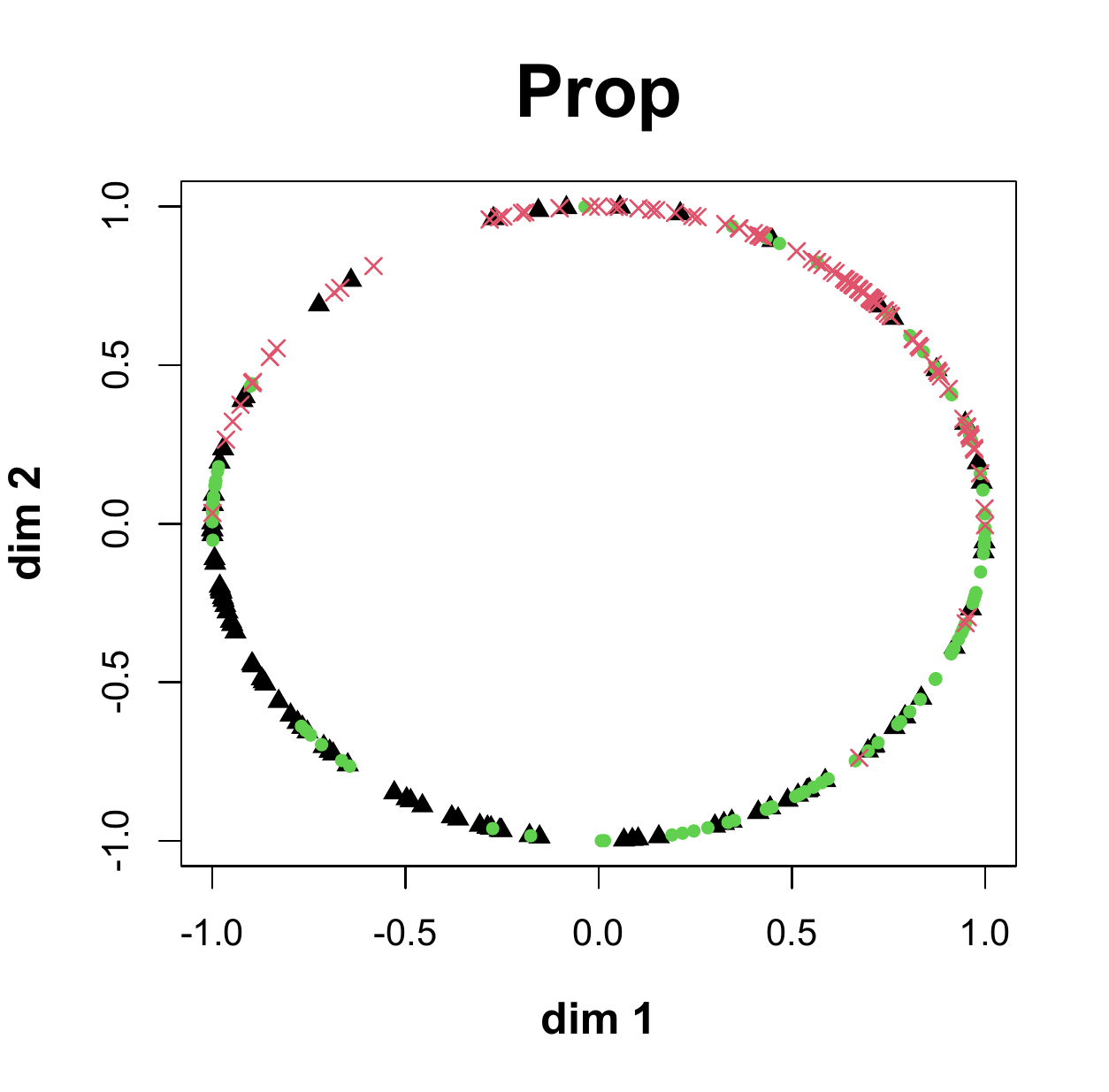}
	\includegraphics[angle=0,width=6cm]{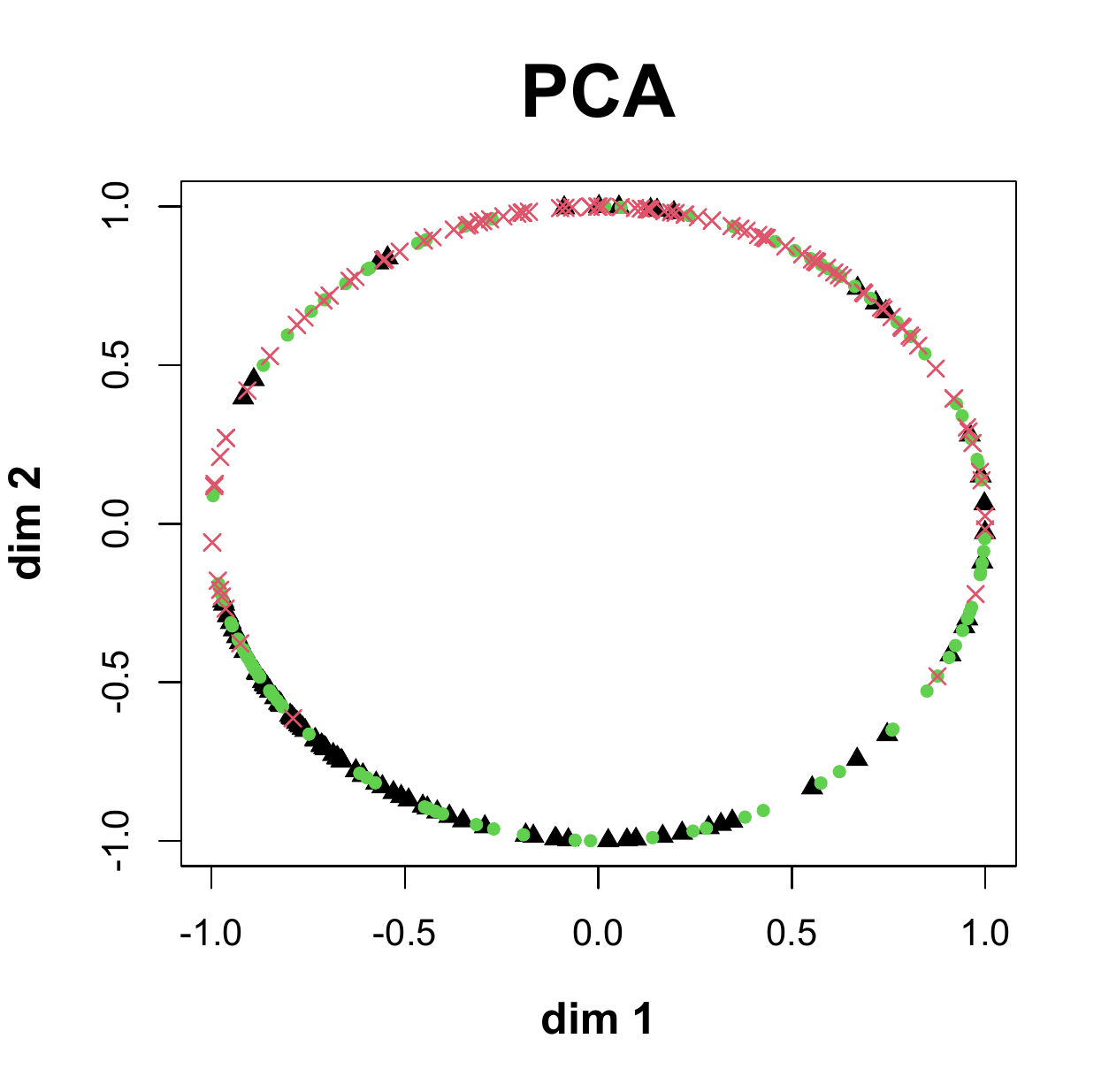}\\	\includegraphics[angle=0,width=6cm]{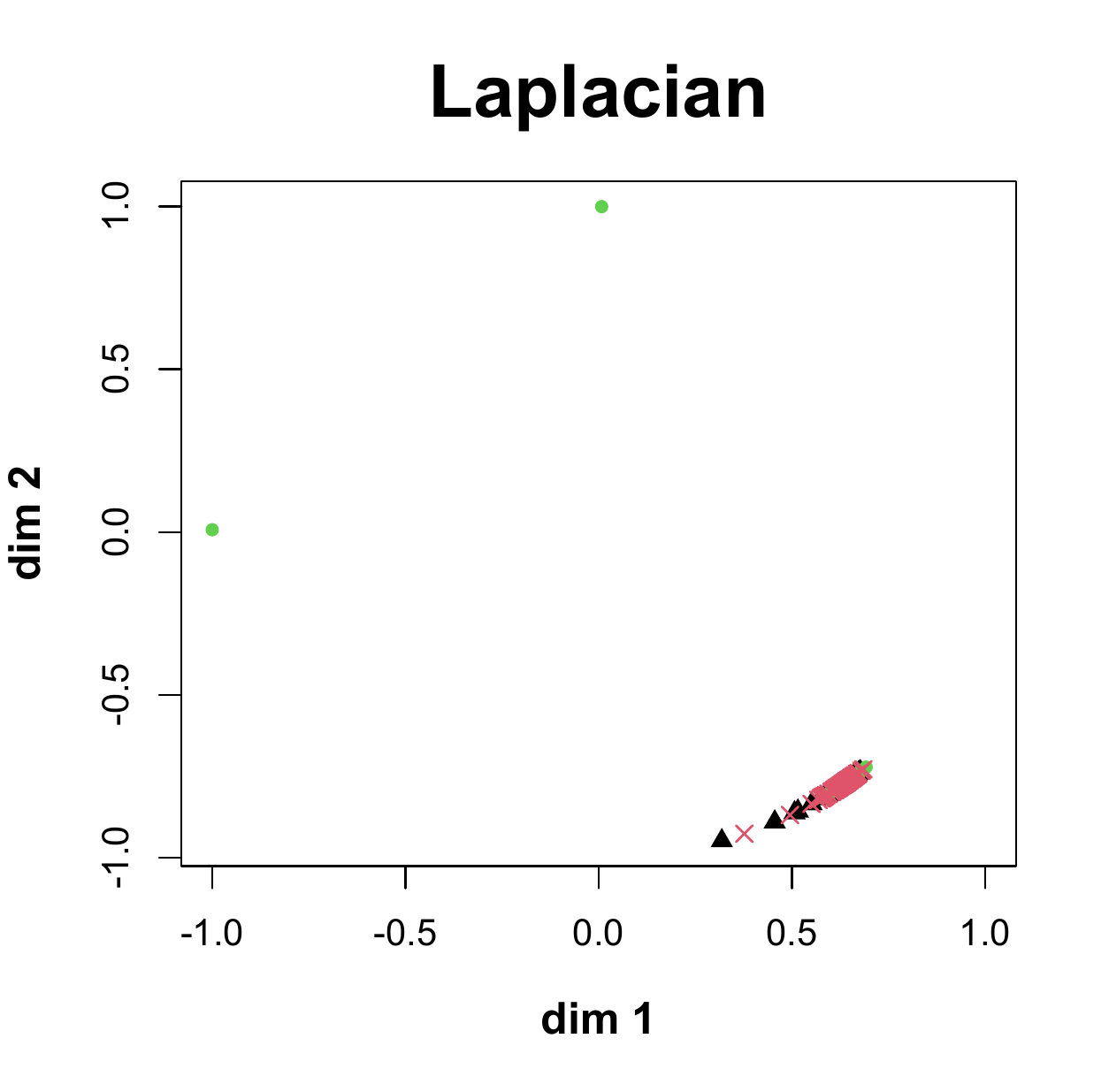}
	\includegraphics[angle=0,width=6cm]{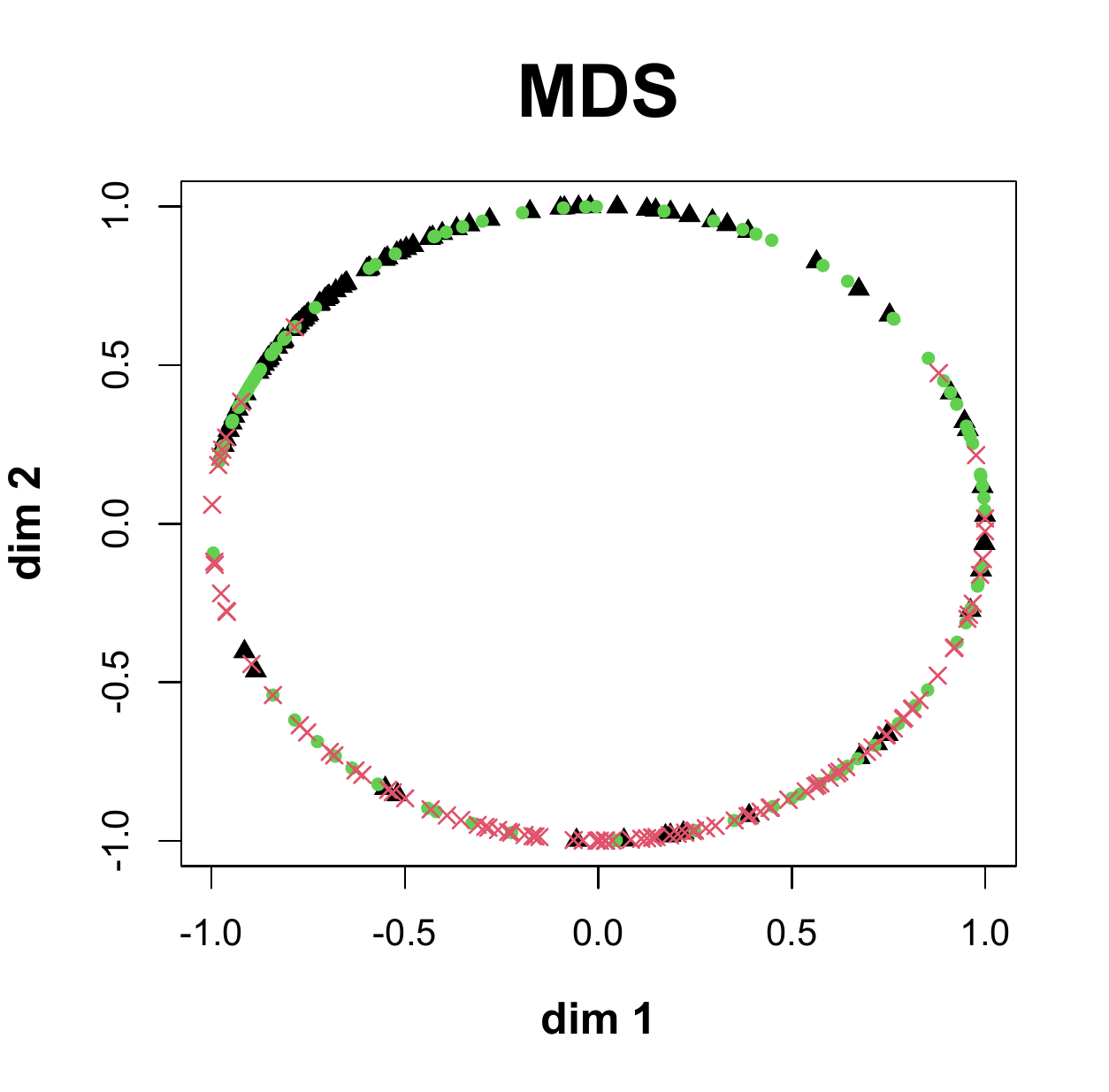}\\
	\includegraphics[angle=0,width=6cm]{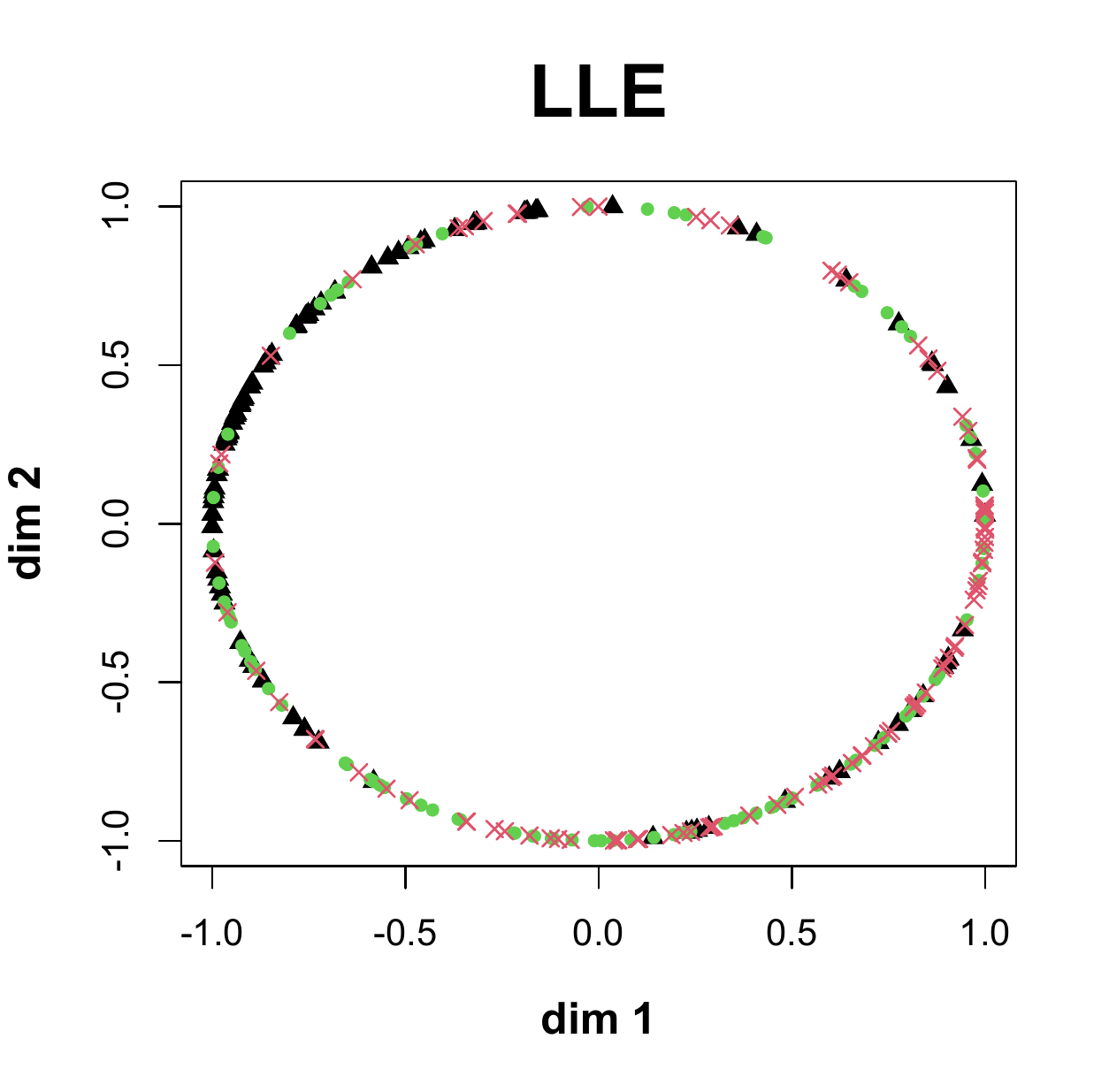}
	\includegraphics[angle=0,width=6cm]{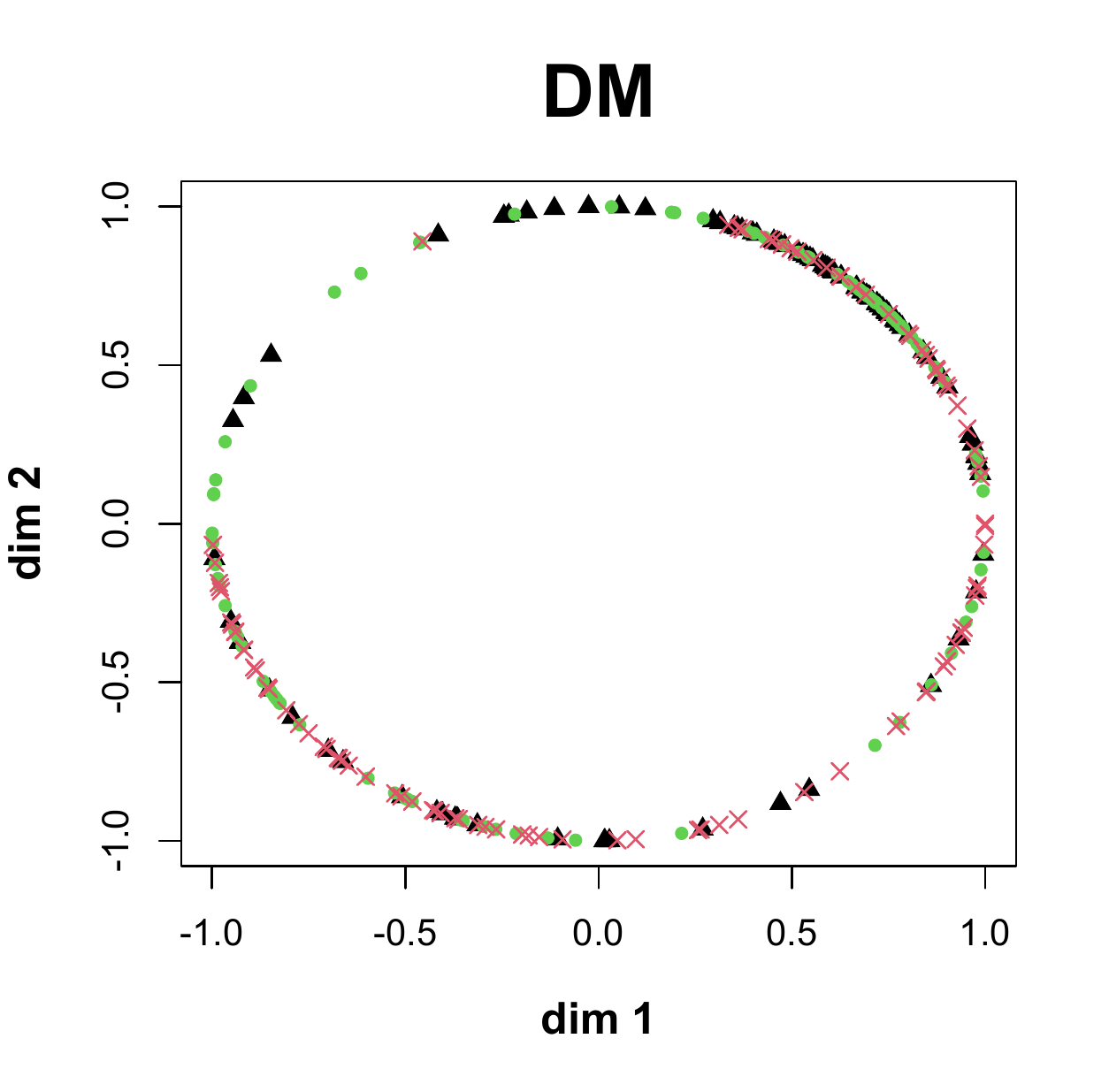}\\
	\vspace{-5mm}
	\caption{Comparison of reconstructed cell cycles based on six embedding methods when $p=3000$. Red crosses: cells in G1 stage. Green dots: cells in S stage. Black triangles: cells in G2M stage. } 
	\label{cycle}
\end{figure}  

\subsubsection{Clustering of Hand-Written Digits} \label{mnist.sec}

Our last example concerns the MNIST\footnote{http://yann.lecun.com/exdb/mnist/} dataset containing images of hand-written digits. Specifically, we focus on $n=3946$ images of hand-written digits "2," "4," "6" and "8,"  among which there are about 1000 images for each digit. As each image contains $28\times28$ pixels, they can be treated as $784$-dimensional vectors.  We apply six different low-dimensional embedding methods, namely DM, kPCA as defined in Section \ref{comp.sec}, Laplacian eigenmap, MDS, PCA and the proposed method, for a variety of embedding dimensions \texttt{n.dim} $\in\{5, 10, 15, 20, 25, 30, 35\}$. In particular, to ensure fairness in comparison, we have not included two-stage algorithms such as tSNE and UMAP, which essentially take one of the above kernel spectral embeddings as an initialization, and then refine the low-dimensional embedding to make the cluster pattern more salient based on some local-metric adjustment \citep{mcinnes2018umap,arora2018analysis,linderman2019clustering,cai2021theoretical}. 

\begin{figure}[bt]
	\centering
	\includegraphics[angle=0,width=8cm]{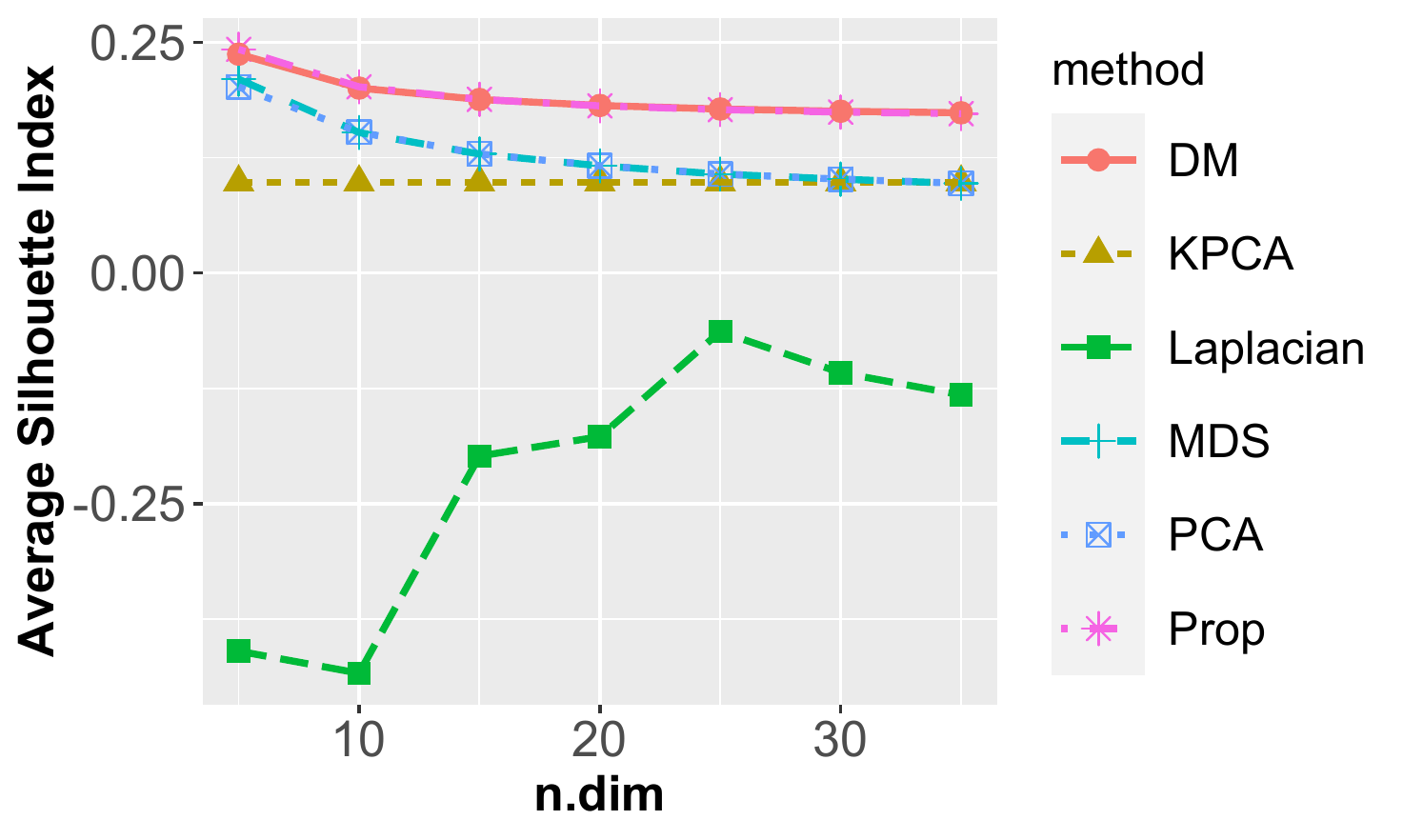}
	\caption{Comparison of six embedding methods for $n=3946$ samples from the MNIST data. The Silhouette index indicates how much of the underlying cluster pattern is preserved in the low-dimensional embeddings.} 
	\label{mnist}
\end{figure}

We evaluate the embedding quality of a method by calculating the average Silhouette index  \citep{rousseeuw1987} of the final  low-dimensional embedding with respect to the underlying true cluster membership. In general, a higher average Silhouette index of a method indicates the underlying clusters are more separate in the final embedding.  Again, for the proposed method we use $\omega=0.5$. Similar results are obtained for $\omega\in\{0.25,0.75\}$ as in Section \ref{sec_additionsimulrealreal}. In Figure \ref{mnist}, we found that our proposed method along with DM has overall the best performance among the six methods. In particular, although the standard kPCA differs from our proposed method only by an additional mean shift (\ref{kpca}), in all cases its embedding quality is clearly worse than the latter, demonstrating the important distinction between kPCA and the proposed method, and the potential advantage of the latter in applications.

\subsection{Supplementary Figures to Section \ref{cell.order.sec}}\label{sec_additionsimulrealreal}
%
%

Figure \ref{embed.fig3} shows the scatter plots of the 2-dimensional embeddings based on PCA and the proposed method (KEF stands for "kernel eigenfunctions") when $p=3000$. The advantage of the proposed embedding over its linear counterpart, such as informativeness and robustness to outliers, is visible and significant.

\begin{figure}[bt]
	\centering
	\includegraphics[angle=0,width=7cm]{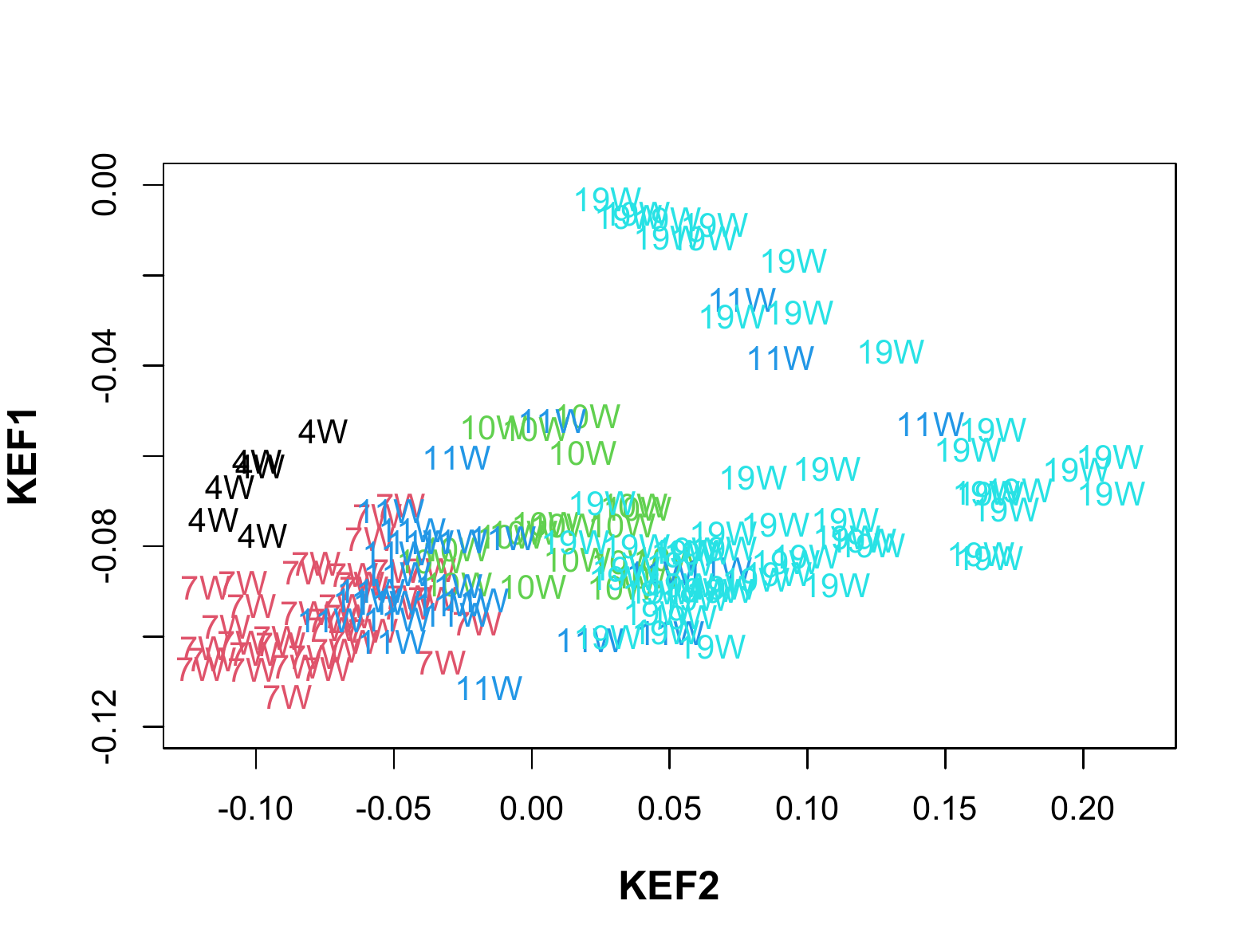}
	\includegraphics[angle=0,width=7cm]{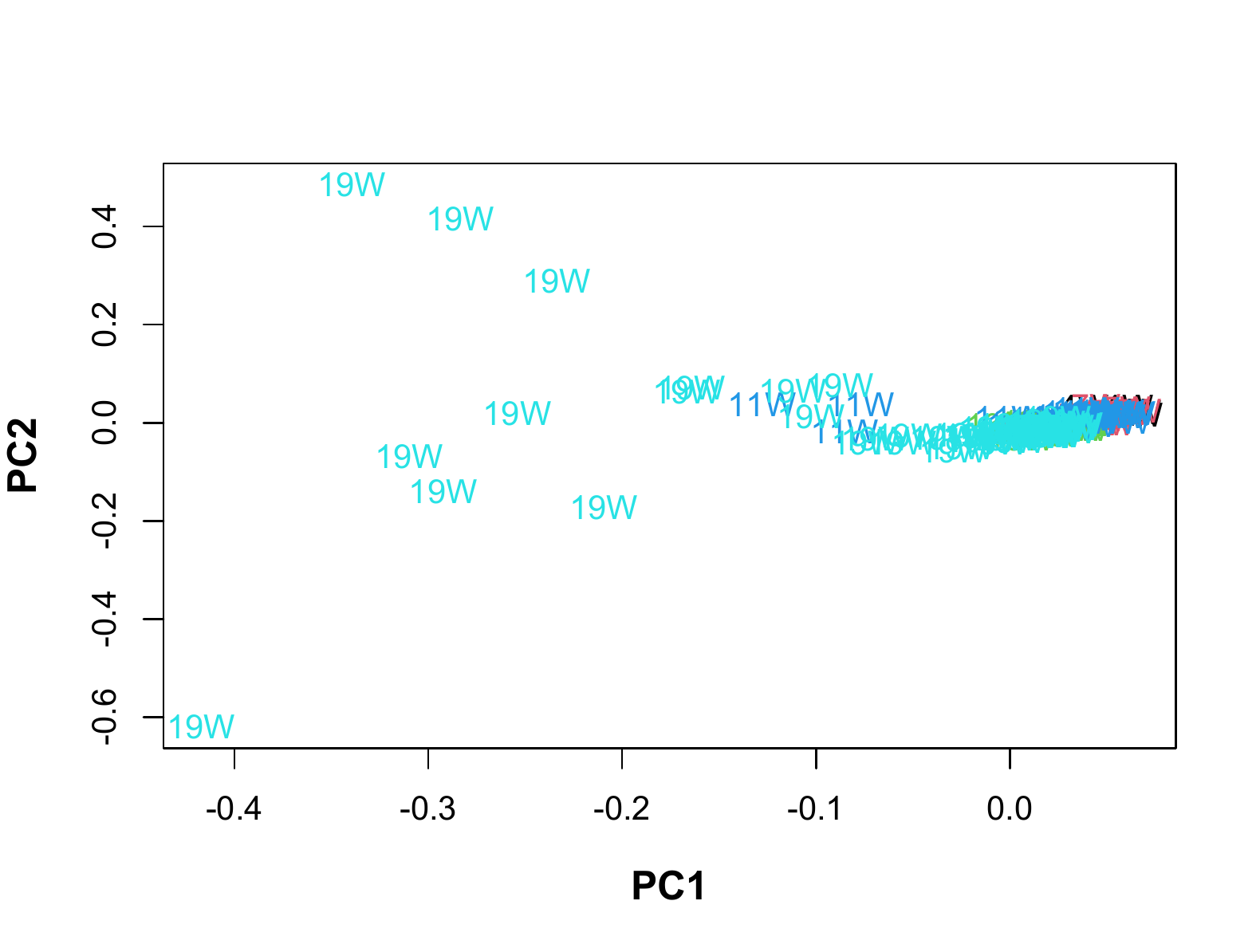}
	\caption{Comparison between principal component embeddings (PCs) and the proposed kernel embeddings (KEFs) when $p=3000$, where the cells were labelled and colored according to their actual time courses.} 
	\label{embed.fig3}
\end{figure}

\bibliographystyle{imsart-number} 
\bibliography{kernel}

\end{document}